\pgfplotsset{compat=1.17}
\definecolor{darkgreen}{rgb}{0,0.5,0}
\newtheorem{theorem}{Theorem}[section]
\newtheorem{lemma}[theorem]{Lemma}
\newtheorem{corollary}[theorem]{Corollary}
\newtheorem{definition}{Definition}[section]
\newtheorem{proposition}[theorem]{Proposition}
\newtheorem{observation}[theorem]{Observation}
\newcommand{\defcal}[1]{\expandafter\newcommand\csname c#1\endcsname{{\mathcal{#1}}}}
\newcommand{\defbb}[1]{\expandafter\newcommand\csname b#1\endcsname{{\mathbb{#1}}}}
\newcommand{\defvec}[1]{\expandafter\newcommand\csname v#1\endcsname{{\mathbf{#1}}}}
\newcommand{\defmat}[1]{\expandafter\newcommand\csname m#1\endcsname{{\mathbf{#1}}}}
\newcounter{calBbCounter}
    \edef\Letter{\Alph{calBbCounter}}
		\edef\letter{\alph{calBbCounter}}
\newcommand{\eps}{\varepsilon}
\newcommand{\nnR}{{\bR_{\geq 0}}}
\newcommand{\nnRE}[1]{{\bR_{\geq 0}^{#1}}}
\newcommand{\characteristic}{{\mathbf{1}}}
\newcommand{\RSet}{{\mathtt{R}}}
\DeclareMathOperator*{\argmax}{\arg\max}
\DeclareMathOperator*{\argmin}{\arg\min}
\newcommand{\vzero}{\bar{0}}
\newcommand{\vone}{\bar{1}}
\newcommand{\quadprogIP}{{\textsc{quadprogIP}}}
\author{
	Loay Mualem\thanks{Computer Science Department, University of Haifa. Email: \href{mailto:loaymua@gmail.com}{loaymua@gmail.com}}
\and
	Moran Feldman\thanks{Computer Science Department, University of Haifa. Email: \href{mailto:moranfe@cs.haifa.ac.il}{moranfe@cs.haifa.ac.il}}
}
\title{Using Partial Monotonicity in Submodular Maximization}
\begin{document}

\maketitle

\begin{abstract}
Over the last two decades, submodular function maximization has been the workhorse of many discrete optimization problems in machine learning applications. Traditionally, the study of submodular functions was based on \emph{binary} function properties. However, such properties have an inherit weakness, namely, if an algorithm assumes functions that have a particular property, then it provides no guarantee for functions that violate this property, even when the violation is very slight. Therefore, recent works began to consider \emph{continuous} versions of function properties. Probably the most significant among these (so far) are the submodularity ratio and the curvature, which were studied extensively together and separately.

The monotonicity property of set functions plays a central role in submodular maximization. Nevertheless, and despite all the above works, no continuous version of this property has been suggested to date (as far as we know). This is unfortunate since submoduar functions that are almost monotone often arise in machine learning applications.
%
In this work we fill this gap by defining the \emph{monotonicity ratio}, which is a continues version of the monotonicity property. We then show that for many standard submodular maximization algorithms one can prove new approximation guarantees that depend on the monotonicity ratio; leading to improved approximation ratios for the common machine learning applications of movie recommendation, quadratic programming and image summarization.

%
%

\medskip

\noindent \textbf{Keywords:} monotonicity ratio, submodular maximization, cardinality constraint, matroid constraint, movie recommendation, quadratic programming, image summarization
\end{abstract}

\pagenumbering{arabic}

\section{Introduction}
Over the last two decades, submodular function maximization has been the workhorse of many discrete optimization problems in machine learning applications such as data summarization~\cite{dasgupta2013summarization,elhamifar2017online,kazemi2021regularized,kirchhoff2014submodularity,mitrovic2018data,tschiatschek2014learning}, social graph analysis~\cite{norouzi2018beyond}, adversarial attacks~\cite{lei2019discrete}, dictionary learning~\cite{das2011submodular}, sequence selection~\cite{mitrovic2019adaptive,tschiatschek2017selecting}, interpreting neural networks~\cite{elenberg2017streaming} and many more. 
Traditionally, the study of submodular functions was based on \emph{binary} properties of functions. A function can be either submodular or non-submodular, monotone or non-monotone, etc. Such properties are simple, but they have an inherit weakness---if an algorithm assumes functions that have a particular property, then it provides no guarantee for functions that violate this property, even if the violation is very slight.

Given the above situation, recent works began to consider \emph{continuous} versions of function properties. Probably the most significant among these continuous versions so far are the submodularity ratio and the curvature. The submodularity ratio (originally defined by Das and Kempe~\cite{das2019approximate}) is a parameter $\gamma \in [0, 1]$ replacing the binary submodularity property that a set function can either have or not have. A value of $1$ corresponds to a fully submodular function, and lower values of $\gamma$ represent some violation of submodularity (the worse the violation, the lower $\gamma$).  Similarly, the curvature (defined by Conforti and Cornu{\'{e}}jol~\cite{conforti1984submodular}) is a parameter $c \in [0, 1]$ replacing the binary linearity property that a set function can either have or not have. A value of $1$ corresponds to a fully linear function, and lower values of $c$ represent some violation of linearity.

A central conceptual contribution of Das and Kempe~\cite{das2019approximate} was that they were able to demonstrate that continuous function properties further extend the usefulness of submodular maximization to new machine learning applications (such as subset selection for regression and dictionary selection). This has motivated a long list of works on such properties (see~\cite{bian2017guarantees,ghadiri2020parametereized,ghadiri2021beyond,iyer2013curvature,kuhnle2018fast} for a few examples), including works that combine both the submodularity ratio and the curvature (see, e.g.,~\cite{bian2017guarantees}). However, to the best of our knowledge, no continuous version of the binary monotonicity property has been suggested so far.\footnote{Following the appearance of the pre-print version of this paper, we learned that Iyer defined in his Ph.D. thesis~\cite{iyer2015submodular} such a property, and moreover, this property is identical in name and definition to the one we define. However, Iyer only used this property to prove the result that appears below as Theorem~\ref{thm:greedy}; and thus, our work is the first to systematically study this property.}

We note that the monotonicity property of set functions plays a central role in submodular maximization, and basically every problem in this field has been studied for both monotone and non-monotone objective functions. Naturally, monotone objective functions enjoy improved approximation guarantees compared to general functions, and it is natural to ask how much of this improvement applies also to functions that are almost monotone (in some sense). Since such functions often arise in machine learning applications
when a diversity promoting component is added to a basic
monotone objective, obtaining better guarantees for them
should strongly enhance the usefulness of submodular maximization
as a tool for many machine learning applications.

Formally, a non-negative set function $f\colon 2^\cN \to \nnR$ over a ground set $\cN$ is (increasingly) \emph{monotone} if $f(S) \subseteq f(T)$ for every two sets $S \subseteq T \subseteq \cN$. Similarly, we define the \emph{monotonicity ratio} of such a function $f$ as the maximum value $m \in [0, 1]$ such that $m \cdot f(S) \leq f(T)$ for every two sets $S \subseteq T \subseteq \cN$. Equivalently, one can define the monotonicity ratio $m$ by
\[
	m \triangleq \min_{S \subseteq T \subseteq \cN} \frac{f(T)}{f(S)}
	\enspace,
\]
where the ratio $f(T) / f(S)$ is assumed to be $1$ whenever $f(S) = 0$. Intuitively, the monotonicity ratio measures how much of the value of a set $S$ can be lost when additional elements are added to $S$. One can view $m$ as a measure of the distance of $f$ from being monotone. In particular, $m$ is equal to $1$ if and only if $f$ is monotone.

Our main contribution in this paper is demonstrating the usefulness of the monotonicity ratio in machine learning applications, which we do in two steps.
\begin{compactitem}
	\item First, we show (in Sections~\ref{sec:unconstrained}, \ref{sec:cardinality} and~\ref{sec:matroid}) that for many standard submodular maximization algorithms one can prove new approximation guarantees that depend on the monotonicity ratio. These approximation guarantees interpolate between the known approximation ratios of these algorithms for monotone and non-monotone submodular functions.
	\item Then, using the above new approximation guarantees, we derive new approximation ratios for the standard applications of movie recommendation, quadratic programming and image summarization. Our guarantees improve over the state-of-the-art for most values of the problems' parameters. See Section~\ref{sec:experiments} for more detail.
\end{compactitem}

\paragraph{Remark.} Computing the monotonicity ratio $m$ of a given function seems to be a difficult task. Therefore, the algorithms we analyze avoid assuming access to $m$, and the value of $m$ is only used in the analyses of these algorithms. Nevertheless, in the context of particular applications, we are able to bound $m$, and plugging this bound into our general results yields our improved guarantees for these applications. 

\subsection{Our Results} \label{ssc:results}

Given a ground set $\cN$, a set function $f\colon 2^\cN \to \bR$ is submodular if $f(S \cup \{u\}) - f(S) \geq f(T \cup \{u\}) - f(T)$ for every two sets $S \subseteq T \subseteq \cN$ and element $u \in \cN \setminus T$. Submodular maximization problems ask to maximize such functions subject to various constraints. To allow for multiplicative approximation guarantees for these problems, it is usually assumed that the objective function $f$ is non-negative. Accordingly, we consider in this paper the following three basic problems.
\begin{compactitem}
	\item Given a non-negative submodular function $f\colon 2^\cN \to \bR$, find a set $S \subseteq \cN$ that (approximately) maximizes $f$. This problem is termed ``unconstrained submodular maximization'', and is studied in Section~\ref{sec:unconstrained}.
	\item Given a non-negative submodular function $f\colon 2^\cN \to \bR$ and an integer parameter $0 \leq k \leq |\cN|$, find a set $S \subseteq \cN$ of size at most $k$ that (approximately) maximizes $f$ among such sets. This problem is termed ``maximizing a submodular function subject to a cardinality constraint'', and is studied in Section~\ref{sec:cardinality}.
	\item Given a non-negative submodular function $f\colon 2^\cN \to \bR$ and a matroid $\cM$ over the same ground set, find a set $S \subseteq \cN$ that is independent in $\cM$ and (approximately) maximizes $f$ among such sets. This problem is termed ``maximizing a submodular function subject to a matroid constraint'', and is studied in Section~\ref{sec:matroid} (see Section~\ref{sec:matroid} also for the definition of matroids).
\end{compactitem}

We present both algorithmic and inapproximability results for the above problems. Our algorithmic results reanalyze a few standard algorithms, and are mostly proved using adaptations of the original analyses of these algorithms. However, as these adaptations are non-black box and often involve a technical challenge, it is quite surprising that for almost all algorithms we get an approximation ratio of $m \cdot \alpha_{\text{monotone}} + (1 - m) \cdot \alpha_{\text{non-monotone}}$, where $m$ is the monotonicity ratio, $\alpha_{\text{monotone}}$ is the approximation ratio known for the algorithm when $f$ is guaranteed to be monotone, and $\alpha_{\text{non-monotone}}$ is the approximation ratio known for the algorithm when $f$ is a general non-negative submodular function.

While the above mentioned algorithmic results lead to our improved guarantees for applications, our inapproximability results represent our main technical contribution. In general, these results are based on the symmetry gap framework of Vondr\'{a}k~\cite{vondrak2013symmetry}. The original version of this framework is able to deal both with the case of general (not necessarily monotone) submodular functions, and with the case of monotone submodular functions; which in our terms correspond to the cases of $m \geq 0$ and $m \geq 1$, respectively. However, to prove our inapproximability results, we had to show that the framework extends to arbitrary lower bounds on $m$, which was challenging because the original proof of the framework is highly based on derivatives of continuous functions. From this point of view, submodularity is defined as having non-positive second-order derivatives, and monotonicity is defined as having non-negative first-order derivatives. However, the definition of the monotonicity ratio cannot be easily restated in terms of derivatives; and thus, handling it required us to come up with a different proof approach.

Interestingly, our inapproximability result for unconstrained submodular maximization proves that the optimal approximation ratio for this problem does not exhibit a linear dependence on $m$. Thus, the nice linear dependence demonstrated by almost all our algorithmic results is probably an artifact of looking at standard algorithms rather than representing the true nature of the monotonicity ratio, and we expect future algorithms tailored to take advantage of the monotonicty ratio to improve over this linear dependence.
The reason that we concentrate in this work on reanalyzing standard submodular maximization algorithms rather than inventing new ones is that we want to stress the power obtained by using the new notion of monotonicity ratio, as opposed to power gained via new algorithmic innovations. This is in line with the research history of the submodularity ratio and the curvature. For both of these parameters, the original works concentrated on reanalyzed the standard greedy algorithm in view of the new suggested parameter; and the invention of algorithms tailored to the parameter was deferred to later works (see~\cite{sviridenko2017optimal} and~\cite{chen2018weakly} for examples of such algorithms for the curvature and submodularity ratio, respectively). 

Over the years, the standard submodular maximization algorithms have been extended and improved in various ways. Some works presented accelerated and/or parallelized versions of these algorithms, while other works generalized the algorithms beyond the realm of set functions (for example, to (DR-)submodular functions over lattices or continuous domains). Since our motivation in this paper is related to the monotonicity ratio, which is essentially independent of the extensions and improvements mentioned above, we mostly analyze the vanilla versions of all the algorithms considered. This keeps our analyses relatively simple. However, our experiments are based on more state-of-the-art versions of the algorithms. Similarly, many continuous properties (including the submodularity ratio) have weak versions that only depend on the behavior of the function for nearly feasible sets, and immediately enjoy most of the results that apply to the original strong property. The definition of such weak versions is useful for capturing additional application, but often add little from a theoretical perspective. Therefore, in the theoretical parts of this paper we consider only the monotonicity ratio as it is defined above; but for the sake of one of our applications we later define also the natural corresponding weak property.

\subsection{Additional Related Work}

Lin and Bilmes~\cite{lin2010multi} described an algorithm that takes advantages of a continuous partial monotonicity property, but unlike the monotonicity ratio, their property was defined in terms of the particular submodular objective they were interested in. More recently, Cui et al.~\cite{cui2021approximation} considered a weaker, but still binary, version of monotonicity called weak-monotonicity.
\section{Preliminaries and Basic Observations}

We begin this section by defining the notation that we use throughout the paper. Using this notation, we can then state some useful basic observations. Given an element $u \in \cN$ and a set $S \subseteq \cN$, we use $S + u $ and $S - u$ as shorthands for $S \cup \{u\}$ and $S \setminus \{u\}$. Additionally, given a set function $f\colon 2^\cN \to \bR$, we define $f(u \mid S) \triangleq f(S + u) - f(S)$ (this value is known as the marginal contribution of $u$ to $S$ with respect to $f$). Similarly, given an additional set $T \subseteq \cN$, we define $f(T \mid S) \triangleq f(S \cup T) - f(S)$. We also use $\characteristic_S$ to denote the characteristic vector of the set $S$, i.e., a vector in $[0, 1]^\cN$ that has $1$ in the coordinates corresponding to elements that appear in $S$ and $0$ in the rest of the coordinates. Finally, if $f$ is non-negative, then we say that it is $m$-monotone if its monotonicity ratio is at least $m$; and given an event $\cE$, we denote by $\characteristic[\cE]$ the indicator of this event, i.e., a random variable that takes the value $1$ when the event happens, and the value $0$ otherwise.

Next, we present two well-known continuous extensions of set functions. Given a set function $f\colon 2^\cN \to \bR$, its \emph{multilinear extension} is a function $F \colon [0, 1]^\cN \to \bR$ defined as follows. For every vector $\vx \in [0, 1]^\cN$, let $\RSet(\vx)$ to be a random subset of $\cN$ that includes every element $u \in \cN$ with probability $x_u$, independently. Then, $F(\vx) = \bE[f(\RSet(\vx))]$. The \emph{Lov\'{a}sz extension} of $f$ is a function $\hat{f}\colon [0, 1]^\cN \to \bR$ defined as follows. For every vector $\vx \in [0, 1]^\cN$,
\[
	\hat{f}(\vx)
	=
	\int_0^1 f(T_\lambda(\vx)) d\lambda
	\enspace,
\]
where $T_\lambda(\vx) \triangleq \{u \in \cN \mid x_u \geq \lambda\}$. The Lov\'{a}sz extension of a submodular function is known to be convex. More important for us is the following known lemma regarding this extension. This lemma stems from an equality, proved by Lov\'{a}sz~\cite{lovasz1983submodular}, between the Lov\'{a}sz extension of a submodular function and another extension known as the convex closure.
\begin{lemma} \label{lem:lovasz}
Let $f\colon 2^{\cN} \to \bR$ be a submodular function, and let $\hat{f}$ be its Lov\'{a}sz extension. For every $\vx \in [0, 1]^\cN$ and random set $D_{\vx} \subseteq \cN$ obeying $\Pr[u \in D_\vx] = x_u$ for every $u \in \cN$ (i.e., the marginals of $D_\vx$ agree with $\vx$), $\hat{f}(\vx) \leq \bE[f(D_\vx)]$.
\end{lemma}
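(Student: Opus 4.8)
The plan is to reduce the inequality to Jensen's inequality, using two facts that are essentially already in hand: that the Lov\'{a}sz extension $\hat{f}$ of a submodular function is convex, and that $\hat{f}$ agrees with $f$ on characteristic vectors.

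First I would record the easy identity $\hat{f}(\characteristic_S) = f(S)$ for every $S \subseteq \cN$. Indeed, for every $\lambda \in (0, 1]$ we have $T_\lambda(\characteristic_S) = \{u \in \cN : (\characteristic_S)_u \geq \lambda\} = S$, so $\int_0^1 f(T_\lambda(\characteristic_S))\,d\lambda = f(S)$ (the single point $\lambda = 0$ contributes nothing to the integral). Next, since the ground set $\cN$ is finite, $D_\vx$ is a discrete random set; write $p_S \triangleq \Pr[D_\vx = S]$ for $S \subseteq \cN$, so that $\sum_{S \subseteq \cN} p_S = 1$ and $p_S \geq 0$. The hypothesis that the marginals of $D_\vx$ agree with $\vx$ says exactly that $\sum_{S \ni u} p_S = x_u$ for every $u \in \cN$, i.e., $\vx = \sum_{S \subseteq \cN} p_S \characteristic_S$. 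Thus $\vx$ is written as a convex combination of the characteristic vectors $\{\characteristic_S\}_{S \subseteq \cN}$ with weights $\{p_S\}$, and applying convexity of $\hat{f}$ followed by the identity above yields
\[
	\hat{f}(\vx) = \hat{f}\Big(\sum_{S \subseteq \cN} p_S \characteristic_S\Big) \leq \sum_{S \subseteq \cN} p_S \hat{f}(\characteristic_S) = \sum_{S \subseteq \cN} p_S f(S) = \bE[f(D_\vx)]
	\enspace,
\]
as desired.

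I do not anticipate a genuine obstacle: the statement is, modulo this short argument, just a reformulation of Lov\'{a}sz's theorem identifying $\hat{f}$ with the convex closure of $f$ (the pointwise-minimal value of $\bE[f(D)]$ over random sets $D$ with the prescribed marginals), and one could alternatively cite that theorem directly. The only place where submodularity is actually used is in invoking the convexity of $\hat{f}$; for a self-contained proof that avoids relying on it, one would instead argue directly by an exchange step on the support of $D_\vx$ --- replacing two incomparable sets $A, B$ in the support by $A \cap B$ and $A \cup B$ (which preserves all marginals and, by submodularity, does not increase $\bE[f(D_\vx)]$) until the support forms a chain, at which point $\bE[f(D_\vx)]$ equals $\hat{f}(\vx)$ by the threshold-set description of the Lov\'{a}sz extension.
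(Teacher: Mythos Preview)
Your proof is correct. The paper does not give its own proof of this lemma; it simply states that the lemma ``stems from an equality, proved by Lov\'{a}sz, between the Lov\'{a}sz extension of a submodular function and another extension known as the convex closure,'' i.e., $\hat{f}(\vx) = \min\{\bE[f(D)] : D \text{ has marginals } \vx\}$, from which the inequality is immediate. Your argument is essentially the same appeal to Lov\'{a}sz's theorem, just packaged via convexity of $\hat{f}$ and Jensen rather than via the convex-closure identity directly; the exchange-argument sketch you add at the end is in fact the standard proof of that identity.
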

As a consequence of the last lemma, we get the following useful corollary.\footnote{One can observe that Corollary~\ref{cor:sampling} is a generalization of the often used Lemma~2.2 of~\cite{buchbinder2014submodular}.}
\begin{corollary} \label{cor:sampling}
Let $f\colon 2^{\cN} \to \nnR$ be a non-negative $m$-monotone submodular function. For every deterministic set $O \subseteq \cN$ and random set $D \subseteq \cN$, $\bE[f(O \cup D)] \geq (1 - (1 - m) \cdot \max_{u \in \cN} \Pr[u \in D]) \cdot f(O)$.
\end{corollary}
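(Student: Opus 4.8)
The plan is to reduce the statement to Lemma~\ref{lem:lovasz} applied to the \emph{contracted} function $g\colon 2^\cN \to \nnR$ defined by $g(S) \triangleq f(O \cup S)$. First I would check that $g$ is a non-negative submodular function. Non-negativity is inherited directly from $f$. For submodularity, take $S \subseteq T \subseteq \cN$ and $u \in \cN \setminus T$; then $g(u \mid S) = f(u \mid O \cup S)$ and $g(u \mid T) = f(u \mid O \cup T)$, and $g(u \mid S) \ge g(u \mid T)$ holds by submodularity of $f$ when $u \notin O$ (since then $u$ lies outside $O \cup T$ and $O \cup S \subseteq O \cup T$), and holds trivially when $u \in O$ because both marginals then vanish.

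Next, let $\vx \in [0,1]^\cN$ be the vector of marginals of $D$, i.e.\ $x_u = \Pr[u \in D]$ for every $u \in \cN$, and set $p \triangleq \max_{u \in \cN} x_u$. Since the marginals of $D$ agree with $\vx$, Lemma~\ref{lem:lovasz} applied to $g$ gives $\bE[f(O \cup D)] = \bE[g(D)] \ge \hat{g}(\vx) = \int_0^1 g(T_\lambda(\vx))\, d\lambda = \int_0^1 f(O \cup T_\lambda(\vx))\, d\lambda$. The key observation is to split this integral at $\lambda = p$: for $\lambda \in (p, 1]$ no coordinate of $\vx$ is at least $\lambda$, so $T_\lambda(\vx) = \emptyset$ and $f(O \cup T_\lambda(\vx)) = f(O)$; while for $\lambda \in [0, p]$ we have $O \subseteq O \cup T_\lambda(\vx)$, so $m$-monotonicity yields $f(O \cup T_\lambda(\vx)) \ge m \cdot f(O)$. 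Combining the two ranges, $\bE[f(O \cup D)] \ge p \cdot m \cdot f(O) + (1 - p) \cdot f(O) = \bigl(1 - (1-m)\, p\bigr) f(O)$, which is exactly the claimed bound.

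I do not expect a genuine obstacle here; the two points that require care are (i) confirming that the contracted function $g$ really is non-negative and submodular, so that Lemma~\ref{lem:lovasz} is applicable to it, and (ii) splitting the Lov\'asz-extension integral precisely at the largest marginal $p$, so that the (potentially large) monotonicity loss factor $m$ is only incurred on a $\lambda$-interval of length $p$ rather than on all of $[0,1]$ — the latter would only give the weaker bound $\bE[f(O \cup D)] \ge m \cdot f(O)$. The degenerate case $f(O) = 0$ is immediate from non-negativity of $f$, using that the coefficient $1 - (1-m)p$ is at least $m \ge 0$.
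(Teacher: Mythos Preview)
Your proposal is correct and follows essentially the same approach as the paper: apply Lemma~\ref{lem:lovasz}, write the Lov\'asz-extension integral, split it at $p=\max_{u}\Pr[u\in D]$, and invoke $m$-monotonicity on the $[0,p]$ part. The only cosmetic difference is that you pass to the contracted function $g(S)=f(O\cup S)$ and use the marginals of $D$, whereas the paper applies Lemma~\ref{lem:lovasz} directly to $f$ with the marginals of $O\cup D$; the resulting integrands $f(O\cup T_\lambda(\cdot))$ and the final computation are identical.
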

\begin{proof}
Let $\vx$ be the vector of marginals of $O \cup D$, i.e., $x_u = \Pr[u \in O \cup D]$ for every $u \in \cN$. Then, by Lemma~\ref{lem:lovasz},
\begin{align*}
	\bE[f(O \cup D)]
	\geq{} &
	\hat{f}(\vx)
	=
	\int_0^1 f(T_\lambda(\vx)) d\lambda\\
	={} &
	\int_0^{\max_{u \in \cN} \Pr[u \in D]} f(T_\lambda(\vx)) d\lambda + \int_{\max_{u \in \cN} \Pr[u \in D]}^1 f(T_\lambda(\vx)) d\lambda\\
	={} &
	\int_0^{\max_{u \in \cN} \Pr[u \in D]} f(O \cup T_\lambda(\vx)) d\lambda + (1 - \max_{u \in \cN} \Pr[u \in D]) \cdot f(O)
	\enspace,
\end{align*}
where the last equality holds since the elements of $O$ appear in $T_\lambda(\vx)$ for every $\lambda \in [0, 1]$, and no other element appears in $T_\lambda(\vx)$ when $\lambda > \Pr[u \in D]$. Using the definition of the monotonicity ratio, the expression $f(O \cup T_\lambda(\vx))$ on the rightmost side of the previous equation can be lower bounded by $m \cdot f(O)$, which yields
\begin{align*}
	\bE[f(O \cup D)]
	\geq{} &
	\int_0^{\max_{u \in \cN} \Pr[u \in D]} m \cdot f(O) d\lambda + (1 - \max_{u \in \cN} \Pr[u \in D]) \cdot f(O)\\
	={} &
	m \cdot \max_{u \in \cN} \Pr[u \in D] \cdot f(O) + (1 - \max_{u \in \cN} \Pr[u \in D]) \cdot f(O)\\
	={} &
	(1 - (1 - m) \cdot \max_{u \in \cN} \Pr[u \in D]) \cdot f(O)
	\enspace.
	\qedhere
\end{align*}
\end{proof}

We conclude this section with the following observation, which immediately follows from the definition of the monotonicity ratio. We view this observation as evidence that the class of non-negative $m$-monotone functions is a natural class for every $m \in [0, 1]$, and not just for $m = 0$ (the class of all non-negative set functions) and $m = 1$ (the class of all non-negative monotone set functions).
\begin{observation} \label{obs:linearity}
For every two non-negative $m$-monotone functions $f, g \colon 2^\cN \to \nnR$ and constant $c \geq 0$, the following functions are also non-negative and $m$-monotone:
\begin{compactitem}
	\item $h(S) = f(S) + g(S)$,
	\item $h(S) = f(S) + c$, and
	\item $h(S) = c \cdot f(S)$.
\end{compactitem}
\end{observation}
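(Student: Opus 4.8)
The plan is to verify directly, for each of the three functions $h$, both that it is non-negative and that it satisfies the defining inequality of $m$-monotonicity, i.e., that $m \cdot h(S) \le h(T)$ for every pair of sets $S \subseteq T \subseteq \cN$. I would work with this inequality form rather than the ratio form $m = \min_{S \subseteq T} f(T)/f(S)$, which lets me avoid any case analysis around the $f(S) = 0$ convention. Non-negativity is immediate in all three cases: a pointwise sum of non-negative functions is non-negative, adding the non-negative constant $c$ preserves non-negativity, and multiplying a non-negative function by $c \ge 0$ does as well.

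For $h(S) = f(S) + g(S)$, I would fix $S \subseteq T$ and add the two inequalities $m \cdot f(S) \le f(T)$ and $m \cdot g(S) \le g(T)$ supplied by the $m$-monotonicity of $f$ and $g$, obtaining $m \cdot h(S) = m f(S) + m g(S) \le f(T) + g(T) = h(T)$. The case $h(S) = c \cdot f(S)$ is equally direct: multiplying $m \cdot f(S) \le f(T)$ by the non-negative scalar $c$ yields $m \cdot h(S) = c \cdot m f(S) \le c \cdot f(T) = h(T)$.

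The only case needing an extra line is $h(S) = f(S) + c$. Fixing $S \subseteq T$, I would write $h(T) = f(T) + c \ge m \cdot f(S) + c$, and then note that $c \ge m c$ because $c \ge 0$ and $m \le 1$ (the monotonicity ratio is by definition at most $1$); hence $h(T) \ge m f(S) + m c = m\,(f(S) + c) = m \cdot h(S)$. I do not expect any genuine obstacle here: the statement is a routine consequence of the definition, and the one point worth flagging is precisely that the ``$+c$'' argument uses $m \le 1$, which is why one cannot replace $c$ by an arbitrary non-$m$-monotone function — so the whole content of the proof is this bit of bookkeeping rather than any conceptual difficulty.
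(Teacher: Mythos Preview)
Your proposal is correct and matches the paper's treatment: the paper does not give a written proof, stating only that the observation ``immediately follows from the definition of the monotonicity ratio,'' and your verification is exactly that straightforward check. The one nontrivial point you flag --- that the case $h(S)=f(S)+c$ uses $m\le 1$ --- is the only place where anything beyond adding or scaling the defining inequality is needed, and you handle it correctly.
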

\section{Unconstrained Maximization} \label{sec:unconstrained}

Recall that in the unconstrained submodular maximization problem, we are given a non-negative submodular function $f\colon 2^\cN \to \nnR$, and the objective is to find a set $S \subseteq \cN$ that (approximately) maximizes $f(S)$. Buchbinder et al.~\cite{buchbinder2015tight} gave the first $\nicefrac{1}{2}$-approximation algorithm for this problem, known as the double greedy algorithm. The $\nicefrac{1}{2}$-approximation guarantee of double greedy is known to be optimal in general due to a matching inapproximability result due to Feige et al.~\cite{feige2011maximizing}. Nevertheless, in this section we study the extent to which one can improve over this guarantee as a function of the monotonicity ratio $m$ of $f$. Formally, we prove the following theorem.

\begin{theorem} \label{thm:unconstrained}
There exists a polynomial time algorithm that obtains an approximation ratio of $\max\{m, (2 + m)/4\}$ for unconstrained submodular maximization, but no such algorithm obtains an approximation ratio of $1 / (2 - m) + \eps$ for any constant $\eps > 0$.\footnote{In the second part of Theorem~\ref{thm:unconstrained}, like in all the other inapproximability results in this paper, we make the standard assumption that the objective function $f$ can be accessed only through a value oracle that given a set $S \subseteq \cN$ returns $f(S)$.}
\end{theorem}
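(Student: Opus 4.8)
The statement has two independent halves, and I would prove them separately.

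For the algorithmic part, I would run two procedures and output the better set. The first simply returns the whole ground set $\cN$: since the optimum $\text{OPT}$ is a subset of $\cN$, the definition of the monotonicity ratio gives $f(\cN) \ge m \cdot f(\text{OPT})$, an $m$-approximation for free. The second is the randomized double greedy algorithm of Buchbinder et al.~\cite{buchbinder2015tight}, reanalyzed for $m$-monotone objectives. The only change to the standard analysis is in its final step. Summing the per-iteration inequality $\bE[f(\text{OPT}_{i-1}) - f(\text{OPT}_i) \mid \cF_{i-1}] \le \tfrac12 \bE[(f(X_i) + f(Y_i)) - (f(X_{i-1}) + f(Y_{i-1})) \mid \cF_{i-1}]$ over all iterations and using $X_0 = \varnothing$, $Y_0 = \cN$, and $X_n = Y_n$ yields $f(\text{OPT}) \le 2\bE[f(X_n)] - \tfrac12 f(\varnothing) - \tfrac12 f(\cN)$; instead of discarding the last two terms as non-negative (which gives the $\nicefrac12$ guarantee), I would keep the bound $f(\cN) \ge m \cdot f(\text{OPT})$, and rearranging gives $\bE[f(X_n)] \ge \tfrac{2+m}{4} f(\text{OPT})$. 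Since $f(\cN)$ is deterministic, the expected value of the better of the two returned sets is at least $\max\{m, (2+m)/4\} \cdot f(\text{OPT})$; the whole procedure can be made deterministic using the known derandomization of double greedy.

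For the hardness part, the plan is to invoke Vondr\'ak's symmetry gap framework~\cite{vondrak2013symmetry}, but in a form that additionally preserves the monotonicity ratio. The gadget is the two-element instance on $\cN = \{a,b\}$ with $f(\varnothing) = 0$, $f(\{a\}) = f(\{b\}) = 1$, and $f(\{a,b\}) = m$. One checks directly that $f$ is non-negative, submodular (the marginal $f(a \mid \{b\}) = m - 1 \ge -1 = f(a \mid \varnothing) - 2$ is the only thing to verify), symmetric under swapping $a$ and $b$, and has monotonicity ratio exactly $m$ (the only binding pair is $\{a\} \subseteq \{a,b\}$, giving $f(\{a,b\})/f(\{a\}) = m$). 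Its multilinear extension is $F(x,y) = x + y - (2-m)xy$, whose maximum over $[0,1]^2$ is $1$ (attained at $(1,0)$), while its maximum over the symmetric line $x=y$ is $F\!\left(\tfrac{1}{2-m},\tfrac{1}{2-m}\right) = \tfrac{1}{2-m}$. Hence the symmetry gap of the gadget is exactly $\tfrac{1}{2-m}$, and feeding it into the monotonicity-ratio-preserving symmetry gap theorem rules out a $\left(\tfrac{1}{2-m} + \eps\right)$-approximation by any algorithm making polynomially many value-oracle queries.

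The main obstacle is the extension of the symmetry gap framework. Vondr\'ak's blow-up replaces each ground-set element by $N$ copies and defines the blown-up objective through a smoothed, symmetrized version $\hat F$ of $F$, and his proof that the blow-up preserves submodularity and, in the monotone case, monotonicity is carried out entirely through first- and second-order partial derivatives of $\hat F$. The monotonicity ratio is not a local (derivative) condition, so a new argument is needed to show that the blown-up function is still $m$-monotone, which amounts to proving $\hat F(\vy) \ge m \cdot \hat F(\vx)$ whenever $\vx \le \vy$ coordinatewise. The approach I would pursue is the coupling characterization of the monotonicity ratio: if $F$ is the multilinear extension of an $m$-monotone set function, then a monotone coupling $\RSet(\vx) \subseteq \RSet(\vy)$ for $\vx \le \vy$ gives $F(\vy) \ge m \cdot F(\vx)$ pointwise in expectation, and the same holds for the symmetrization $\vx \mapsto F(\bar\vx)$ because averaging preserves the coordinatewise order. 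The remaining — and genuinely delicate — step is to show that the specific interpolation used to build $\hat F$ from $F$ and its symmetrization is an order-preserving averaging of such values, so that the inequality $\hat F(\vy) \ge m \cdot \hat F(\vx)$ survives the smoothing; reformulating the blow-up so that this order-preservation is transparent is, I expect, where essentially all of the difficulty lies.
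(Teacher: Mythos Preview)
Your algorithmic half is exactly the paper's argument: the paper too cites the telescoped guarantee $\bE[f(X_n)] \ge \tfrac{2f(\text{OPT})+f(\varnothing)+f(\cN)}{4}$ from~\cite{buchbinder2015tight}, drops $f(\varnothing)\ge 0$, plugs in $f(\cN)\ge m\cdot f(\text{OPT})$, and takes the better of double greedy and the trivial solution $\cN$.

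For the hardness half, your gadget is the paper's gadget (the paper writes it as $f(S) = m\cdot\characteristic[S\neq\varnothing] + (1-m)\cdot(|S|\bmod 2)$, which is your function), and your symmetry-gap computation is the paper's Lemma~\ref{lem:symmetry_gap}. The extension of the symmetry gap framework to preserve the monotonicity ratio is the paper's Theorem~\ref{thm:symmetry_gap}, proved in Appendix~\ref{app:symmetry_gap}.

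Where you and the paper part ways is the ``delicate step'' you flag. You propose to argue that the interpolation building $\hat F$ is an order-preserving averaging of $F$ and $G$, so that the inequality $\hat F(\vy)\ge m\cdot\hat F(\vx)$ survives. The paper does \emph{not} do this, and it is unclear the interpolation admits such a description. Instead the paper proves that the \emph{difference} $\hat F - F$ is a monotone function: it bounds the gradient of the interpolation correction $\phi(D(\vx))\cdot[G(\vx)-F(\vx)]$ by $O(\alpha M|\cN|^{3/2})$ uniformly (Lemmata~\ref{lem:derivative_small_D}--\ref{lem:positive_partial}), and this is dominated by the positive partial derivatives of the additive $256M|\cN|\alpha J(\vx)$ term that Vondr\'ak already includes. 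Once $\hat F - F$ is monotone and non-negative, and $F$ itself obeys $m\cdot F(\vx)\le F(\vy)$ by your coupling argument, one gets
\[
m\cdot\hat F(\vx) \le m\cdot\bigl[F(\vx) + (\hat F(\vy)-F(\vy))\bigr] \le F(\vy) + (\hat F(\vy)-F(\vy)) = \hat F(\vy),
\]
using $m\le 1$ in the last step. So the trick is not to reformulate the blow-up, but to observe that $m$-monotonicity of $F$ plus ordinary monotonicity of $\hat F - F$ suffices, and the latter is a derivative condition that Vondr\'ak's existing estimates already control.
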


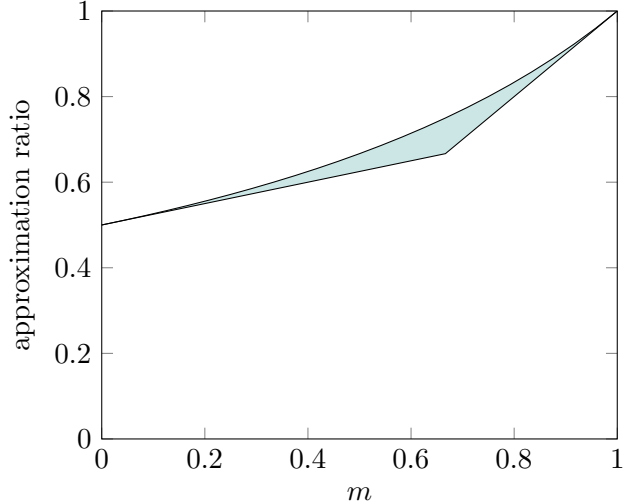
\begin{figure}
\begin{center}\begin{tikzpicture}
\begin{axis}[
    xlabel = {$m$},
    ylabel = {approximation ratio},
    xmin=0, xmax=1,
    ymin=0, ymax=1]
 
\addplot [name path = A, domain = 0:1] {max(x, (2 + x)/4)};
 
\addplot [name path = B, domain = 0:1] {1 / (2 - x)};
 
\addplot [teal!20] fill between [of = A and B];
 
\end{axis}
\end{tikzpicture}\end{center}
\caption{Graphical presentation of Theorem~\ref{thm:unconstrained}. The lower line represents the approximation guarantee (as a function of $m$) of the algorithm whose existence is guaranteed by the theorem; and the upper line represents the inapproximability result stated in the theorem. The shaded area between the lines includes the optimal approximation ratio (for any given $m$). Since this shaded area does not include a straight segment connecting the points $(0, 1/2)$ and $(1,1)$, the optimal approximation ratio does not have a linear dependence on $m$.} \label{fig:unconstrained}
\end{figure}

The guarantees stated in Theorem~\ref{thm:unconstrained} are plotted in Figure~\ref{fig:unconstrained}. This figure demonstrates that while Theorem~\ref{thm:unconstrained} does not settle the approximability of unconstrained submodular maximization as a function of $m$, the gap between its positive and negative results is quite small. More importantly, Figure~\ref{fig:unconstrained} also shows that the optimal approximation ratio for unconstrained submodular maximization cannot have a linear dependence on $m$.

The following simple proposition proves the first part of Theorem~\ref{thm:unconstrained}.
\begin{proposition}
There exists a polynomial time algorithm that obtains an approximation ratio of $\max\{m, (2 + m)/4\}$ for unconstrained submodular maximization.
\end{proposition}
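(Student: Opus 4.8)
The plan is to have the algorithm compute two candidate solutions and return whichever one has the larger $f$-value: one candidate is the ground set $\cN$ itself, and the other is the (random) output of the randomized double greedy algorithm of Buchbinder et al.~\cite{buchbinder2015tight}. Both candidates are produced using $O(|\cN|)$ value-oracle queries, and comparing them costs two more queries, so the algorithm runs in polynomial time and, importantly, never refers to the (possibly hard to compute) value of $m$. Fixing an optimal solution $O \subseteq \cN$, it suffices to prove the two bounds $f(\cN) \geq m \cdot f(O)$ and $\bE[f(A)] \geq \tfrac{2+m}{4} \cdot f(O)$, where $A$ is the set returned by double greedy: if $S$ denotes the set actually returned, then $f(S) \geq f(\cN)$ and $f(S) \geq f(A)$ always hold, hence $\bE[f(S)] \geq \max\{f(\cN),\, \bE[f(A)]\} \geq \max\{m,\, (2+m)/4\} \cdot f(O)$.

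The first bound is immediate: applying the definition of the monotonicity ratio to the pair $O \subseteq \cN$ gives $f(\cN) \geq m \cdot f(O)$ (the inequality being vacuous when $f(O) = 0$).

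For the second bound I would revisit the standard analysis of randomized double greedy. Recall that the algorithm maintains increasing sets $\emptyset = X_0 \subseteq X_1 \subseteq \dots \subseteq X_n$ and decreasing sets $\cN = Y_0 \supseteq Y_1 \supseteq \dots \supseteq Y_n = X_n =: A$, and that the core of its analysis is the per-step inequality
\[
	\bE\big[f(O_{i-1}) - f(O_i)\big]
	\;\leq\;
	\tfrac{1}{2}\,\bE\big[(f(X_i) - f(X_{i-1})) + (f(Y_i) - f(Y_{i-1}))\big],
\]
where $O_i \triangleq (O \cup X_i) \cap Y_i$, so that $O_0 = O$ and $O_n = A$. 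This inequality relies only on the submodularity and non-negativity of $f$ — not on any monotonicity assumption — so it holds here verbatim. Summing it over $i = 1, \dots, n$ and telescoping yields
\[
	f(O) - \bE[f(A)]
	\;\leq\;
	\tfrac{1}{2}\big(\bE[f(A)] - f(\emptyset)\big) + \tfrac{1}{2}\big(\bE[f(A)] - f(\cN)\big)
	\;\leq\;
	\bE[f(A)] - \tfrac{m}{2}\,f(O),
\]
where the last step discards the non-negative term $f(\emptyset)$ and substitutes the bound $f(\cN) \geq m \cdot f(O)$ proved above. Rearranging gives $\bE[f(A)] \geq \tfrac{2+m}{4}\,f(O)$, as needed.

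The only point that requires care is the faithful reuse of the per-step double greedy inequality; since it is a known lemma whose proof (a short case analysis on whether the current element is added to $X$ or removed from $Y$) needs no new ideas, I expect to cite it rather than reprove it. The genuinely new ingredient is minimal: where the classical analysis simply drops the non-negative quantity $f(Y_0) = f(\cN)$, we instead retain it and lower-bound it through the monotonicity ratio — and we pair double greedy with the trivial ``output $\cN$'' algorithm to handle the regime $m > 2/3$, in which $m$ exceeds $(2+m)/4$.
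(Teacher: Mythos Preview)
Your proposal is correct and follows essentially the same approach as the paper: both combine the trivial ``output $\cN$'' algorithm (giving the $m$ bound) with double greedy, and both extract the $(2+m)/4$ guarantee by keeping the $f(\cN)$ term in the standard double-greedy analysis and lower-bounding it via $f(\cN) \geq m \cdot f(OPT)$. The only cosmetic difference is that the paper directly cites the known inequality $\bE[f(A)] \geq \tfrac{1}{4}(2f(OPT) + f(\varnothing) + f(\cN))$ from~\cite{buchbinder2015tight}, whereas you re-derive it from the per-step telescoping argument.
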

\begin{proof}
The analysis of the double greedy algorithm by Buchinder et al.~\cite{buchbinder2015tight} shows that this algorithm outputs a solution set of expected value at least
\[
	\frac{2f(OPT) + f(\varnothing) + f(\cN)}{4}
	\enspace,
\]
where $OPT$ is an arbirary optimal solution. In general, this only shows $\nicefrac{1}{2}$-approximation, as is claimed by~\cite{buchbinder2015tight}. However, when the monotonicity ratio is taken into account,
\[
	\frac{2f(OPT) + f(\varnothing) + f(\cN)}{4}
	\geq
	\frac{2f(OPT) + f(\cN)}{4}
	\geq
	\frac{2f(OPT) + m \cdot f(OPT)}{4}
	=
	\frac{2 + m}{4} \cdot f(OPT)
	\enspace,
\]
where the first inequality follows from the non-negativity of $f$, and the second inequality holds since $OPT$ is a subset of $\cN$. Hence, the double greedy algorithm of Buchinder et al.~\cite{buchbinder2015tight} is a polynomial time algorithm guaranteeing $(2 + m)/4$-approximation.

To complete the proof of the proposition, we note that the trivial algorithm that always outputs the set $\cN$ has an approximation ratio of at least $m$ because $f(\cN) \geq m \cdot f(OPT)$. Hence, outputting the better solution among $\cN$ and the output of double greedy guarantees $\max\{m, (2 + m)/4\}$-approximation.
\end{proof}

We now would like to prove the second part of Theorem~\ref{thm:unconstrained}. We do this using a generalization of the symmetry gap framework of Vondr\'{a}k~\cite{vondrak2013symmetry} that is given as Theorem~\ref{thm:symmetry_gap}. To formally state Theorem~\ref{thm:symmetry_gap}, we first need to present some definitions from~\cite{vondrak2013symmetry}.

\begin{definition}[Strong symmetry]
Consider a non-negative submodular function $f$ and a collection $\cF \subseteq 2^\cN$ of feasible sets. The problem $\max\{f(S) \mid S \in \cF\}$ is strongly symmetric with respect to a group of permutations $\cG$ on $\cN$, if (1) $f(S) = f(\sigma(S))$ for all $S \subseteq \cN$ and $\sigma \in \cG$, and (2) $S \in \cF \iff S' \in \cF$ whenever $\bE_{\sigma \in \cG} [\characteristic_{\sigma(S)}] = \bE_{\sigma \in \cG} [\characteristic_{\sigma(S')}]$, where $\bE_{\sigma \in \cG}$ represents the expectation over picking $\sigma$ uniformly at random out of $\cG$.
\end{definition}

\begin{definition}[Symmetry gap]
Consider a non-negative submodular function $f$ and a collection $\cF \subseteq 2^\cN$ of feasible sets. Let $F(x)$ be the multilinear extension of $f$ and $P(\cF) \subseteq [0, 1]^\cN$ be the convex hull of $\cF$. Then, if the problem $\max\{f(S) \mid S \in \cF\}$ is strongly symmetric with respect to a graph $\cG$ of permutation, then its symmetry is defined as
\[
	\frac{\max\{F(\bar{\vx}) \mid \vx \in P(\cF)\}}{\max\{F(\vx) \mid \vx \in P(\cF)\}}
	\enspace,
\]
where $\bar{\vx} \triangleq \bE_{\sigma \in \cG}[\sigma(\vx)]$.
\end{definition}

\begin{definition}[Refinement]
Consider a set $\cF \subseteq 2^\cN$, and let $X$ be some set. We say that $\tilde{\cF} \subseteq 2^{\cN \times X}$ is a refinement
of $\cF$ if
\[
 \tilde{F}
	=
	\left\{S \subseteq \cN \times X ~\middle|~ \text{$\vx \in P(\cF)$, where $x_u = \tfrac{|S \cap (\{u\} \times X)|}{|X|}$ for all $u \in \cN$}\right\}
	\enspace.
\]
\end{definition}

\begin{restatable}{theorem}{thmSymmetryGap} \label{thm:symmetry_gap}
Consider a non-negative $m$-monotone submodular function $f$ and a collection $\cF \subseteq 2^\cN$ of feasible sets such that the problem $\max\{f(S) \mid S \in \cF\}$ is strongly symmetric with respect to some group $\cG$ of permutations over $\cN$ and has a symmetry gap $\gamma$. Let $\cC$ be the class of problems $\max\{\tilde{f}(S) \mid S \in \tilde{F}\}$ in which $\tilde{f}$ is a non-negative $m$-monotone submodular function, and $\tilde{F}$ is a refinement of $F$. Then, for every $\eps > 0$, any (even randomized) $(1 + \eps)\gamma$-approximation algorithm for the class $\cC$ would require exponentially many value queries to $\tilde{f}$.
\end{restatable}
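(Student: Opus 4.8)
The plan is to follow Vondr\'{a}k's original symmetry gap argument~\cite{vondrak2013symmetry} and track where the $m$-monotonicity of $f$ needs to be preserved. The overall strategy is: given the symmetric instance $(f, \cF)$ with symmetry gap $\gamma$, first symmetrize and smooth $f$ to produce a continuous function $\bar{F}$ on $[0,1]^\cN$, then ``blow up'' the ground set by replacing each element $u$ with a cloud $\{u\} \times X$ for a large finite set $X$, and define two set functions $\tilde{f}_1, \tilde{f}_2$ on $\cN \times X$ — one built from $F$ (the multilinear extension) and one built from $\bar{F}$ — that agree with high probability on a random set but whose optima over the refinement $\tilde{F}$ differ by essentially the factor $\gamma$. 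An algorithm making only polynomially many value queries cannot distinguish the two, so it cannot beat $(1+\eps)\gamma$-approximation on the worse of the two instances.

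\medskip
First I would recall the construction of the symmetrized smooth surrogate. Let $\bar{\vx} = \bE_{\sigma \in \cG}[\sigma(\vx)]$ as in the definition of symmetry gap, and let $G(\vx) = F(\bar{\vx})$; then mollify $G$ slightly (convolving with a small Gaussian/uniform kernel and restricting to the cube) to obtain $\hat{G}$ that is still submodular (non-positive second mixed derivatives are preserved under averaging and convolution) and strictly satisfies a margin condition needed for the later discretization. The key new point is monotonicity: I need to check that $\bar{F}$ and its mollification inherit $m$-monotonicity in the appropriate continuous sense, namely that $\hat{G}(\vy) \geq m\,\hat{G}(\vx)$ whenever $\vx \leq \vy$ coordinatewise. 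Averaging over $\cG$ preserves this because each $\sigma$ is a coordinate permutation, hence order-preserving on the cube; and convolution with a non-negative kernel, followed by clipping to $[0,1]^\cN$, also preserves a coordinatewise-monotone-ratio inequality since it is again an averaging operation over shifted evaluation points that respects the partial order (with the standard care at the boundary of the cube, where one reflects or truncates the kernel). This is exactly the place where the paper says the original derivative-based phrasing of monotonicity breaks down, so I would phrase everything in terms of the inequality $\hat{G}(\vy)\ge m\,\hat{G}(\vx)$ rather than in terms of first derivatives.

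\medskip
Next I would carry out the discretization. Fix $n = |X|$ large, and for a set $S \subseteq \cN \times X$ let $\vx(S)$ be its ``fractional footprint'', $x_u = |S \cap (\{u\}\times X)|/n$. Define $\tilde{f}_{\hat G}(S) = \hat{G}(\vx(S))$ and $\tilde{f}_{F}(S)$ by a similar but element-wise-sensitive formula derived from $F$ (the one Vondr\'{a}k uses, where within each cloud the function genuinely depends on which copies are chosen, so that $\tilde{f}_F$ is submodular and not merely a function of the footprint). Both are non-negative; both are submodular by the standard verification; and both are $m$-monotone: for $\tilde{f}_{\hat G}$ this is immediate from the continuous inequality above since enlarging $S$ enlarges $\vx(S)$ coordinatewise, and for $\tilde{f}_F$ it follows because $F$ is the multilinear extension of the $m$-monotone $f$, so $F$ itself satisfies $F(\vy)\ge m\,F(\vx)$ for $\vx\le\vy$ (integrate the $m$-monotonicity of $f$ against the product measure, exactly as in the proof of Corollary~\ref{cor:sampling}), and the blow-up construction only adds more coordinates on which monotonicity-ratio inequalities hold coordinatewise. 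Thus both candidate hard instances lie in the class $\cC$.

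\medskip
Then I would invoke the indistinguishability and value arguments verbatim from~\cite{vondrak2013symmetry}: (i) on a uniformly random set $S$ of a given footprint, $\tilde{f}_F(S)$ and $\tilde{f}_{\hat G}(S)$ coincide except with exponentially small probability (concentration of the within-cloud averages around the footprint), so any algorithm with subexponentially many queries behaves identically on the two instances with overwhelming probability; (ii) $\max_{S\in\tilde F}\tilde{f}_{\hat G}(S) = \max_{\vx\in P(\cF)}\hat{G}(\vx)$ which, by the symmetry gap, is at most $(1+o(1))\gamma$ times $\max_{\vx\in P(\cF)} F(\vx) = \max_{S\in\tilde F}\tilde{f}_{F}(S)$; so an algorithm cannot simultaneously achieve more than $(1+\eps)\gamma$ on $\tilde{f}_F$. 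The main obstacle is the second paragraph: pushing the monotonicity-ratio inequality through the mollification step at the boundary of the cube without degrading the constant $m$, since naive convolution sends mass outside $[0,1]^\cN$; I would handle this by using a one-sided kernel (supported on $[0,\delta]^\cN$ in the ``decreasing'' directions) or by the truncation trick already present in Vondr\'{a}k's argument, and verifying that truncation, being a coordinatewise $\min$/$\max$ with a constant, preserves both submodularity and the $m$-monotonicity inequality.
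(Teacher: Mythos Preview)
Your high-level plan is right, and you correctly isolate the non-derivative formulation of $m$-monotonicity and its preservation under symmetrization and discretization. But you misidentify what $\hat{F}$ is and therefore what the real obstacle is.

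In Vondr\'{a}k's construction (which the paper reuses verbatim), $\hat{F}$ is \emph{not} a mollification of $F$ by a kernel. It is
\[
\hat{F}(\vx) \;=\; \underbrace{(1-\phi(D(\vx)))\,F(\vx) + \phi(D(\vx))\,G(\vx)}_{\tilde{F}(\vx)} \;+\; 256M|\cN|\alpha\,J(\vx),
\]
where $D(\vx)=\|\vx-\bar\vx\|_2^2$, $\phi$ is a bump function equal to $1$ near $0$, and $J(\vx)=|\cN|^2+3|\cN|\sum_u x_u-(\sum_u x_u)^2$ is an additive ``concavifying'' term. The interpolation via $\phi(D(\vx))$ is what forces $\hat{F}=\hat{G}$ on the near-symmetric region (Property~3), and a convolution would not do this. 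Consequently your ``main obstacle'' (kernel mass escaping the cube) is not the issue at all, and your fixes (one-sided kernel, truncation) address a non-problem.

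The genuine difficulty is proving that this specific $\hat{F}$ is $m$-monotone, and here the paper's argument is the missing idea in your proposal. One cannot argue $m$-monotonicity of $\tilde{F}$ directly, because the coefficient $\phi(D(\vx))$ depends on $\vx$ in a non-monotone way. Instead, the paper bounds the gradient of the correction term
\[
\tilde{F}(\vx)-F(\vx)\;=\;\phi(D(\vx))\,[G(\vx)-F(\vx)]
\]
uniformly by $72\alpha M|\cN|^{3/2}$ (splitting into the regimes $D(\vx)\le\beta$ and $D(\vx)\ge\beta$ and using Vondr\'{a}k's estimates on $|G-F|$, $\|\nabla G-\nabla F\|$, $|t\phi'(t)|$, and $\phi(t)\le e^{-1/\alpha}$ for $t\ge\beta$). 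Since $\partial J/\partial x_u\ge |\cN|$, the additive $J$-term contributes at least $256\alpha M|\cN|^2$ to each partial derivative of $\hat{F}-F$, swamping the $72\alpha M|\cN|^{3/2}$ bound. Hence $\hat{F}-F$ is genuinely \emph{monotone} (non-negative partial derivatives) and non-negative. Combining this with the continuous $m$-monotonicity of $F$ itself (which you do observe) gives $m\,\hat{F}(\vx)\le \hat{F}(\vy)$ for $\vx\le\vy$. Your write-up needs this derivative-bound-plus-$J$ step; without it the $m$-monotonicity of $\hat{F}$ is unproved.
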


While adapting the proof of Vondr\'{a}k~\cite{vondrak2013symmetry} to obtain Theorem~\ref{thm:symmetry_gap} is not trivial, as is discussed in Section~\ref{ssc:results}, the adaptation is quite technical and non-inspiring. Therefore, we defer the proof of Theorem~\ref{thm:symmetry_gap} to Appendix~\ref{app:symmetry_gap}.

To use Theorem~\ref{thm:symmetry_gap}, we need to define a submodular maximization problem with a significant symmetry gap. Specifically, let us choose $\cN = \{u, v\}$, $f(S) = m \cdot \characteristic[S \neq \varnothing] + (1 - m) \cdot (|S| \bmod 2)$ and $\cF = 2^\cN$, where $m$ is an arbitrary constant $m \in [0, 1]$. The function $f$ is submodular and non-negative since it a convex combination of the functions $\characteristic[S \neq \varnothing]$ and $|S| \bmod 2$ which are well-known to have these properties. One can also verify via the definitions that the monotonicity ratio of $f$ is exactly $m$, and that the problem $\max\{f(S) \mid S \in \cF\}$ is strongly symmetric with respect to the group $\cG$ of the two possible permutations of $\cN$. The following lemma calculates the symmetry gap of this problem.
\begin{lemma} \label{lem:symmetry_gap}
The problem $\max\{f(S) \mid S \in \cF\}$ has a symmetry gap of $\frac{1}{2 - m}$.
\end{lemma}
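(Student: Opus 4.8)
The plan is to compute both the numerator and the denominator of the symmetry-gap ratio explicitly. Since $\cF = 2^\cN$, we have $P(\cF) = [0,1]^\cN = [0,1]^2$, identifying $\cN = \{u, v\}$. First I would record the four values of $f$: $f(\varnothing) = 0$, $f(\{u\}) = f(\{v\}) = m + (1 - m) = 1$, and $f(\{u, v\}) = m$. Substituting these into the definition of the multilinear extension gives, for $\vx = (x_u, x_v)$,
\[
	F(\vx) = x_u(1 - x_v) + (1 - x_u)x_v + m \cdot x_u x_v = x_u + x_v - (2 - m)\,x_u x_v
	\enspace.
\]

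For the denominator $\max\{F(\vx) \mid \vx \in [0,1]^2\}$, I would observe that the mixed second derivative $\partial^2 F / \partial x_u \partial x_v = -(2 - m)$ is nonzero (as $2 - m \geq 1$) while the pure second derivatives vanish, so the Hessian of $F$ is indefinite; hence any interior stationary point is a saddle and the maximum is attained on the boundary of the square. On the edge $x_u = 0$ we have $F = x_v \leq 1$; on the edge $x_u = 1$ we have $F = 1 - (1 - m)x_v \leq 1$; and the other two edges are symmetric. Therefore the denominator equals $1$ (attained at $\characteristic_{\{u\}}$ and $\characteristic_{\{v\}}$).

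For the numerator, since $\cG$ consists of the identity and the transposition of $u$ and $v$, for any $\vx \in [0,1]^2$ the symmetrization is $\bar{\vx} = \tfrac{1}{2}(x_u, x_v) + \tfrac{1}{2}(x_v, x_u) = (t, t)$ where $t \triangleq (x_u + x_v)/2$, and $t$ ranges over all of $[0,1]$ as $\vx$ ranges over the square. Thus the numerator equals $\max_{t \in [0,1]} F(t, t) = \max_{t \in [0,1]} \bigl(2t - (2 - m)t^2\bigr)$. This one-variable quadratic is concave with unconstrained maximizer $t^\star = 1/(2 - m)$, which lies in $[\tfrac{1}{2}, 1] \subseteq [0,1]$ precisely because $m \leq 1$; substituting back gives the value $1/(2 - m)$. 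Dividing the numerator by the denominator then yields a symmetry gap of $\tfrac{1}{2 - m}$, as claimed.

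The whole argument is elementary arithmetic; the only two points needing a moment's care are ruling out that the continuous maximum in the denominator exceeds the discrete value $1$ (handled by the indefiniteness of the Hessian, so that one only needs to check the four edges), and verifying that the optimal symmetric point $t^\star$ is feasible, i.e. lies in $[0,1]$, which is exactly where the hypothesis $m \in [0,1]$ enters.
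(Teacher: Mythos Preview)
Your proof is correct and follows essentially the same route as the paper: compute $F$ explicitly, show the unconstrained maximum is $1$, and maximize $F(t,t)=2t-(2-m)t^{2}$ over $t\in[0,1]$ to get $1/(2-m)$. The only cosmetic difference is that for the denominator the paper observes directly that $F(\vx)$ is a convex combination of the values $f(S)$, so $\max_{\vx} F(\vx)=\max_S f(S)=1$, whereas you reach the same conclusion by a Hessian/boundary check; both are fine.
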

\begin{proof}
Observe that our definition of $\cF$ implies that $P(\cF) = [0, 1]^\cN$. Therefore,
\begin{equation} \label{eq:non_symmetric_maximum}
	\max\{F(\vx) \mid \vx \in P(\cF)\}
	=
	\max\{F(\vx) \mid \vx \in [0, 1]^\cN\}
	=
	\max\{f(S) \mid S \subseteq \cN\}
	=
	1
	\enspace,
\end{equation}
where the second equality holds since, for every vector $\vx$, $F(\vx)$ is a convex combination of values of $f$ for subsets of $\cN$; and on the other hand, for every set $S \subseteq \cN$, $f(S) = F(\characteristic_S)$.

Observe now that the definition of $f$ implies that
\begin{align*}
	F(\vx)
	={} &
	m[1 - (1 - x_u)(1 - x_v)] + (1 - m) \cdot [x_u(1 - x_v) + x_v(1 - x_u)]\\
	={} &
	x_u + x_v - x_u x_v(2 - m)
	\enspace.
\end{align*}
Since $\bar{\vx}$ is a vector that has the value $(x_u + x_v)/2$ in both its coordinates, if we we use the shorthand $y = (x_u + x_v)/2$, then we get
\[
	F(\bar{\vx})
	=
	2y - (2 - m)y^2
	\enspace.
\]
This expression is maximized for $y = 1 / (2 - m)$, and the maximum attained for this $y$ is
\[
	\frac{2}{2 - m} - \frac{(2 - m)}{(2 - m)^2}
	=
	\frac{1}{2 - m}
	\enspace.
\]
Since the value $y = 1 / (2 - m)$ is obtained, for example, when $\vx = (y, y) \in [0, 1]^\cN$, the above implies
\[
	\max\{F(\bar{\vx}) \mid \vx \in P(\cF)\}
	=
	\frac{1}{2 - m}
	\enspace.
\]
Together with Equation~\eqref{eq:non_symmetric_maximum}, this implies the lemma.
\end{proof}

The second part of Theorem~\ref{thm:unconstrained} now follows from Theorem~\ref{thm:symmetry_gap}, Lemma~\ref{lem:symmetry_gap} and the properties of $f$ discussed before the lemma because any refinement $\tilde{\cF}$ of $\cF$ is equal to the entire set $2^{\cN \times X}$ for some set $X$.
\section{Maximization subject to a Cardinality Constraint} \label{sec:cardinality}

In this section we consider the problem of maximizing a non-negative submodular function $f\colon 2^\cN \to \nnR$ subject to a cardinality constraint. In other words, we are given an integer value $1 \leq k \leq |\cN|$, and the objective is to output a set $S \subseteq \cN$ of size at most $k$ (approximately) maximizing $f$ among such sets. When the objective function $f$ is guaranteed to be monotone, it is long known that a standard greedy algorithm (Algorithm~\ref{alg:greedy} below) guarantees $(1 - 1/e)$-approximation for the above problem~\cite{nemhauser1078analysis}, and that this is essentially the best possible for any polynomial time algorithm~\cite{nemhauser1978best}. Unfortunately, however, the greedy algorithm has no constant approximation guarantee when the objective function is not guaranteed to be monotone (see~\cite{buchbinder2018submodular} for a simple example demonstrating this). In Section~\ref{ssc:greedy} we prove Theorem~\ref{thm:greedy}, which generalizes the result of~\cite{nemhauser1078analysis}, and proves an approximation guarantee for the greedy algorithm that deteriorates gracefully from $1 - 1/e$ to $0$ as the monotonicity ratio $m$ goes from $1$ to $0$.

\begin{restatable}{theorem}{thmGreedy} \label{thm:greedy}
The Greedy algorithm (Algorithm~\ref{alg:greedy}) has an approximation ratio of at least $m(1 - 1/e)$ for the problem of maximizing a non-negative $m$-monotone submodular function subject to a cardinality constraint.
\end{restatable}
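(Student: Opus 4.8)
The plan is to follow the classical analysis of the greedy algorithm for monotone submodular maximization (Nemhauser, Wolsey and Fisher), isolating the single place where monotonicity is invoked and replacing it by the weaker bound that the monotonicity ratio provides. Let $S_0 = \varnothing, S_1, \dots, S_k$ be the sequence of sets maintained by Greedy (so $S_{i+1}$ is obtained from $S_i$ by adding an element of maximum marginal contribution with respect to $f$, and $|S_i| = i$), let $OPT$ be an optimal solution (hence $|OPT| \le k$), and set $\delta_i \triangleq m \cdot f(OPT) - f(S_i)$. I aim to show that $\max\{\delta_{i+1}, 0\} \le (1 - 1/k)\cdot\max\{\delta_i, 0\}$ for every $0 \le i < k$. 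Granting this, iterating gives $\max\{\delta_k, 0\} \le (1 - 1/k)^k \max\{\delta_0, 0\} \le e^{-1}\cdot\max\{\delta_0, 0\} \le e^{-1}\cdot m \cdot f(OPT)$, where the final inequality uses $\delta_0 = m\cdot f(OPT) - f(\varnothing) \le m\cdot f(OPT)$ by non-negativity of $f$; hence $f(S_k) = m\cdot f(OPT) - \delta_k \ge (1 - 1/e)\cdot m\cdot f(OPT)$, which is the claimed bound.

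To prove the one-step inequality, consider first the main case $\delta_i \ge 0$, i.e.\ $f(S_i) \le m\cdot f(OPT)$ (we may assume $OPT \setminus S_i \ne \varnothing$, since otherwise $OPT \subseteq S_{i+1}$ gives $f(S_{i+1}) \ge m\cdot f(OPT)$ directly). Applying the definition of the monotonicity ratio to $OPT \subseteq OPT \cup S_i$ yields $f(OPT \cup S_i) \ge m\cdot f(OPT) \ge f(S_i)$, so in particular $f(OPT \cup S_i) - f(S_i) \ge 0$. By submodularity $f(OPT \cup S_i) - f(S_i) \le \sum_{o \in OPT \setminus S_i} f(o \mid S_i)$, and since this sum is non-negative, has at most $|OPT| \le k$ terms, and Greedy selected the element of largest marginal contribution, $f(S_{i+1}) - f(S_i) \ge \tfrac{1}{k}\sum_{o \in OPT \setminus S_i} f(o \mid S_i) \ge \tfrac{1}{k}\bigl(m\cdot f(OPT) - f(S_i)\bigr) = \delta_i / k$. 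Therefore $\delta_{i+1} = \delta_i - \bigl(f(S_{i+1}) - f(S_i)\bigr) \le (1 - 1/k)\delta_i = (1 - 1/k)\max\{\delta_i, 0\}$, as needed.

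It remains to handle the complementary case $\delta_i < 0$, where it suffices to show $\delta_{i+1} \le 0$, i.e.\ $f(S_{i+1}) \ge m\cdot f(OPT)$. If $OPT \subseteq S_i \subseteq S_{i+1}$ this is immediate from the monotonicity ratio. Otherwise $OPT \setminus S_i \ne \varnothing$, and as above $f(S_{i+1}) - f(S_i) \ge \tfrac{1}{|OPT \setminus S_i|}\sum_{o \in OPT \setminus S_i} f(o \mid S_i) \ge \tfrac{1}{|OPT \setminus S_i|}\bigl(f(OPT \cup S_i) - f(S_i)\bigr)$. If $f(OPT \cup S_i) \ge f(S_i)$ this already gives $f(S_{i+1}) \ge f(S_i) > m\cdot f(OPT)$; and if $f(OPT \cup S_i) < f(S_i)$, then since the factor $1/|OPT \setminus S_i|$ lies in $(0,1]$ and multiplies a negative quantity, $f(S_{i+1}) - f(S_i) \ge f(OPT \cup S_i) - f(S_i) \ge m\cdot f(OPT) - f(S_i)$, i.e.\ $f(S_{i+1}) \ge m\cdot f(OPT)$.

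I do not expect a serious obstacle: the argument is essentially the textbook greedy analysis with $f(OPT)$ replaced by $m\cdot f(OPT)$ in the monotonicity step, and the only genuinely new point is the minor bookkeeping above needed to cover iterations in which $f(S_i)$ has already overtaken $m\cdot f(OPT)$ (so that the averaging step ``divide by $k$ rather than by $|OPT \setminus S_i|$'' is no longer valid). The conceptually interesting observation — and the reason the guarantee degrades smoothly to $0$ as $m \to 0$ — is simply that the passage from $f(OPT) \le f(OPT \cup S_i)$ to $m\cdot f(OPT) \le f(OPT \cup S_i)$ is the sole use of monotonicity in the entire proof.
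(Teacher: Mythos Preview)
Your proof is correct and follows essentially the same route as the paper: bound the per-step gain by $\tfrac{1}{k}\bigl(m\cdot f(OPT) - f(S_i)\bigr)$ using submodularity together with the single monotonicity-ratio step $f(OPT \cup S_i) \ge m \cdot f(OPT)$, and then iterate. One minor caveat: the paper's Algorithm~\ref{alg:greedy} adds $u_i$ only when its marginal is non-negative, so your description with $|S_i|=i$ does not literally match the stated algorithm; however, your main case ($\delta_i\ge 0$) already forces the best marginal to be non-negative, and in your complementary case the paper's variant (which would set $S_{i+1}=S_i$ if the best marginal were negative) makes $\delta_{i+1}\le 0$ immediate, so the argument transfers without change. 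The paper's write-up differs cosmetically in that it case-splits on the sign of the best marginal rather than on the sign of $\delta_i$, which lets it prove $\delta_i \le (1-1/k)\delta_{i-1}$ unconditionally and avoid your $\max\{\cdot,0\}$ bookkeeping.
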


Following a long line of works~\cite{buchbinder2014submodular,ene2016constrained,feldman2011unified,lee2009non-monotone,oveisgharan2011submodular,vondrak2013symmetry}, the state-of-the-art approximation guarantee for the case in which the objective function $f$ is not guaranteed to be monotone is currently $0.385$~\cite{buchbinder2019constrained}. However, the algorithm obtaining this approximation ratio is quite involved, which limits its practicality. Arguably, the state-of-the-art approximation ratio obtained by a ``simple'' algorithm is the $1/e \approx 0.367$-approximation obtained by an algorithm called Random Greedy (Algorithm~\ref{alg:random_greedy} below). Furthermore, this algorithm has the nice property that for monotone objective functions it recovers the optimal $1 - 1/e$ approximation guarantee. In Section~\ref{ssc:random_greedy} we prove Theorem~\ref{thm:random_greedy}, which gives an approximation guarantee for Random Greedy that smoothly changes as a function of $m$ and recovers the above mentioned $1/e$ and $1 - 1/e$ guarantees in the cases of $m = 0$ and $m = 1$, respectively.

\begin{restatable}{theorem}{thmRandomGreedy} \label{thm:random_greedy}
The Random Greedy algorithm (Algorithm~\ref{alg:random_greedy}) has an approximation ratio of at least $m(1 - 1/e) + (1 - m) \cdot (1/e)$ for the problem of maximizing a non-negative $m$-monotone submodular function subject to a cardinality constraint.
\end{restatable}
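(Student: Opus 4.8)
The plan is to follow the original analysis of Random Greedy due to Buchbinder, Feldman, Naor and Schwartz~\cite{buchbinder2014submodular}, replacing the single step in which that analysis appeals to a sampling lemma valid for arbitrary non-negative submodular functions (essentially the $m = 0$ special case of Corollary~\ref{cor:sampling}) with an application of Corollary~\ref{cor:sampling} itself, so that the loss incurred in that step is measured in terms of $m$. Throughout, let $\varnothing = S_0 \subseteq S_1 \subseteq \dots \subseteq S_k$ be the (random) sequence of sets produced by the algorithm, where $S_i = S_{i-1} + u_i$ and $u_i$ is drawn uniformly at random from a size-$k$ set $M_i$ maximizing $\sum_{u \in M_i} f(u \mid S_{i-1})$, and let $OPT$ be a fixed optimal solution.

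First I would record the standard per-iteration bound: conditioned on $S_{i-1}$,
\[
	\bE[f(S_i) - f(S_{i-1}) \mid S_{i-1}]
	=
	\frac 1k \sum_{u \in M_i} f(u \mid S_{i-1})
	\geq
	\frac 1k \big(f(OPT \cup S_{i-1}) - f(S_{i-1})\big)
	\enspace,
\]
where the inequality follows, exactly as in~\cite{buchbinder2014submodular}, from the optimality of $M_i$ among size-$k$ sets (using $|OPT \setminus S_{i-1}| \leq k$ and padding $OPT \setminus S_{i-1}$ up to size $k$ with a little care) together with the submodularity of $f$. Next I would bound the marginals of $S_{i-1}$: since a fixed element enters the solution with probability at most $1/k$ in each iteration, every $u \in \cN$ satisfies $\Pr[u \in S_{i-1}] \leq 1 - (1 - 1/k)^{i-1}$. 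Writing $OPT \cup S_{i-1} = OPT \cup (S_{i-1} \setminus OPT)$ and invoking Corollary~\ref{cor:sampling} with $O = OPT$ and $D = S_{i-1} \setminus OPT$ then gives
\[
	\bE[f(OPT \cup S_{i-1})]
	\geq
	\big(1 - (1 - m)(1 - (1 - 1/k)^{i-1})\big) f(OPT)
	=
	\big(m + (1 - m)(1 - 1/k)^{i-1}\big) f(OPT)
	\enspace.
\]

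Combining the two displays and taking expectations, the values $g_i \triangleq \bE[f(S_i)]$ obey the linear recursion $g_i \geq (1 - 1/k) g_{i-1} + \frac 1k \big(m + (1 - m)(1 - 1/k)^{i-1}\big) f(OPT)$, with $g_0 = f(\varnothing) \geq 0$. Unrolling this recursion yields
\[
	\frac{g_k}{f(OPT)}
	\geq
	\frac 1k \sum_{i=1}^{k} \big(m + (1 - m)(1 - 1/k)^{i-1}\big)(1 - 1/k)^{k - i}
	=
	m\big(1 - (1 - 1/k)^k\big) + (1 - m)(1 - 1/k)^{k-1}
	\enspace,
\]
because the two geometric sums evaluate to $k(1 - (1 - 1/k)^k)$ and $k(1 - 1/k)^{k-1}$, respectively. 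The proof is then completed by the elementary facts $(1 - 1/k)^k \leq 1/e$ and $(1 - 1/k)^{k-1} \geq 1/e$, valid for every integer $k \geq 1$, which turn the last bound into $m(1 - 1/e) + (1 - m)/e$.

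The per-iteration bound and the two geometric-sum evaluations are routine. The point that requires the most thought is seeing why the recursion with the \emph{iteration-dependent} coefficient $m + (1 - m)(1 - 1/k)^{i-1}$ splits so cleanly: the ``$m$-part'' reproduces the familiar $1 - (1 - 1/k)^k$ factor of the monotone greedy analysis, whereas in the ``$(1 - m)$-part'' the factor $(1 - 1/k)^{i-1}$ supplied by Corollary~\ref{cor:sampling} combines with the recursion weight $(1 - 1/k)^{k - i}$ into the iteration-\emph{independent} quantity $(1 - 1/k)^{k - 1}$, which is what produces the clean $1/e$ constant. The only other things to watch are the minor padding argument in the per-iteration bound and the (less familiar) direction of the inequality $(1 - 1/k)^{k-1} \geq 1/e$.
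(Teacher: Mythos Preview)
Your proposal is correct and essentially identical to the paper's argument: both establish the per-iteration inequality, bound $\Pr[u\in S_{i-1}]\leq 1-(1-1/k)^{i-1}$, feed this into Corollary~\ref{cor:sampling} to get $\bE[f(OPT\cup S_{i-1})]\geq\big(m+(1-m)(1-1/k)^{i-1}\big)f(OPT)$, and then solve the resulting linear recursion, finishing with $(1-1/k)^k\leq 1/e\leq(1-1/k)^{k-1}$. The only cosmetic difference is that the paper proves the closed form $\bE[f(A_i)]\geq\big[m(1-(1-1/k)^i)+(1-m)\tfrac{i}{k}(1-1/k)^{i-1}\big]f(OPT)$ for every $i$ by induction and then sets $i=k$, whereas you unroll the recursion directly at $i=k$ via the two geometric sums; both land on the same expression $m(1-(1-1/k)^k)+(1-m)(1-1/k)^{k-1}$.
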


As mentioned above, when $f$ is monotone, the optimal approximation ratio is known to be $1 - 1/e$. However, there is still gap between the state-of-the-art $0.385$-approximation for the non-monotone case and the state-of-the-art inapproximability result due to Oveis Gharan and Vondr\'{a}k~\cite{oveisgharan2011submodular}, which shows that no polynomial time algorithm can guarantee a better than roughly $0.491$-approximation. In Section~\ref{ssc:inapproximability_cardinality} we prove Theorem~\ref{thm:inapproximability_cardinality}, which proves an inapproximability result that smoothly depends on $m$ and recovers the above mentioned inapproximability results in the special cases of $m = 0$ and $m = 1$.

\begin{restatable}{theorem}{thmInapproximabilityCardinality} \label{thm:inapproximability_cardinality}
For any constant $\eps > 0$, no polynomial time algorithm can obtain an approximation ratio of
\[
	\min_{\alpha \in [0, 1]} \frac{\max_{x \in [0, 1]} \{\alpha(mx^2 + 2x - 2x^2) + 2(1 - \alpha)(1-e^{x - 1})(1 - (1 - m)x)\}}{\max\{1, 2(1 - \alpha)\}} + \eps
\]
for the problem of maximizing a non-negative $m$-monotone submodular function subject to a cardinality constraint.
\end{restatable}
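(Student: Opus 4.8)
The plan is to obtain the inapproximability bound from the generalized symmetry-gap framework of Theorem~\ref{thm:symmetry_gap}. For every $\alpha\in[0,1]$ I would exhibit a strongly symmetric instance $\max\{f_\alpha(S)\mid S\in\cF_\alpha\}$ whose objective $f_\alpha$ is a non-negative $m$-monotone submodular function, whose feasibility family $\cF_\alpha$ refines to a genuine cardinality constraint, and whose symmetry gap $\gamma(\alpha)$ equals the ratio appearing inside the outer $\min$ of the statement evaluated at $\alpha$. Just as in the unconstrained argument at the end of Section~\ref{sec:unconstrained} (where a refinement of $2^\cN$ is again the full power set), the refinement of a cardinality constraint with parameter $k$ on $\cN$ is a cardinality constraint with parameter $k|X|$ on $\cN\times X$; hence Theorem~\ref{thm:symmetry_gap} applied to the least favourable $\alpha$ rules out a $(1+\eps)\gamma(\alpha)$-approximation, and minimizing over $\alpha$ and re-parametrizing $\eps$ gives exactly the stated threshold.

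The instance I have in mind is a convex combination, with weights $\alpha$ and $1-\alpha$ and the latter contribution additionally scaled by a factor of $2$, of two $m$-monotone gadgets sharing a single cardinality budget. The first gadget is (a blown-up copy of) the two-element non-monotone gadget $f_1(S)=m\cdot\characteristic[S\neq\varnothing]+(1-m)\cdot(|S|\bmod 2)$ from Section~\ref{sec:unconstrained}; after symmetrizing across its two sides, its multilinear value at the symmetric point with common coordinate $x$ is precisely $mx^2+2x-2x^2$, matching the first term in the numerator. The second gadget is a monotone coverage-type instance under a cardinality constraint --- in fact a sequence of such instances on a growing ground set, in the spirit of Vondr\'{a}k's original $1-1/e$ symmetry-gap construction --- that has been modified so that its monotonicity ratio is exactly $m$; its symmetrized multilinear value factors as a coverage part equal to $1-e^{x-1}$ times a monotonicity-discount part equal to $1-(1-m)x$, the latter obtained by invoking Corollary~\ref{cor:sampling} to bound $\bE[f(O\cup D)]$ for the random set $D$ that the symmetrized solution induces on the modified elements. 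Since both gadgets are non-negative and $m$-monotone, Observation~\ref{obs:linearity} guarantees that the weighted sum is again non-negative and $m$-monotone, and strong symmetry of the combined problem holds with respect to the product of the two gadgets' symmetry groups (the side-swap of the first gadget together with the permutations witnessing the symmetry of the coverage gadget).

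It then remains to evaluate the numerator and denominator of the symmetry gap. Symmetrizing collapses every feasible fractional solution of the combined problem to a one-parameter family indexed by $x\in[0,1]$ (morally, the fraction of the budget that the symmetric solution spends on the non-monotone side), on which the multilinear value is exactly $\alpha(mx^2+2x-2x^2)+2(1-\alpha)(1-e^{x-1})(1-(1-m)x)$, so the symmetric optimum is the $\max_{x\in[0,1]}$ in the numerator. The non-symmetric optimum --- the maximum of the multilinear extension over all of $P(\cF_\alpha)$ --- is computed directly: here the coverage gadget behaves like an ordinary max-coverage instance in which an appropriately chosen integral $k$-set covers the whole universe, and a short case analysis (according to whether it is better to invest in the first gadget or to pour the entire budget into the coverage gadget) gives $\max\{1,\,2(1-\alpha)\}$. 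Dividing yields $\gamma(\alpha)$, and minimizing over $\alpha$ completes the proof.

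I expect the principal difficulty to lie in the second gadget. Obtaining the factor $1-e^{x-1}$ exactly forces a limiting construction, so one must either invoke Theorem~\ref{thm:symmetry_gap} on each finite member and pass to the limit of the resulting inapproximability thresholds, or check that the framework survives the limit; and one must arrange the modification that brings the monotonicity ratio down to $m$ so that it meshes with the coverage structure and produces the clean factor $1-(1-m)x$ rather than some messier expression, which is precisely where Corollary~\ref{cor:sampling} and a careful placement of the ``non-monotone'' elements inside the coverage gadget are needed. By comparison, verifying that the combined feasibility family refines to a cardinality constraint, checking $m$-monotonicity via Observation~\ref{obs:linearity}, and carrying out the calculus that identifies the denominator as $\max\{1,2(1-\alpha)\}$ are routine.
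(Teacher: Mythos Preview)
Your high-level plan matches the paper's proof almost exactly: build, for each $\alpha\in[0,1]$ and each integer $r$, a strongly symmetric instance on the ground set $\{a,b\}\cup\{a_i,b_i\mid i\in[r]\}$ with cardinality bound $2$, verify the objective is non-negative $m$-monotone submodular, compute the symmetry gap, apply Theorem~\ref{thm:symmetry_gap}, and let $r\to\infty$. The first gadget is precisely the two-element function from Section~\ref{sec:unconstrained} restricted to $\{a,b\}$, and the denominator $\max\{1,2(1-\alpha)\}$ arises from comparing the candidate optima $\{a,b_1\}$ and $\{a_1,b_1\}$, exactly as you say.

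There is one concrete detail you have not pinned down, and one step that is actually wrong. The missing detail is that the two gadgets are \emph{not} on disjoint element sets: in the paper's construction the second gadget is the pair
\[
f_2(S)=\characteristic[S\cap\{a_i\}_i\neq\varnothing]\cdot\bigl(1-(1-m)\characteristic[a\in S]\bigr),\qquad
f_3(S)=\characteristic[S\cap\{b_i\}_i\neq\varnothing]\cdot\bigl(1-(1-m)\characteristic[b\in S]\bigr),
\]
so the very elements $a,b$ that drive the first gadget are also the ``penalty'' elements that make $f_2,f_3$ non-monotone. This shared dependence is what forces the same parameter $x$ to appear simultaneously in $mx^2+2x-2x^2$, in the budget term $1-e^{x-1}$ (since after spending $2x$ on $a,b$ only $(1-x)/r$ remains for each $a_i,b_i$), and in the discount $1-(1-m)x$. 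With disjoint gadgets and a product symmetry group you would not get this coupling. Relatedly, the factor $2$ in front of the second term is not an arbitrary scaling but the fact that there are two symmetric copies $f_2,f_3$; this is required for the side-swap $a\leftrightarrow b$, $a_i\leftrightarrow b_i$ to be a symmetry of the whole objective.

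The wrong step is the appeal to Corollary~\ref{cor:sampling}. That corollary gives a \emph{lower} bound on $\bE[f(O\cup D)]$, whereas to upper bound the symmetry gap you need an \emph{upper} bound on the symmetric multilinear value; it therefore points in the wrong direction and in any case does not compute a multilinear extension. In the paper the factor $1-(1-m)x$ is obtained by a one-line direct evaluation of the multilinear extension of $1-(1-m)\characteristic[a\in S]$ at $x_a=x$. Once you write $f_2,f_3$ explicitly as above, the symmetric value $\alpha(mx^2+2x-2x^2)+2(1-\alpha)\bigl(1-(1-(1-x)/r)^r\bigr)(1-(1-m)x)$ drops out by straightforward computation, no auxiliary lemma needed.
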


\begin{figure}
\begin{center}\begin{tikzpicture}
\begin{axis}[
    xlabel = {$m$},
    ylabel = {approximation ratio},
    xmin=0, xmax=1,
    ymin=0, ymax=1,
		legend cell align=left]
 
\addplot [name path = Greedy, domain = 0:1, blue, mark=o, mark repeat=2] {x * (1 - exp(-1))};
\addlegendentry{Greedy}
 
\addplot [name path = RandomGreedy, domain = 0:1, red, mark=triangle*, mark repeat=2] {x * (1 - exp(-1)) + (1 - x) * exp(-1)};
\addlegendentry{Random Greedy}

\addplot [name path = Inapproximability, darkgreen, mark=x, mark repeat=10] coordinates {
      (0.000000, 0.490984)
      (0.010000, 0.493147)
      (0.020000, 0.495323)
      (0.030000, 0.497513)
      (0.040000, 0.499716)
      (0.050000, 0.501932)
      (0.060000, 0.504162)
      (0.070000, 0.506404)
      (0.080000, 0.508660)
      (0.090000, 0.510929)
      (0.100000, 0.513211)
      (0.110000, 0.515507)
      (0.120000, 0.517815)
      (0.130000, 0.520137)
      (0.140000, 0.522472)
      (0.150000, 0.524820)
      (0.160000, 0.527181)
      (0.170000, 0.529555)
      (0.180000, 0.531942)
      (0.190000, 0.534343)
      (0.200000, 0.536756)
      (0.210000, 0.539182)
      (0.220000, 0.541622)
      (0.230000, 0.544074)
      (0.240000, 0.546539)
      (0.250000, 0.549017)
      (0.260000, 0.551507)
      (0.270000, 0.554011)
      (0.280000, 0.556527)
      (0.290000, 0.559055)
      (0.300000, 0.561597)
      (0.310000, 0.564151)
      (0.320000, 0.566717)
      (0.330000, 0.569296)
      (0.340000, 0.571887)
      (0.350000, 0.574490)
      (0.360000, 0.577106)
      (0.370000, 0.579734)
      (0.380000, 0.582374)
      (0.390000, 0.585026)
      (0.400000, 0.587691)
      (0.410000, 0.590367)
      (0.420000, 0.593055)
      (0.430000, 0.595754)
      (0.440000, 0.598466)
      (0.450000, 0.601189)
      (0.460000, 0.603924)
      (0.470000, 0.606670)
      (0.480000, 0.609427)
      (0.490000, 0.612196)
      (0.500000, 0.614977)
      (0.510000, 0.617768)
      (0.520000, 0.620571)
      (0.530000, 0.623384)
      (0.540000, 0.626209)
      (0.550000, 0.629044)
      (0.560000, 0.631890)
      (0.570000, 0.632121)
      (0.580000, 0.632121)
      (0.590000, 0.632121)
      (0.600000, 0.632121)
      (0.610000, 0.632121)
      (0.620000, 0.632121)
      (0.630000, 0.632121)
      (0.640000, 0.632121)
      (0.650000, 0.632121)
      (0.660000, 0.632121)
      (0.670000, 0.632121)
      (0.680000, 0.632121)
      (0.690000, 0.632121)
      (0.700000, 0.632121)
      (0.710000, 0.632121)
      (0.720000, 0.632121)
      (0.730000, 0.632121)
      (0.740000, 0.632121)
      (0.750000, 0.632121)
      (0.760000, 0.632121)
      (0.770000, 0.632121)
      (0.780000, 0.632121)
      (0.790000, 0.632121)
      (0.800000, 0.632121)
      (0.810000, 0.632121)
      (0.820000, 0.632121)
      (0.830000, 0.632121)
      (0.840000, 0.632121)
      (0.850000, 0.632121)
      (0.860000, 0.632121)
      (0.870000, 0.632121)
      (0.880000, 0.632121)
      (0.890000, 0.632121)
      (0.900000, 0.632121)
      (0.910000, 0.632121)
      (0.920000, 0.632121)
      (0.930000, 0.632121)
      (0.940000, 0.632121)
      (0.950000, 0.632121)
      (0.960000, 0.632121)
      (0.970000, 0.632121)
      (0.980000, 0.632121)
      (0.990000, 0.632121)
			(1.000000, 0.632121)
};
\addlegendentry{Inapproximability}

\end{axis}
\end{tikzpicture}\end{center}
\caption{Graphical representation of the results of Section~\ref{sec:cardinality}. This plots depicted the approximation guarantees we prove for the Greedy algorithm (Theorem~\ref{thm:greedy}) and the Random Greedy algorithm (Theorem~\ref{thm:random_greedy}), and our inapproximability result (Theorem~\ref{thm:inapproximability_cardinality}).} \label{fig:cardinality}
\end{figure}
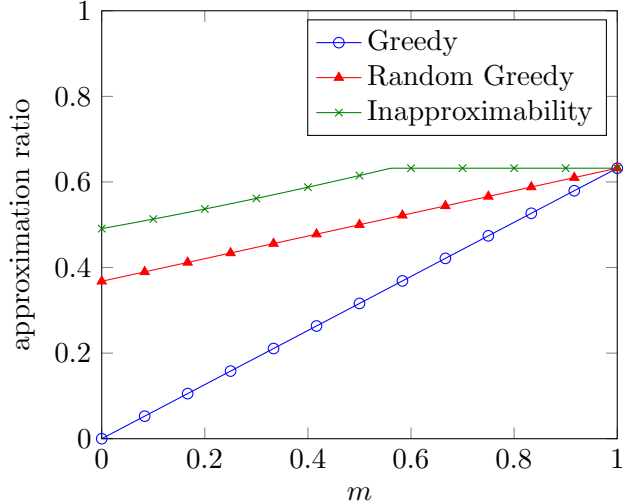

Unfortunately, the mathematical expression given in Theorem~\ref{thm:inapproximability_cardinality} is not very readable. To get an intuitive understanding of its behavior, we numerically evaluated it for various values of $m$. The plot obtained in this way appears in Figure~\ref{fig:cardinality}. For context, this figure also includes all the other results proved in this section. As is evident from Figure~\ref{fig:cardinality}, Theorem~\ref{thm:inapproximability_cardinality} improves over the $1 - 1/e$ inapproximability result of Nemhauser and Wolsey~\cite{nemhauser1978best} only for $m$ that is smaller than roughly $0.56$. This is surprising since, intuitively, one would expect the best possible approximation ratio to be strictly worse than $1 - 1/e$ for any $m < 1$. However, we were unable to prove an inapproximability that is even slightly lower than $1 - 1/e$ for any value $m > 0.56$. Understanding whether this is an artifact of our proof or a real phenomenon is an interesting question that we leave open.

\subsection{Analysis of the Greedy Algorithm} \label{ssc:greedy}

In this section we prove Theorem~\ref{thm:greedy}, which we repeat here for convenience.
\thmGreedy*

The greedy algorithm starts with an empty solution, and then augments this solution in $k$ iterations (recall that $k$ is the maximum cardinality allowed for a feasible solution). Specifically, in iteration $i$, the algorithm adds to the current solution the element $u_i$ with the best (largest) marginal contribution with respect to the current solution---but only if this addition does not decrease the value of the solution. A formal description of the greedy algorithm appears as Algorithm~\ref{alg:greedy}. Note that in this description the solution of the algorithm after $i$ iterations, for every integer $0 \leq i \leq n$, is denoted by $A_i$.

\begin{algorithm}
\DontPrintSemicolon
 Let $A_0\leftarrow \varnothing$.\;
\For{$i=1$ \KwTo $k$}{
Let $u_i$ be the element of $\mathcal{N} \setminus A_{i - 1}$ maximizing $f(u_i\mid A_{i-1})$.
\;
\lIf{$f(u_i\mid A_{i-1})\geq 0$}{Let $A_i\leftarrow A_{i-1}+u_i$.}
\lElse{Let $A_i \gets A_{i-1}$.}
}

\Return $A_k$.
\caption{The Greedy Algorithm $(f,k)$\label{alg:greedy}}
\end{algorithm}

Our first step towards proving Theorem~\ref{thm:greedy} is the following lemma, which lower bounds the increase in the value of $f(A_i)$ as a function of $i$. Specifically, the lemma shows that this increase is significant as long as there is a significant gap between between $f(A_{i - 1})$ and $m\cdot f(OPT)$, where $OPT$ is an arbitrary optimal solution.
\begin{lemma}
For every integer $1\leq i\leq k, f(A_i)-f(A_{i-1})\geq k^{-1}[m\cdot f(OPT)-f(A_{i-1})]$.
\end{lemma}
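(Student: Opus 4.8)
The plan is to mimic the classical greedy analysis of Nemhauser–Wolsey, but replacing every appeal to monotonicity with an appeal to the monotonicity ratio. Let $OPT$ be a fixed optimal solution, and fix an index $1 \le i \le k$. First I would dispose of the degenerate case: if $f(A_{i-1}) \ge m \cdot f(OPT)$, the right-hand side $k^{-1}[m \cdot f(OPT) - f(A_{i-1})]$ is non-positive, and since the algorithm only adds an element when its marginal is non-negative (and otherwise keeps $A_i = A_{i-1}$), we always have $f(A_i) - f(A_{i-1}) \ge 0 \ge k^{-1}[m \cdot f(OPT) - f(A_{i-1})]$. So from now on assume $f(A_{i-1}) < m \cdot f(OPT)$.

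The core argument is the averaging step. Enumerate $OPT \setminus A_{i-1} = \{o_1, \dots, o_t\}$ with $t \le k$. By submodularity (telescoping the marginals of adding the $o_j$'s one at a time on top of $A_{i-1}$), we get
\[
	f(A_{i-1} \cup OPT) - f(A_{i-1}) \le \sum_{j=1}^{t} f(o_j \mid A_{i-1}) \le t \cdot \max_{u \in \cN \setminus A_{i-1}} f(u \mid A_{i-1}) \le k \cdot f(u_i \mid A_{i-1})\enspace,
\]
where $u_i$ is the element chosen by Greedy in iteration $i$. Now I use the monotonicity ratio in two places. On the left, $f(A_{i-1} \cup OPT) \ge m \cdot f(OPT)$ directly from the definition of $m$-monotonicity, since $OPT \subseteq A_{i-1} \cup OPT$; hence $k \cdot f(u_i \mid A_{i-1}) \ge m \cdot f(OPT) - f(A_{i-1})$, which is positive under our case assumption. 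This positivity means $f(u_i \mid A_{i-1}) > 0$, so the Greedy algorithm actually performs the addition, and therefore $f(A_i) - f(A_{i-1}) = f(u_i \mid A_{i-1}) \ge k^{-1}[m \cdot f(OPT) - f(A_{i-1})]$, as desired.

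I expect the main subtlety — rather than a genuine obstacle — to be making sure the case distinction is handled cleanly, specifically the point that the greedy step is guaranteed to be executed precisely because the bound we derived on $f(u_i \mid A_{i-1})$ is strictly positive in the non-degenerate case; in the classical monotone proof this is automatic, but here it must be argued. One should also double-check the edge case $OPT \subseteq A_{i-1}$ (then $t = 0$), where the chain of inequalities still holds trivially because $f(A_{i-1} \cup OPT) = f(A_{i-1}) \le m \cdot f(OPT)$ contradicts the case assumption, so this situation falls under the first (degenerate) case and needs no separate treatment. Everything else is the standard submodular telescoping argument and requires no new ideas.
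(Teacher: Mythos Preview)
Your approach is essentially the same as the paper's: submodular telescoping over $OPT \setminus A_{i-1}$, then replace $f(A_{i-1}\cup OPT)$ by $m\cdot f(OPT)$ via the monotonicity ratio, and handle separately the case where the right-hand side is non-positive. The paper splits instead on the sign of $f(u_i\mid A_{i-1})$, but the two case splits are equivalent.

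One small ordering issue to clean up: in your chain
\[
\sum_{j=1}^{t} f(o_j \mid A_{i-1}) \le t \cdot \max_{u \in \cN \setminus A_{i-1}} f(u \mid A_{i-1}) \le k \cdot f(u_i \mid A_{i-1}),
\]
the last step (replacing $t$ by $k$) silently uses $f(u_i\mid A_{i-1})\ge 0$, which you only derive \emph{afterwards} from the positivity of $m\cdot f(OPT)-f(A_{i-1})$. To avoid circularity, stop at $t\cdot f(u_i\mid A_{i-1})$, deduce from $m\cdot f(OPT)-f(A_{i-1})>0$ and $t\ge 1$ that $f(u_i\mid A_{i-1})>0$, and only then use $t\le k$ to divide by $k$ instead of $t$. (Also, in your edge-case remark the inequality should read $f(A_{i-1})=f(A_{i-1}\cup OPT)\ge m\cdot f(OPT)$, not $\le$.) These are cosmetic fixes; the argument is sound.
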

\begin{proof}
We need to distinguish between two cases. Consider first the case in which $f(u_i \mid A_{i - 1}) \geq 0$. In this case,
\begin{align*}
	f(A_i)-f(A_{i-1})
	={} &
	f(u_i\mid A_{i-1})
	\geq
	\frac{|OPT \setminus A_{i - 1}|}{k} \cdot f(u_i\mid A_{i-1})\\
	\geq{} &
	\frac{|OPT \setminus A_{i - 1}|}{k} \cdot \max_{u\in OPT \setminus A_{i - 1}} \mspace{-18mu} f(u\mid A_{i-1})
	\geq
	\frac{\sum_{u\in OPT \setminus A_{i - 1}} f(u\mid A_{i-1})}{k}\\
	\geq{} &
	\frac{f(OPT\cup A_{i-1})-f(A_{i-1})}{k}
	\geq
	\frac{m\cdot f(OPT)-f(A_{i-1})}{k}
	\enspace,
\end{align*}
where the first inequality holds since $|OPT \setminus A_{i - 1}| \leq |OPT| \leq k$ because $OPT$ is a feasible solution, the second inequality is due to the way used by the greedy algorithm to choose the element $u_i$, the penulatimate inequality follows from the submodularity of $f$, and  the last inequality holds since $f$ is $m$-monotone.

Consider now the case in which $f(u_i \mid A_{i - 1}) < 0$. In this case, $f(A_i) - f(A_{i - 1}) = 0$ because $A_i = A_{i - 1}$. Furthermore, repeating the arguments used to prove the above inequality yields
\begin{align*}
	m\cdot f(OPT)-f(A_{i-1})
	\leq{} &
	|OPT \setminus A_{i - 1}| \cdot \max_{u \in OPT \setminus A_{i - 1}} \mspace{-18mu} f(u \mid A_{i - 1})\\
	\leq{} &
	|OPT \setminus A_{i - 1}| \cdot \max_{u \in \cN \setminus A_{i - 1}} \mspace{-18mu} f(u \mid A_{i - 1})
	\leq
	0
	\enspace.
	\qedhere
\end{align*}
\end{proof}

Rearranging the last lemma, we get the following inequality.
\begin{equation}\label{greedeq}
    m\cdot f(OPT)-f(A_i) \leq (1-1/k) \cdot [m\cdot f(OPT)-f(A_{i-1})]
			\enspace.
\end{equation}
This inequality bounds the rate in which the gap between $m \cdot f(OPT)$ reduces as a function of $i$. This allows us to prove Theorem~\ref{thm:greedy}.
\begin{proof}[Proof of Theorem~\ref{thm:greedy}]
Combining Inequality~\eqref{greedeq} for every integer $1\leq i\leq k$ yields 
\[
    m\cdot f(OPT)-f(A_k)\leq (1-1/k)^k\cdot [m\cdot f(OPT)-f(A_0)]
		\enspace.
\]
Rearranging this inequality, we get
\[
    f(A_k)\geq m\cdot f(OPT)-m\cdot (1-1/k)^k\cdot \left[f(OPT) - f(A_0)\right]\geq m\cdot \left(1-\frac{1}{e}\right)\cdot f(OPT)
		\enspace,
\]
where the last inequality follows from the non-negativity of $f$ and the inequality $(1-1/k)^k\leq \frac{1}{e}.$
\end{proof}

\subsection{Analysis of Random Greedy} \label{ssc:random_greedy}

In this section we prove Theorem~\ref{thm:random_greedy}, which we repeat here for convenience.
\thmRandomGreedy*

Like the standard greedy algorithm from Section~\ref{ssc:greedy}, the Random Greedy algorithm starts with an empty solution, and then augments it in $k$ iterations. Specifically, in iteration $i$ the algorithm finds a set $M_i$ of at most $k$ elements whose total marginal contribution with respect to the current solution is maximal. Then, at most one element of $M_i$ is added to the algorithm's current solution in a random way guaranteeing that every element of $M_i$ is added to the solution with probability exactly $\nicefrac{1}{k}$. A formal presentation of the Random Greedy algorithm appears as Algorithm~\ref{alg:random_greedy}. Note that in this presentation the solution of the algorithm after $i$ iterations is denoted by $A_i$.

\SetKwIF{With}{OtherwiseWith}{Otherwise}{with}{do}{otherwise with}{otherwise}{}
\begin{algorithm}
\DontPrintSemicolon
Let $A_0\leftarrow \varnothing$.\;
\For{$i=1$ \KwTo $k$}{
Let $M_i \gets \argmax_{B \subseteq \cN \setminus A_{i - 1}, |B| \leq k} \{\sum_{u\in B}f(u\mid A_{i-1})\}$.\\
\With{probability $(1- |M_i| /k)$}{
	$A_i \leftarrow A_{i-1}$.
}
\Otherwise
{
	Let $u_i$ be a uniformly random element of $M_i$.\\
	Set $A_i\leftarrow A_{i-1} + u_i$.
}
}
\Return $A_k$.
\caption{Random Greedy $(f,k)$\label{alg:random_greedy}}
\end{algorithm}

We start the analysis of the Random Greedy algorithm with the following lemma.
\begin{lemma}
For every integer $0\leq i\leq k$ and element $u\in\mathcal{N}$, $\Pr[u\in A_i] \leq 1-(1-1/k)^i$.
\end{lemma}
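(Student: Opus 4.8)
The plan is to prove the lemma by a straightforward induction on $i$, where the only real content is an observation about how much a single element's selection probability can grow in one iteration.

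The base case $i = 0$ is immediate: $A_0 = \varnothing$, so $\Pr[u \in A_0] = 0 = 1 - (1 - 1/k)^0$. For the inductive step, I would fix $u \in \cN$ and assume $\Pr[u \in A_{i-1}] \leq 1 - (1-1/k)^{i-1}$. The key point to establish is that, conditioned on \emph{any} realization of $A_{i-1}$, the probability that $u$ is added to the solution during iteration $i$ is at most $1/k$. This holds because $M_i$ is a deterministic function of $A_{i-1}$: if $u \notin M_i$ the conditional probability is $0$, and if $u \in M_i$ the algorithm reaches the \textbf{Otherwise} branch with probability $|M_i|/k$ and then selects $u$ as $u_i$ with probability $1/|M_i|$, for a total of exactly $1/k$. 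Combined with the fact that an element never leaves the solution, this yields
\[
	\Pr[u \in A_i] \leq \Pr[u \in A_{i-1}] + \Pr[u \notin A_{i-1}] \cdot \tfrac{1}{k} = \Pr[u \in A_{i-1}] \cdot \left(1 - \tfrac{1}{k}\right) + \tfrac{1}{k} \enspace.
\]

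Since the rightmost expression is monotonically increasing in $\Pr[u \in A_{i-1}]$, substituting the inductive hypothesis gives
\[
	\Pr[u \in A_i] \leq \left(1 - (1-1/k)^{i-1}\right)\left(1 - \tfrac{1}{k}\right) + \tfrac{1}{k} = 1 - (1 - 1/k)^i \enspace,
\]
which completes the induction.

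I do not expect any serious obstacle here; the step I would be most careful about is the conditional bound of $1/k$ on the per-iteration growth, since it relies on $M_i$ being fully determined by $A_{i-1}$ (so that the randomness of iteration $i$ is independent of which element we are tracking) and on the uniform choice of $u_i$ within $M_i$. Everything after that is routine algebra.
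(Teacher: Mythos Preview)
Your proof is correct and takes essentially the same approach as the paper: both rest on the observation that, conditioned on $u \notin A_{i-1}$, the element $u$ is added in iteration $i$ with probability at most $1/k$. The paper packages this as a direct product $\Pr[u \notin A_i] = \prod_{j=1}^i \Pr[u \notin A_j \mid u \notin A_{j-1}] \geq (1-1/k)^i$, while you unroll the same product as an induction, but the content is identical.
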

\begin{proof}
Note that in each iteration $i$ of Algorithm~\ref{alg:random_greedy}, any element $u \in \cN \setminus A_{i - 1}$ is added to the current solution with probability of at most $\nicefrac{1}{k}$. Hence,
\[
	\Pr[u\in A_i]
	=
	1 - \Pr[u\notin A_i]
	=
	1 - \prod_{j = 1}^i \Pr[u \not \in A_j \mid u \not \in A_{j - 1}]
	\leq
	1-(1-1/k)^i
	\enspace.
	\qedhere
\]
\end{proof}

Plugging the guarantee of the last lemma into Corollary~\ref{cor:sampling} yields the following lower bound on the expected value of $A_i\cup OPT$.
\begin{corollary}\label{cor:RG}
For every integer $0\leq i\leq k$, $\mathbb{E}[f(A_i\cup OPT)] \geq [1 - (1 - m) \cdot (1 - (1-\frac{1}{k})^i)] \cdot f(OPT) = m \cdot f(OPT) + (1 - m)(1-\frac{1}{k})^i \cdot f(OPT)$.
\end{corollary}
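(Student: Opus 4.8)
The plan is to derive Corollary~\ref{cor:RG} as an immediate consequence of Corollary~\ref{cor:sampling} combined with the lemma that precedes it. Concretely, I would apply Corollary~\ref{cor:sampling} with the deterministic set $O = OPT$ and the random set $D = A_i$ (which is legitimately a random subset of $\cN$, since the choices made by Random Greedy are randomized). This directly gives
\[
	\bE[f(A_i \cup OPT)] \geq \left(1 - (1 - m) \cdot \max_{u \in \cN} \Pr[u \in A_i]\right) \cdot f(OPT)
	\enspace.
\]

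Next I would plug in the uniform bound $\Pr[u \in A_i] \leq 1 - (1 - 1/k)^i$ supplied by the last lemma, which of course also bounds $\max_{u \in \cN} \Pr[u \in A_i]$. The only point deserving a (trivial) check is that this substitution keeps the inequality pointing the right way: since $1 - m \geq 0$, replacing $\max_{u \in \cN} \Pr[u \in A_i]$ by the larger value $1 - (1 - 1/k)^i$ only decreases the coefficient $1 - (1 - m)\max_{u \in \cN}\Pr[u \in A_i]$, and multiplying a smaller coefficient by the non-negative quantity $f(OPT)$ preserves the direction of the bound. Hence
\[
	\bE[f(A_i \cup OPT)] \geq \left(1 - (1 - m)\bigl(1 - (1 - \tfrac{1}{k})^i\bigr)\right) \cdot f(OPT)
	\enspace.
\]

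Finally, I would perform the elementary rewriting $1 - (1 - m)\bigl(1 - (1 - \tfrac{1}{k})^i\bigr) = m + (1 - m)(1 - \tfrac{1}{k})^i$, which yields the second (equivalent) form stated in the corollary and finishes the proof; note no special handling of $i = 0$ is needed, as the bound reads $\bE[f(OPT)] \geq f(OPT)$ in that case. There is no real obstacle here: the entire argument is a one-line substitution into Corollary~\ref{cor:sampling}, and the only thing to be mildly careful about is the monotonicity-of-the-bound check in the middle step, which relies on $1 - m \geq 0$ and the non-negativity of $f$.
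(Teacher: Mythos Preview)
Your proposal is correct and is exactly the approach the paper takes: it states the corollary as obtained by ``plugging the guarantee of the last lemma into Corollary~\ref{cor:sampling},'' which is precisely your application with $O = OPT$ and $D = A_i$.
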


Using the last corollary we are now ready to prove Theorem~\ref{thm:random_greedy}.

\begin{proof}[Proof of Theorem~\ref{thm:random_greedy}]
Let $\cE_{i-1}$ be an arbitrary possible choice for the random decisions of Random Greedy during its first $i-1$ iterations. Observe that, conditioned on $\cE_{i - 1}$ happening,
\begin{align*}
	\mathbb{E}[f(A_i) - f(A_{i-1})]
	={} &
	\frac{\sum_{u\in M_i}f(u\mid A_{i-1})}{k}\\
	\geq{} & \frac{\sum_{u\in OPT \setminus A_{i - 1}}f(u\mid A_{i-1})}{k} 
	\geq
	\frac{f(A_{i-1}\cup OPT)-f(A_{i-1})}{k}\enspace,
\end{align*}
where the first inequality follows from the choice of $M_i$ by the algorithm, and the second inequality follows from submodularity.  Taking now expectation over the choice $\cE_{i - 1}$ that realized, the last inequality yields
\begin{align} \label{eq:difference_bound}
\mathbb{E}[f(A_i) - f(A_{i - 1})]&\geq \frac{\mathbb{E}[f(A_{i-1}\cup OPT)]-\mathbb{E}[f(A_{i-1})]}{k}
\\\nonumber &\geq \frac{m \cdot f(OPT) + (1 - m)(1-\frac{1}{k})^{i - 1} \cdot f(OPT) - \mathbb{E}[f(A_{i-1})]}{k} \enspace,
\end{align}
where the second inequality is due to Corollary~\ref{cor:RG}.

The last inequality lower bounds the expected increase in the value of the solution of Random Greedy in every iteration. This implies also a lower bound on the expected value of $f(A_i)$. To complete the proof of the theorem, we need to prove a closed form for this implied lower bound, which we do by induction. Specifically, let us prove by induction on $i$ that
\begin{equation} \label{eq:random_greedy_induction}
	\mathbb{E}[f(A_i)]
	\geq
	\left[m\cdot \left(1-\left(1-\frac{1}{k}\right)^i\right) + (1-m)\cdot \frac{i}{k} \cdot \left(1-\frac{1}{k}\right)^{i-1} \right]\cdot f(OPT)
\end{equation}
for every integer $0\leq i\leq k$, which implies the theorem by plugging $i=k$ because $(1 - 1/k)^k \leq 1/e \leq (1 - 1/k)^{k - 1}$.

For $i=0$, Inequality~\eqref{eq:random_greedy_induction} holds since the non-negativity of $f$ guarantees that $f(A_0)\geq 0 = \left[(1-m)\cdot(\frac{0}{k})\cdot (1-\frac{1}{k})^{-1} + m\cdot (1-(1-\frac{1}{k})^0)\right]\cdot f(OPT)$. Consider now some integer $0<i\leq k$, and let us prove Inequality~\eqref{eq:random_greedy_induction} for this value of $i$ assuming that its holds for $i - 1$. By Inequality~\eqref{eq:difference_bound},
\begin{align*}
	\mathbb{E}[f(A_{i})]
	={} &
	\mathbb{E}[f(A_{i-1})]+\mathbb{E}[f(A_i) - f(A_{i-1})]\\
	\geq{} &
	\mathbb{E}[f(A_{i-1})]+\frac{m \cdot f(OPT) + (1 - m)(1-\frac{1}{k})^{i - 1} \cdot f(OPT) - \mathbb{E}[f(A_{i-1})]}{k}\\
	={} &
	\left(1-\frac{1}{k}\right)\cdot\mathbb{E}[f(A_{i-1})]+\frac{m + (1 - m)(1-\frac{1}{k})^{i - 1}}{k} \cdot f(OPT)
	\enspace.
\end{align*}
Plugging the induction hypothesis into the last inequality, we get
\begin{align*}
	\mathbb{E}[f(A_{i})]
	\geq{} &
	\left(1-\frac{1}{k}\right)\cdot \left[m\cdot\left(1-\left(1-\frac{1}{k}\right)^{i - 1}\right) + (1-m)\cdot\frac{i-1}{k}\cdot \left(1-\frac{1}{k}\right)^{i-2}\right] \cdot f(OPT)\\&\mspace{350mu}+\frac{m + (1 - m)(1-\frac{1}{k})^{i - 1}}{k}\cdot f(OPT)\\
	={} &
	\left[m\left(1-\left(1-\frac{1}{k}\right)^i\right) + (1-m)\cdot \frac{i}{k} \cdot \left(1-\frac{1}{k}\right)^{i-1}\right]\cdot f(OPT)
	\enspace.
	\qedhere
\end{align*}
\end{proof}

\subsection{Inapproximability for a Cardinality Constraint} \label{ssc:inapproximability_cardinality}

In this section we prove Theorem~\ref{thm:inapproximability_cardinality}, which we repeat here for convenience.
\thmInapproximabilityCardinality*

We prove Theorem~\ref{thm:inapproximability_cardinality} using the symmetry gap technique, and specifically, via our extension of this technique proved in Theorem~\ref{thm:symmetry_gap}. To use this theorem, we need to construct an instance of our problem in which there is a large gap between the values of the best (general) solution and the best symmetric solution. Our instance is based on an instance constructed by Oveis Gharan and Vondr\'{a}k~\cite{oveisgharan2011submodular}. However, the objective function in the original instance of~\cite{oveisgharan2011submodular} is not $m$-monotone for any $m > 0$, and therefore, we need to modify it so that it becomes $m$-monotone for a value $m \in [0, 1]$ of our choosing.

Fix some positive integer value $r$ to be determined later and some value $\alpha \in [0, 1]$. The ground set of the instance we construct is $\cN = \{a, b\} \cup \{a_i, b_i \mid i \in [r]\}$, and the constraint of the instance is a cardinality constraint allowing a feasible solution to include up to $2$ elements. The objective function of our instance is the function $f\colon 2^\cN \to \nnR$ defined by $f(S) = \alpha \cdot f_1(S) + (1 - \alpha)[f_2(S) + f_3(S)]$, where
\begin{gather*}
	f_1(S)
	=
	m \cdot \characteristic[S \cap \{a, b\} \neq \varnothing] + (1 - m) \cdot (|S \cap \{a, b\}| \bmod 2)
	\enspace,\\
	f_2(S)
	=
	\characteristic[S \cap \{a_i \mid i \in [r]\} \neq \varnothing] \cdot (1 - (1 - m) \cdot \characteristic[a \in S])
\end{gather*}
and
\[
	f_3(S)
	=
	\characteristic[S \cap \{b_i \mid i \in [r]\} \neq \varnothing] \cdot (1 - (1 - m) \cdot \characteristic[b \in S])
	\enspace.
\]

Let us denote the above described instance of submodular maximization subject to a cardinality constraint by $\cI$. We begin the analysis of $\cI$ by proving some properties of its objective function.
\begin{lemma} \label{lem:f_propeties_cardinality}
The objective function $f$ of $\cI$ is non-negative, $m$-monotone and submodular.
\end{lemma}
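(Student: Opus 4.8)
The plan is to establish each of the three properties --- non-negativity, $m$-monotonicity, and submodularity --- by reducing to the corresponding property of the building blocks $f_1$, $f_2$ and $f_3$, and then invoking Observation~\ref{obs:linearity} to combine them (since $f$ is a non-negative combination $\alpha f_1 + (1-\alpha)(f_2 + f_3)$, the whole function inherits non-negativity and $m$-monotonicity once each summand has these properties). So the real work is to check the three properties for $f_1$, $f_2$, $f_3$ individually, and for this it suffices to handle $f_2$ (the case of $f_3$ being symmetric, and $f_1$ having essentially already been discussed in the text right before Lemma~\ref{lem:symmetry_gap}, where it is noted that $f_1$ restricted to $\{a,b\}$ is exactly the unconstrained hard instance, hence a convex combination of $\characteristic[S \neq \varnothing]$ and $|S| \bmod 2$, which are non-negative, submodular, and of monotonicity ratios $1$ and $0$ respectively, so $f_1$ is $m$-monotone).

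For $f_2(S) = \characteristic[S \cap A_r \neq \varnothing]\cdot(1 - (1-m)\cdot\characteristic[a \in S])$, where I write $A_r = \{a_i \mid i \in [r]\}$, I would argue as follows. Non-negativity is immediate since both factors lie in $[0,1]$. For $m$-monotonicity, take $S \subseteq T$; I need $f_2(T) \geq m \cdot f_2(S)$. If $f_2(S) = 0$ this is trivial, so assume $S \cap A_r \neq \varnothing$, hence also $T \cap A_r \neq \varnothing$, so $f_2(S) = 1 - (1-m)\characteristic[a \in S]$ and $f_2(T) = 1 - (1-m)\characteristic[a \in T]$; the worst case is $a \notin S$, $a \in T$, giving $f_2(T) = m$ and $f_2(S) = 1$, so the ratio is exactly $m$ and in all other cases it is $\geq m$. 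For submodularity I would verify $f_2(u \mid S) \geq f_2(u \mid T)$ for $S \subseteq T$ and $u \notin T$, splitting on whether $u = a$, $u \in A_r$, or $u \in \cN \setminus (A_r \cup \{a\})$. When $u \in \cN \setminus(A_r \cup\{a\})$ the marginal is $0$ on both sides. When $u \in A_r$: adding $u$ either leaves the first indicator at $1$ (marginal $0$) or flips it from $0$ to $1$ (marginal $1 - (1-m)\characteristic[a \in S]$), and since $S \cap A_r = \varnothing$ is needed for a nonzero marginal and this is only more likely for the smaller set $S$, while the coefficient $1 - (1-m)\characteristic[a \in \cdot]$ is weakly larger for $S$, the marginal for $S$ dominates. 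When $u = a$: the marginal is $-(1-m)\cdot\characteristic[S \cap A_r \neq \varnothing]$, which is non-increasing in $S$ (i.e. more negative for the larger set $T$), so again $f_2(a \mid S) \geq f_2(a \mid T)$. This case analysis is routine but needs to be spelled out carefully.

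The one point deserving slight care --- and what I expect to be the main (minor) obstacle --- is that Observation~\ref{obs:linearity} as stated covers sums and non-negative scalings but I am implicitly using that $f_2 + f_3$ is $m$-monotone (a sum of two $m$-monotone functions, covered) and then that $\alpha f_1 + (1-\alpha)(f_2+f_3)$ is $m$-monotone (another sum of scalings, covered), so chaining the observation twice suffices; I should make sure to note that $\alpha, 1-\alpha \geq 0$. Submodularity, by contrast, is not mentioned in Observation~\ref{obs:linearity}, so I would separately remark that non-negative linear combinations of submodular functions are submodular (a standard fact, immediate from the definition via marginals), and apply it to conclude that $f$ is submodular once $f_1, f_2, f_3$ are. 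Putting these pieces together yields the lemma.
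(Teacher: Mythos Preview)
Your proposal is correct and follows essentially the same route as the paper: reduce to the three building blocks $f_1,f_2,f_3$, invoke Observation~\ref{obs:linearity} (plus the standard closure of submodularity under non-negative combinations) to combine them, dispatch $f_1$ by reference to the unconstrained instance, reduce $f_3$ to $f_2$ by symmetry, and then verify non-negativity, $m$-monotonicity and submodularity of $f_2$ by the same case analysis on marginals. The paper phrases the submodularity check as ``all marginals are down-monotone in $S$'' rather than your pairwise comparison, but this is the same argument.
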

\begin{proof}
We prove below that the functions $f_1$, $f_2$ and $f_3$ have the properties stated in the lemma. This implies that $f$ also has these properties by Observation~\ref{obs:linearity} and the well-known closure of the class of submodular functions to multiplication by a non-negative constant and addition (see, e.g., Lemma~1.2 of~\cite{buchbinder2018submodular}). The function $f_1$ is identical to the function proved in Section~\ref{sec:unconstrained} to have the properties stated in the lemma, and the functions $f_2$ and $f_3$ are identical to each other up to switching the roles of $a$ with $b$ and $a_i$ with $b_i$. Therefore, to prove that both $f_2$ and $f_3$ have the properties stated by the lemma it suffices to show that $f_2$ has these properties, which we do in the rest of this proof.

Clearly, $f_2$ is non-negative. To see that $f_2$ is a submodular function, note that
\begin{compactitem}
	\item For every set $S \subseteq \cN - a$, $f_2(a \mid S) = -\characteristic[S \cap \{a_i \mid i \in [r]\} \neq \varnothing] \cdot (1 - m)$.
	\item For every integer $1 \leq i \leq r$ and set $S \subseteq \cN - a_i$, $f_2(a_i \mid S) = \characteristic[S \cap \{a_i \mid i \in [r]\} = \varnothing] \cdot (1 - (1 - m) \cdot \characteristic[a \in S])$.
	\item For every element $u \in (\cN - a) \setminus \{a_i \mid i \in [r]\}$ and set $S \subseteq \cN - u$, $f_2(u \mid S) = 0$.
\end{compactitem}
Since all the above marginal contributions are down-monotone functions of $S$ (i.e., functions whose value can only decrease when elements are added to $S$), the function $f_2$ is submodular.

It remains to argue why $f_2$ is $m$-monotone. Consider any two sets $S \subseteq T \subseteq \cN$. If $f_2(S) = 0$, then the inequality $m \cdot f(S) \leq f(T)$ follows from the non-negativity of $f_2$. Therefore, consider the case in which $f_2(S) > 0$, which implies that $S \cap \{a_i \mid i \in [r]\} \neq \varnothing$; and therefore, $f_2(S) = (1 - (1 - m) \cdot \characteristic[a \in S]) \leq 1$. Since $S$ is a subset of $T$, we also get $f_2(T) = (1 - (1 - m) \cdot \characteristic[a \in T]) \geq m$, and hence, $m \cdot f_2(S) \leq m \cdot 1 = m \leq f_2(T)$.
\end{proof}

A cardinality constraint is symmetric in the sense that the feasibility of a set depends only on the number of elements in it, and is completely independent of the identity of these elements. Let us now denote by $\cG$ the group of permutations of $\cN$ that are equivalent to applying any number of the following two steps: (1) switching $a$ with $b$ and $a_i$ with $b_i$ for every $i \in [r]$, or (2) switching $a_i$ with $a_j$ for two integers $i, j \in [r]$. The first step preserves the value of $f$ because it simply switches the values of $f_2$ and $f_3$, while leaving the value of $f_1$ unaffected; and the second step preserves the value of $f$ since it deals with elements that both $f_1$ and $f_3$ ignore, and $f_2$ treats in the same way. Hence, for every set $S \subseteq \cN$ and permutation $\sigma \in \cG$, we have $f(S) = f(\sigma(S))$, which implies the following observation.
\begin{observation} \label{obs:symmetry_cardinality}
The instance $\cI$ is strongly symmetric with respect to $\cG$.
\end{observation}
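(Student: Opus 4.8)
The plan is to verify the two conditions in the definition of strong symmetry. Condition~(1), namely that $f(S) = f(\sigma(S))$ for every $S \subseteq \cN$ and $\sigma \in \cG$, is already established in the paragraph preceding the observation (the first generator of $\cG$ swaps $f_2$ with $f_3$ while fixing $f_1$, and the second generator acts only on elements that $f_1$ and $f_3$ ignore and that $f_2$ treats symmetrically). So it only remains to verify condition~(2).

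For condition~(2), I would first recall that the feasible sets of $\cI$ are exactly $\cF = \{S \subseteq \cN \mid |S| \leq 2\}$, so whether a set belongs to $\cF$ is determined solely by its cardinality. Hence it suffices to show that $\bE_{\sigma \in \cG}[\characteristic_{\sigma(S)}] = \bE_{\sigma \in \cG}[\characteristic_{\sigma(S')}]$ implies $|S| = |S'|$. To see this, note that for any fixed permutation $\sigma$ the sum of the coordinates of $\characteristic_{\sigma(S)}$ equals $|\sigma(S)| = |S|$, since a permutation preserves cardinality; averaging over $\sigma \in \cG$, the sum of the coordinates of $\bE_{\sigma \in \cG}[\characteristic_{\sigma(S)}]$ is therefore also $|S|$. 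Applying the same reasoning to $S'$ and using the assumed equality of the two vectors, summing the coordinates of both sides yields $|S| = |S'|$, and consequently $S \in \cF \iff |S| \leq 2 \iff |S'| \leq 2 \iff S' \in \cF$.

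Since both conditions hold, $\cI$ is strongly symmetric with respect to $\cG$. There is no real obstacle here: the only property of $\cG$ used in the argument for condition~(2) is that its elements are permutations of $\cN$ (hence cardinality-preserving), combined with the symmetry of the cardinality constraint; condition~(1) has already been checked using the specific structure of the generators. Accordingly, I expect this to be a short, routine verification rather than one requiring any new idea.
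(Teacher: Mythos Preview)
Your proposal is correct and follows essentially the same approach as the paper. The paper establishes condition~(1) in the paragraph preceding the observation exactly as you note, and handles condition~(2) by simply remarking that ``the feasibility of a set depends only on the number of elements in it''; your argument that summing the coordinates of $\bE_{\sigma \in \cG}[\characteristic_{\sigma(S)}]$ recovers $|S|$ just makes this step explicit.
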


To use Theorem~\ref{thm:symmetry_gap}, we still need to bound the symmetry gap of $\cI$, which we do next.
\begin{lemma} \label{lem:gap_cardinality}
The symmetry gap of $\cI$ is at most
\begin{align*}
	&
	\frac{\max_{\substack{x \in [0, 1]}} \{\alpha(mx^2 + 2x - 2x^2) + 2(1 - \alpha)[1 - (1 - (1 - x)/r)^r](1 - (1 - m)x)\}}{\max\{1, 2(1 - \alpha)\}}\\
	\leq{} &
	\frac{\max_{\substack{x \in [0, 1]}} \{\alpha(mx^2 + 2x - 2x^2) + 2(1 - \alpha)(1 - e^{x - 1})(1 - (1 - m)x)\}}{\max\{1, 2(1 - \alpha)\}} + 2/r
	\enspace.
\end{align*}
\end{lemma}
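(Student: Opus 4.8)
The plan is to work straight from the definition of the symmetry gap. Since the constraint of $\cI$ is a cardinality constraint allowing up to two elements, $P(\cF) = \{\vx \in [0,1]^\cN : \sum_{u \in \cN} x_u \le 2\}$, and the symmetry gap of $\cI$ equals $\max\{F(\bar\vx) : \vx \in P(\cF)\} / \max\{F(\vx) : \vx \in P(\cF)\}$, where $F$ is the multilinear extension of $f$ and $\bar\vx = \bE_{\sigma \in \cG}[\sigma(\vx)]$. I would lower bound the denominator by exhibiting two cheap integral solutions, compute the numerator essentially exactly by describing all symmetrized vectors $\bar\vx$ and then optimizing $F$ over them, and finally pass from the exact expression to the claimed $e^{x-1}$ form through an elementary estimate on $(1 - z/r)^r$.

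For the denominator, I would evaluate $f$ on the two feasible sets $\{a_1, b\}$ and $\{a_1, b_1\}$: substituting into $f_1, f_2, f_3$ gives $f(\{a_1, b\}) = \alpha \cdot 1 + (1-\alpha)(1 + 0) = 1$ and $f(\{a_1, b_1\}) = \alpha \cdot 0 + (1-\alpha)(1 + 1) = 2(1-\alpha)$. Since $F(\characteristic_S) = f(S)$, this yields $\max\{F(\vx) : \vx \in P(\cF)\} \ge \max\{1, 2(1-\alpha)\}$, and a larger value of this maximum would only shrink the symmetry gap, so this bound suffices.

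For the numerator, I would first observe that $\cG$ has exactly two orbits on $\cN$, namely $\{a, b\}$ and $\{a_i, b_i : i \in [r]\}$. Hence every vector $\bar\vx$ is constant on each orbit --- equal to some $p \in [0,1]$ on both $a$ and $b$, and to some $s \in [0,1]$ on all of $a_1, \dots, a_r, b_1, \dots, b_r$ --- and, since symmetrization preserves the coordinate sum, such a $\bar\vx$ is feasible if and only if $p + rs \le 1$; conversely, every vector of this shape is symmetric and feasible, hence equals its own symmetrization. It then remains to maximize $F(\bar\vx) = \alpha F_1(\bar\vx) + (1-\alpha)(F_2(\bar\vx) + F_3(\bar\vx))$ over $p, s \in [0,1]$ with $p + rs \le 1$. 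Here $F_1(\bar\vx) = mp^2 + 2p - 2p^2$, reusing the computation of the multilinear extension of the two-element function in Section~\ref{sec:unconstrained} (since $f_1$ only sees the coordinates $a$ and $b$), and $F_2(\bar\vx) = F_3(\bar\vx) = (1 - (1-s)^r)(1 - (1-m)p)$ because the two factors defining $f_2$ (resp.\ $f_3$) depend on disjoint blocks of coordinates and are therefore independent under $\RSet(\bar\vx)$. Since $1 - (1-m)p \ge 0$ on $[0,1]$ and $1 - (1-s)^r$ is non-decreasing in $s$, for each fixed $p$ the optimum is attained at $s = (1-p)/r$; substituting and renaming $p$ to $x$ gives exactly the numerator of the first expression in the lemma, which establishes the first inequality.

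Finally, for the second inequality I would note that for $r = 1$ it is vacuous, as the symmetry gap never exceeds $1$ (because $\bar\vx \in P(\cF)$) while $2/r = 2$. For $r \ge 2$ and $z \triangleq 1 - x \in [0, 1]$ we have $z/r \le \tfrac12$, so the standard inequality $1 - y \ge e^{-y - y^2}$ on $[0, \tfrac12]$ with $y = z/r$ gives $(1 - z/r)^r \ge e^{-z - z^2/r} \ge e^{-z}(1 - z^2/r) \ge e^{-z} - 1/r$, hence $1 - (1 - (1-x)/r)^r \le 1 - e^{x-1} + 1/r$; plugging this into the numerator of the first expression, using $1 - (1-m)x \le 1$, and dividing by $\max\{1, 2(1-\alpha)\}$ leaves an additive slack of at most $\frac{2(1-\alpha)/r}{\max\{1, 2(1-\alpha)\}} \le 2/r$. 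I expect the only genuinely delicate point to be the numerator computation --- getting the multilinear extensions of $f_2$ and $f_3$ right by exploiting their disjoint-coordinate product structure, and verifying that the optimal symmetric feasible vector saturates the budget in the $a_i/b_i$ coordinates; the rest is routine bookkeeping.
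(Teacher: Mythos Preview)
Your proposal is correct and follows essentially the same route as the paper: lower bound the denominator by evaluating $f$ on two explicit feasible sets (the paper uses $\{a,b_1\}$ and $\{a_1,b_1\}$, you use the symmetric $\{a_1,b\}$ and $\{a_1,b_1\}$), parametrize symmetric vectors by their common values on the two $\cG$-orbits, compute $F$ on them, and push the $a_i/b_i$-mass to the budget boundary. The only substantive addition is that you actually prove the passage from $1-(1-(1-x)/r)^r$ to $1-e^{x-1}+1/r$ and the resulting $2/r$ slack, which the paper simply asserts in the lemma statement without justification; your argument via $1-y\ge e^{-y-y^2}$ on $[0,\tfrac12]$ is sound.
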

\begin{proof}
Two possible feasible solutions for $\cI$ are the sets $\{a, b_1\}$ and $\{a_1, b_1\}$ whose values according to $f$ are $1$ and $2(1 - \alpha)$, respectively. Therefore, the value of the optimal solution for $\cI$ is at least $\max\{1, 2(1 - \alpha)\}$. Since the symmetry gap is the ratio between the value of the best symmetric solution and the value of the best solution, to prove the lemma it remains to argue that the best symmetric solution for $\cI$ has a value of $\max_{\substack{x \in [0, 1]}} \{\alpha(mx^2 + 2x - 2x^2) + 2(1 - \alpha)[1 - (1 - (1 - x)/r)^r](1 - (1 - m)x)\}$.

We remind the reader that a symmetric solution for $\cI$ is $\bar{\vy} = \bE_{\sigma \in \cG}[\vy]$ for some vector $\vy \in [0, 1]^\cN$ obeying $\|\vy\|_1 \leq 2$. Since $a$ and $b$ can be exchanged with each other by the permutations of $\cG$, the values of the coordinates of $a$ and $b$ in $\bar{\vy}$ must be equal to each other. Similarly, every two elements of $\{a_i, b_i \in i \in [r]\}$ can be exchanged by the permutations of $\cG$, and therefore, the values of the coordinates of these elements in $\bar{\vy}$ must all be identical. Thus, any symmetric solution $\bar{\vy}$ can be represented as
\[
	\bar{y}_u
	=
	\begin{cases}
		x & \text{if $u = a$ or $u = b$} \enspace,\\
		z & \text{if $u \in \{a_i, b_i \mid i \in [r]\}$}
	\end{cases}
\]
for some values $x, z \in [0, 1]$ obeying $2x + 2rz \leq 2$ (or equivalently, $z \leq (1 - x)/r$). The value of this solution (according to the multilinear extension $F$ of $f$) is
\begin{align*}
	&
	\alpha[m(1 - (1 - x)^2) + 2(1 - m)x(1 - x)] + 2(1 - \alpha)(1 - (1 - z)^r)(1 - (1 - m)x)\\
	={} &
	\alpha(mx^2 + 2x - 2x^2) + 2(1 - \alpha)(1 - (1 - z)^r)(1 - (1 - m)x)
	\enspace.
\end{align*}
Since this expression is a non-decreasing function of $z$, the maximum value of any symmetry solution for $\cI$ is
\begin{align*}
	&
	\max_{\substack{x, z \in [0, 1] \\ z \leq (1 - x)/r}} \{\alpha(mx^2 + 2x - 2x^2) + 2(1 - \alpha)(1 - (1 - z)^r)(1 - (1 - m)x)\}\\
	={} &
	\max_{\substack{x \in [0, 1]}} \{\alpha(mx^2 + 2x - 2x^2) + 2(1 - \alpha)[1 - (1 - (1 - x)/r)^r](1 - (1 - m)x)\}
	\enspace.
	\qedhere
\end{align*}
\end{proof}

Since any refinement of a cardinality constraint is a cardinality constraint over a larger ground set, plugging Lemma~\ref{lem:f_propeties_cardinality}, Observation~\ref{obs:symmetry_cardinality} and Lemma~\ref{lem:gap_cardinality} into Theorem~\ref{thm:symmetry_gap} yields the following corollary.
\begin{corollary}
For every constant $\eps' > 0$, no polynomial time algorithm for maximizing a non-negative $m$-monotone submodular function subject to a cardinality contraint obtains an approximation ratio of
\[
	\frac{\max_{\substack{x \in [0, 1]}} \{\alpha(mx^2 + 2x - 2x^2) + 2(1 - \alpha)(1 - e^{x - 1})(1 - (1 - m)x)\}}{\max\{1, 2(1 - \alpha)\}} + 2/r + \eps'
	\enspace.
\]
\end{corollary}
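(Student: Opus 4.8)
The plan is to derive the corollary directly from the symmetry gap theorem (Theorem~\ref{thm:symmetry_gap}) applied to the instance $\cI$ built above, since Lemmas~\ref{lem:f_propeties_cardinality} and~\ref{lem:gap_cardinality} together with Observation~\ref{obs:symmetry_cardinality} already supply the three hypotheses that Theorem~\ref{thm:symmetry_gap} requires: the objective $f$ of $\cI$ is a non-negative $m$-monotone submodular function, the problem $\max\{f(S) \mid S \in \cF\}$ (with $\cF$ the cardinality constraint of $\cI$ that permits up to two elements) is strongly symmetric with respect to the group $\cG$, and its symmetry gap is well-defined (the relevant maximum of the multilinear extension over $P(\cF)$ is at least $f(\{a, b_1\}) = 1 > 0$) and, by Lemma~\ref{lem:gap_cardinality}, at most $G := \frac{\max_{x \in [0,1]}\{\alpha(mx^2 + 2x - 2x^2) + 2(1 - \alpha)(1 - e^{x-1})(1 - (1 - m)x)\}}{\max\{1, 2(1 - \alpha)\}} + 2/r$.

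Before invoking the theorem one must check that the class $\cC$ of problems for which Theorem~\ref{thm:symmetry_gap} yields hardness lies inside our problem. Every problem in $\cC$ has the form $\max\{\tilde f(S) \mid S \in \tilde{\cF}\}$ with $\tilde f$ a non-negative $m$-monotone submodular function (so its objective already fits our problem) and $\tilde{\cF}$ a refinement of $\cF$ for some finite set $X$. Since $P(\cF) = \{\vx \in [0,1]^\cN \mid \sum_{u \in \cN} x_u \leq 2\}$, unwinding the definition of a refinement shows that $\tilde{\cF} = \{S \subseteq \cN \times X \mid |S| \leq 2|X|\}$; that is, $\tilde{\cF}$ is itself a cardinality constraint — with bound $2|X|$ — over the ground set $\cN \times X$. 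Hence a polynomial time algorithm attaining a given approximation ratio for maximizing a non-negative $m$-monotone submodular function subject to a cardinality constraint would in particular attain that ratio on every instance of $\cC$, so hardness for $\cC$ implies hardness for our problem.

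It then remains only to do the bookkeeping. Let $\gamma$ be the (exact) symmetry gap of $\cI$; by the above, $0 < \gamma \leq G$, and also $\gamma \leq 1$ because $\bar{\vx} \in P(\cF)$ whenever $\vx \in P(\cF)$, so the numerator in the symmetry gap never exceeds its denominator. Applying Theorem~\ref{thm:symmetry_gap} with $\eps := \eps'$ shows that no (even randomized) polynomial time algorithm for $\cC$, and hence none for our problem, achieves approximation ratio $(1 + \eps')\gamma$; and $(1 + \eps')\gamma = \gamma + \eps'\gamma \leq \gamma + \eps' \leq G + \eps'$, which is precisely the ratio forbidden by the corollary. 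I do not expect a real obstacle here — all of the substance sits in the earlier construction of $\cI$, in the computation of its symmetry gap (Lemma~\ref{lem:gap_cardinality}), and in Theorem~\ref{thm:symmetry_gap} itself; the only mild care needed is the observation that a refinement of a cardinality constraint is again a cardinality constraint and the elementary chain $(1 + \eps')\gamma \leq G + \eps'$.
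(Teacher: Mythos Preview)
Your proposal is correct and follows essentially the same route as the paper: combine Lemma~\ref{lem:f_propeties_cardinality}, Observation~\ref{obs:symmetry_cardinality}, and Lemma~\ref{lem:gap_cardinality} with Theorem~\ref{thm:symmetry_gap}, after noting that a refinement of a cardinality constraint is again a cardinality constraint. You are in fact more careful than the paper, which leaves implicit the small bookkeeping step $(1+\eps')\gamma \leq G + \eps'$ (via $\gamma \leq 1$) that you spell out.
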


Theorem~\ref{thm:inapproximability_cardinality} now follows from the last corollary by choosing $\eps' = \eps / 2$, $r = \lceil 4/\eps \rceil$ and
\[
	\alpha
	=
	\argmin_{\alpha' \in [0, 1]} \frac{\max_{\substack{x \in [0, 1]}} \{\alpha'(mx^2 + 2x - 2x^2) + 2(1 - \alpha')(1 - e^{x - 1})(1 - (1 - m)x)\}}{\max\{1, 2(1 - \alpha')\}}
	\enspace.
\]
\section{Maximization subject to a Matroid Constraint} \label{sec:matroid}

In this section we consider the problem of maximizing a non-negative submodular function subject to a matroid constraint. A matroid $\cM$ over the ground set $\cN$ is defined as a pair $\cM = (\cN, \cI)$, where $\cI$ is a non-empty subset of $2^\cN$ obeying two properties for every two sets $S, T \subseteq \cN$: (i) if $S \subseteq T$ and $T \in \cI$, then $S \in \cI$; and (ii) if $S, T \in \cI$ and $|S| < |T|$, then there exists an element $u \in T \setminus S$ such that $S + u \in \cI$. A set $S \subseteq \cN$ is called \emph{independent} with respect to a matroid $\cM$ if it belongs to $\cI$ (otherwise, we say that $S$ is dependent with respect to $\cM$); and the matroid constraint corresponding to a given matroid $\cM$ allows only independent sets with respect to this matroid as feasible solutions. Hence, we can restate the problem we consider in this section in the following more formal way. Given a non-negative submodular function $f\colon 2^\cN \to \nnR$ and a matroid $\cM = (\cN, \cI)$ over the same ground set, output an independent set $S \in \cI$ (approximately) maximizing $f$ among all such sets.

When the objective function $f$ is guaranteed to be monotone, a standard greedy algorithm guarantees $1/2$-approximation for the above problem~\cite{fisher1978analysis}. Our first result for this section (whose proof appears in Section~\ref{ssc:greedy_matroid}) shows how this approximation guarantee changes as a function of $m$ (note that since matroid constraints generalize cardinality constraints, the greedy algorithm has no constant guarantee for non-monotone functions in this case as well).
\begin{restatable}{theorem}{thmGreedyMatroid} \label{thm:greedy_matroid}
The Greedy algorithm (Algorithm~\ref{alg:greedy_matroid}) has an approximation ratio of at least $m/2$ for the problem of maximizing a non-negative $m$-monotone submodular function subject to a matroid constraint.
\end{restatable}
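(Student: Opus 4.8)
The plan is to mimic the classical analysis of the greedy algorithm for monotone submodular maximization under a matroid constraint (Fisher–Nemhauser–Wolsey~\cite{fisher1978analysis}), replacing each appeal to monotonicity by the weaker $m$-monotone inequality $f(T) \geq m \cdot f(S)$ for $S \subseteq T$. Let $S$ denote the (independent) set output by the greedy algorithm, let $OPT$ be an optimal independent set, and let $S_0 = \varnothing \subseteq S_1 \subseteq \dots \subseteq S_\ell = S$ be the sequence of partial solutions built by greedy, where $S_i = S_{i-1} + u_i$. Since the greedy algorithm only adds an element when its marginal is nonnegative, we may assume (as in Algorithm~\ref{alg:greedy} for cardinality) that $f(S_i) \geq f(S_{i-1})$ throughout, and in particular $f(S) \geq f(\varnothing) \geq 0$.

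\smallskip
\noindent\textbf{Key steps.} First I would set up the standard bijection argument: by the matroid exchange property, there is an injection $\pi$ from $OPT \setminus S$ into $S \setminus OPT$ such that, for each $o \in OPT \setminus S$, the element $o$ was a feasible candidate for greedy at the iteration in which $\pi(o)$ was chosen — i.e., $S_{j-1} + o \in \cI$ where $u_j = \pi(o)$. (This is the usual ``one element per exchanged greedy element'' matching, and can be obtained, e.g., by repeatedly applying property (ii) of matroids to $S$ and $OPT$.) Greedy's choice rule then gives $f(u_j \mid S_{j-1}) \geq f(o \mid S_{j-1})$, and submodularity gives $f(o \mid S_{j-1}) \geq f(o \mid S \setminus \{u_j\}) \geq f(o \mid S)$ for the purposes of summing. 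Second, summing over all $o \in OPT \setminus S$ and using that distinct $o$'s map to distinct greedy steps, I would obtain
\[
	f(S) - f(\varnothing) \;=\; \sum_{j=1}^{\ell} f(u_j \mid S_{j-1}) \;\geq\; \sum_{o \in OPT \setminus S} f(o \mid S) \;\geq\; f(S \cup OPT) - f(S)\enspace,
\]
where the last inequality is submodularity (subadditivity of marginals). Third — and this is the only place the monotonicity ratio enters — I would bound $f(S \cup OPT) \geq m \cdot f(OPT)$ using the $m$-monotonicity of $f$ and $OPT \subseteq S \cup OPT$. Combining with $f(\varnothing) \geq 0$ yields $2 f(S) \geq f(S \cup OPT) \geq m \cdot f(OPT)$, hence $f(S) \geq (m/2) \cdot f(OPT)$, which is the claim.

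\smallskip
\noindent\textbf{Main obstacle.} The calculation is routine once the matching is in place; the only subtlety is the bookkeeping for the injection $\pi$ and making sure the marginal comparison $f(u_j \mid S_{j-1}) \geq f(o \mid S_{j-1})$ is valid, i.e., that $o$ was indeed available (not yet in $S_{j-1}$ and keeps independence) at step $j$. For monotone $f$ this is entirely standard, but here one must also be slightly careful about iterations in which greedy adds nothing because the best marginal is negative — in that case $S$ is a maximal independent set restricted to the elements with nonnegative marginal, and any $o \notin S$ either has $S + o$ dependent (handled by the exchange/matching argument exactly as in the classical proof, possibly after passing to a maximal independent subset) or has nonpositive marginal into $S$, in which case its contribution to the sum $\sum_{o} f(o \mid S)$ is already nonpositive and can only help the inequality. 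I expect that spelling out this matching cleanly — rather than any inequality manipulation — will be the one place requiring care.
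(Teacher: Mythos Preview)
Your proposal is correct and follows essentially the same approach as the paper: establish $2f(S) \geq f(S \cup OPT)$ and then invoke $m$-monotonicity to get $f(S \cup OPT) \geq m \cdot f(OPT)$. The only difference is that the paper obtains the first inequality by citing Lemma~3.2 of Gupta et al.~\cite{gupta2010constrained} (stated there for $p$-systems, with $p=1$ giving matroids) as a black box, whereas you reprove it from scratch via the Fisher--Nemhauser--Wolsey matching argument; your discussion of the early-termination case is exactly the care that lemma's proof requires.
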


The approximation ratio of the greedy algorithm was improved over by the seminal work of C{\u{a}}linescu et al.~\cite{calinescu2011maximizing}, who described the Continuous Greedy algorithm whose approximation ratio is $1 - 1/e$ when $f$ is monotone; matching the inapproximability result proved by Nemhauser and Wolsey~\cite{nemhauser1978best} for the special case of a cardinality constraint. In contrast, when the objective $f$ is not guaranteed to be monotone, the approximability of the problem is less well-understood. On the one hand, after a long line of works~\cite{ene2016constrained,feldman2011unified,lee2009non-monotone,oveisgharan2011submodular,vondrak2013symmetry}, the state-of-the-art approximation ratio for the problem is $0.385$~\cite{buchbinder2019constrained}, but on the other hand, it is only known that no polynomial time algorithm for the problem can obtain an approximation ratio of $0.478$~\cite{oveisgharan2011submodular}.

Unfortunately, the above mentioned state-of-the-art $0.385$-approximation algorithm is quite involved. Therefore, we chose to consider in this work two other algorithms. The first algorithm is the Measure Continuous Greedy algorithm (due to~\cite{feldman2011unified}) which guarantees an approximation ratio of $1/e - o(1) \approx 0.367$. This algorithm performs only slightly worse than the above state-of-the-art, and is in fact a central component of all the currently known algorithms achieving better than $1/e$-approximation. Measured Continuous Greedy is also known to guarantee $(1 - 1/e - o(1))$-approximation when the objective $f$ is monotone, and the next theorem (which we prove in Section~\ref{ssc:continuous_greedy}) shows that the approximation guarantee of this algorithm changes smoothly with the monotonicity ratio of $f$ between the above two expressions.

\newtoggle{firstContinuousGreedy}
\begin{restatable}{theorem}{thmContinuousGreedy} \label{thm:continuous_greedy}
The Measured Continuous Greedy algorithm (Algorithm~\ref{alg:continuous_greedy}) has an approximation ratio of at least $m(1 - 1/e) + (1 - m) \cdot (1/e) - o(1)$ for the problem of maximizing a non-negative $m$-monotone submodular function subject to a matroid constraint, where the $o(1)$ term diminishes with the size of the ground set $\cN$.\iftoggle{firstContinuousGreedy}{}{\footnote{Technically, Measured Continuous Greedy is an algorithm for maximizing the multilinear extesnion of a non-negative submodular function subject to a general solvable down-closed convex body $P$ constraint, and we prove in Section~\ref{ssc:continuous_greedy} that it guarantees the approximation ratio stated in Theorem~\ref{thm:continuous_greedy} for this setting. However, this implies the result stated in Theorem~\ref{thm:continuous_greedy} using a standard reduction (see Section~\ref{ssc:continuous_greedy} for further detail).}}
\end{restatable}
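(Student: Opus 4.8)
The plan is to follow the standard analysis of the Measured Continuous Greedy algorithm (due to Feldman, Naor and Schwartz), isolating the single place where monotonicity-type reasoning enters and replacing it with an application of Corollary~\ref{cor:sampling}. Recall that in its idealized continuous form the algorithm maintains a fractional point $\vy(t)$ in the matroid polytope $P(\cM)$, starting from $\vy(0) = \vzero$; at each time $t \in [0, 1]$ it picks $\vx(t) \in P(\cM)$ maximizing the linear objective $\sum_{u \in \cN} x_u (1 - y_u(t)) \cdot \partial_u F(\vy(t))$ (doable greedily since $P(\cM)$ is a solvable down-closed polytope), and evolves $\vy$ via $\frac{dy_u}{dt} = x_u(t)(1 - y_u(t))$. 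I would analyze this continuous process first and defer discretization to the end.

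The first step is monotonicity-agnostic and is taken essentially verbatim from the original analysis. The chain rule and the dynamics give $\frac{dF(\vy(t))}{dt} = \sum_{u} x_u(t)(1 - y_u(t)) \partial_u F(\vy(t))$; since $\characteristic_{OPT} \in P(\cM)$ for an optimal independent set $OPT$, the choice of $\vx(t)$ together with the telescoping submodular inequality $\sum_{u \in OPT}(1 - y_u(t))\partial_u F(\vy(t)) \ge F(\characteristic_{OPT} \vee \vy(t)) - F(\vy(t))$ (which uses only that $\partial_u F$ is non-increasing in the other coordinates) yields
\[
	\frac{dF(\vy(t))}{dt} \ge F(\characteristic_{OPT} \vee \vy(t)) - F(\vy(t)) \enspace.
\]

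The second step is the new ingredient. Because each $x_u(t) \le 1$, the dynamics imply $\frac{d}{dt}(1 - y_u(t)) \ge -(1 - y_u(t))$, hence $1 - y_u(t) \ge e^{-t}$, i.e.\ every coordinate of $\vy(t)$ is at most $1 - e^{-t}$. Since $F(\characteristic_{OPT} \vee \vy(t)) = \bE[f(OPT \cup \RSet(\vy(t)))]$, applying Corollary~\ref{cor:sampling} with $O = OPT$ and $D = \RSet(\vy(t))$, whose largest marginal is at most $1 - e^{-t}$, gives
\[
	F(\characteristic_{OPT} \vee \vy(t)) \ge \bigl(1 - (1 - m)(1 - e^{-t})\bigr) f(OPT) = \bigl(m + (1 - m)e^{-t}\bigr) f(OPT) \enspace.
\]
(For $m = 1$ this reduces to $f(OPT)$, the monotone case; for $m = 0$ to $e^{-t} f(OPT)$, the general case.) Writing $g(t) = F(\vy(t))$, the two steps combine to $g'(t) + g(t) \ge (m + (1 - m)e^{-t}) f(OPT)$; multiplying by the integrating factor $e^t$ and integrating over $[0, 1]$ while using $g(0) = f(\varnothing) \ge 0$ gives $e \cdot g(1) \ge (m(e - 1) + (1 - m)) f(OPT)$, that is, $F(\vy(1)) \ge (m(1 - 1/e) + (1 - m)/e) \cdot f(OPT)$. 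Standard pipage or swap rounding then converts $\vy(1) \in P(\cM)$ into an independent set $S$ with $f(S) \ge F(\vy(1))$ (and via the reduction in the theorem's footnote the same bound holds for a general solvable down-closed body).

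Finally, the actual Algorithm~\ref{alg:continuous_greedy} runs in $1/\delta$ discrete steps and estimates each $\partial_u F(\vy(t))$ by sampling; as in the original analysis, taking $\delta$ small enough (polynomially in $|\cN|$) keeps the total error incurred by replacing the continuous trajectory and the exact gradients only $o(1) \cdot f(OPT)$, with the $o(1)$ vanishing as $|\cN|$ grows, yielding the stated $m(1 - 1/e) + (1 - m)(1/e) - o(1)$ guarantee. I expect the only real obstacle to be this bookkeeping — checking that the time-dependent coefficient $m + (1 - m)e^{-t}$ composes correctly with the per-step discretization and estimation errors so that the clean continuous bound survives up to the $o(1)$ term; the conceptually new content is confined to the short second step above.
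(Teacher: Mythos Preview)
Your proposal is correct and follows essentially the same approach as the paper: bound each coordinate of $\vy(t)$ by $1-e^{-t}$, apply Corollary~\ref{cor:sampling} to lower bound $F(\characteristic_{OPT}\vee\vy(t))$ by $(m+(1-m)e^{-t})f(OPT)$, derive and solve the resulting linear differential inequality for $F(\vy(t))$, and finish with Pipage/Swap Rounding plus the standard discretization argument. The only cosmetic differences are that the paper phrases the linear objective via $w_u(t)=F(\vy(t)\vee\characteristic_{\{u\}})-F(\vy(t))$ rather than $(1-y_u)\partial_u F$, and states the ODE solution directly instead of writing out the integrating-factor computation.
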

\toggletrue{firstContinuousGreedy}

The other algorithm we consider is an algorithm called Random Greedy for Matroids (due to~\cite{buchbinder2014submodular}). Unlike Measured Continuous Greedy and almost all the other algorithms suggested for non-monotone objectives to date, this algorithm is combinatorial, which makes it appealing in practice. Buchbinder et al.~\cite{buchbinder2014submodular} proved an approximation ratio of roughly $(1 + e^{-2})/4$ for Random Greedy for Matroids. The next theorem shows how this approximation guarantee improves as a function of the monotonicity ratio. In this theorem we refer to the rank $k$ of the matroid constraint $\cM$, which is simply defined as the size of the largest independent set with respect to this matroid. We also note that the algorithm we analyze is identical to the algorithm of~\cite{buchbinder2014submodular} up to two modifications. The first modification is in the number of iterations that the algorithm makes. To get the result of Buchbinder et al.~\cite{buchbinder2014submodular}, it suffices to use $k$ iterations. However, the optimal number of iterations increases with $m$, and therefore, our version of the algorithm uses more than $k$ iterations. Furthermore, since we do not want to assume knowledge of $m$ in the algorithm, we use a number of iterations that is appropriate for $m = 1$, which requires us to make the second modification to the algorithm; namely, to allow an iteration to update the solution of the algorithm only when this update is beneficial. This guarantees that doing more iterations can never decrease the value of the algorithm's solution.

\begin{restatable}{theorem}{thmRandomGreedyMatroid} \label{thm:random_greedy_matroid}
For every $\eps \in (0, 1)$, the Random Greedy for Matroids algorithm (Algorithm~\ref{alg:random_greedy_matroid}) has an approximation ratio of at least $\frac{1 + m + e^{-2 / (1 - m)}}{4} - \eps - o_k(1)$ for the problem of maximizing a non-negative $m$-monotone submodular function subject to a matroid constraint (except in the case of $m = 1$ in which the approximation ratio is $1/2 - \eps - o_k(1)$). The notation $o_k(1)$ represents a term that diminishes with the rank $k$ of the matroid constraint.
\end{restatable}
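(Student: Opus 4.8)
The proof follows the analysis of the Random Greedy for Matroids algorithm due to Buchbinder et al.~\cite{buchbinder2014submodular}, and the monotonicity ratio enters essentially at a single point. Fix an optimal solution $OPT$; we may assume $f(OPT) > 0$ (otherwise the statement is trivial) and, by a standard padding argument (adding dummy elements of zero marginal value and extending $\cM$ accordingly), that $OPT$ is a base of $\cM$. Denote by $A_i$ the algorithm's solution after $i$ iterations, and let $N$ be the fixed number of iterations performed by Algorithm~\ref{alg:random_greedy_matroid}; as explained before the theorem, $N$ is chosen to be suitable for $m = 1$, so $N = \Theta(k \ln(1/\eps))$, which is much larger than $k$. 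Two consequences of the two modifications to the original algorithm are worth isolating up front. First, since an iteration updates the solution only when this is beneficial, $f(A_i) \geq f(A_{i-1})$ always, so the sequence $\bE[f(A_i)]$ is non-decreasing; hence it suffices to lower bound $\bE[f(A_i)]$ for \emph{some} convenient $i \le N$. Second, running more than $k$ iterations is meaningful precisely because, once the current solution is a base, a beneficial iteration swaps one of its elements for a new one.

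I would then assemble the two ingredients that drive the recursion. (i) \emph{Marginals of $A_i$.} Exactly as in the cardinality case, in every iteration each element outside the current solution enters it with probability at most $1/k$, and the beneficial-only rule can only decrease these probabilities; hence $\Pr[u \in A_i] \le 1 - (1 - 1/k)^i$ for every $u \in \cN$ and $i \le N$. Feeding this into Corollary~\ref{cor:sampling} with $O = OPT$ and $D = A_{i-1}$ gives
\[
	\bE[f(A_{i-1} \cup OPT)] \ge \bigl(m + (1 - m)(1 - 1/k)^{i - 1}\bigr) \cdot f(OPT)
	\enspace,
\]
which is the one place where $m$-monotonicity is used, and the only essential deviation from~\cite{buchbinder2014submodular} (whose analysis corresponds to $m = 0$). (ii) \emph{Per-iteration gain.} Using the matroid exchange bijection between the maximum-weight base $M_i$ chosen by the algorithm and the base $OPT$, together with submodularity and the beneficial-only clipping, one obtains a bound of the form
\[
	\bE\bigl[f(A_i) - f(A_{i-1}) \mid \text{history}\bigr] \ge \tfrac1k\bigl(f(A_{i-1} \cup OPT) - 2 f(A_{i-1})\bigr)
	\enspace;
\]
this step is taken essentially verbatim from~\cite{buchbinder2014submodular}, and the factor in front of $f(A_{i-1})$ reflects the ``swap'' loss inherent to matroid constraints --- it is exactly what keeps this algorithm at ratio $1/2$, rather than $1 - 1/e$, when $m = 1$.

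Combining (i) and (ii) and taking expectation over the history yields a recursion of the shape
\[
	\bE[f(A_i)] \ge (1 - \tfrac2k)\,\bE[f(A_{i-1})] + \tfrac1k\bigl(m + (1-m)(1-1/k)^{i-1}\bigr) \cdot f(OPT)
	\enspace,
\]
which I would solve by induction on $i$ (in the same manner as in the proof of Theorem~\ref{thm:random_greedy}) to get a closed-form lower bound on $\bE[f(A_i)]/f(OPT)$ as an explicit function of $m$, $i$ and $k$. Sending $k \to \infty$ with $t = i/k$ held fixed, this closed form converges to a continuous function $g_m(t)$ maximized at some $t^\star = t^\star(m)$ with $t^\star \to \infty$ as $m \to 1$ and $t^\star = \Theta(1/(1-m))$ for $m < 1$ (this is the source of the $e^{-2/(1-m)}$ in the statement); plugging $t^\star$ back in gives $g_m(t^\star) = \frac{1 + m + e^{-2/(1-m)}}{4}$ for $m < 1$, and $\frac12$ for $m = 1$. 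Because $\bE[f(A_i)]$ is non-decreasing in $i$ and $N$ is large enough that $N/k$ exceeds $t^\star$ (or, for $m$ extremely close to $1$, falls in the $\eps$-flat tail of $g_m$ above $t^\star$) for \emph{every} $m$, we conclude $\bE[f(A_N)] \ge \bigl(g_m(t^\star) - \eps - o_k(1)\bigr) \cdot f(OPT)$, where the $o_k(1)$ absorbs the $O(1/k)$ slack accumulated in the recursion and $\eps$ absorbs the discretization of $t^\star$ and the fact that $N$ is fixed rather than tuned to $m$. The case $m = 1$ is treated separately, since there the term $(1-m)(1-1/k)^{i-1}$ vanishes and the recursion reduces to $\bE[f(A_i)] \ge (1 - \tfrac2k)\bE[f(A_{i-1})] + \tfrac1k f(OPT)$, whose solution tends to $\tfrac12 f(OPT)$.

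The parts I expect to require the most care are, first, checking that the matroid exchange argument of~\cite{buchbinder2014submodular} for the per-iteration gain survives the two modifications unchanged --- in particular, that the marginal bound $\Pr[u \in A_i] \le 1 - (1-1/k)^i$, which is what legitimizes the use of Corollary~\ref{cor:sampling}, still holds when later iterations can remove elements from the solution; and second, solving and optimizing the limiting recursion precisely enough to land on the exact constant $\frac{1 + m + e^{-2/(1-m)}}{4}$, including verifying that the fixed iteration count $N = \Theta(k\ln(1/\eps))$ is uniformly adequate for all $m \in [0, 1)$.
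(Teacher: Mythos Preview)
Your overall architecture matches the paper's, and step~(ii) is correct. The gap is in step~(i): the marginal bound you import ``exactly as in the cardinality case,'' namely $\Pr[u\in A_i]\le 1-(1-1/k)^i$, is valid but too loose to yield the stated constant. In the matroid version every iteration not only adds an element with probability at most $1/k$ but also \emph{removes} a uniformly random element of the current base (because $g_i$ is a bijection, $g_i(u_i)$ is uniform over $S_{i-1}$). The paper exploits this to get the sharper recursion $p_{u,i}\le p_{u,i-1}(1-2/k)+1/k$, hence $\Pr[u\in S_i]\le\tfrac12\bigl(1-(1-2/k)^i\bigr)$, which stays below $\tfrac12$ for all $i$ rather than drifting to~$1$.

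This difference is exactly what produces the $e^{-2/(1-m)}$ term. With the paper's bound, Corollary~\ref{cor:sampling} gives $\bE[f(S_{i-1}\cup OPT)]\ge\tfrac12\bigl(1+m+(1-m)(1-2/k)^{i-1}\bigr)f(OPT)$, so the limiting ODE is $y'=-2y+\tfrac{1+m}{2}+\tfrac{1-m}{2}e^{-2t}$; the forcing term is \emph{resonant}, creating a $te^{-2t}$ contribution whose maximum, at $t^\star=1/(1-m)$, equals $\tfrac{1+m}{4}+\tfrac{1-m}{4}e^{-2/(1-m)}$. With your weaker bound the ODE is $y'=-2y+m+(1-m)e^{-t}$, which is non-resonant; solving and optimizing gives only $\tfrac{m}{2}+\tfrac{(1-m)^2}{2(2-m)}$, which at $m=0$ is $\tfrac14$, strictly below the required $(1+e^{-2})/4\approx0.284$. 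So the proof as written does not reach the theorem's guarantee. (A knock-on effect: because the correct $t^\star$ scales like $1/(1-m)$ rather than $\ln\!\bigl(1/(1-m)\bigr)$, the algorithm runs $N=k/\eps$ iterations, not $\Theta(k\ln(1/\eps))$ as you guessed.)
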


Theorem~\ref{thm:random_greedy_matroid} is proved in Section~\ref{ssc:random_greedy_matroid}. We finish this section with the following theorem (proved in Section~\ref{ssc:inapproximability_matroids}), which generalizes the $0.478$ inapproximability result of Oveis Gharan and Vondr\'{a}k~\cite{oveisgharan2011submodular} for general non-negative submodular functions.

\begin{restatable}{theorem}{thmInapproximabilityMatroid} \label{thm:inapproximability_matroid}
For any constant $\eps > 0$, no polynomial time algorithm can obtain an approximation ratio of
\[
	\min_{\alpha \in [0, 1]} \max_{x \in [0, 1/2]} \{\alpha(mx^2 + 2x - 2x^2) + 2(1 - \alpha)(1-e^{-1/2})(1 - (1 - m)x)\} + \eps
\]
for the problem of maximizing a non-negative $m$-monotone submodular function subject to a matroid constraint.
\end{restatable}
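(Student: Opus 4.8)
The plan is to mirror Section~\ref{ssc:inapproximability_cardinality}, applying the extended symmetry gap framework (Theorem~\ref{thm:symmetry_gap}) to an instance built on the matroid hardness construction of Oveis Gharan and Vondr\'ak~\cite{oveisgharan2011submodular}, modified so that the objective becomes $m$-monotone. Concretely, I would keep the same ground set $\cN = \{a, b\} \cup \{a_i, b_i \mid i \in [r]\}$ (with $r$ large, to be sent to infinity) and the same objective $f = \alpha f_1 + (1-\alpha)(f_2 + f_3)$ used in Section~\ref{ssc:inapproximability_cardinality}, so that Lemma~\ref{lem:f_propeties_cardinality} immediately gives that $f$ is non-negative, $m$-monotone and submodular; but I would replace the cardinality constraint by the partition matroid $\cM$ whose two blocks are $\{a, b\}$ (capacity $1$) and $\{a_i, b_i \mid i \in [r]\}$ (capacity $1$). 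The point of this change is that it decouples the fractional mass on $\{a,b\}$ from the mass on the ``copies'': a symmetric feasible point now satisfies $2x \le 1$ and $2rz \le 1$ separately instead of the coupled $2x + 2rz \le 2$. This is exactly what produces the range $x \in [0, 1/2]$ and the $x$-independent coverage factor $1 - e^{-1/2}$ in the statement; it also renders $\{a_1, b_1\}$ infeasible, which is what removes the $\max\{1, 2(1-\alpha)\}$ denominator that appeared in the cardinality case.

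Next I would verify strong symmetry with respect to the group $\cG$ generated by (i) the involution swapping $a \leftrightarrow b$ and $a_i \leftrightarrow b_i$ for all $i$, and (ii) the permutations acting on $[r]$ simultaneously on the $a_i$'s and the $b_i$'s: the $\cG$-invariance of $f$ is exactly as in Section~\ref{ssc:inapproximability_cardinality}, and every generator of $\cG$ maps each block of $\cM$ onto itself, so the family of independent sets is $\cG$-invariant as well. The core of the proof is then the symmetry gap computation. For the denominator I would show $\max\{F(\vx) \mid \vx \in P(\cM)\} = 1$: the feasible set $\{a, b_1\}$ already attains value $1$, while for the upper bound $F_1(\vx) \le 1$ is trivial and $F_2(\vx) + F_3(\vx) \le \bigl(1 - \prod_i (1 - x_{a_i})\bigr) + \bigl(1 - \prod_i (1 - x_{b_i})\bigr) \le \sum_i x_{a_i} + \sum_i x_{b_i} \le 1$, the last step being precisely the capacity bound of the second block. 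For the numerator, a symmetric point is parametrized by $(x,z)$ with $2x \le 1$ and $2rz \le 1$, and one computes $F_1(\bar{\vx}) = mx^2 + 2x - 2x^2$ and $F_2(\bar{\vx}) = F_3(\bar{\vx}) = (1 - (1 - z)^r)(1 - (1 - m)x)$; taking $z = 1/(2r)$ is optimal, and the estimate $(1 - 1/(2r))^r \ge e^{-1/2} - O(1/r)$ bounds the best symmetric value by $\max_{x \in [0, 1/2]}\{\alpha(mx^2 + 2x - 2x^2) + 2(1 - \alpha)(1 - e^{-1/2})(1 - (1 - m)x)\} + O(1/r)$.

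Finally, I would invoke Theorem~\ref{thm:symmetry_gap}, using that any refinement of a matroid constraint is again a matroid constraint over a larger ground set, so the hard class $\cC$ stays within $m$-monotone submodular maximization subject to a matroid constraint; then choose $\alpha$ to be the minimizer of the displayed expression and take $r$ and $1/\eps'$ large enough that the $O(1/r)$ error and the $(1+\eps')$ multiplicative slack of Theorem~\ref{thm:symmetry_gap} are absorbed into the additive $\eps$ (using that the expression is at most $1$), which yields the theorem.

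The main obstacle is not conceptual, since essentially everything is a faithful adaptation of Section~\ref{ssc:inapproximability_cardinality}; rather, it lies in (a) establishing the denominator cleanly, i.e.\ the argument that the matroid constraint forces $\max_{\vx \in P(\cM)} F(\vx) = 1$ for every $\alpha$, which is what makes the final expression genuinely simpler than for cardinality, and (b) the explicit control of the convergence $(1 - 1/(2r))^r \to e^{-1/2}$ needed to convert the finite-$r$ symmetry gap into the stated closed form up to an arbitrarily small additive loss. I would also need to re-confirm that refinements preserve matroid constraints so that Theorem~\ref{thm:symmetry_gap} applies here, but this is standard and already underlies Vondr\'ak's~\cite{vondrak2013symmetry} original use of the framework for matroid constraints.
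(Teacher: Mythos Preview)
Your proposal is correct and follows essentially the same approach as the paper: the same objective $f = \alpha f_1 + (1-\alpha)(f_2+f_3)$ combined with the partition matroid having blocks $\{a,b\}$ and $\{a_i,b_i \mid i\in[r]\}$ of capacity~$1$ each, the same symmetry group and symmetry gap computation, and the same final choice of $\alpha$, $r$ and $\eps'$. The only (harmless) differences are that you prove the denominator equals exactly $1$ whereas the paper only needs the lower bound $\ge 1$, and your description of the generator~(ii) of $\cG$ acts simultaneously on the $a_i$'s and $b_i$'s rather than on the $a_i$'s alone as in the paper, but both groups yield the same orbits and hence the same symmetrized vectors.
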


Unfortunately, the mathematical expression given in Theorem~\ref{thm:inapproximability_matroid} is not very readable. Therefore, to get an intuitive understanding of its behavior, we numerically evaluated it for various values of $m$. The plot obtained in this way appears in Figure~\ref{fig:matroid}. For context, this figure also includes all the other results proved in this section. Somewhat surprisingly, Figure~\ref{fig:matroid} shows that Theorem~\ref{thm:inapproximability_matroid} does not generalize the $1-1/e$ inapproximability result of Nemhauser and Wolsey~\cite{nemhauser1978best} for monotone functions despite the fact that this inapproximability result holds for every monotonicity ratio $m \in [0, 1]$. This behavior resembles the inability of Theorem~\ref{thm:inapproximability_cardinality} to improve over the $1 - 1/e$ inapproximability of~\cite{nemhauser1978best} for large values of $m$.

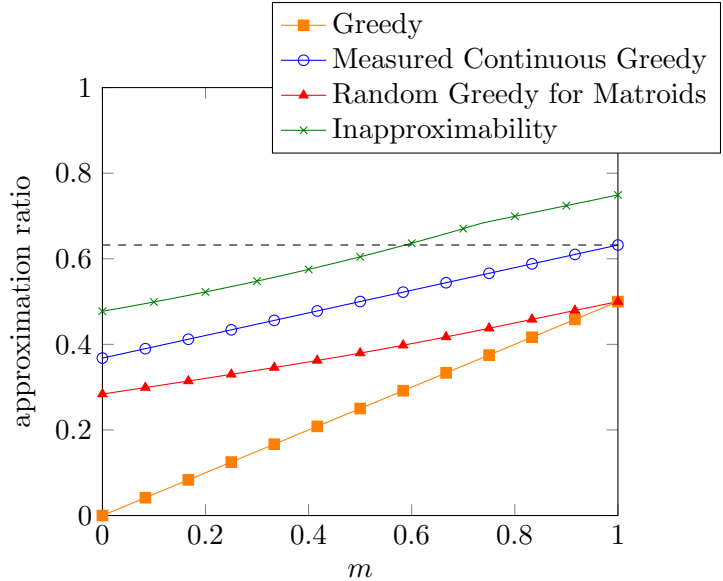
\begin{figure}
\begin{center}\begin{tikzpicture}
\begin{axis}[
    xlabel = {$m$},
    ylabel = {approximation ratio},
    xmin=0, xmax=1,
    ymin=0, ymax=1,
		legend cell align=left,
		legend style={at={(1.2,1.2)}}]

\addplot [name path = Greedy, domain = 0:1, orange, mark=square*, mark repeat=2] {x / 2};
\addlegendentry{Greedy}

\addplot [name path = ContinuousGreedy, domain = 0:1, blue, mark=o, mark repeat=2] {x * (1 - exp(-1)) + (1 - x) * exp(-1)};
\addlegendentry{Measured Continuous Greedy}
 
\addplot [name path = RandomGreedyMatroid, domain = 0:1, red, mark=triangle*, mark repeat=2] {(1 + x + exp(- min(2 / (1 - x), 100))) / 4};
\addlegendentry{Random Greedy for Matroids}

\addplot [name path = Inapproximability, darkgreen, mark=x, mark repeat=10] coordinates {
      (0.000000, 0.477302)
      (0.010000, 0.479402)
      (0.020000, 0.481518)
      (0.030000, 0.483650)
      (0.040000, 0.485799)
      (0.050000, 0.487964)
      (0.060000, 0.490145)
      (0.070000, 0.492343)
      (0.080000, 0.494558)
      (0.090000, 0.496790)
      (0.100000, 0.499039)
      (0.110000, 0.501305)
      (0.120000, 0.503589)
      (0.130000, 0.505890)
      (0.140000, 0.508208)
      (0.150000, 0.510545)
      (0.160000, 0.512900)
      (0.170000, 0.515272)
      (0.180000, 0.517663)
      (0.190000, 0.520073)
      (0.200000, 0.522501)
      (0.210000, 0.524947)
      (0.220000, 0.527413)
      (0.230000, 0.529897)
      (0.240000, 0.532400)
      (0.250000, 0.534923)
      (0.260000, 0.537465)
      (0.270000, 0.540027)
      (0.280000, 0.542608)
      (0.290000, 0.545209)
      (0.300000, 0.547830)
      (0.310000, 0.550471)
      (0.320000, 0.553132)
      (0.330000, 0.555814)
      (0.340000, 0.558516)
      (0.350000, 0.561238)
      (0.360000, 0.563982)
      (0.370000, 0.566746)
      (0.380000, 0.569531)
      (0.390000, 0.572338)
      (0.400000, 0.575166)
      (0.410000, 0.578015)
      (0.420000, 0.580885)
      (0.430000, 0.583778)
      (0.440000, 0.586691)
      (0.450000, 0.589627)
      (0.460000, 0.592585)
      (0.470000, 0.595565)
      (0.480000, 0.598567)
      (0.490000, 0.601592)
      (0.500000, 0.604639)
      (0.510000, 0.607708)
      (0.520000, 0.610800)
      (0.530000, 0.613915)
      (0.540000, 0.617052)
      (0.550000, 0.620213)
      (0.560000, 0.623396)
      (0.570000, 0.626603)
      (0.580000, 0.629833)
      (0.590000, 0.633086)
      (0.600000, 0.636362)
      (0.610000, 0.639662)
      (0.620000, 0.642985)
      (0.630000, 0.646332)
      (0.640000, 0.649702)
      (0.650000, 0.653096)
      (0.660000, 0.656514)
      (0.670000, 0.659955)
      (0.680000, 0.663421)
      (0.690000, 0.666910)
      (0.700000, 0.670423)
      (0.710000, 0.673960)
      (0.720000, 0.677521)
      (0.730000, 0.681106)
      (0.740000, 0.684259)
      (0.750000, 0.686751)
      (0.760000, 0.689242)
      (0.770000, 0.691734)
      (0.780000, 0.694225)
      (0.790000, 0.696717)
      (0.800000, 0.699208)
      (0.810000, 0.701699)
      (0.820000, 0.704191)
      (0.830000, 0.706682)
      (0.840000, 0.709174)
      (0.850000, 0.711665)
      (0.860000, 0.714157)
      (0.870000, 0.716648)
      (0.880000, 0.719140)
      (0.890000, 0.721631)
      (0.900000, 0.724122)
      (0.910000, 0.726614)
      (0.920000, 0.729105)
      (0.930000, 0.731597)
      (0.940000, 0.734088)
      (0.950000, 0.736580)
      (0.960000, 0.739071)
      (0.970000, 0.741563)
      (0.980000, 0.744054)
      (0.990000, 0.746545)
      (1.000000, 0.749037)
};
\addlegendentry{Inapproximability}

\addplot [name path = OldInapproximability, domain = 0:1, dashed] {1 - exp(-1)};

\end{axis}
\end{tikzpicture}\end{center}
\caption{Graphical representation of the results of Section~\ref{sec:matroid}. This plots depicted the approximation guarantees we prove for the greedy algorithm (Theorem~\ref{thm:greedy_matroid}), the Measured Continuous Greedy algorithm (Theorem~\ref{thm:continuous_greedy}) and the Random Greedy for Matroids algorithm (Theorem~\ref{thm:random_greedy_matroid}), and our inapproximability result (Theorem~\ref{thm:inapproximability_matroid}). The dashed line corresponds to an approximation ratio of $1 - 1/e$, which is another inapproximability result, inherited from monotone submodular functions~\cite{nemhauser1978best}.} \label{fig:matroid}
\end{figure}

\subsection{Analysis of the Greedy algorithm} \label{ssc:greedy_matroid}

A version of the greedy algorithm designed for matroid constraints appears as Algorithm~\ref{alg:greedy_matroid}. This algorithm starts with an empty solution, and then iteratively adds elements to this solution, where the element added in each iteration is the element with the largest marginal contrition with respect to the current solution among all the elements whose addition to the solution does not violate feasibility. The algorithm terminates when no additional elements can be added to the solution without decreasing its value.

\begin{algorithm}[ht]
\DontPrintSemicolon
 Let $A_0\leftarrow \varnothing$, and $i \gets 0$.\;
\While{true}{
	Let $u_{i + 1}$ be the element of $\{v \in \mathcal{N} \setminus A_i \mid A_i + v \in \cI\}$ maximizing $f(u_{i + 1} \mid A_i)$.\;
	\lIf{$f(u_{i + 1} \mid A_i)\geq 0$}{Let $A_{i + 1} \leftarrow A_i+u_{i + 1}$, and then, increase $i$ by $1$.}
	\lElse{\Return $A_i$.}
}
\caption{The Greedy Algorithm (for a Matroid Constraint) $(f,\cM = (\cN, \cI))$\label{alg:greedy_matroid}}
\end{algorithm}

\thmGreedyMatroid*
\begin{proof}
Lemma~3.2 of~\cite{gupta2010constrained} shows that the greedy algorithm outputs a solution $S$ of value at least $f(S \cup OPT)/2$ for the problem of maximizing a non-negative submodular function $f$ subject to a matroid constraint, where $OPT$ is an optimal solution for the problem.\footnote{In fact, Lemma~3.2 of~\cite{gupta2010constrained} proves a more general result for $p$-set systems, but it implies the stated result since matroids are $1$-set systems.} The theorem now follows since for an $m$-monotone function $f$ we are guaranteed to have $f(S \cup OPT) \geq m \cdot f(OPT)$.
\end{proof}

\subsection{Analysis of Measured Continuous Greedy} \label{ssc:continuous_greedy}

In this section, we reanalyze the Measured Continuous Greedy algorithm of \cite{feldman2011unified} in view of the monotonicity ratio. Given a non-negative submodular function $f\colon 2^\cN \to \nnR$ and a down-closed solvable\footnote{A body $P \subseteq [0, 1]^\cN$ is \emph{solvable} if one can efficiently optimize linear functions subject to it, and is \emph{down-closed} if $\vy \in P$ implies $\vx \in P$ for every vector $\vx \in [0, 1]^\cN$ obeying $\vx \leq \vy$ (this inequality should be understood to hold coordinate-wise).} convex body $P \subseteq [0, 1]^\cN$, Measured Continuous Greedy is an algorithm designed to find a vector $\vx \in P$ that approximately maximize $F(\vx)$, where $F$ is the multilinear extension of $f$. Specifically, we prove the following theorem.
\begin{theorem} \label{thm:measured_continuous_greedy}
Given a non-negative $m$-monotone submodular function $f\colon 2^\cN \to \nnR$, a solvable down-close convex body $P \subseteq [0, 1]^\cN$ and a parameter $T \geq 0$, Measured Continuous Greedy outputs a vector $\vx \in [0, 1]^\cN$ obeying $F(\vx) \geq [m(1 - e^{-T}) + (1 - m)Te^{-T}] \cdot f(OPT)$, where $F$ is the multilinear extension of $f$ and $OPT$ is the set maximizing $f$ among all sets whose characteristic vectors belong to $P$. Furthermore, $\vx \in P$ whenever $T \in [0, 1]$.
\end{theorem}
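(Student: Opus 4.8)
The plan is to adapt the standard analysis of Measured Continuous Greedy (from~\cite{feldman2011unified}) by tracking how the monotonicity ratio $m$ improves the bound on the rate of value increase along the algorithm's trajectory. Recall that Measured Continuous Greedy maintains a fractional solution $\vx(t)$ for $t \in [0, T]$, starting at $\vx(0) = \vzero$, and at each time $t$ moves in the direction $\vy(t) \odot (\vone - \vx(t))$, where $\vy(t) \in P$ is chosen to (approximately) maximize $\langle \vy, \nabla F(\vx(t)) \odot (\vone - \vx(t)) \rangle$; here $\odot$ denotes coordinatewise product. Two facts are standard and I would invoke them: (i) each coordinate satisfies $x_u(t) \le 1 - e^{-t} \le 1 - e^{-T}$ throughout, which in particular keeps $\vx(T) \in P$ when $T \le 1$ (since $P$ is down-closed and $\vx(t)/(1 - e^{-t})$ stays in $P$), and (ii) the choice of $\vy(t)$ and the feasibility of $\characteristic_{OPT}$ in $P$ give $\langle \vy(t), \nabla F(\vx(t)) \odot (\vone - \vx(t)) \rangle \ge \langle \characteristic_{OPT}, \nabla F(\vx(t)) \odot (\vone - \vx(t)) \rangle \ge F(\vx(t) \vee \characteristic_{OPT}) - F(\vx(t))$, where the last step uses multilinearity and submodularity (the marginal of adding $OPT$'s coordinates, each scaled by $1 - x_u$, lower-bounds the true marginal since $F$ is concave along nonnegative directions).

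**Next I would** bring in the monotonicity ratio through Corollary~\ref{cor:sampling} (or equivalently Lemma~\ref{lem:lovasz}): since $\vx(t) \vee \characteristic_{OPT}$ is the marginal vector of $OPT \cup \RSet(\vx(t))$ and every coordinate of $\vx(t)$ is at most $1 - e^{-t}$, we get $F(\vx(t) \vee \characteristic_{OPT}) = \bE[f(OPT \cup \RSet(\vx(t)))] \ge (1 - (1-m)(1 - e^{-t})) \cdot f(OPT)$. Combining with the previous paragraph, $\frac{dF(\vx(t))}{dt} \ge \langle \vy(t), \nabla F(\vx(t)) \odot (\vone - \vx(t)) \rangle \ge [1 - (1-m)(1 - e^{-t})] f(OPT) - F(\vx(t))$. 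Writing $g(t) = F(\vx(t))$ and $\beta(t) = 1 - (1-m)(1 - e^{-t})$, this is the linear differential inequality $g'(t) \ge \beta(t) f(OPT) - g(t)$ with $g(0) \ge 0$.

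**Then I would** solve this inequality. Multiplying by the integrating factor $e^t$ gives $(e^t g(t))' \ge e^t \beta(t) f(OPT)$, so $e^T g(T) \ge f(OPT) \int_0^T e^t [1 - (1-m)(1 - e^{-t})]\, dt = f(OPT) \int_0^T [m e^t + (1-m)]\, dt = f(OPT)[m(e^T - 1) + (1-m)T]$. Dividing by $e^T$ yields $F(\vx(T)) = g(T) \ge [m(1 - e^{-T}) + (1-m)T e^{-T}] \cdot f(OPT)$, which is exactly the claimed bound. The feasibility claim $\vx(T) \in P$ for $T \in [0,1]$ follows from fact (i) above as noted. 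In the discrete implementation, time is split into $1/\delta$ steps of length $\delta$, and one incurs the usual $O(\delta)$ additive losses per step (both from the linearization error in the continuous-greedy step and from the concavity estimate), plus a further lower-order loss from estimating $F$ and $\nabla F$ by sampling; these accumulate to an $o(1)$ term, absorbed as stated, and I would handle them exactly as in~\cite{feldman2011unified} since the monotonicity ratio does not interact with them.

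**The main obstacle** I anticipate is not any single hard step but making sure the monotonicity-ratio input enters at precisely the right place: the clean linear form of the final bound hinges on using the \emph{per-time} decay bound $x_u(t) \le 1 - e^{-t}$ (not the cruder uniform bound $1 - e^{-T}$) inside Corollary~\ref{cor:sampling}, so that the integrand becomes the elementary $m e^t + (1-m)$; using the weaker bound would give a valid but worse (and non-matching) guarantee. A secondary care point is the discretization bookkeeping, ensuring the accumulated $O(\delta)$ errors and sampling errors genuinely collapse to $o(1)$ as the ground set grows, exactly as in the original analysis — this is routine but must be stated.
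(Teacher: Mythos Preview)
Your proposal is correct and follows essentially the same route as the paper's proof: bound each coordinate of the trajectory by $1-e^{-t}$ (Lemma~\ref{lem:y_upper_bound}), plug this per-time bound into Corollary~\ref{cor:sampling} to obtain the linear differential inequality $g'(t)\ge [m+(1-m)e^{-t}]f(OPT)-g(t)$, and integrate to get the stated guarantee. One small correction on feasibility: your claim that $\vx(t)/(1-e^{-t})\in P$ is not obviously true and is not how the paper argues it; instead the paper observes that the output equals $(1-T)\cdot\characteristic_\varnothing+\int_0^T(\characteristic_\cN-\vy(t))\odot\vx(t)\,dt$, a convex combination of vectors in the down-closed body $P$ whenever $T\le 1$.
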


We note that Feldman et al.~\cite{feldman2011unified} discussed conditions that guarantee that $\vx$ belongs to $P$ also for some values of $T$ that are larger than $1$. However, the above stated form of Theorem~\ref{thm:measured_continuous_greedy} already suffices to prove Theorem~\ref{thm:continuous_greedy}. Let us explain why this is the case. When $P$ is the matroid polytope $P_\cM$ of a matroid $\cM$, there are algorithms called Pipage Rounding~\cite{calinescu2011maximizing} and Swap Rounding~\cite{chekuri2010dependent} that, given a vector $\vx \in P$ produce a set $S$ that is independent in $\cM$ and also obeys $\bE[f(S)] \geq F(\vx) - o(1) \cdot f(OPT)$. Therefore, one can obtain an algorithm for maximizing $f$ subject to the matroid $\cM$ by executing Measured Continuous Greedy with $P = P_\cM$ and $T = 1$, and then applying either Pipage Rounding or Swap Rounding to the resulting vector; which yields an algorithm with the properties specified by Theorem~\ref{thm:continuous_greedy}.

We now describe the version of Measured Continuous Greedy that we analyze (given as Algorithm~\ref{alg:continuous_greedy}). For simplicity, we chose to analyze a continuous version of this algorithm that assumes direct access to the multilinear extension $F$ of the objective function rather than just to the objective function itself. We refer the reader to~\cite{feldman2011unified} for details about discretizing the algorithm and avoiding the assumption of direct access to $F$. We also note that the $o(1)$ error term in the approximation guarantee stated in Theorem~\ref{thm:measured_continuous_greedy} is due to these issues.  Our description of Measured Continuous Greedy requires some additional notation, namely, given two vectors $\vx$ and $\vy$, we denote by $\vx \vee \vy$ their coordinate-wise maximum and by $\vx \odot \vy$ their coordinate-wise multiplication.

Measured Continuous Greedy starts at ``time'' $0$ with the empty solution, and improves this solution during the time interval $[0,T]$. We denote the solution of the algorithm at time $t$ by $\vy(t)$. At every time $t\in [0,T]$, the algorithm calculates a vector $\vw$ whose $u$-coordinate is the gain that can be obtained by increasing this coordinates in the solution $\vy(t)$ to be $1$ (i.e., $w_u(t) = F(\vy(t) \vee 1_{\{u\}}) - F(\vy(t))$). Then, the algorithm finds a vector $\vx(t)\in P$ that maximizes the objective function $\vw(t)\cdot \vx(t)$, and adds to the solution $\vy(t)$ an infinitesimal part of $(\characteristic_\cN-\vy(t)) \odot \vx(t)$ (to understand where the last expression comes from, we note that when $\vx$ is integral, fully adding $(\characteristic_\cN-\vy(t)) \odot \vx(t)$ to $\vy(t)$ sets to $1$ all the coordinates that are $1$ in $\vx(t)$, which matches the ``spirit'' of the definition of $\vw$).

\begin{algorithm}
\DontPrintSemicolon
Let $\vy(0)\leftarrow 1_{\varnothing}$.\;
\ForEach{$t \in [0,T)$}{
For each $u\in \mathcal{N}$, let $w_u(t) \leftarrow F(\vy(t) \vee 1_{\{u\}}) - F(\vy(t))$.\;
Let $\vx(t) \leftarrow \argmax_{\vx\in P}\{\vw(t)\cdot \vx\}$. \;
Increase $\vy(t)$ at a rate of $\frac{d\vy(t)}{dt} = (\characteristic_\cN-\vy(t)) \odot \vx(t)$. \label{line:growth} \;
}
\Return $y(T)$.
\caption{Measured Continuous Greedy($f,P,T$)\label{alg:continuous_greedy}}
\end{algorithm}

The first step in the analysis of Measured Continuous Greedy is bounding the maximum value of the coordinates of the solution $\vy(t)$.
\begin{lemma} \label{lem:y_upper_bound}
For every $t \in [0, T]$, $\|\vy(t)\|_\infty \leq 1 - e^{-t}$.
\end{lemma}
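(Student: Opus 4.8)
The statement to prove is Lemma~\ref{lem:y_upper_bound}: for every $t \in [0,T]$, $\|\vy(t)\|_\infty \leq 1 - e^{-t}$. The natural approach is to track a single coordinate $u \in \cN$ and bound $y_u(t)$ by solving (or more precisely, upper-bounding the solution of) the differential inequality that governs its growth. From Line~\ref{line:growth} of Algorithm~\ref{alg:continuous_greedy}, the coordinate $u$ evolves according to $\frac{dy_u(t)}{dt} = (1 - y_u(t)) \cdot x_u(t)$, where $x_u(t) \in [0,1]$ since $\vx(t) \in P \subseteq [0,1]^\cN$. Thus $\frac{dy_u(t)}{dt} \leq 1 - y_u(t)$ for all $t$, with initial condition $y_u(0) = 0$.

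First I would make the standard substitution $g(t) = 1 - y_u(t)$, so that $g(0) = 1$ and $\frac{dg(t)}{dt} = -(1 - y_u(t)) x_u(t) = -g(t) x_u(t) \geq -g(t)$. Since $y_u(t) \leq 1$ throughout (the growth rate vanishes once the coordinate reaches $1$, so no coordinate ever exceeds $1$ — this should be noted, as it guarantees $g(t) \geq 0$ and hence that dividing by $g(t)$ is legitimate wherever $g(t) > 0$), we have $\frac{d}{dt} \ln g(t) = \frac{g'(t)}{g(t)} \geq -1$. Integrating from $0$ to $t$ gives $\ln g(t) - \ln g(0) \geq -t$, i.e. $\ln g(t) \geq -t$, hence $g(t) \geq e^{-t}$, which rearranges to $y_u(t) = 1 - g(t) \leq 1 - e^{-t}$. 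Taking the maximum over $u \in \cN$ yields $\|\vy(t)\|_\infty \leq 1 - e^{-t}$, as desired. (If one prefers to avoid the logarithm manipulation, one can equivalently observe that $\frac{d}{dt}\left(e^t g(t)\right) = e^t(g'(t) + g(t)) = e^t g(t)(1 - x_u(t)) \geq 0$, so $e^t g(t)$ is non-decreasing and therefore $e^t g(t) \geq e^0 g(0) = 1$, giving $g(t) \geq e^{-t}$ directly.)

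The only mild subtlety is justifying that $y_u(t)$ stays in $[0,1]$ so that the manipulations are valid — in particular that $g(t)$ never becomes negative, which would break the monotonicity/sign arguments. This follows because the growth rate $(1 - y_u(t)) x_u(t)$ is non-negative (so $y_u$ is non-decreasing, hence $y_u(t) \geq y_u(0) = 0$) and vanishes as $y_u(t) \to 1$ (so $y_u$ cannot cross $1$); a clean way to phrase this is that $g(t) = e^t g(t) \cdot e^{-t}$ is a product of a non-negative non-decreasing function with $e^{-t} > 0$, hence non-negative, and the argument of the previous paragraph already shows $g(t) \geq e^{-t} > 0$ in any case. I do not anticipate any real obstacle here; this is a routine Grönwall-type estimate, and the main thing to get right is simply that the bound $x_u(t) \leq 1$ is what converts the exact ODE into the clean inequality $\frac{dy_u}{dt} \leq 1 - y_u$.
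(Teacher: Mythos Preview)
Your proposal is correct and takes essentially the same approach as the paper: fix a coordinate $u$, use $x_u(t)\leq 1$ to obtain the differential inequality $\frac{dy_u}{dt}\leq 1-y_u$ with $y_u(0)=0$, and conclude $y_u(t)\leq 1-e^{-t}$. The paper simply asserts that the solution of this differential inequality is $y_u(t)\leq 1-e^{-t}$, whereas you spell out the Gr\"onwall-type calculation explicitly (and note the minor point that $y_u$ remains in $[0,1]$); this extra care is fine but not strictly necessary.
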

\begin{proof}
Fix an arbitrary element $u \in \cN$, and let us explain why $y_u(t) \leq 1 - e^{-t}$. By Line~\ref{line:growth} of Algorithm~\ref{alg:continuous_greedy}, $y_u(t)$ obeys the differential inequality
\[
	\frac{dy_u(t)}{dt}
	=
	(1-y_u(t)) \cdot x_u(t)
	\leq
	1 - y_u(t)
	\enspace,
\]
and the solution of this differential inequality for the initial condition $y_u = 0$ is
\[
	y_u(t) \leq 1 - e^{-t}
	\enspace.
	\qedhere
\]
\end{proof}

We are now ready to prove Theorem~\ref{thm:measured_continuous_greedy}
\begin{proof}[Proof of~\ref{thm:measured_continuous_greedy}]
Recall that $\vx(t)$ is a vector inside $P$ for every time $t\in [0,T]$, and since $P$ is down-closed, $(\characteristic_\cN-\vy(t))\odot \vx(t)$ and $\characteristic_{\varnothing}$ both belong to $P$ as well. This means that for $T \leq 1$ the vector $\vy(T) = (1 - T) \cdot \characteristic_\varnothing + \int_{0}^{T} (\characteristic_\cN - y(t)) \odot \vx(t) dt$ is a convex combination of vectors in $P$, and therefore belongs to $P$ by the convexity of $P$.

It remains to lower bound the value of $F(\vy(T))$.
By the chain rule,
\[\frac{dF(\vy(t))}{dt}=\sum_{u\in\mathcal{N}}\left(\frac{dy_u(t)}{dt}\cdot\frac{\partial F(\vy)}{\partial y_u}\Bigr|_{\vy=\vy(t)}\right)=\sum_{u\in\mathcal{N}}\left((1-y_u(t))\cdot x_u(t)\cdot\frac{\partial F(\vy)}{\partial y_u}\Bigr|_{\vy=\vy(t)}\right) \enspace.\]
Since $F$ is multilinear, its partial derivative with respect to a single coordinate is equal to the difference between the value of the function for two different values of this coordinate over the difference between these values.
Plugging this observation into the previous inequality yields
\[
\frac{dF(\vy(t))}{dt}
=\sum_{u\in\mathcal{N}}\left((1-y_u(t))\cdot x_u(t)\cdot\frac{F(\vy(t)\vee 1_{\{u\}})-F(\vy(t))}{1- y_u(t)}\right)
=\vx(t)\cdot \vw(t) \enspace.
\]
One possible candidate to be $\vx(t)$ is $\characteristic_{OPT}$. Hence, by the definition of $\vx(t)$, $\vx(t)\cdot \vw(t) \geq \characteristic_{OPT}\cdot \vw(t)$.
Combining this inequality with the previous one, we get
\begin{align*}
\frac{dF(\vy(t))}{dt} &\geq
	\characteristic_{OPT}\cdot \vw(t) =
	\sum_{u \in OPT} \left[F(\vy(t)\vee 1_{\{u\}}) - F(\vy(t))\right]\\
	&\geq F(\vy(t)\vee \characteristic_{OPT}) - F(\vy(t)) \geq [1 - (1 - m) \cdot \|\vy(t)\|_\infty] \cdot f(OPT) - F(\vy(t))\\
	&\geq
	[1 - (1 - m)(1 - e^{-t})] \cdot f(OPT) - F(\vy(t))
	=
	[m + (1 - m)e^{-t}] \cdot f(OPT) - F(\vy(t))
\enspace,
\end{align*}
where the second inequality holds by the submodularity of $f$, the penultimate inequality holds by Corollary~\ref{cor:sampling}, and the last inequality follows from Lemma~\ref{lem:y_upper_bound}. 

Solving the differential inequality that we got for the initial condition $F(\vy(0)) \geq 0$ (which holds by the non-negativity of $f$) yields
\[ F(y(t))\geq \left[m(1-e^{-t})+(1-m)te^{-t}\right]\cdot f(OPT) \enspace,\]
and the theorem now follows by plugging $t = T$.
%
\end{proof}

\subsection{Analysis of Random Greedy for Matroids} \label{ssc:random_greedy_matroid}

In this section we prove Theorem~\ref{thm:random_greedy_matroid}, which we repeat here for convenience.
\thmRandomGreedyMatroid*

To prove the theorem, we first need to state the algorithm it refers to. Towards this goal, let us assume that the ground set $\mathcal{N}$ contains a set $D$ of $2k$ ``dummy" elements that are known to the algorithm and have the following two properties.
\begin{compactitem}
	\item $f(S)=(S\setminus D)$ for every set $S\subseteq\mathcal{N}$.
	\item $S\in\mathcal{I}$ if and only if $S\setminus D\in\mathcal{I}$ and $|S|\leq k$.
\end{compactitem}
This assumption is useful since it allows us to assume that the optimal solution $OPT$ is a base of $\mathcal{M}$, and thus, simplifies the description of our algorithm (Random Greedy for Matroids). We can justify our assumption using the following procedure: (i) add $2k$ dummy elements to the ground set, (ii) extend $f$ and $\mathcal{I}$ according to the above properties, (iii) execute Random Greedy for Matroids on the resulting instance, and (iv) remove from the output of the algorithm any dummy elements that end up in it. This procedure guarantees that any approximation guarantee obtained by Random Greedy for Matroids using our assumption can be obtained also without the assumption.

Our version of the Random Greedy for Matroids algorithm is given as Algorithm~\ref{alg:random_greedy_matroid}. Like the original version of the algorithm (due to~\cite{buchbinder2014submodular}), our version starts with a base of $\cM$ consisting only of dummy elements, and then modifies it in a series of iterations. In each iteration $i$, the algorithm starts with a solution $S_{i - 1}$, and then identifies a base $M_i$ of $\cM$ whose elements have the largest total marginal contribution with respect to $S_{i - 1}$ ($M_i$ is also required to be disjoint from $S_{i - 1}$). The algorithm then picks a uniformly random element $u_i \in S_{i - 1}$, and adds it to the solution $S_{i - 1}$ at the expense of an element $g_i(u_i)$ of $S_{i - 1}$ given by a function $g_i$ that is chosen carefully (the existence of such a function follows, for example, from Corollary~39.12a of~\cite{schrijver2003combinatorial}).

As mentioned above, our version of Random Greedy for Matroids differs compared to the version of~\cite{buchbinder2014submodular} in two respects. First, we check whether replacing $g(u_i)$ with $u_i$ is beneficial, and make the swap only if this is indeed the case. Second, our algorithm gets a parameter $\eps \in (0, 1)$ and makes $k/\eps$ iterations instead of $k$ (we assume without loss of generality that $k/\eps$ is integral; otherwise, we can replace $\eps$ with a value which is smaller than $\eps$ by at most a factor of $2$ and has this property). 

\begin{algorithm}
\DontPrintSemicolon
Initialize $S_0$ to be an arbitrary base containing only elements of $D$.\;
\For{$i=1$ \KwTo $\nicefrac{k}{\varepsilon}$}{
Let $M_i\subseteq N$ be a base of $\mathcal{M}$ that contains only elements of $\cN \setminus S_{i - 1}$ and maximizes $\sum_{u\in M_i}f(u \mid S_{i-1})$ among all such bases.
\;
Let $g_i$ be a function mapping each element of $M_i$ to an element of $S_{i-1}$ obeying $S_{i-1} - g_i(u) + u \in \mathcal{I}$ for every $u\in S_{i-1}$. \;
Let $u_i$ be a uniformly random element from $M_i$. \;
\lIf{$f(S_{i - 1} - g_i(u_i) + u_i) > f(S_{i - 1})$}{Let $S_i \leftarrow S_{i-1} - g_i(u_i) + u_i$.}
\lElse{Let $S_i \gets S_{i - 1}$.}
}
\Return $S_{k / \eps}$.
\caption{Random Greedy for Matroids($f,\cM = (\cN, \cI),\varepsilon$)} \label{alg:random_greedy_matroid}
\end{algorithm}

Since Theorem~\ref{thm:random_greedy_matroid} is trivial for a constant $k$, we can assume in the analysis of Algorithm~\ref{alg:random_greedy_matroid} that $k$ is larger than any given constant. The first step in this analysis is proving the following lower bound on the expected value of $OPT\cup S_i$.
\begin{observation}\label{obs:randgreedmat}
For every integer $0\leq i\leq k/\eps$, $\mathbb{E}[f(OPT\cup S_i)]\geq \tfrac{1}{2}(1+m+(1-m)(1-2/k)^i) \cdot f(OPT)$.
\end{observation}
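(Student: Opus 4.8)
The plan is to reduce the bound on $\mathbb{E}[f(OPT \cup S_i)]$ to an application of Corollary~\ref{cor:sampling}, exactly in the spirit of how Corollary~\ref{cor:RG} was derived from the cardinality-constrained Random Greedy. To do this, the key quantity to control is $\max_{u \in \cN} \Pr[u \in S_i \setminus D]$ (dummy elements can be ignored since they contribute nothing to $f$). Concretely, I would first argue that for every non-dummy element $u$ and every iteration $i$ we have $\Pr[u \in S_i] \le \tfrac1k(1 - (1 - 2/k)^i) \cdot \frac{?}{}$ — more precisely, I expect the right bound to be $\Pr[u \in S_i] \le \tfrac{1}{2}(1 - (1 - 2/k)^i)$, which is what makes the claimed constant $\tfrac12(1 + m + (1-m)(1-2/k)^i)$ come out of Corollary~\ref{cor:sampling} after substituting $\max_u \Pr[u \in D_{\vx}] = \tfrac12(1 - (1-2/k)^i)$ into the factor $1 - (1-m)\cdot\max_u\Pr[u\in D]$.

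To establish that per-element probability bound, I would track, for a fixed non-dummy $u$, the probability that $u$ has entered the solution by the end of iteration $i$. In each iteration, $u$ can be added only if $u \in M_i$, and then it is added with probability $1/k$ (it is the chosen random element $u_i$); moreover once $u$ is chosen, it could in principle be swapped out again in a later iteration, but since we only want an upper bound on $\Pr[u \in S_i]$, I can simply bound the probability that $u$ is \emph{ever} selected as some $u_j$ with $j \le i$. The subtlety is that $M_j$ is a base (size $k$) and $u$ may or may not be in it; the cleanest route is to bound the probability that $u$ is introduced in iteration $j$ by $1/k$ unconditionally, giving a union bound of $i/k$, which is too weak. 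Instead I expect one needs the recursion $\Pr[u \notin S_i] \ge (1 - 2/k)\Pr[u \notin S_{i-1}]$: conditioned on $u \notin S_{i-1}$, the probability $u$ is added in iteration $i$ is at most $1/k$, and additionally $u$'s removal from a later solution is what the factor $2/k$ rather than $1/k$ accounts for — actually the factor $2$ comes from the fact that an element can leave the solution as well, so the ``net'' rate at which $\Pr[u \in S_i]$ grows is governed by $2/k$. Unrolling $\Pr[u \notin S_i] \ge (1 - 2/k)^i$ gives $\Pr[u \in S_i] \le 1 - (1 - 2/k)^i$, and a factor-$\tfrac12$ refinement (coming from the symmetry between the ``$a$-side'' and ``$b$-side'' style argument, or simply from $|M_i \setminus S_{i-1}|$ considerations) yields the $\tfrac12(1 - (1-2/k)^i)$ we need.

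Once the probability bound $\max_{u \in \cN} \Pr[u \in S_i \setminus D] \le \tfrac12(1 - (1 - 2/k)^i)$ is in hand, the observation follows immediately: apply Corollary~\ref{cor:sampling} with $O = OPT$ and $D = S_i \setminus D$ (abusing notation for the dummy set), using $f(OPT \cup S_i) = f(OPT \cup (S_i \setminus D))$, to get
\[
\mathbb{E}[f(OPT \cup S_i)] \ge \left(1 - (1-m)\cdot \tfrac12(1 - (1-2/k)^i)\right) f(OPT) = \tfrac12\left(1 + m + (1-m)(1-2/k)^i\right) f(OPT),
\]
where the last equality is a routine rearrangement.

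**Main obstacle.** The delicate step is pinning down the exact per-element inclusion probability, and in particular justifying the factor $2/k$ (rather than $1/k$) in the recursion and the extra factor of $\tfrac12$ in front. This requires carefully accounting both for how an element enters the solution (it must lie in the base $M_i$ and be the randomly chosen $u_i$) and for the fact that elements can also be swapped \emph{out}; the ``beneficial-swap-only'' modification to the algorithm makes $S_i$ depend on $f$ in a way that only helps the final value but complicates a direct probabilistic argument, so I would want to prove the probability bound for the \emph{un-modified} coupling (where swaps are always performed) and then observe that the modification only removes elements, hence only decreases $\Pr[u \in S_i]$ for non-dummy $u$ while weakly increasing $\Pr[u \in S_i]$ for dummy elements — which is harmless since dummies do not affect $f$. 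Getting this monotonicity argument airtight is where I'd expect to spend the most care.
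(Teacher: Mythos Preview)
Your overall strategy is exactly the paper's: bound $\max_{u \in \cN \setminus D}\Pr[u \in S_i]$ by $\tfrac{1}{2}\bigl(1-(1-2/k)^i\bigr)$ and then invoke Corollary~\ref{cor:sampling} with $O = OPT$ and the random set $S_i \setminus D$, using $f(OPT\cup S_i)=f(OPT\cup(S_i\setminus D))$. The final computation you wrote is correct and matches the paper verbatim.

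The gap is in the middle step. Your proposed recursion $\Pr[u \notin S_i] \ge (1-2/k)\Pr[u \notin S_{i-1}]$ only yields $\Pr[u \in S_i] \le 1-(1-2/k)^i$, which is off by a factor of~$2$, and the sources you suggest for the missing $\tfrac12$ (an ``$a$-side/$b$-side symmetry'' or $|M_i\setminus S_{i-1}|$ considerations) are not where it comes from. The correct recursion is the \emph{affine} one
\[
p_{u,i} \;\le\; p_{u,i-1}\Bigl(1-\tfrac{1}{k}\Bigr) + \bigl(1-p_{u,i-1}\bigr)\cdot\tfrac{1}{k} \;=\; \Bigl(1-\tfrac{2}{k}\Bigr)p_{u,i-1} + \tfrac{1}{k}
\]
for $p_{u,i} := \Pr[u \in S_i]$: conditioned on $u \in S_{i-1}$, the element survives with probability $1-1/k$ because $g_i(u_i)$ is uniform over the base $S_{i-1}$; conditioned on $u \notin S_{i-1}$, it enters with probability at most $1/k$ because it must be the uniformly chosen $u_i\in M_i$. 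The $\tfrac12$ is simply the fixed point of this recursion, and with $p_{u,0}=0$ a one-line induction gives $p_{u,i}\le\tfrac12\bigl(1-(1-2/k)^i\bigr)$ directly --- no separate ``refinement'' is needed. The factor $2/k$ thus arises from subtracting the entry rate $1/k$ from the survival rate $1-1/k$, not from anything to do with removals in later rounds.

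Your concern about the beneficial-swap-only modification is legitimate, but the paper's own proof simply writes the recursion above without addressing it; so for the purposes of matching the paper you may do the same.
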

\begin{proof}
For every integer $0\leq i\leq k/\eps$ and element $u \in \cN \setminus D$, let $p_{u,i}$ denote the probability $u$ belongs to $S_i$. We would like to argue that when $i > 0$, we have $p_{u, i} \leq p_{u,i-1}(1-2/k)+1/k$. To see why this is the case, note that $u$ belongs to $S_i$ only if one of the following happens: (i) $u$ belongs to $S_{i - 1}$ and is not removed from the solution (happens with probability $p_{u, i - 1}(1 - 1/k)$ since $g_i(u_i)$ is a uniformly random element of $S_{i - 1}$), or (ii) $u$ belongs to $M_{i - 1}$ and is chosen as $u_i$ (happens with probability at most $(1 - p_{u, i})/k$). Therefore,
\begin{equation} \label{eq:p_step}
	p_{u, i}
	\leq
	p_{u, i - 1} \cdot (1 - 1/k) + (1 - p_{u, i - 1}) / k
	=
	p_{u, i - 1} \cdot (1 - 2/k) + 1/k
	\enspace.
\end{equation}

Next, we aim to prove by induction that $p_{u,i}\leq \tfrac{1}{2}(1-(1-2/k)^i)$ for every integer $0 \leq i \leq k/\eps$. For $i=0$, this is true since $u \in \cN \setminus D$ implies that $p_{u, 0} = 0 = \tfrac{1}{2}(1-(1-2/k)^0)$. Assume now that the claim holds for $i - 1$, and let us prove it for $i \geq 1$. By the induction hypothesis and Inequality~\eqref{eq:p_step},
\[
    p_{u,i}\leq p_{u,i-1}(1-2/k)+1/k\leq\tfrac{1}{2}(1-(1-2/k)^{i-1})(1-2/k)+1/k
    =\tfrac{1}{2}(1-(1-2/k)^i)
		\enspace.
\]

The observation now follows since Corollary~\ref{cor:sampling} guarantees that $\mathbb{E}[f(OPT\cup S_i)] = \mathbb{E}[f(OPT\cup (S_i \setminus D))]\geq (1-(1-m)\cdot \max_{u\in\mathcal{N} \setminus D}p_{i,u})\cdot f(OPT)$.
\end{proof}

Below we prove a lower bound on the value of the solution of Algorithm~\ref{alg:random_greedy_matroid} after any number of iterations. However, to prove this lower bound we first need to prove the following technical observation.
\begin{observation}\label{obs:randgreedmat2}
For every positive integer $i$,
 \[\left(1-\frac{2}{k}\right)^{i-1} \geq e^{-\frac{2i}{k}} - \frac{k}{i} \cdot o_k(1) \enspace.\]
\end{observation}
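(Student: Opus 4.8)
The plan is to reduce the claimed two-sided estimate to a clean comparison between $(1-2/k)^{i-1}$ and $e^{-2i/k}$, and then to control that comparison uniformly in $i$. First I would write $(1-2/k)^{i-1} = e^{(i-1)\ln(1-2/k)}$ and compare the exponent with $-2i/k$. Using the Taylor expansion $\ln(1-2/k) = -2/k - 2/k^2 - O(1/k^3)$, we get $(i-1)\ln(1-2/k) = -2i/k + 2/k - (i-1)(2/k^2 + O(1/k^3))$. So the exponent of $(1-2/k)^{i-1}$ exceeds $-2i/k$ by roughly $2/k$ but is diminished by a term of order $i/k^2$. The key point is that we only need a \emph{lower} bound on $(1-2/k)^{i-1}$, and the $+2/k$ gain should dominate the $i/k^2$ loss whenever $i$ is not too large; when $i$ is large, the subtracted slack $\frac{k}{i} o_k(1)$ on the right-hand side becomes generous, so there is room either way.

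More concretely, I would split into two regimes. For $i \le k^{3/2}$ (say), the loss term $(i-1)\cdot 2/k^2 = O(k^{-1/2})$, so $(1-2/k)^{i-1} \ge e^{-2i/k}\cdot e^{2/k - O(k^{-1/2})} \ge e^{-2i/k}$ for $k$ large enough, and the inequality holds with room to spare (the subtracted term on the right is nonnegative). For $i > k^{3/2}$, I would instead just use $(1-2/k)^{i-1} \ge 0$ and bound the right-hand side: $e^{-2i/k} \le e^{-2\sqrt k}$, while $\frac{k}{i} o_k(1) \le \frac{k}{k^{3/2}} o_k(1) = k^{-1/2} o_k(1)$; here one must check that the intended meaning of $o_k(1)$ is flexible enough that $e^{-2\sqrt k} \le k^{-1/2} o_k(1)$, which holds since $e^{-2\sqrt k}$ decays faster than any power of $1/k$. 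Hence $e^{-2i/k} - \frac{k}{i}o_k(1) \le 0 \le (1-2/k)^{i-1}$ in this regime.

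An alternative, perhaps cleaner, route avoids case analysis: use the elementary inequalities $1 - x \ge e^{-x - x^2}$ for $x \in [0, 1/2]$ (valid with $x = 2/k$ once $k \ge 4$) to get $(1-2/k)^{i-1} \ge e^{-(i-1)(2/k + 4/k^2)} = e^{-2i/k}\cdot e^{2/k}\cdot e^{-4(i-1)/k^2}$. Then $e^{2/k} \ge 1 + 2/k$ and $e^{-4(i-1)/k^2} \ge 1 - 4i/k^2$, so $(1-2/k)^{i-1} \ge e^{-2i/k}(1 + 2/k)(1 - 4i/k^2) \ge e^{-2i/k}(1 - 4i/k^2)$, giving $(1-2/k)^{i-1} \ge e^{-2i/k} - e^{-2i/k}\cdot 4i/k^2 \ge e^{-2i/k} - 4i/k^2$. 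Writing $4i/k^2 = \frac{k}{i}\cdot \frac{4i^2}{k^3}$, this is of the form $\frac{k}{i}\cdot o_k(1)$ provided $i = o(k^{3/2})$; combined with the trivial bound for larger $i$ as above, this proves the observation.

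The main obstacle is bookkeeping around the meaning of the $o_k(1)$ notation: the statement is a little informal, and the real content is that the additive discrepancy between $(1-2/k)^{i-1}$ and $e^{-2i/k}$ is $O(i/k^2)$ (when this is on the right side of $-2i/k$) while becoming irrelevant for very large $i$. I expect the proof to hinge on choosing the split point (here $k^{3/2}$) so that both the power-series error in the small-$i$ regime and the exponential smallness of $e^{-2i/k}$ in the large-$i$ regime are simultaneously absorbable, and on being explicit that $o_k(1)$ here denotes a quantity that may be taken to decay slower than any fixed negative power of $k$, which is what makes the large-$i$ case go through.
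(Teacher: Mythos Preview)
Your first route contains an arithmetic slip. For $i$ near $k^{3/2}$ the correction to the exponent is $2/k - O(k^{-1/2})$, which is \emph{negative} for large $k$ (since $k^{-1/2}\gg k^{-1}$), so $e^{2/k - O(k^{-1/2})}<1$ and you cannot conclude $(1-2/k)^{i-1}\ge e^{-2i/k}$ in this regime; indeed this inequality fails already for $i$ a bit larger than $k$.

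Your alternative route is essentially sound, but the split at $k^{3/2}$ sits exactly on the boundary and does not close: at $i=k^{3/2}$ the error $4i/k^2$ gives $\frac{k}{i}\cdot\frac{4i^2}{k^3}=\frac{k}{i}\cdot 4$, which is not $\frac{k}{i}\cdot o_k(1)$. Shifting the split to $k^{5/4}$ (or any $k^{1+\delta}$ with $0<\delta<1/2$) makes both regimes go through.

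The paper's proof avoids the case split altogether via one observation you are missing: do not discard the exponential factor in the error. One shows
\[
e^{-2i/k}\le\Bigl(1-\tfrac{2}{k}\Bigr)^i+\tfrac{4i}{k^2}\,e^{-(i-1)/k},
\]
and then bounds the error term uniformly by writing
\[
\tfrac{4i}{k^2}\,e^{-(i-1)/k}=\tfrac{4e^{1/k}}{i}\cdot\tfrac{i^2}{k^2}e^{-i/k}\le\tfrac{4e^{1/k}}{i}\cdot 4e^{-2}=\tfrac{k}{i}\cdot O(1/k),
\]
using only that $x^2e^{-x}\le 4e^{-2}$ for all $x\ge 0$. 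The step where you replace $e^{-2i/k}\cdot 4i/k^2$ by $4i/k^2$ is precisely what destroys uniformity in $i$ and forces you into the case analysis; retaining the exponential and invoking the boundedness of $x^2e^{-x}$ is the key trick.
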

\begin{proof}
Note that
\begin{align*}
    e^{-\frac{2i}{k}}&=\left(e^{-\frac{2}{k}}\right)^i\leq \left(1-\frac{2}{k}+\frac{4}{k^2}\right)^i\leq\left(1-\frac{2}{k}\right)^i+\frac{4}{k^2}\cdot i\left(1-\frac{2}{k}+\frac{4}{k^2}\right)^{i-1}\\
    &\leq \left(1-\frac{2}{k}\right)^i+\frac{4i}{k^2}\left(1-\frac{1}{k}\right)^{i-1}\leq \left(1-\frac{2}{k}\right)^i +\frac{4i}{k^2}\cdot e^{-\frac{i-1}{k}}\enspace,
\end{align*}
where the third inequality holds for $k \geq 4$, and the second inequality holds since the derivative of the function $(1-2/k + x)^i$ is $i(1 - 2/k + x)^{i - 1}$, which implies
\[
	\left(1-\frac{2}{k}+\frac{4}{k^2}\right)^i
	=
	\left(1-\frac{2}{k}\right)^i + \int_0^{4/k^2} i(1 - 2/k + x)^{i - 1}dx
	\leq
	\left(1-\frac{2}{k}\right)^i + \frac{4i}{k^2}(1 - 2/k + 4/k^2)^{i - 1}dx
	\mspace{-2mu} \enspace.
\]

To complete the proof of the observation, it remains to note that, since the maximum of the function $x^2e^{-x}$ for $x \geq 0$ is $4e^{-2}$,
\[
    \frac{4i}{k^2}\cdot e^{-\frac{i-1}{k}}
    \leq
		\frac{16e^{-2}}{i}\cdot e^{\frac{1}{k}}\\
    =
		\frac{k}{i} \cdot o_k(1)
		\enspace.
		\qedhere
\]
\end{proof}

We are now ready to prove the promised lower bound on the value of the solution $S_i$ of Algorithm~\ref{alg:random_greedy_matroid} after any number of iterations.
\begin{lemma} \label{lem:S_i_lowerbound}
For every integer $0 \leq i \leq k/\eps$, \[\bE[f(S_i)] \geq \left[\frac{1+m}{4}\cdot\left(1-e^{-\frac{2i}{k}}\right)+\frac{(1-m)i}{2k}\cdot e^{-\frac{2i}{k}}-o_k(1)\right]\cdot f(OPT)\enspace.\]
\end{lemma}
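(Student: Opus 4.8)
The plan is to mirror the analyses of Greedy and Random Greedy carried out above: establish a one‑iteration progress inequality, unroll it into a recurrence whose solution is naturally expressed through $(1-2/k)^i$, and then convert that solution into the stated $e^{-2i/k}$ form using $1-x\le e^{-x}$ together with Observation~\ref{obs:randgreedmat2}. For the one‑iteration step, fix the value of $S_{i-1}$. Since iteration $i$ of Algorithm~\ref{alg:random_greedy_matroid} replaces $g_i(u_i)$ by $u_i$ only when this does not decrease $f$, we always have $f(S_i)-f(S_{i-1})\ge f(S_{i-1}-g_i(u_i)+u_i)-f(S_{i-1})$, so, as $u_i$ is uniform over $M_i$,
\[
	\bE[f(S_i)\mid S_{i-1}]-f(S_{i-1})\ \ge\ \frac1k\sum_{u\in M_i}\bigl(f(S_{i-1}-g_i(u)+u)-f(S_{i-1})\bigr)\enspace.
\]
For a single $u\in M_i$, writing $g=g_i(u)$, submodularity gives $f(u\mid S_{i-1}-g)\ge f(u\mid S_{i-1})$, hence $f(S_{i-1}-g+u)-f(S_{i-1})\ge f(u\mid S_{i-1})-f(g\mid S_{i-1}-g)$. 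Summing over $u\in M_i$ and using that $g_i$ is a bijection from $M_i$ onto $S_{i-1}$ (which is exactly what Corollary~39.12a of~\cite{schrijver2003combinatorial} supplies), the negative terms telescope into $\sum_{v\in S_{i-1}}f(v\mid S_{i-1}-v)\le f(S_{i-1})-f(\varnothing)\le f(S_{i-1})$, while the optimality of $M_i$ among bases disjoint from $S_{i-1}$ — invoked against a base that extends $OPT\setminus S_{i-1}$ by dummy elements of zero marginal — gives $\sum_{u\in M_i}f(u\mid S_{i-1})\ge\sum_{u\in OPT}f(u\mid S_{i-1})\ge f(S_{i-1}\cup OPT)-f(S_{i-1})$ by submodularity. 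Hence $\bE[f(S_i)\mid S_{i-1}]-f(S_{i-1})\ge \tfrac1k\bigl(f(S_{i-1}\cup OPT)-2f(S_{i-1})\bigr)$, and taking expectations and invoking Observation~\ref{obs:randgreedmat} yields
\[
	\bE[f(S_i)]\ \ge\ \Bigl(1-\tfrac2k\Bigr)\bE[f(S_{i-1})]+\frac{1+m+(1-m)(1-2/k)^{i-1}}{2k}\cdot f(OPT)\enspace.
\]

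Next I would unroll this recurrence, proving by induction on $i$ that
\[
	\bE[f(S_i)]\ \ge\ \Bigl[\tfrac{1+m}{4}\bigl(1-(1-\tfrac2k)^i\bigr)+\tfrac{(1-m)i}{2k}(1-\tfrac2k)^{i-1}\Bigr]\cdot f(OPT)\enspace,
\]
where the last term is read as $0$ when $i=0$, exactly as in the proof of Theorem~\ref{thm:random_greedy}. The base case holds since $f(S_0)=f(\varnothing)\ge0$. For the inductive step one plugs the hypothesis for $i-1$ into the recurrence: the coefficient multiplying $\tfrac{1+m}{4}$ simplifies via $(1-\tfrac2k)\bigl(1-(1-\tfrac2k)^{i-1}\bigr)+\tfrac2k=1-(1-\tfrac2k)^i$, and the two $(1-m)$ contributions combine through $(i-1)+1=i$ into $\tfrac{(1-m)i}{2k}(1-\tfrac2k)^{i-1}$ — which is exactly the bound wanted. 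This is the same bookkeeping already performed in the Random Greedy analysis, so it should go through without surprises.

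Finally I would pass to the $e^{-2i/k}$ form. For $i\ge1$, from $1-\tfrac2k\le e^{-2/k}$ we get $1-(1-\tfrac2k)^i\ge1-e^{-2i/k}$, while Observation~\ref{obs:randgreedmat2} gives $(1-\tfrac2k)^{i-1}\ge e^{-2i/k}-\tfrac ki\,o_k(1)$; substituting both into the displayed bound turns the error contribution of the second term into $\tfrac{(1-m)i}{2k}\cdot\tfrac ki\,o_k(1)=\tfrac{1-m}{2}\,o_k(1)\le o_k(1)$, which is precisely the assertion of the lemma (the cases $i=0$ and $f(OPT)=0$ are trivial because $\bE[f(S_i)]\ge0$). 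The one genuinely delicate part of all this is the one‑iteration progress inequality — specifically, obtaining the clean ``$-2f(S_{i-1})$'' loss term, which requires combining the exchange bijection between the two bases, the submodular swap estimate, the telescoping bound $\sum_{v\in S_{i-1}}f(v\mid S_{i-1}-v)\le f(S_{i-1})$, and the use of the dummy elements to compare $M_i$ against all of $OPT$ without loss. Everything after that is the routine recurrence‑solving above, identical in spirit to the Random Greedy proof.
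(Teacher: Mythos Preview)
Your proposal is correct and follows essentially the same approach as the paper: derive the one-step recurrence $\bE[f(S_i)]\ge(1-2/k)\bE[f(S_{i-1})]+\frac{1+m+(1-m)(1-2/k)^{i-1}}{2k}f(OPT)$ via the swap estimate, the exchange bijection $g_i$, telescoping of removals, and Observation~\ref{obs:randgreedmat}; solve it to the $(1-2/k)^i$ closed form; and convert to $e^{-2i/k}$ via Observation~\ref{obs:randgreedmat2}. The only differences are cosmetic---the paper separates the positive and negative marginal expectations before combining and unrolls the recurrence directly rather than by explicit induction---so there is nothing substantive to add.
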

\begin{proof}
For $i = 0$ the lemma follows from the non-negativity of $f$ since the right hand side of the inequality that we need to prove is non-positive for $i = 0$. Together with Observation~\ref{obs:randgreedmat2}, this implies that it suffices to prove the following inequality
\begin{equation} \label{eq:unit_inequality}
	\bE[f(S_i)] \geq \left[\frac{1+m}{4}\cdot\left(1 - \left(1-\frac{2}{k}\right)^i\right)+\frac{(1-m)i}{2k}\cdot \left(1 - \frac{2}{k}\right)^{i - 1}\right]\cdot f(OPT)
	\enspace,
\end{equation}
and the rest of the proof is devoted to this goal.

Fix an arbitrary integer $1 \leq i \leq k/\eps$. We would like to derive a lower bound on the expected marginal contribution of the element $u_i$ to the set $S_{i-1}$, and an upper bound on the expected marginal contribution of the element $g(u_i)$ to the set $S_{i-1}\setminus g(u_i)$. Let $A_{i - 1}$ be an event fixing all random choices of Algorithm~\ref{alg:random_greedy_matroid} up to iteration $i - 1$ (including), and let $\mathcal{A}_{i - 1}$ be the set of all possible $A_{i - 1}$ events. Conditioned on any event $A_{i - 1} \in \cA_{i - 1}$, the sets $S_{i - 1}$ and $M_i$ becomes deterministic, and we can define $M'_i$ as a set containing the elements of $OPT\setminus S_{i-1}$ plus enough dummy elements of $D \setminus S_{i - 1}$ to make the size of $M'_i$ exactly $k$. Then,
\begin{align*}
\mathbb{E}[f(u \mid S_{i-1}) \mid A_{i - 1}]&=\frac{\sum_{u\in M_i}f(u \mid S_{i-1})}{k}\geq  \frac{\sum_{u\in M'_i}f(u \mid S_{i-1})}{k}\\
&=\frac{\sum_{u\in OPT\setminus S_{i-1}}f(u \mid S_{i-1})}{k}\geq \frac{f(OPT\cup S_{i-1})-f(S_{i-1})}{k}\enspace,
\end{align*}
where $S_i$, $M_i$ and $M'_i$ represent here their values conditioned on $A_i$, the first inequality follows from the definition of $M_i$ and the second inequality holds by the submodularity of $f$. Similarly,
\[
    \mathbb{E}[f(g(u_i) \mid S_{i-1} - g(u_i)) \mid A_{i - 1}] = \frac{\sum_{u\in M_i}f(g(u_i) \mid S_{i-1} - g(u))}{k}\leq \frac{f(S_{i-1})-f(\varnothing)}{k}\leq \frac{f(S_{i-1})}{k}
		\enspace,
\]
where the first inequality follows from the submodularity of $f$. Taking expectation over the event $A_{i-1}$, we get
\begin{align*}
\mathbb{E}[f(u_i \mid S_{i-1})]&\geq \frac{\mathbb{E}[f(OPT\cup S_{i-1})]-\mathbb{E}[f(S_{i-1})]}{k}\nonumber\\
&\geq \frac{\tfrac{1}{2}(1+m+(1-m)(1-2/k)^{i - 1})\cdot f(OPT)-\mathbb{E}[f(S_{i-1})]}{k}
\enspace,
\end{align*}
where the last inequality is due to Observation~\ref{obs:randgreedmat}, and
\[
    \mathbb{E}[f(g(u_i) \mid S_{i-1} - g(u_i))]\leq \frac{\mathbb{E}[f(S_{i-1})]}{k} \enspace.
\]
Combing the last two inequalities now yields
\begin{align} \label{eq:step_random_greedy_matroid}
	\bE[f(S_i)]
	\geq{} &
	\bE[f(S_{i - 1} - g(u_i) + u_i)]\\\nonumber
	={} &
	\bE[f(S_{i - 1})] + \bE[f(u_i \mid S_{i - 1} - g(u_i))] - \bE[f(g(u_i) \mid S_{i - 1} - g(u_i)]\\\nonumber
	\geq{} &
	\bE[f(S_{i - 1})] + \bE[f(u_i \mid S_{i - 1})] - \bE[f(g(u_i) \mid S_{i - 1} - g(u_i)]\\\nonumber
	\geq{} &
	\left(1 - \frac{2}{k}\right) \cdot \bE[f(S_{i - 1})] + \frac{\tfrac{1}{2}(1+m+(1-m)(1-2/k)^{i - 1})\cdot f(OPT)}{k}
	\enspace,
\end{align}
where the first inequality follows from the submodularity of $f$ since $g(u_i) \neq u_i$ because $g(u_i) \in S_{i - 1}$ and $u_i \in M_i$.

Since Inequality~\eqref{eq:step_random_greedy_matroid} holds for every integer $1 \leq i \leq k / \eps$, we can use it repeatedly to get, for every integer $0 \leq i \leq k / \eps$,
\[
	\bE[f(S_i)]
	\geq
	\frac{1}{2k} \mspace{-1mu} \left[(1+m)\sum_{j=1}^{i}\left(1-\frac{2}{k}\right)^{i-j}+(1-m)\sum_{j=1}^{i}\left(1-\frac{2}{k}\right)^{i-1}\right] \mspace{-1mu}\cdot f(OPT) + \left(1 - \frac{2}{k}\right)^i \mspace{-1mu} \cdot f(S_0)
	\mspace{-2mu} \enspace.
\]
Since the non-negativity of $f$ guarantees that $f(S_0) \geq 0$, the last inequality implies Inequality~\eqref{eq:unit_inequality}, and therefore, completes the proof of the lemma.
\end{proof}

One can show that the lower bound for $f(S_i)$ proved by Lemma~\ref{lem:S_i_lowerbound} is maximized when $i = k / (1 - m)$. Unfortunately, we cannot simply plug this $i$ value into the lower bound due to two issues: this value of $i$ might not be integral, and this value of $i$ might be larger than the number $k/\eps$ of iterations. The following two lemmata prove the approximation guarantee of Theorem~\ref{thm:random_greedy_matroid} despite these issues, and together they complete the proof of the theorem.

\begin{lemma}
When $m \leq 1 - \eps$, the approximation ratio of Algorithm~\ref{alg:random_greedy_matroid} is at least \[\frac{1 + m + e^{-2 / (1 - m)}}{4} - o_k(1) \enspace.\]
\end{lemma}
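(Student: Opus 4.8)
The plan is to obtain the bound by invoking Lemma~\ref{lem:S_i_lowerbound} at a single, well-chosen iteration index. The first observation I would record is that, although Algorithm~\ref{alg:random_greedy_matroid} always returns $S_{k/\eps}$, the modification that performs a swap only when $f(S_{i-1}-g_i(u_i)+u_i)>f(S_{i-1})$ guarantees $f(S_0)\le f(S_1)\le\dots\le f(S_{k/\eps})$ pointwise on every run; hence $\bE[f(S_{k/\eps})]\ge\bE[f(S_i)]$ for every integer $0\le i\le k/\eps$. This means I may apply Lemma~\ref{lem:S_i_lowerbound} at whatever admissible index $i$ makes its right-hand side largest, and it suffices to lower bound that maximum.

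Next I would identify the maximizing index. Treating the bracketed expression of Lemma~\ref{lem:S_i_lowerbound} as a function of a real variable and setting $t=i/k$, a short differentiation gives $\tfrac{d}{dt}\bigl[\tfrac{1+m}{4}(1-e^{-2t})+\tfrac{(1-m)t}{2}e^{-2t}\bigr]=e^{-2t}\bigl(1-(1-m)t\bigr)$, so the expression increases for $t<1/(1-m)$ and decreases afterwards, placing its maximum at $i=k/(1-m)$. Crucially, the hypothesis $m\le 1-\eps$ gives $k/(1-m)\le k/\eps$, so this target index does not exceed the number of iterations the algorithm performs; I would set $i^\star$ to be an integer within distance $1$ of $k/(1-m)$ that is still at most $k/\eps$ (possible once $k$ is large, and the theorem is anyway trivial for bounded $k$).

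Finally, I would substitute $i=i^\star$ into Lemma~\ref{lem:S_i_lowerbound}. Writing $i^\star=k/(1-m)+O(1)$ yields $e^{-2i^\star/k}=e^{-2/(1-m)}+O(1/k)$ and $\tfrac{(1-m)i^\star}{2k}=\tfrac12+O(1/k)$, so the bracketed quantity equals $\tfrac{1+m}{4}\bigl(1-e^{-2/(1-m)}\bigr)+\tfrac12 e^{-2/(1-m)}$ up to an $O(1/k)$ error; after simplification and absorbing this error into the $o_k(1)$ already present in Lemma~\ref{lem:S_i_lowerbound}, this is the bound asserted by the lemma. Dividing through by $f(OPT)$ — under the usual convention when $f(OPT)=0$, and recalling the dummy-element reduction that lets us treat $OPT$ as a base — gives the claimed approximation ratio.

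I expect the delicate parts to be bookkeeping rather than conceptual. The one genuinely load-bearing structural point is the monotonicity $i\mapsto f(S_i)$, which is exactly what the ``swap only if beneficial'' modification to the algorithm of~\cite{buchbinder2014submodular} buys us and is what licenses reasoning about an index the algorithm never outputs. The remaining care is in rounding $k/(1-m)$ to an integer and confirming it stays $\le k/\eps$: here the assumption $m\le 1-\eps$ is precisely what keeps the optimal index inside the admissible range, whereas the complementary regime $m>1-\eps$ — where only $k/\eps$ iterations are available and one instead evaluates the bound at $i=k/\eps$ — is deferred to the next lemma.
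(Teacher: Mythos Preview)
Your proposal is correct and follows essentially the same approach as the paper: both use the ``swap only if beneficial'' monotonicity to justify evaluating Lemma~\ref{lem:S_i_lowerbound} at the index $i'=\lfloor k/(1-m)\rfloor$, invoke the hypothesis $m\le 1-\eps$ to ensure this index is at most $k/\eps$, and then substitute and absorb the $O(1/k)$ rounding errors into the $o_k(1)$ term. Your additional remark about the derivative identifying $i=k/(1-m)$ as the maximizer is a nice justification for the choice of $i'$ that the paper leaves implicit.
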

\begin{proof}
Let $i' = \lfloor k / (1 - m) \rfloor$. Due to the condition of the lemma, Algorithm~\ref{alg:random_greedy_matroid} makes at least $i'$ iterations. Furthermore, since Algorithm~\ref{alg:random_greedy_matroid} makes a swap in its solution only when this swap is beneficial, the expected value of the output of the algorithm is at least
\begin{align*}
	\bE[f(S_{i'})]
	\geq{} &
	\left[\frac{1+m}{4}\cdot\left(1-e^{-\frac{2i'}{k}}\right)+\frac{(1-m)i'}{2k}\cdot e^{-\frac{2i'}{k}}-o_k(1)\right]\cdot f(OPT)\\
	\geq{} &
	\left[\frac{1+m}{4}\cdot\left(1-e^{\frac{2}{k}-\frac{2}{1 - m}}\right)+\frac{k - 1}{2k}\cdot e^{-\frac{2}{1 - m}}-o_k(1)\right]\cdot f(OPT)\\
	\geq{} &
	\left[\frac{1+m}{4}\cdot\left(1-e^{-\frac{2}{1 - m}}\right) - \frac{e^{\frac{2}{k}} - 1}{2} + \frac{1}{2}\cdot e^{-\frac{2}{1 - m}} - \frac{1}{2k} - o_k(1)\right]\cdot f(OPT)
	\enspace,
\end{align*}
where the first inequality follows from Lemma~\ref{lem:S_i_lowerbound}, and the second inequality holds since $k / (1 - m) - 1 \leq i' \leq k / (1 - m)$. Since the terms $\frac{e^{2/k} - 1}{2}$ and $\frac{1}{2k}$ are both diminishing with $k$ (and therefore, can be replaced with $o_k(1)$), the last inequality implies the lemma.
\end{proof}

\begin{lemma}
When $1 - \eps \leq m < 1$, the approximation ratio of Algorithm~\ref{alg:random_greedy_matroid} is at least \[\frac{1 + m + e^{-2 / (1 - m)}}{4} - \eps - o_k(1) \enspace,\] and when $m = 1$ the approximation ratio of this algorithm is at least $1/2 - \eps - o_k(1)$.
\end{lemma}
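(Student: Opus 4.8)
The plan is to instantiate Lemma~\ref{lem:S_i_lowerbound} at the final iteration $i = k/\eps$ (the algorithm returns $S_{k/\eps}$, and $k/\eps$ is assumed integral) and then simplify. The point is that, unlike in the preceding lemma, the value $i = k/(1-m)$ that maximizes the bound of Lemma~\ref{lem:S_i_lowerbound} is unavailable here: when $m > 1-\eps$ it exceeds the number $k/\eps$ of iterations that Algorithm~\ref{alg:random_greedy_matroid} performs. So I would simply take $i = k/\eps$ and argue that the resulting loss, being exponentially small in $1/\eps$, is swallowed by the additive $\eps$ in the statement.

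Concretely, substituting $i = k/\eps$ into Lemma~\ref{lem:S_i_lowerbound} yields
\[
  \bE[f(S_{k/\eps})] \geq \left[\frac{1+m}{4}\bigl(1 - e^{-2/\eps}\bigr) + \frac{1-m}{2\eps}\, e^{-2/\eps} - o_k(1)\right]\cdot f(OPT) \enspace.
\]
For $1-\eps \le m < 1$, I would use that $m \ge 1-\eps$ gives $e^{-2/(1-m)} \le e^{-2/\eps}$, hence $\tfrac14(1+m+e^{-2/(1-m)}) \le \tfrac{1+m}{4} + \tfrac14 e^{-2/\eps}$; so it suffices to check the elementary inequality
\[
  \frac{1+m}{4}\bigl(1 - e^{-2/\eps}\bigr) + \frac{1-m}{2\eps}\, e^{-2/\eps} \;\ge\; \frac{1+m}{4} + \frac{e^{-2/\eps}}{4} - \eps \enspace,
\]
which rearranges to $e^{-2/\eps}\bigl(\tfrac{1-m}{2\eps} - \tfrac{2+m}{4}\bigr) \ge -\eps$. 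Since $\tfrac{1-m}{2\eps} \ge 0$ and $m < 1$ give $\tfrac{1-m}{2\eps} - \tfrac{2+m}{4} > -1$, the left-hand side is at least $-e^{-2/\eps}$; and $e^x \ge 1+x$ gives $e^{-2/\eps} \le \tfrac{\eps}{\eps+2} < \eps$, so the inequality holds. For $m=1$ the $\tfrac{1-m}{2\eps}e^{-2/\eps}$ term vanishes and the displayed bound reads $[\tfrac12(1-e^{-2/\eps}) - o_k(1)]\cdot f(OPT)$; since $\tfrac12 e^{-2/\eps} < \eps$, this is at least $(\tfrac12 - \eps - o_k(1))\cdot f(OPT)$, as required.

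I do not expect a genuine obstacle in this lemma: it is a routine application of Lemma~\ref{lem:S_i_lowerbound} together with the crude estimate $e^{-2/\eps} < \eps$ valid for $\eps \in (0,1)$. The only thing to be careful about is the observation that the $i$ maximizing Lemma~\ref{lem:S_i_lowerbound} cannot be used in this regime, so one must run all $k/\eps$ iterations and verify that the price paid for this is no more than the $\eps$ of slack already built into the statement.
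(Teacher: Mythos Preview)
Your proof is correct and follows essentially the same approach as the paper: both instantiate Lemma~\ref{lem:S_i_lowerbound} at $i=k/\eps$ and reduce everything to the crude estimate $e^{-2/\eps} < \eps$ (via $e^x \ge 1+x$). The paper's algebra differs only cosmetically---it drops the nonnegative term $\tfrac{1-m}{2\eps}e^{-2/\eps}$ first and then shows $\tfrac{1+m}{4}(1-e^{-2/\eps}) \ge \tfrac{1+m}{4} - \tfrac{\eps}{4}$, whereas you keep that term and do a single direct comparison---but the substance is the same.
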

\begin{proof}
The output set of Algorithm~\ref{alg:random_greedy_matroid} is $f(S_{k / \eps})$. By Lemma~\ref{lem:S_i_lowerbound}, the expected value of this set is at least
\begin{align*}
	\bE[f(S_{k/\eps})&]
	\geq
	\left[\frac{1+m}{4}\cdot\left(1-e^{-\frac{2}{\varepsilon}}\right)+\frac{1-m}{2\varepsilon}\cdot e^{-\frac{2}{\varepsilon}} - o_1(k)\right]\cdot f(OPT)\\
	\geq{} &
	\left[\frac{1+m}{4}\cdot\left(1-e^{-\frac{2}{\varepsilon}}\right) - o_k(1)\right]\cdot f(OPT)
	\geq
	\left[\frac{1+m}{4}\cdot\left(1-\frac{1}{1 + 2 / \eps}\right) - o_k(1)\right]\cdot f(OPT)\\
	={} &
	\left[\frac{1 + m}{2(\eps + 2)}- o_k(1)\right]\cdot f(OPT)
	\geq
	\left[\frac{1 + m}{4} - \frac{\eps}{4} - o_k(1)\right]\cdot f(OPT)
	\enspace,
\end{align*}
where the third inequality holds since for every $x \geq 0$, $\ln(1 / (1 + x)) = \ln(1 - x / (1 + x)) \geq - \frac{x / (1 + x)}{1 - x / (1 + x)} = -x$.

The above inequality completes the proof for the case of $m = 1$. To complete the proof also for the case of $1 - \eps \leq m < 1$, it suffice to observe that in this case
\[
	e^{-2 / (1 - m)}
	\leq
	e^{-2 / \eps}
	\leq
	\frac{1}{1 + 2/\eps}
	\leq
	\frac{\eps}{2}
	\enspace.
	\qedhere
\]
\end{proof}

\subsection{Inapproximability for a Matroid Constraints} \label{ssc:inapproximability_matroids}

In this section we prove Theorem~\ref{thm:inapproximability_matroid}, which we repeat here for convenience.
\thmInapproximabilityMatroid*

The proof of Theorem~\ref{thm:inapproximability_matroid} is very similar to the proof of Theorem~\ref{thm:inapproximability_cardinality}. Recall that in Section~\ref{ssc:inapproximability_cardinality}, we proved Theorem~\ref{thm:inapproximability_cardinality} by constructing an instance $\cI$ of submodular maximization subject to a cardinality constraint, and then applying Theorem~\ref{thm:symmetry_gap} to this instance. The proof of Theorem~\ref{thm:inapproximability_matroid} is based on an instance $\cI'$ of submoduar maximization subject to a matroid constrained that is identical to $\cI$ except for the following difference. In $\cI$, the constraint is a cardinality constraint allowing the selection of up to $2$ elements from the ground set $\cN = \{a, b\} \cup \{a_i, b_i \mid i \in [r]\}$. In $\cI'$, we have instead a (simplified) partition matroid constraint allowing the selection of up to $1$ element from $\{a, b\}$ and up to $1$ element from $\{a_i, b_i \mid i \in [r]\}$.

Since the instances $\cI$ and $\cI'$ have the same objective function, the properties of this function stated in Lemma~\ref{lem:f_propeties_cardinality} apply to both of them. Furthermore, one can verify that $\cI'$ is strongly symmetric with respect to the group $\cG$ of permutation defined in Section~\ref{ssc:inapproximability_cardinality}. Therefore, we concentrate on analyzing the symmetry gap of $\cI'$.
\begin{lemma} \label{lem:gap_matroid}
The symmetry gap of $\cI'$ is at most
\begin{align*}
	&
	\max_{\substack{x \in [0, 1/2]}} \{\alpha(mx^2 + 2x - 2x^2) + 2(1 - \alpha)[1 - (1 - 1/(2r))^r](1 - (1 - m)x)\}\\
	\leq{} &
	\max_{\substack{x \in [0, 1/2]}} \{\alpha(mx^2 + 2x - 2x^2) + 2(1 - \alpha)(1 - e^{-1/2})(1 - (1 - m)x)\} + 1/(2r)
	\enspace.
\end{align*}
\end{lemma}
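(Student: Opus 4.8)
The plan is to mirror the proof of Lemma~\ref{lem:gap_cardinality}, changing only the two ingredients affected by replacing the cardinality constraint of $\cI$ with the (simplified) partition matroid of $\cI'$: the set of feasible fractional points, and the lower bound available for the value of the optimum.

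I would first lower bound the denominator $\max\{F(\vx) \mid \vx \in P(\cF)\}$ of the symmetry gap. The set $\{a, b_1\}$ uses one element of $\{a, b\}$ and one element of $\{a_i, b_i \mid i \in [r]\}$, hence is independent in the partition matroid of $\cI'$, and a direct evaluation gives $f(\{a, b_1\}) = \alpha \cdot 1 + (1 - \alpha)(0 + 1) = 1$. Thus $\characteristic_{\{a, b_1\}} \in P(\cF)$ and the denominator is at least $1$. Note that, in contrast to $\cI$, the set $\{a_1, b_1\}$ is \emph{not} feasible for $\cI'$ (it uses two elements of the second part), which is exactly the reason the bound we prove has plain $1$ rather than $\max\{1, 2(1 - \alpha)\}$ in the denominator.

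Next I would compute the numerator $\max\{F(\bar{\vx}) \mid \vx \in P(\cF)\}$, the value of the best symmetric fractional solution. Since the orbits of $\cG$ are $\{a, b\}$ and $\{a_i, b_i \mid i \in [r]\}$, every symmetric point $\bar{\vy}$ has a common value $x$ on the coordinates of $a$ and $b$ and a common value $z$ on all coordinates of $\{a_i, b_i \mid i \in [r]\}$; and membership in the matroid polytope now forces $2x \le 1$ and $2rz \le 1$, that is, $x \in [0, 1/2]$ and $z \in [0, 1/(2r)]$ (here is where $\cI'$ differs from $\cI$, where the corresponding constraints were $x \le 1$ and $z \le (1 - x)/r$). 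Plugging $\bar{\vy}$ into the multilinear extension $F$ and using the identity $m(1 - (1 - x)^2) + 2(1 - m)x(1 - x) = mx^2 + 2x - 2x^2$ exactly as in Lemma~\ref{lem:gap_cardinality} gives
\[
	F(\bar{\vy}) = \alpha(mx^2 + 2x - 2x^2) + 2(1 - \alpha)(1 - (1 - z)^r)(1 - (1 - m)x)
	\enspace.
\]
Since $x \le 1/2$ implies $1 - (1 - m)x \ge 0$, this expression is non-decreasing in $z$, so it is maximized at $z = 1/(2r)$; maximizing the result over $x \in [0, 1/2]$ yields the first expression claimed in the lemma, and dividing it by the lower bound $1$ on the denominator establishes the first inequality.

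Finally, for the second (approximating) inequality I would use the elementary estimate that $(1 - 1/(2r))^r$ lies below its limit $e^{-1/2}$ but within $O(1/r)$ of it from below (which follows from $1 - y \le e^{-y}$ together with a second-order expansion of $r\ln(1 - 1/(2r))$); together with $2(1 - \alpha)(1 - (1 - m)x) \le 2$ this shows that for every fixed $x$ the summand of the first expression exceeds the corresponding summand with $e^{-1/2}$ substituted for $(1 - 1/(2r))^r$ by at most $1/(2r)$, and taking the maximum over $x$ transfers this additive slack to the maxima. The only places requiring care are getting the two matroid-polytope inequalities $x \le 1/2$, $z \le 1/(2r)$ right---this is the one genuine difference from the cardinality computation---and pinning down the constant in the $1/r$-type estimate; the rest is a routine re-run of the argument in Lemma~\ref{lem:gap_cardinality}, which is why the proof of Theorem~\ref{thm:inapproximability_matroid} is ``very similar'' to that of Theorem~\ref{thm:inapproximability_cardinality}.
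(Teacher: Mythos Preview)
Your proposal is correct and follows essentially the same argument as the paper's proof: lower bound the optimum by the feasible set $\{a,b_1\}$ of value $1$, parametrize symmetric points by $(x,z)$ with the partition-matroid bounds $x\le 1/2$, $z\le 1/(2r)$, evaluate $F$, and maximize in $z$ first. The paper's proof in fact only argues the first displayed expression and leaves the passage to $e^{-1/2}$ with the $1/(2r)$ slack implicit, so your added paragraph on that estimate is a harmless elaboration rather than a different route.
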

\begin{proof}
One possible feasible solution for $\cI'$ is the set $\{a, b_1\}$ whose value according to $f$ is $1$. Therefore, the value of the optimal solution for $\cI'$ is at least $1$. Since the symmetry gap is the ratio between the value of the best symmetric solution and the value of the best solution, to prove the lemma it remains to argue that the best symmetric solution for $\cI$ has a value of at most $\max_{\substack{x \in [0, 1/2]}} \{\alpha(mx^2 + 2x - 2x^2) + 2(1 - \alpha)[1 - (1 - 1/(2r))^r](1 - (1 - m)x)\}$.

We remind the reader that a symmetric solution for $\cI'$ is $\bar{\vy} = \bE_{\sigma \in \cG}[\vy]$ for some vector $\vy \in [0, 1]^\cN$ obeying $y_a + y_b \leq 1$ and $\sum_{i = 1}^r y_{a_i} + y_{b_i} \leq 1$. Since $a$ and $b$ can be exchanged with each other by the permutations of $\cG$, the values of the coordinates of $a$ and $b$ in $\bar{\vy}$ must be equal to each other. Similarly, every two elements of $\{a_i, b_i \in i \in [r]\}$ can be exchanged by the permutations of $\cG$, and therefore, the values of the coordinates of all these elements in $\bar{\vy}$ must be all identical. Thus, any symmetric solution $\bar{\vy}$ can be represented as
\[
	\bar{y}_u
	=
	\begin{cases}
		x & \text{if $u = a$ or $u = b$} \enspace,\\
		z & \text{if $u \in \{a_i, b_i \mid i \in [r]\}$}
	\end{cases}
\]
for some values $x \in [0, 1/2]$ and $z \in [0, 1/(2r)]$. The value of this solution (according to the multilinear extension $F$ of $f$) is
\begin{align*}
	&
	\alpha[m(1 - (1 - x)^2) + 2(1 - m)x(1 - x)] + 2(1 - \alpha)(1 - (1 - z)^r)(1 - (1 - m)x)\\
	={} &
	\alpha(mx^2 + 2x - 2x^2) + 2(1 - \alpha)(1 - (1 - z)^r)(1 - (1 - m)x)\\
	\leq{} &
	\alpha(mx^2 + 2x - 2x^2) + 2(1 - \alpha)(1 - (1 - 1/(2r))^r)(1 - (1 - m)x)
	\enspace.
\end{align*}
Therefore, one can obtain an upper bound on the value of the best symmetric solution for $\cI'$ by taking the maximum of the last expression over all the values that $x$ can take, which completes the proof of the lemma.
\end{proof}

Since any refinement of a (simplified) partition matroid constraint is a (generalized) partition matroid constraint on its own right, plugging Lemmata~\ref{lem:f_propeties_cardinality} and Lemma~\ref{lem:gap_matroid} into Theorem~\ref{thm:symmetry_gap} yields the following corollary.
\begin{corollary}
For every constant $\eps' > 0$, no polynomial time algorithm for maximizing a non-negative $m$-monotone submodular function subject to a matroid contraint obtains an approximation ratio of
\[
	\max_{\substack{x \in [0, 1/2]}} \{\alpha(mx^2 + 2x - 2x^2) + 2(1 - \alpha)(1 - e^{-1/2})(1 - (1 - m)x)\} + 1/(2r) + \eps'
	\enspace.
\]
\end{corollary}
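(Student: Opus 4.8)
The plan is to obtain the Corollary as a direct application of Theorem~\ref{thm:symmetry_gap} to the instance $\cI'$, mirroring the way the analogous corollary for cardinality constraints was derived from $\cI$ in Section~\ref{ssc:inapproximability_cardinality}. First I would collect the three hypotheses that Theorem~\ref{thm:symmetry_gap} needs. (i) The objective $f$ of $\cI'$ is non-negative, $m$-monotone and submodular: this is Lemma~\ref{lem:f_propeties_cardinality}, which applies verbatim because $\cI$ and $\cI'$ share the same objective. (ii) The problem $\max\{f(S)\mid S\text{ feasible in }\cI'\}$ is strongly symmetric with respect to the group $\cG$ of Section~\ref{ssc:inapproximability_cardinality} — this was already asserted in the text before Lemma~\ref{lem:gap_matroid}; concretely, $f$-invariance under $\cG$ was argued before Observation~\ref{obs:symmetry_cardinality}, and condition~(2) of strong symmetry holds because feasibility in the simplified partition matroid depends only on the cardinalities of $S\cap\{a,b\}$ and of $S\cap\{a_i,b_i\mid i\in[r]\}$, which are exactly the two orbits of $\cG$ on $\cN$. (iii) The symmetry gap of $\cI'$ is at most $\gamma^\ast := \max_{x\in[0,1/2]}\{\alpha(mx^2+2x-2x^2)+2(1-\alpha)(1-e^{-1/2})(1-(1-m)x)\}+1/(2r)$, which is exactly Lemma~\ref{lem:gap_matroid}.

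Given these, Theorem~\ref{thm:symmetry_gap} tells us that, writing $\gamma$ for the true symmetry gap of $\cI'$, for every $\delta>0$ no randomized algorithm making polynomially many value queries achieves a $(1+\delta)\gamma$-approximation over the class $\cC$ of refinements of $\cI'$. The one step that needs a sentence of justification is that $\cC$ is a class of instances of the problem at hand: a refinement of the independent-set family of a simplified partition matroid on $\cN$ with blocks $\{a,b\}$ and $\{a_i,b_i\mid i\in[r]\}$ is, over the blown-up ground set $\cN\times X$, the family of sets containing at most $|X|$ elements of $\{a,b\}\times X$ and at most $|X|$ elements of $\{a_i,b_i\mid i\in[r]\}\times X$ — i.e., the independent sets of a generalized partition matroid — while $m$-monotonicity of the refined objective is part of the definition of $\cC$ in Theorem~\ref{thm:symmetry_gap}. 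Thus the query lower bound becomes a polynomial-time inapproximability statement for maximizing a non-negative $m$-monotone submodular function subject to a matroid constraint.

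Finally I would pass from the multiplicative $(1+\delta)\gamma$ bound to the additive $\gamma^\ast+\eps'$ bound claimed in the Corollary. If $\gamma^\ast\geq 1$ the statement is vacuous, so assume $\gamma^\ast<1$; then $0<\gamma\leq\gamma^\ast<1$, positivity of $\gamma$ holding because even the symmetric point with $x=\tfrac12$ has strictly positive multilinear value. A hypothetical polynomial-time $(\gamma^\ast+\eps')$-approximation algorithm would in particular be a $(1+\delta)\gamma$-approximation with $\delta := (\gamma^\ast+\eps')/\gamma - 1 > 0$, contradicting the previous paragraph. I do not expect a genuine obstacle here: all the difficulty of the matroid inapproximability is already absorbed into the construction of $\cI'$ and the estimate of Lemma~\ref{lem:gap_matroid}; the only things to be careful about are the observation that refinements of a partition matroid are again partition matroids and the routine multiplicative-to-additive conversion, both of which can be carried out exactly as in the derivation of Theorem~\ref{thm:inapproximability_cardinality}.
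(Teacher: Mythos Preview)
Your proposal is correct and follows essentially the same route as the paper: the paper simply states that plugging Lemma~\ref{lem:f_propeties_cardinality} and Lemma~\ref{lem:gap_matroid} into Theorem~\ref{thm:symmetry_gap}, together with the observation that any refinement of a (simplified) partition matroid constraint is again a (generalized) partition matroid, yields the corollary. Your write-up just makes the hypotheses and the multiplicative-to-additive passage more explicit, but the argument is the same.
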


Theorem~\ref{thm:inapproximability_matroid} now follows from the last corollary by choosing $\eps' = \eps / 2$, $r = \lceil \eps^{-1} \rceil$ and
\[
	\alpha
	=
	\argmin_{\alpha' \in [0, 1]} \max_{\substack{x \in [0, 1/2]}} \{\alpha'(mx^2 + 2x - 2x^2) + 2(1 - \alpha')(1 - e^{-1/2})(1 - (1 - m)x)\}
	\enspace.
\]
\section{Applications and Experiment Results} \label{sec:experiments}
Many machine learning applications require optimization of non-monotone submodular functions subject to some constraint. Unfortunately, such functions enjoy relatively low approximation guarantees. Nevertheless, in many cases the non-monotone objective functions have a significant monotone component that can be captured by the monotonicity ratio. In this section, we discuss three concrete applications with non-monotone submodular objective functions. For each application we  provide a lower bound on the monotonicity ratio $m$ of the objective function, which translates via our results from the previous sections into an improved approximation guarantee for the application.

To demonstrate the value of our improved guarantees in experiments, we took the following approach. The output of an approximation algorithm provides an upper bound on the value of the optimal solution for the problem (formally, this upper bound is the value of the output over the approximation ratio of the algorithm). Thus, we plot in each experiment the upper bound on the value of the optimal solution obtained with and without taking into account the monotonicity ratio, which gives a feeling of how the magnitude of our improvements compare to other values of interest (such as the gaps between the performances of the algorithms considered). 


\subsection{Personalized Movie Recommendation} \label{ssc:movie_recomendation}

The first application we consider is Personalized Movie Recommendation. Consider a movie recommendation system where each user specifies what genres she is interested in, and the system has to provide a representative subset of movies from these genres. Assume that each movie is represented by a vector consisting of users' ratings for the corresponding movie. One challenge here is that each user does not necessarily rate all the movies, hence, the vectors representing the movies do not necessarily have similar sizes. To overcome this challenge, a low-rank matrix completion techniques~\cite{candes2009exact} can be performed on the matrix with missing values in order to obtain a complete rating matrix. Formally, given few ratings from $k$ users to $n$ movies we obtain in this way a rating matrix $\mM$ of size $k \times n$. Following~\cite{mirzasoleiman2016fast}, to score the quality of a selected subset of movies, we use the following function.
\begin{equation}\label{movies}
    f(S)=\sum_{u \in \cN}\sum_{v\in S} s_{u,v}-\lambda\sum_{u\in S}\sum_{v\in S}s_{u,v}
		\enspace.
\end{equation}
Here, $\cN$ is the set of $n$ movies, $\lambda \in [0, 1]$ is a parameter and $s_{u,v}$ denotes the similarity between movies $u$ and $v$ (the similarity $s_{u, v}$ can be calculated based on the matrix $\mM$ in multiple ways: cosine similarity, inner product, etc). Note that the first term in the definition of $f$ captures the coverage, while the second term captures diversity. Thus, the parameter $\lambda$ denotes the importance of diversity in the returned subset.

One can verify that the above defined function $f$ is non-negative and submodular. The next theorem analyzes the monotonicity ratio of this function. In this theorem we assume that the similarity scores $s_{u, v}$ are non-positive and  obey the equality $s_{u, v} = s_{v, u}$ for every $u, v \in \cN$. Note that the above mentioned ways to define these scores have these properties. Interestingly, it turns out that the function $f$ is monotone when $\lambda$ is small enough despite the fact that this function is traditionally treated as non-monotone (e.g., in~\cite{feldman2017greed,mirzasoleiman2016fast}). This is a welcomed unexpected result of the use of the monotonicity ratio, which required us to really understand the degree of non-monotonicity represented by the objective function.
\begin{theorem} \label{thm:movie_recommendation_monotonicity}
The objective function $f$ is monotone for $0 \leq \lambda \leq \nicefrac{1}{2}$ and $2(1 - \lambda)$-monotone for $\nicefrac{1}{2} \leq \lambda \leq 1$.
\end{theorem}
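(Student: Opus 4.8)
The plan is to reduce the whole statement to one clean identity for $f(T) - f(S)$ when $S \subseteq T$. Write the coverage term as the modular function $g(S) = \sum_{u \in \cN}\sum_{v \in S} s_{u,v}$ and the diversity term as $h(S) = \sum_{u \in S}\sum_{v \in S} s_{u,v}$, so that $f = g - \lambda h$; also, for disjoint $A, B \subseteq \cN$, let $s(A, B) = \sum_{u \in A}\sum_{v \in B} s_{u,v}$, which is non-negative since the scores are. First I would prove that for all $S \subseteq T \subseteq \cN$, setting $R = T \setminus S$,
\[
	f(T) - f(S) = (1 - 2\lambda)\,s(S, R) + (1 - \lambda)\,h(R) + s(\cN \setminus T, R) \enspace.
\]
This follows by expanding $g(T) - g(S) = \sum_{v \in R}\sum_{u \in \cN} s_{u,v}$ along the partition $\cN = S \cup R \cup (\cN \setminus T)$, which gives $s(S, R) + h(R) + s(\cN \setminus T, R)$, and by using $s_{u,v} = s_{v,u}$ to get $h(T) - h(S) = 2\,s(S, R) + h(R)$; subtracting $\lambda$ times the latter from the former yields the identity.

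Given the identity, the case $0 \le \lambda \le \nicefrac{1}{2}$ is immediate: all three summands on the right-hand side are non-negative (using $1 - 2\lambda \ge 0$, $1 - \lambda \ge 0$, and non-negativity of the scores), so $f(T) \ge f(S)$ and $f$ is monotone. For $\nicefrac{1}{2} \le \lambda \le 1$ only the first summand can be negative. Since $R \subseteq \cN \setminus S$, I would bound $0 \le s(S, R) \le s(S, \cN \setminus S) = g(S) - h(S)$ and discard the two non-negative summands, obtaining $f(T) \ge f(S) - (2\lambda - 1)\bigl(g(S) - h(S)\bigr)$. Finally, because $f(S) = \bigl(g(S) - h(S)\bigr) + (1 - \lambda)h(S)$ with $(1 - \lambda)h(S) \ge 0$, we get $g(S) - h(S) \le f(S)$, hence $f(T) \ge f(S) - (2\lambda - 1)f(S) = 2(1 - \lambda)f(S)$; combined with the non-negativity of $f$ (which also covers the convention $f(T)/f(S) = 1$ when $f(S) = 0$, since $2(1 - \lambda) \le 1$ in this range), this shows the monotonicity ratio of $f$ is at least $2(1 - \lambda)$.

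The only real obstacle is the first step --- in particular, resisting the naive route of telescoping $f(T) - f(S)$ over the elements of $R$ added one at a time: the per-element marginal bound there behaves like $(1 - 2\lambda)\,d_w$ with $d_w = \sum_{u \in \cN} s_{u,w}$, whose sum over $R$ grows with $|R|$ and cannot be charged against $f(S)$. The identity above shows that the entire potentially-harmful contribution collapses into the single term $(1 - 2\lambda)\,s(S, R)$, and $s(S, R)$ is bounded by $g(S) - h(S) \le f(S)$ no matter how large $T$ is; once this is noticed, the remainder is routine bookkeeping.
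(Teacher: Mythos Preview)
Your proposal is correct and follows essentially the same approach as the paper: both arguments expand $f(T)-f(S)$ for $S\subseteq T$, observe that the only potentially negative contribution is $(1-2\lambda)\sum_{u\in S}\sum_{v\in T\setminus S}s_{u,v}$, and bound this cross term by $f(S)$ via $s(S,R)\le g(S)-h(S)\le f(S)$. Your single identity $f(T)-f(S)=(1-2\lambda)s(S,R)+(1-\lambda)h(R)+s(\cN\setminus T,R)$ packages the computation a bit more cleanly than the paper's two-step route (which first rewrites $f(S)$ as $2(1-\lambda)f(S)+(2\lambda-1)f(S)$ and then expands $f(T)$), but the underlying ingredients and inequalities are the same.
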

\begin{proof}
We first prove the first part of the theorem. Thus, we assume $\lambda \leq \nicefrac{1}{2}$, and we need to show that for arbitrary set $S \subseteq \cN$ and element $u \in \cN \setminus S$ the marginal contribution $f(u \mid S)$ is non-negative. This holds because
\begin{align*}
	f(u \mid S)
	={} &
	\sum_{v \in \cN} s_{v, u} - \lambda \left[\sum_{v \in S} s_{u,v} + \sum_{v \in S} s_{v,u} + s_{u, u} \right]\\
	={} &
	\sum_{v \in \cN} s_{v, u} - \lambda \left[2 \sum_{v \in S} s_{v,u} + s_{u, u} \right]
	\geq
	\sum_{v \in \cN} s_{v, u} - \sum_{v \in S} s_{v,u} - s_{u, u}
	\geq
	0
	\enspace,
\end{align*}
where the second equality holds because $s_{u, v} = s_{v, u}$, and the first inequality holds since $\lambda \leq \nicefrac{1}{2}$ in the case we consider and the $s_{u, v}$ values are non-negative.

It remains to prove the second part of the theorem. Thus, we assume from now on $\lambda \in [\nicefrac{1}{2}, 1]$, and we consider two sets $S \subseteq T \subseteq \cN$. To prove the theorem we need to show that $f(T) \geq 2(1 - \lambda) \cdot f(S)$. The first step towards showing this is to prove the following lower bound on $f(S)$.
\begin{align} \label{eq:f_S_lower_bound}
	f(S)
	={} &
	2(1 - \lambda) \cdot f(S) + (2\lambda - 1) \cdot \left[\sum_{u \in \cN} \sum_{v \in S} s_{u,v} - \lambda \sum_{u \in S} \sum_{v \in S} s_{u,v}\right]\\\nonumber
	\geq{} &
	2(1 - \lambda) \cdot f(S) + (2\lambda - 1) \cdot  \sum_{u \in T \setminus S} \sum_{v \in S} s_{u,v}
	=
	2(1 - \lambda) \cdot f(S) + (2\lambda - 1) \cdot  \sum_{u \in S} \sum_{v \in T \setminus S} \mspace{-9mu} s_{u,v}
	\enspace,
\end{align}
where the inequality holds since $\lambda \leq 1$, and the second equality holds since $s_{u,v} = s_{v, u}$. Using this lower bound, we now get
\begin{align*}
	f(T)
	={} &
	f(S) + \sum_{u \in \cN} \sum_{v \in T \setminus S} \mspace{-9mu} s_{u,v} - \lambda \left[\sum_{u \in S} \sum_{v \in T \setminus S} \mspace{-9mu} s_{u, v} + \sum_{u \in T \setminus S} \sum_{v \in S} s_{u, v} + \sum_{u \in T \setminus S} \sum_{v \in T \setminus S} \mspace{-9mu} s_{u, v}\right]\\
	={} &
	f(S) + \sum_{u \in \cN} \sum_{v \in T \setminus S} \mspace{-9mu} s_{u,v} - \lambda \left[2\sum_{u \in S} \sum_{v \in T \setminus S} \mspace{-9mu} s_{u, v} + \sum_{u \in T \setminus S} \sum_{v \in T \setminus S} \mspace{-9mu} s_{u, v}\right]\\
	\geq{} &
	f(S) + (1 - 2\lambda) \cdot \sum_{u \in S} \sum_{v \in T \setminus S} \mspace{-9mu} s_{u, v}
	\geq
	2(1 - \lambda) \cdot f(S)
	\enspace,
\end{align*}
where the first inequality holds since $\lambda \leq 1$, and the second inequality holds by Inequality~\eqref{eq:f_S_lower_bound}.
\end{proof}

To demonstrate the value of our lower bound on the monotonicity ratio, we followed the experimental setup of the prior work~\cite{mirzasoleiman2016fast} and used a subset of movies from the MovieLens data set~\cite{harper2015movielens} which includes \num[group-separator={,}]{10437} movies. Each movie in this data set is represented by a $25$ dimensional feature vector calculated using user ratings, and we used the inner product similarity to obtain the similarity values $s_{u,v}$ based on these vectors.

In our experiment we employed accelerated versions of the algorithms analyzed in Section~\ref{sec:cardinality} for a cardinality constraint. Specifically, instead of the Greedy algorithm we used Threshold Greedy~\cite{badanidiyuru2014fast} and Sample Greedy~\cite{mirzasoleiman2015lazier}; and instead of Random Greedy we used a threshold based version of this algorithm due to~\cite{buchbinder2017comparing} that we refer to as Threshold Random Greedy.\footnote{Algorithm~6 in~\cite{buchbinder2017comparing}.} All three algorithms had almost identical performance in our experiments (see Appendix~\ref{sec:extra}), and therefore, to avoid confusion, in the plots of this section we draw only the output of Threshold Random Greedy.

Each plot of Figure~\ref{moviecard} depicts the outputs of Threshold Random Greedy and a scarecrow algorithm called Random that simply outputs a random subset of movies of the required size. Each point in the plots represents the average value of the outputs of $10$ executions of these algorithms. We also depict in each plot the upper bound on the value of the optimal solution based on the general approximation ratio of Random Greedy and the improved approximation ratio implied by Theorems~\ref{thm:random_greedy} and~\ref{thm:movie_recommendation_monotonicity}---the area between the two upper bounds is shaded. In Figure~\ref{fig:f1} we plot these values for the case in which we asked the algorithms to pick at most $10$ movies, and we vary the parameter $\lambda$. In Figures~\ref{fig:f2} amd~\ref{fig:f3} we plotted the same values for a fixed parameter $\lambda$, while varying the maximum cardinality (number of movies) allowed for the output set. Since the height of the shaded area is on the same order of magnitude as the values of the solutions produced by Threashold Random Greedy (especially when $\lambda$ is close to $1/2$), our results demonstrate that the improved upper bound we are able to prove is much tighter than the state-of-the-art. Furthermore, our improved upper bound shows that the gap between the empirical outputs of Threshold Random Greedy and Random is much more significant as a percentage of the value of the optimal solution than one could believe based on the weaker bound.

\begin{figure}[tb]
  \begin{subfigure}[t]{0.3\textwidth}
    \includegraphics[width=\textwidth]{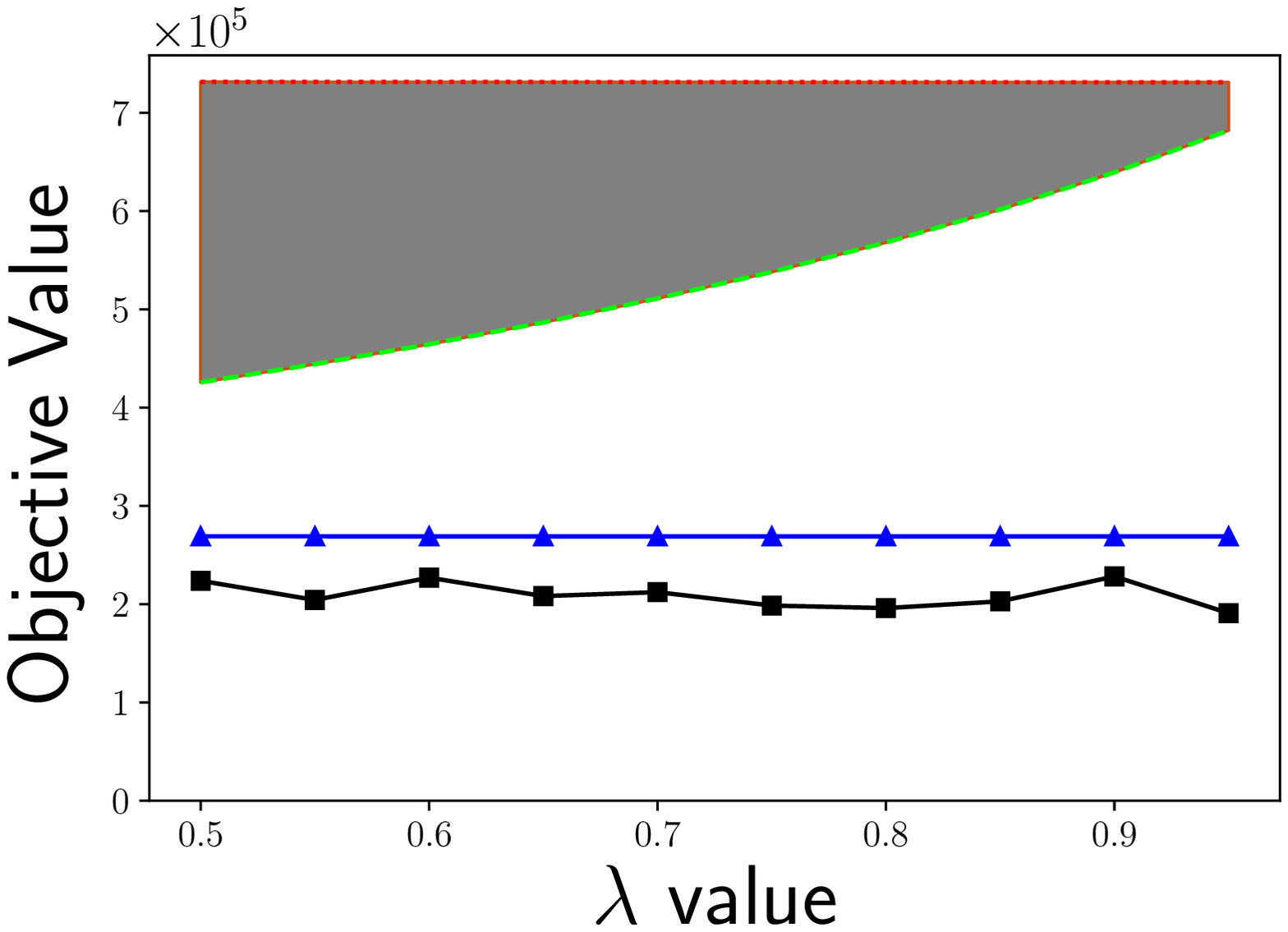}
    \caption{Results when up to $10$ movies can be selected for varying $\lambda$.}
    \label{fig:f1}
  \end{subfigure}
  \hfill
  \begin{subfigure}[t]{0.3\textwidth}
    \includegraphics[width=\textwidth]{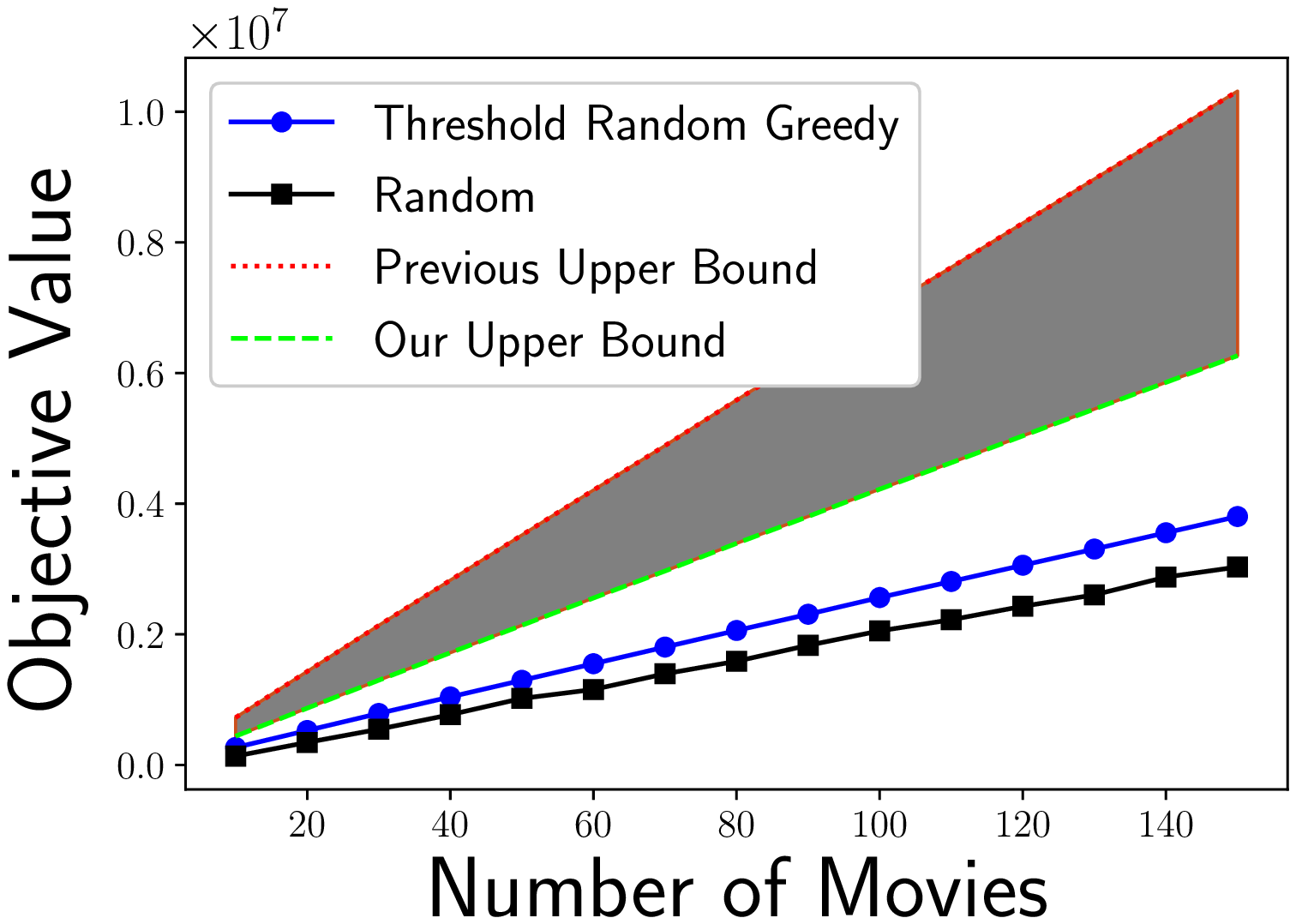}
    \caption{Results for $\lambda = 0.55$ when the maximum number movies in the solution varies.}
    \label{fig:f2}
  \end{subfigure}
  \hfill
    \begin{subfigure}[t]{0.3\textwidth}
    \includegraphics[width=\textwidth]{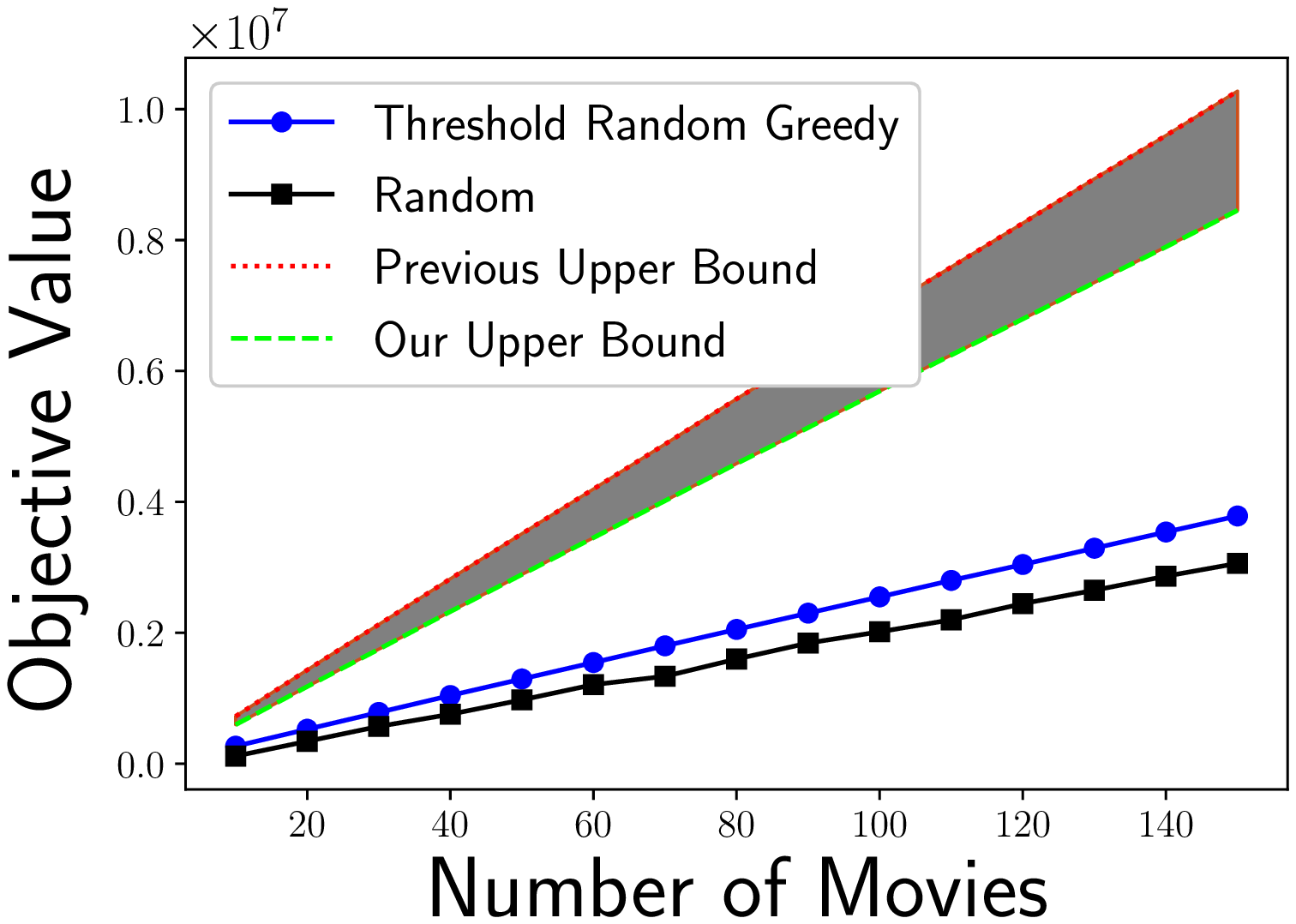}
    \caption{Results for $\lambda = 0.85$ when the maximum number movies in the solution varies.}
    \label{fig:f3}
  \end{subfigure}
  \caption{Experimental results for Personalized Movie Recommendation. Each plot includes the output of the algorithms we consider as well the previous and improved upper bounds on the optimal value (the area between these two bounds is shaded).}\label{moviecard}
\end{figure}

\subsection{Personalized Image Summarization}
Consider a setting in which we get as input a collection $\cN$ of images from $\ell$ disjoint categories (e.g., birds, dogs, cats) and the user specifies $r \in [\ell]$ categories, and then demands a subset of the images in these categories that summarize all the images of the categories. Following~\cite{mirzasoleiman2016fast} again, we use the following function to evaluate a given subset of images.
\begin{align}\label{func:mov}
    f(S)=\sum_{u\in \cN}\max_{v\in S}s_{u,v} - \frac{1}{|\cN|}\sum_{u\in S}\sum_{v\in S}s_{u,v}
		\enspace,
\end{align}
where $s_{u,v}$ is a non-negative similarity between images $u$ and $v$.

One can verify that the above function $f$ is non-negative and submodular. Unfortunately, however, this function can have a very low monotonicity ratio. To compensate for this, we observe that most the analyses we described in the previous sections use the monotonicity ratio only to show that $f(S \cup T) \geq m \cdot f(S)$ for sets $S$ and $T$ that are feasible. This motivates the following weak version of the monotonicity ratio. We note that many continuous properties of set functions have such weak versions. For example, the original paper presenting the submodularity-ratio~\cite{das2019approximate} presented in fact the weak version of this property, and the non-weak version was only formulated at a later point.
\begin{definition}
Consider the problem of maximizing a non-negative function $f$ subject to some constraint. In the context of this problem, we say that $f$ is \emph{$m$-weakly monotone} if for every two feasible sets $S$ and $T$ it holds that $f(S \cup T) \geq m \cdot f(S)$. Furthermore, the \emph{weak monotonicity ratio} of the problem is the maximum value $m$ for which the above holds.
\end{definition}

\begin{theorem} \label{thm:image_summerization}
The objective function $f$ given by Equation~\eqref{func:mov} is $1-\frac{2k}{|\cN|}$-weakly monotone when the size of feasible solutions is at most $k$ for some $1 \leq i \leq |\cN|$.
\end{theorem}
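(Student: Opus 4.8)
The plan is to exploit the natural split of $f$ into a monotone coverage term and a non-negative diversity penalty. Write $f(S) = C(S) - |\cN|^{-1} D(S)$, where $C(S) = \sum_{u \in \cN} \max_{v \in S} s_{u,v}$ and $D(S) = \sum_{u \in S}\sum_{v \in S} s_{u,v}$. Since the similarities $s_{u,v}$ are non-negative, $C$ is a monotone non-negative set function and $D$ is a non-negative set function; moreover $f$ itself is non-negative (as already noted in the text). Fix two feasible sets $S$ and $T$, so that $|S|, |T| \leq k$ and hence $|S \cup T| \leq 2k$; the goal is to show $f(S \cup T) \geq (1 - 2k/|\cN|) \cdot f(S)$.

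The heart of the argument is bounding the diversity penalty of $S \cup T$ in terms of its coverage. For every fixed $u$ we have $\sum_{v \in S \cup T} s_{u,v} \leq |S \cup T| \cdot \max_{v \in S \cup T} s_{u,v}$, so summing over $u \in S \cup T$ and then enlarging the outer sum to all of $\cN$ (legitimate because each summand $\max_{v \in S \cup T} s_{u,v}$ is non-negative) gives $D(S \cup T) \leq |S \cup T| \cdot C(S \cup T) \leq 2k \cdot C(S \cup T)$. Substituting this into the definition of $f$ yields
\[
	f(S \cup T) = C(S \cup T) - \frac{D(S \cup T)}{|\cN|} \geq \left(1 - \frac{2k}{|\cN|}\right) C(S \cup T) \enspace.
\]

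To close the loop I would use the monotonicity of $C$ together with the non-negativity of $D$: $C(S \cup T) \geq C(S) \geq C(S) - |\cN|^{-1} D(S) = f(S)$. If $1 - 2k/|\cN| \geq 0$, multiplying the chain $C(S \cup T) \geq f(S)$ by this non-negative factor gives the claimed bound; and if $1 - 2k/|\cN| < 0$, the bound holds trivially since $f(S \cup T) \geq 0 \geq (1 - 2k/|\cN|) f(S)$ by non-negativity of $f$. I do not expect a genuine obstacle here — the only points needing a line of care are the step that enlarges $\sum_{u \in S\cup T}$ to $\sum_{u \in \cN}$, which is exactly where non-negativity of the $s_{u,v}$ is used, and the degenerate cases ($S \cup T = \varnothing$, or $C(S \cup T) = 0$), in which both sides of the target inequality vanish.
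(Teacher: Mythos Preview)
Your argument is correct and essentially identical to the paper's own proof: both bound the diversity penalty via $D(S\cup T)\leq |S\cup T|\cdot C(S\cup T)\leq 2k\cdot C(S\cup T)$, deduce $f(S\cup T)\geq(1-2k/|\cN|)\,C(S\cup T)$, and finish with $C(S\cup T)\geq C(S)\geq f(S)$. Your handling of the trivial case $2k\geq|\cN|$ via non-negativity of $f$ matches the paper's opening remark as well.
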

\begin{proof}
When $k \geq |\cN| / 2$, the theorem is trivial. Thus, we can assume below $k < |\cN| / 2$.
Consider two feasible sets $S,T\in\mathcal{N}$, and let us lower bound $f(S\cup T)$.
\begin{align*}
    f(S\cup T) &=\sum_{u\in \cN}\max_{v\in S\cup T} s_{u,v}-\frac{1}{|\cN|}\sum_{u\in S\cup T}\sum_{v\in S\cup T}s_{u,v}\\
    &\geq \sum_{u\in \cN}\max_{v\in S\cup T}s_{u,v}-\frac{|S\cup T|}{|\cN|}\sum_{u\in S\cup T}\max_{v\in S\cup T}s_{u,v}
    \geq \sum_{u\in \cN}\max_{v\in S\cup T}s_{u,v}-\frac{2k}{|\cN|}\sum_{u\in S\cup T}\max_{v \in S\cup T}s_{u,v}\\
    &=\left(1-\frac{2k}{|\cN|}\right)\sum_{u\in \cN}\max_{v\in S\cup T}s_{u,v}
    \geq \left(1-\frac{2k}{|\cN|}\right)\sum_{u \in \cN}\max_{v\in S}s_{u,v}\enspace.
\end{align*}
Using this lower bound, we now get
\[
    f(S) =\sum_{u\in E}\max_{v\in S}s_{u,v}-\frac{1}{|\cN|}\sum_{u\in S}\sum_{v\in S}s_{u,v}
    \leq \sum_{u\in E}\max_{v\in S}s_{u,v}
		\leq
		\frac{f(S\cup T)}{1 - 2k/|\cN|}
		\enspace,
\]
which completes the proof of the theorem since $S$ and $T$ have been chosen as arbitrary feasible sets. 
\end{proof}

Our experiments for this setting are based on a subset of the CIFAR-$10$ dataset~\cite{krizhevsky2009learning} which includes \num[group-separator={,}]{10000} Tiny Images. These images belong to $10$ classes, with $1000$ images per class. Each image consists of $32\times 32$ RGB pixels represented by a \num[group-separator={,}]{3072} dimensional vector. To compute the similarity $s_{u,v}$ between images, we used the dot product.

In our first experiment, we simply looked for a summary consisting of a limited number of images. Since this is a cardinality constraint, we again used the scarecrow algorithm Random and the accelerated versions mentioned in Section~\ref{ssc:movie_recomendation} of the algorithms from Section~\ref{sec:cardinality}. In Figure~\ref{img:f1} we depict the outputs of Threshold Random Greedy and Random for various limits on the number of images in the summary (like in Section~\ref{ssc:movie_recomendation} we omit the other non-scarecrow algorithms since their performance is essentially identical to the one of Threshold Random Greedy, and we refer the reader to Appendix~\ref{sec:extra} for more detail). Figure~\ref{img:f1} also includes the upper bounds on the optimal solution obtained via the previous approximation ratio for Random Greedy and our improved approximation ratio (the area between the two upper bounds is shaded). We can see that the upper bound obtained via our improved approximation ratio is much tighter, and this upper bound also demonstrates that the gap between the non-scarecrow and the scarecrow algorithms is significant compared to the optimal solution.

In our second experiment, we looked for a summary containing up to $k$ images from each category selected by the user for some parameter $k$ (we assumed in the experiment that the user chose the categories: ``airplane'', ``automobile'' and ``bird''). Since this is a (generalized partition) matroid constraint, in this experiment we used versions of the algorithms from Section~\ref{sec:matroid}. Specifically, we used Random Greedy for Matroids and an accelerated version of Measured Continuous Greedy based on the acceleration technique underlying the Accelerated Continuous Greedy of~\cite{badanidiyuru2014fast}. Additionally, we used in this experiment a scarecrow algorithm called Random that outputs a set containing a random selection of $k$ images from each one of the chosen categories. The values of the outputs of all these algorithms are depicted in Figure~\ref{img:f2} (values shown are averaged over $10$ executions).

Figure~\ref{img:f2} also includes two upper bounds on the value of the optimal solution. The previous upper bound is an upper bound computed based on the previously known approximation ratios of Random Greedy for Matroids and Measured Continuous Greedy. In contrast, our upper bound is computed based on the approximation ratios proved in Theorems~\ref{thm:continuous_greedy} and~\ref{thm:random_greedy_matroid} and the weak monotonicity ratio proved in Theorem~\ref{thm:image_summerization}.\footnote{From a purely formal point of view this upper bound is not fully justified since Measured Continuous Greedy is a rare example of an algorithm whose analysis cannot use in a black box fashion the weak monotonicity ratio instead of the monotonicity ratio. However, due to probabilistic concentration, we expect the upper bound to still hold up to at most a small error.} As is evident from the similarity between Figures~\ref{img:f1} and~\ref{img:f2}, our observations from the first experiment extend also the more general constraint considered in the current experiment.

\begin{figure}[tb]
  \begin{subfigure}[t]{0.45\textwidth}
    \includegraphics[width=\textwidth]{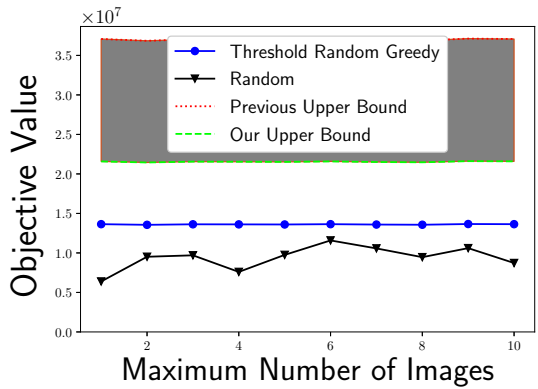}
    \caption{Results for Personalized Image Summarization with a cardinality constraint for varying number of images in the summary produced.}
    \label{img:f1}
  \end{subfigure}
  \hfill
  \begin{subfigure}[t]{0.45\textwidth}
    \includegraphics[width=\textwidth]{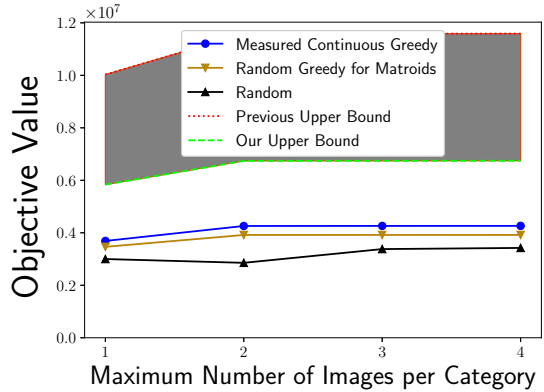}
    \caption{Results for Personalized Image Summarization with a matroid constraint. The $x$-axis gives the maximum number $k$ of images allowed from each category.}
    \label{img:f2}
  \end{subfigure}
 \caption{Personalized Image Summerization Results} \label{image}
\end{figure}

\subsection{Quadratic Programming} \label{ssc:quadratic_programming}

Consider the function
\begin{align}
\label{exp:quad}
F(\vx)=\frac{1}{2}\vx^T\mH\vx+\vh^T\vx+c
\enspace. 
\end{align}
By choosing appropriate matrix $\mH$, vector $\vh$ and scalar $c$, this function can be made to have various properties. Specifically, we would like to make it non-negative and DR-submodular (DR-submodularity is an extension of submodularity to continuous functions---see Appendix~\ref{sec:DR-sub} for more detail). Our goal in this section is to maximize $F$ under a polytope constraint given by
\[
	P=\{\vx\in \nnRE{n} \mid A\vx\leq \vb, \vx\leq \vu,A\in\nnRE{m\times n}, \vb\in \nnRE{m}\}
\]
for some dimensions $n$ and $m$.

Following Bian et al.~\cite{bian2017nonmonotone}, we set $m = n$, choose the matrix $\mH\in \mathbb{R}^{n\times n}$ to be a randomly generated symmetric matrix whose entries are drawn uniformly at random (and independently) from $[-1,0]$, and choose $\mA\in\mathbb{R}^{m\times n}$ to be a randomly generated matrix whose entries are drawn uniformly at random from $[v,v+1]$ for $v=0.01$ (this choice of $v$ guarantees that the entries of $A$ are strictly positive). We also set $\vb=\vone$ (i.e., $\vb$ is the all ones vector), and $\vu$ to be the upper bound on $P$ given by $u_j=\min_{j\in[m]} b_i / A_{i,j}$ for every $j\in[n]$. Finally, we set $\vh = -\beta\cdot \mH^T\vu$ for a parameter $\beta>0$ (in~\cite{bian2017nonmonotone} $\beta$ was fixed to $0.1$).

The non-positivity of $\mH$ guarantees that $f$ is DR-submodular. To make sure that $f$ is also non-negative, the value of $c$ should be at least $-\min_{\vzero \leq \vx \leq \vu}\frac{1}{2}\vx^T\mH\vx + \vh^T\vx$ (where $\vzero$ is the all zeros vector). This value can be approximately obtained by using \quadprogIP\footnote{We used IBM CPLEX optimization studio \url{https://www.ibm.com/products/ilog-cplex-optimization-studio}.}~\cite{xia2020globally}. Let the value of this minimum be $M$; then we set $c= -M+\alpha|M|$ for some parameter $\alpha > 0$. 

The definition of the monotonicity ratio can be extend to the continuous setting we consider in this section as follows.
\[ m=\inf_{\vzero \leq \vx \leq \vy \leq \vu}\frac{F(\vy)}{F(\vx)}\enspace,\]
where the ratio $F(\vy) / F(\vx)$ should be understood to have a value of $1$ whenever $F(\vx) = 0$. The following theorem analyzes the monotonicity ratio of the function given in Equation~\eqref{exp:quad} based on this definition.
\begin{theorem} \label{thm:quadratic}
For $\beta \in (0, 1/2)$, the objective function $F$ given by Equation~\eqref{exp:quad} is $\frac{(1-2\beta)\cdot\alpha}{1+\alpha}$-monotone. Furthermore, when $\min_{\vzero \leq x \leq \vu}(\frac{1}{2}\vx^T\mH\vx +\vh\vx) \geq 0$, $F$ is even $(1-2\beta)$-monotone.
\end{theorem}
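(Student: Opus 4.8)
The plan is to reduce everything to a single inequality: to show that $F(\vy) \ge \mu\, F(\vx)$ for every $\vzero \le \vx \le \vy \le \vu$, where $\mu = \tfrac{(1-2\beta)\alpha}{1+\alpha}$; since $F$ will be seen to be non‑negative and $\mu \le 1$, this is exactly the statement that the monotonicity ratio of $F$ is at least $\mu$. Throughout I write $g(\vx) = \tfrac12 \vx^T\mH\vx + \vh^T\vx$ (so $F = g + c$), and I use that, as $\mH$ is symmetric, $\vh = -\beta\mH\vu$. Set $M = \min_{\vzero \le \vx \le \vu} g(\vx)$. Two immediate observations are that $\vzero$ lies in the box $[\vzero,\vu]$ with $g(\vzero) = 0$, so $M \le 0$ and $|M| = -M$, and that the prescribed constant is $c = -M + \alpha|M| = -(1+\alpha)M$, whence $F(\vx) = g(\vx) - (1+\alpha)M \ge M - (1+\alpha)M = \alpha|M| \ge 0$ (this also recovers the non‑negativity of $F$).

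The crux is an estimate of how large $g$ can be on the box. First I would prove $g(\vx) \le -\beta\,\vu^T\mH\vu$ for all $\vzero \le \vx \le \vu$; this follows from the identity
\[
	g(\vx) + \beta\,\vu^T\mH\vu = \tfrac12\vx^T\mH\vx + \beta\,(\vu - \vx)^T\mH\vu ,
\]
in which both summands on the right are non‑positive, because $\mH$ has non‑positive entries and $\vzero \le \vx \le \vu$ (so $\vx \ge \vzero$, $\vu - \vx \ge \vzero$ and $\mH\vu \le \vzero$). Next, evaluating $g$ at the corner $\vu$ gives $-M \ge -g(\vu) = (\tfrac12 - \beta)(-\vu^T\mH\vu) \ge 0$. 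Since $\beta < \tfrac12$, multiplying this by $\tfrac{2\beta}{1-2\beta} > 0$ and comparing with the previous bound yields
\[
	\max_{\vzero \le \vx \le \vu} g(\vx) \;\le\; -\beta\,\vu^T\mH\vu \;=\; \tfrac{2\beta}{1-2\beta}\cdot(\tfrac12-\beta)(-\vu^T\mH\vu) \;\le\; \tfrac{2\beta}{1-2\beta}\,|M| .
\]

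With this in hand the target inequality should fall out of a short case analysis. Using $(1-\mu)(1+\alpha) = 1 + 2\beta\alpha$ and $M = -|M|$, one gets
\[
	F(\vy) - \mu F(\vx) = g(\vy) - \mu\, g(\vx) + (1+2\beta\alpha)|M| \;\ge\; 2\beta\alpha|M| - \mu\, g(\vx),
\]
where the inequality uses $g(\vy) \ge M = -|M|$ (valid since $\vy$ lies in the box). If $g(\vx) \le 0$, the right‑hand side is at least $2\beta\alpha|M| \ge 0$. If $g(\vx) > 0$, the box estimate gives $\mu\, g(\vx) \le \mu\cdot\tfrac{2\beta}{1-2\beta}|M| = \tfrac{2\beta\alpha}{1+\alpha}|M| \le 2\beta\alpha|M|$, so the right‑hand side is again non‑negative. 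Either way $F(\vy) \ge \mu F(\vx)$, which gives the first assertion. For the ``furthermore'' clause, the hypothesis $\min_{\vzero \le \vx \le \vu} g(\vx) \ge 0$ together with $M \le 0$ forces $M = 0$, hence $c = 0$ and $F = g$; but then the box estimate forces $g \le 0$ everywhere on $[\vzero,\vu]$ while the hypothesis forces $g \ge 0$ there, so $F \equiv 0$ on the box and is (vacuously) $(1-2\beta)$‑monotone.

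I expect the main difficulty to be locating the right intermediate quantity. A direct expansion of $g(\vy) - g(\vx)$ in the increment $\vy - \vx$ produces a cluster of quadratic terms that is awkward to sign, whereas the argument above channels the whole estimate through the single scalar $-\vu^T\mH\vu$ — bounding $\max_{[\vzero,\vu]} g$ from above by it via the non‑positivity of $\mH$, and bounding $|M|$ from below by it by evaluating $g$ at the corner $\vu$ — after first using $g(\vy) \ge M$ to collapse the problem to the regime $g(\vx) > 0$. Getting these two matching bounds on $-\vu^T\mH\vu$, and seeing that the $\tfrac12-\beta$ and $\tfrac{2\beta}{1-2\beta}$ factors cancel exactly, is the step I would budget the most time for.
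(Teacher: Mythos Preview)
Your argument for the main assertion is correct and matches the paper's proof almost step for step: both bound $g$ above on the box by $-\beta\,\vu^T\mH\vu$, bound $|M|$ below via $g(\vu)=(\tfrac12-\beta)\vu^T\mH\vu$, and combine with $g(\vy)\ge M$. The paper packages this as the pair of inequalities $F(\vy)\ge \tfrac{c\alpha}{1+\alpha}$ and $F(\vx)\le \tfrac{c}{1-2\beta}$ and then takes the ratio, whereas you expand $F(\vy)-\mu F(\vx)$ and do a sign split on $g(\vx)$; these are the same computation in different clothing.

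For the ``furthermore'' clause your route diverges from the paper's and contains a small gap. The paper simply reuses its two bounds: from $M\ge 0$ it gets $F(\vy)\ge M+c\ge c$, and from $F(\vx)\le c/(1-2\beta)$ it reads off $(1-2\beta)F(\vx)\le c\le F(\vy)$; this works regardless of whether the situation is degenerate. Your observation that the hypothesis actually forces $F\equiv 0$ on the box is correct and sharper, but the sentence ``the box estimate forces $g\le 0$'' does not follow directly: the box estimate only gives $g(\vx)\le -\beta\,\vu^T\mH\vu$, and $-\vu^T\mH\vu\ge 0$ a priori. You need the intermediate step that $M=0$ plugged into your corner bound $-M\ge(\tfrac12-\beta)(-\vu^T\mH\vu)$ forces $-\vu^T\mH\vu\le 0$, hence $\vu^T\mH\vu=0$; only then does the box estimate collapse to $g\le 0$. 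With that one line inserted, your argument is complete.
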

\begin{proof}
Fix two vectors $\vzero \leq \vx \leq \vy \leq \vu$. We begin this proof by providing a lower bound on $F(\vy)$ and an upper bound on $F(\vx)$. The lower bound on $F(\vy)$ is as following.
\begin{align*}
    F(\vy)=\frac{1}{2}\vy^T\mH\vy+\vh^T\vy+c \geq \min_{\vzero \leq \vx \leq \vu}\left(\frac{1}{2}\vx^T\mH\vx+\vh\vx\right)+c\enspace.
\end{align*}
To get the upper bound on $F(\vx)$, we first need to prove an upper bound on $c$.
\begin{align*}
    c\geq -\min_{0 \leq \vx \leq \vu}\left(\frac{1}{2}\vx^T\mH\vx + \vh^T\vx\right)=-\min_{0 \leq \vx \leq \vu}\left(\frac{1}{2}\vx^T\mH\vx-\beta \vu^T\mH \vx\right)\geq -\left(\frac{1}{2}-\beta\right)\vu^T\mH\vu\enspace.
\end{align*}
The promised upper bound on $F(\vx)$ now follows.
\[
    F(\vx)=\frac{1}{2}\vx^T\mH\vx+\vh^T\vx + c\leq \vh^T\vx+c\leq \vh^T\vu+c=
    -\beta \vu^T\mH\vu+c\leq \frac{\beta c}{1/2-\beta}+c = \frac{c}{1-2\beta}\enspace,
\]
where the first inequality holds since $\mH$ is non-positive, and the second inequality holds since $\vh$ is non-negative.

Recall now that $c= -M+\alpha|M|$, which implies
\[
	\min_{0 \leq \vx \leq \vu}\left(\frac{1}{2}\vx^T\mH\vx + \vh^T\vx\right)
	=
	M
	\geq
	-\frac{c}{1+\alpha}
	\enspace,
\]
and therefore,
\[
	F(\vy)
	\geq
	-\frac{c}{1+\alpha} + c
	=
	\frac{c\alpha}{1+\alpha}
	\geq
	\frac{(1 - 2\beta)\alpha}{1+\alpha} \cdot F(\vx)
	\enspace.
\]

It remains to consider the case in which $\min_{\vzero \leq \vx \leq \vu}\left(\frac{1}{2}\vx^T\mH\vx+\vh\vx\right)\geq 0$. In this case
\[
	F(\vy)
	\geq
	c
	\geq
	(1 - 2\beta) \cdot F(\vx)
	\enspace.
	\qedhere
\]
\end{proof}

We applied the Non-monotone Frank-Wolfe algorithm of Bian et al.~\cite{bian2017nonmonotone} to the above defined optimization problem (Non-monotone Frank-Wolfe algorithm is related to the Measured Continuous Greedy algorithm studied in Section~\ref{ssc:continuous_greedy}, and we refer the reader to Appendix~\ref{sec:DR-sub} for further detail about this algorithm and its analysis). Figure~\ref{fig:quad} depicts the results we obtained. Specifically, Figure~\ref{fig:quad2} shows the value of the solution obtained by Non-monotone Frank-Wolfe for $\alpha = 0.3$ and $\beta = 0.2$ as the dimensionality $n$ varies. The shaded area is the area between the previous upper bound on the optimal value (that ignores the monotonicity ratio), and our upper bound that takes advantage of the monotonicity ratio bound given by Theorem~\ref{thm:quadratic}. Figures~\ref{fig:quad1} and~\ref{fig:quad3} are similar, but they fix the dimensionality $n$ to be $4$, and vary $\alpha$ or $\beta$ instead. Let us discuss now some properties of Figure~\ref{fig:quad}.
\begin{itemize}
	\item Each data point in Figure~\ref{fig:quad} corresponds to a single instance drawn from the distribution described above. This implies that the plots in Figure~\ref{fig:quad} vary for different runs of our experiment, but the plots that we give represent a (single) typical run.
	\item The size of the the shaded area depends on $\alpha$ and $\beta$, but also on the sign of $\min_{\vzero \leq \vx \leq \vu}(\frac{1}{2}\vx^T\mH\vx +\vh\vx)$. This is the reason that this size behaves somewhat non-continuously in Figure~\ref{fig:quad3}. Interestedly, the sign of this minimum is mostly a function of $\beta$. In other words, there are values of $\beta$ for which the minimum is non-negative with high probability, and other values for which the minimum is negative with high probability.
	\item One can see that the use of the monotonicity ratio significantly improves the upper bound on the optimal value, especially when the minimum $\min_{\vzero \leq \vx \leq \vu}(\frac{1}{2}\vx^T\mH\vx +\vh\vx)$ happens to be non-negative.
\end{itemize}

\begin{figure}[tb]
\begin{subfigure}[t]{0.32\textwidth}
  \includegraphics[width=\linewidth]{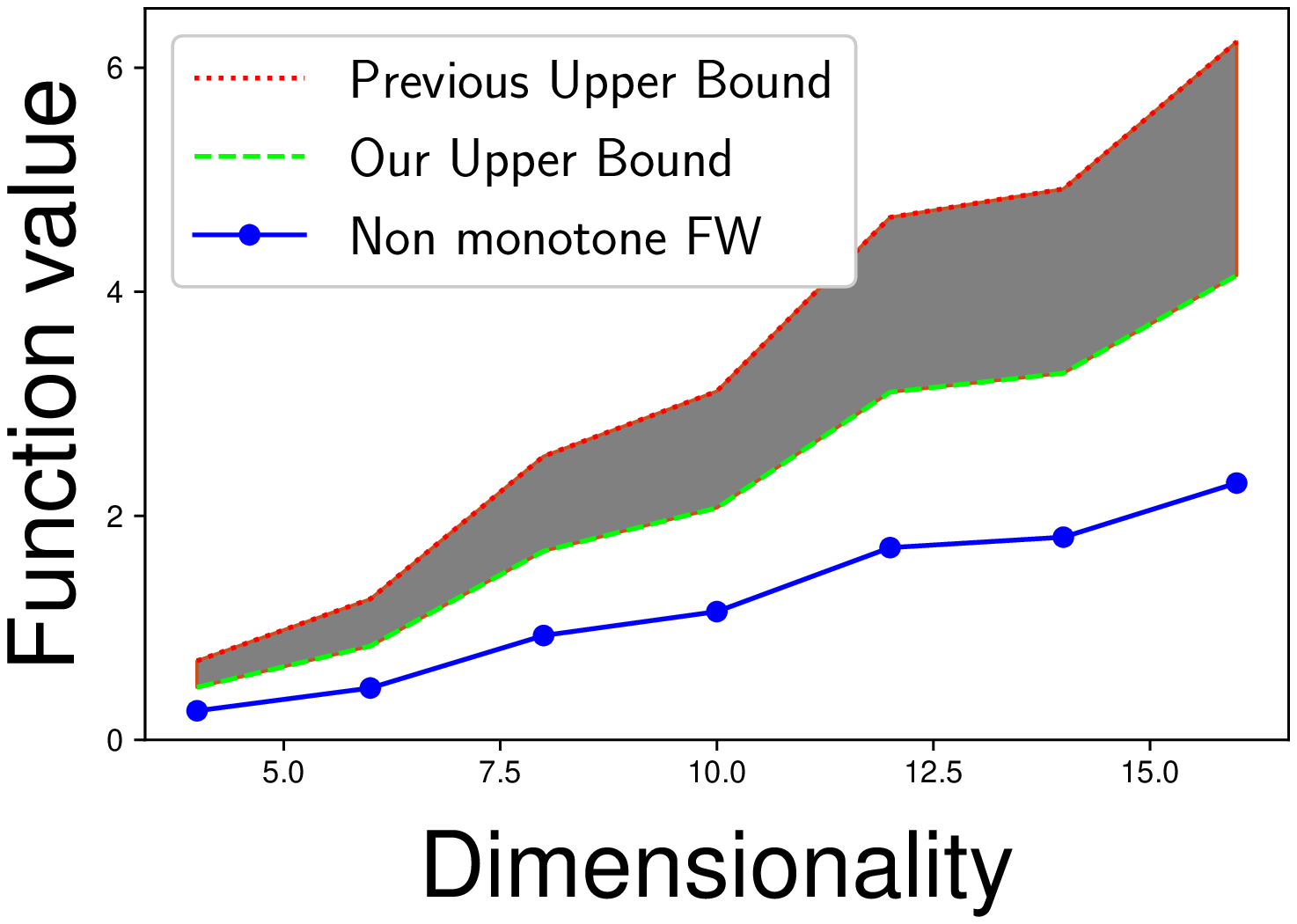}
  \caption{Varying the dimensionality $n$ for fixed $\alpha=0.3$ and $\beta=0.2$.}\label{fig:quad2}
\end{subfigure}\hfill
\begin{subfigure}[t]{0.32\textwidth}
  \includegraphics[width=\linewidth]{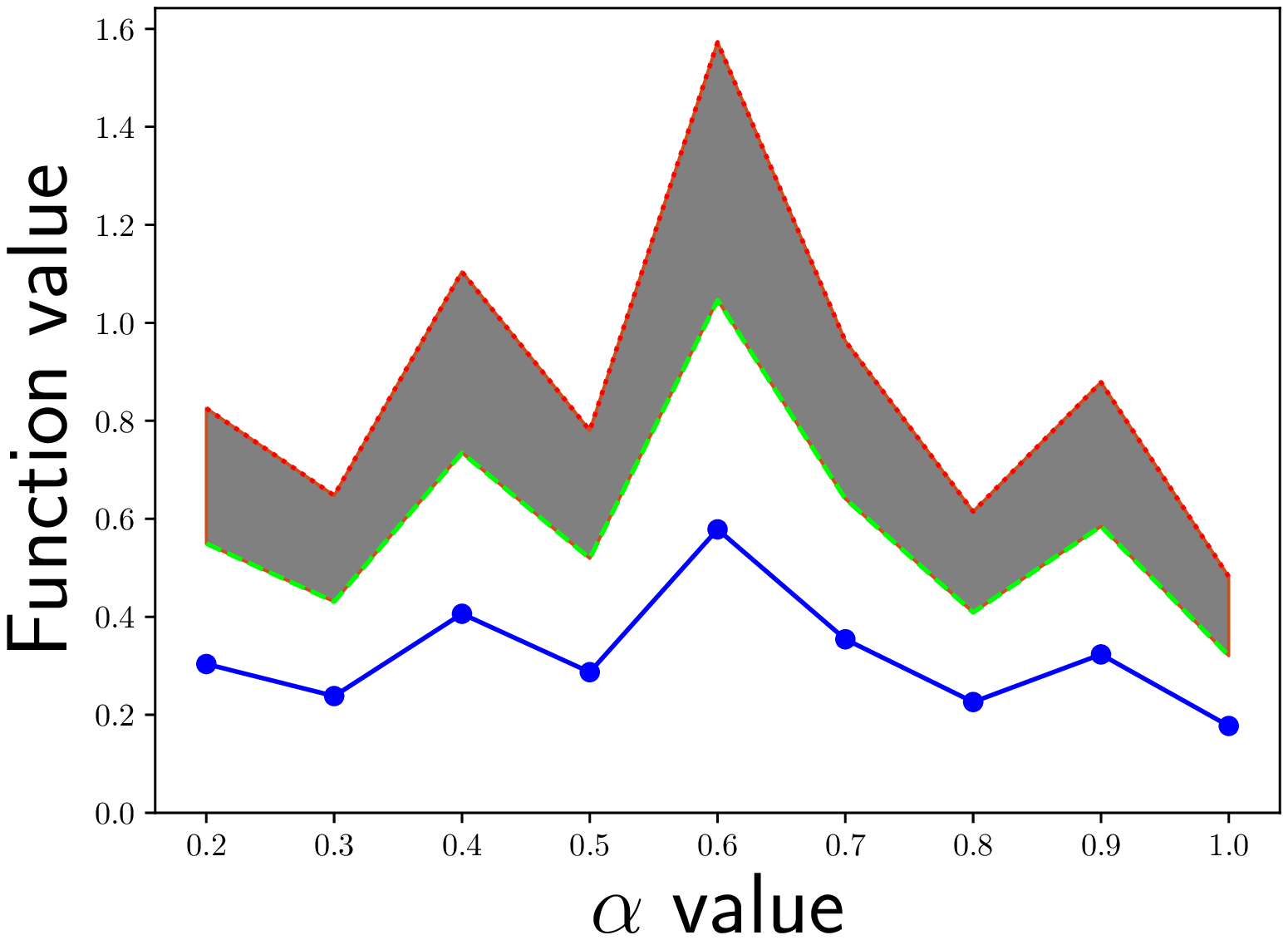}
  \caption{Varying $\alpha$ for fixed $\beta = 0.2$ and $n = 4$.}\label{fig:quad1}
\end{subfigure}\hfill
\begin{subfigure}[t]{0.32\textwidth}
  \includegraphics[width=\linewidth]{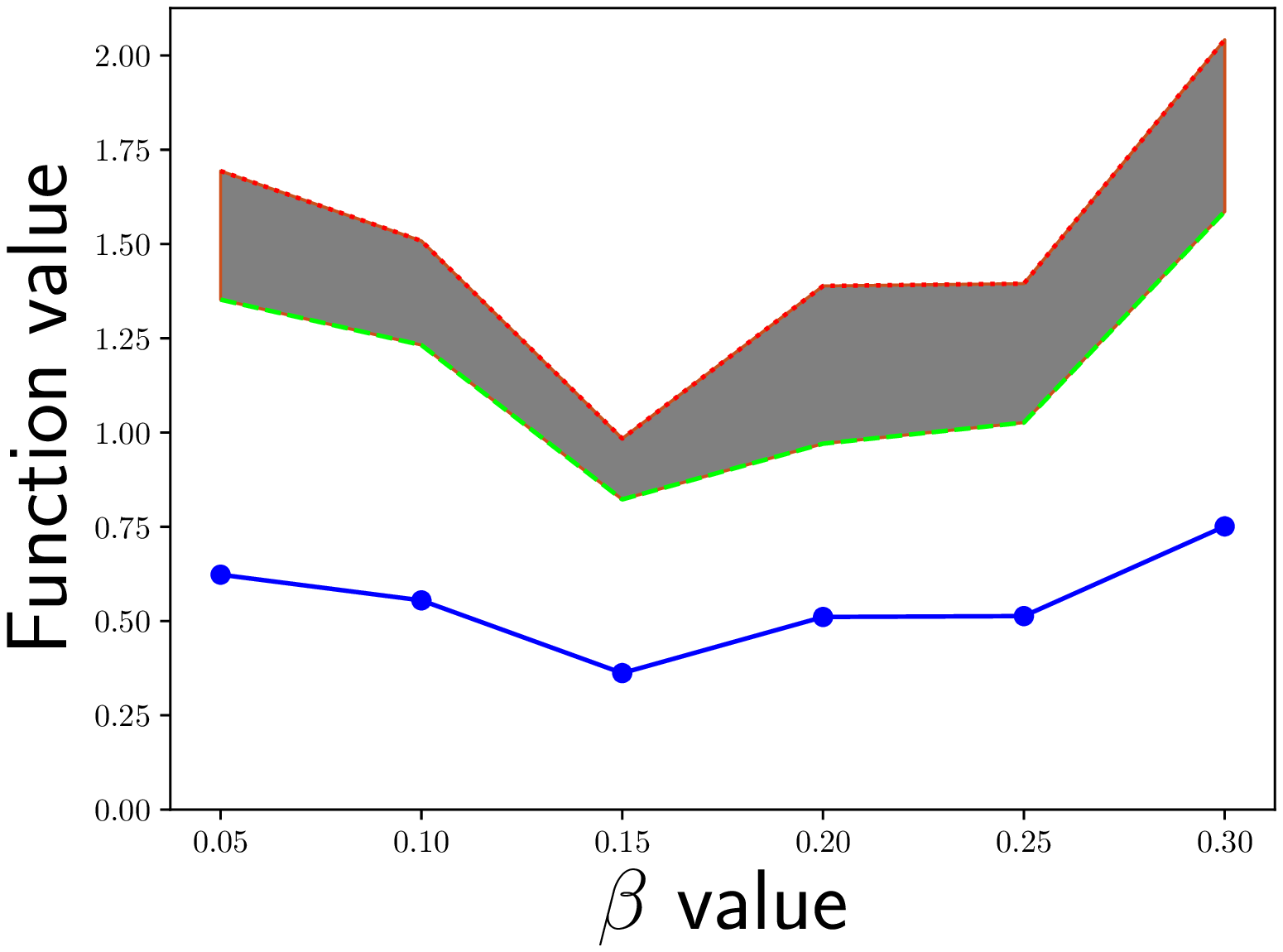}
  \caption{Varying $\beta$ for fixed $\alpha = 0.5$ and $n = 4$.}\label{fig:quad3}
\end{subfigure}
\caption{Results of the Quadratic Programming experiments.} \label{fig:quad}
\end{figure}

\section{Conclusion}

In this paper we have defined the monotonicity ratio, analyzed how the approximation ratios of standard submodular maximization algorithms depend on this ratio, and then demonstrated that this leads to improved approximation guarantees for the applications of movie recommendation, image summarization and quadratic programming. We believe that the monotonicity ratio is a natural parameter of submodular maximization problems, refining the binary distinction between monotone and non-monotone objective functions and improving the power of submodular maximization tools in machine learning applications. Thus, we hope to see future work towards understanding the optimal dependence on $m$ of the approximation ratios of various submodular maximization problems.

An important take-home message from our work is that, at least in the unconstrained submodular maximization case, the optimal algorithm has an approximation ratio whose dependence on $m$ is non-linear. Such algorithms are rarely obtained using current techniques, which might be one of the reasons why these techniques have so far failed to obtain tight approximation guarantees for constrained non-monotone submodular maximization.

\appendix

\section{Proof of \texorpdfstring{Theorem~\ref{thm:symmetry_gap}}{Theorem~\ref*{thm:symmetry_gap}}} \label{app:symmetry_gap}

In this section, we show how the proof of the symmetry gap technique due to Vondr\'{a}k~\cite{vondrak2013symmetry} can be adapted to prove Theorem~\ref{thm:symmetry_gap}. Let us begin the section by restating the theorem itself.

\thmSymmetryGap*

The crux of the symmetry gap technique is two lemmata due to~\cite{vondrak2013symmetry} that we restate below. Lemma~\ref{lem:original_continuous_versions} shows that given a non-negative set function $f$, one can obtain from it two continuous versions: a continuous version $\hat{F}$ that resembles $f$ itself, and a continuous version $\hat{G}$ that resembles a symmetrized version of $f$. Distinguishing between $\hat{F}$ and $\hat{G}$ is difficult, however, this does not translate into an hardness for discrete problems since $\hat{F}$ and $\hat{G}$ are continuous. Therefore, Vondr\'{a}k~\cite{vondrak2013symmetry} proved also Lemma~\ref{lem:original_continuous_to_discrete}, which shows how these continuous functions can be translated back into set functions with appropriate properties.

\begin{lemma}[Lemma~3.2 of~\cite{vondrak2013symmetry}] \label{lem:original_continuous_versions}
Consider a function $f\colon 2^\cN \to \nnR$ invariant under a group of
permutations $\cG$ on the ground set $\cN$. Let $F(\vx)$ be the multilinear extension of $F$, define $\bar{x} = \bE_{\sigma \in \cG}[\characteristic_{\sigma(\vx)}]$ and fix any $\eps > 0$. Then, there is $\delta > 0$ and functions $\hat{F}, \hat{G} \colon [0, 1]^{\cN} \to \nnR$ (which are also symmetric with respect to $\cG$), satisfying the following:
\begin{compactenum}
	\item For all $\vx \in [0, 1]^\cN$, $\hat{G}(\vx) = \hat{F}(\bar{\vx})$.
	\item For all $\vx \in [0, 1]^\cN$, $|\hat{F}(\vx) - F(\vx)| \leq \eps$.
	\item Whenever $\|\vx - \bar{\vx}\|_2 \leq \delta$, $\hat{F}(\vx) = \hat{G}(\vx)$ and the value depends only on $\bar{\vx}$.
	\item The first partial derivatives of $\hat{F}$ and $\hat{G}$ are absolutely continuous.
	\item If $f$ is monotone, then, for every element $u \in \cN$, $\frac{\partial\hat{F}}{\partial x_u} \geq 0$ and $\frac{\partial\hat{G}}{\partial x_u} \geq 0$ everywhere.
	\item If $f$ is submodular then, for every two elements $u,v \in \cN$, $\frac{\partial^2\hat{F}}{\partial x_u \partial x_v} \leq 0$ and $\frac{\partial^2\hat{G}}{\partial x_u \partial x_v} \leq 0$ almost everywhere.
\end{compactenum}
\end{lemma}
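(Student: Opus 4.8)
The plan is to reproduce the construction of Vondr\'{a}k~\cite{vondrak2013symmetry}; below I describe it and indicate where the real work lies. The natural starting point is the multilinear extension $F$ itself, which is already a convenient continuous surrogate for $f$: it is a polynomial, hence $C^\infty$; it is non-negative and $\cG$-invariant because $f$ is; its mixed second partials $\partial^2 F/\partial x_u \partial x_v$ (including the case $u=v$, where the value is $0$) are non-positive whenever $f$ is submodular; and its first partials are non-negative whenever $f$ is monotone. Let $J$ denote the averaging operator $J\vx=\bar\vx=\bE_{\sigma\in\cG}[\sigma(\vx)]$. It is the orthogonal projection onto the subspace $V=\{\vx : \sigma(\vx)=\vx \text{ for every } \sigma\in\cG\}$ of $\cG$-fixed points; each of its entries equals the probability that a uniformly random $\sigma\in\cG$ sends one fixed coordinate onto another, hence is non-negative; it is doubly stochastic and satisfies $J^2=J$ and $J\sigma=\sigma J=J$; and $J$, as well as every convex combination $M_s:=(1-s)I+sJ$ with $s\in[0,1]$, maps $[0,1]^\cN$ into itself and has non-negative entries. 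Consequently $G:=F\circ J$ (equivalently $G(\vx)=F(\bar\vx)$), and more generally each $F\circ M_s$, is again non-negative, $\cG$-invariant, and --- by the chain rule together with the non-negativity of the entries of $J$ (resp.\ $M_s$) --- keeps the sign of the first partials when $f$ is monotone and the sign of the mixed second partials when $f$ is submodular. I will finally set $\hat G:=\hat F\circ J$, so that $\hat G(\vx)=\hat F(\bar\vx)$ and item~1 holds by definition, and $\hat G$ then inherits from $\hat F$ --- again by the chain rule through the non-negative matrix $J$ --- the regularity of item~4 and the sign conditions of items~5 and~6, and it is $\cG$-symmetric because $J\sigma=J$.

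The core of the proof is thus to build $\hat F$. The target behaviour is: $\hat F$ coincides with $F$ wherever $d(\vx)\ge 2\delta$, where $d(\vx):=\norm{\vx-\bar\vx}_2=\norm{(I-J)\vx}_2$; it coincides with $G=F\circ J$ wherever $d(\vx)\le\delta$, so that there it depends only on $\bar\vx$, which gives item~3 (then $\hat F=\hat G=G$ on that region); it interpolates between the two regimes with absolutely continuous first derivatives; and it retains the sign conditions throughout. Once such an $\hat F$ exists, items~1--4 are bookkeeping: item~1 is the definition of $\hat G$; for item~2 one observes that the construction alters $F$ only where $\norm{\vx-\bar\vx}_2<2\delta$ and there replaces $F(\vx)$ by values of $F$ at points within distance $2\delta$ of $\vx$, so with $L:=\sup_{[0,1]^\cN}\norm{\nabla F}_2$ one gets $|\hat F(\vx)-F(\vx)|\le 2L\delta$, whence $\delta<\eps/(2L)$ suffices; item~3 is as described; and for item~4 one arranges the interpolation to be built from $d(\vx)^2=\norm{(I-J)\vx}_2^2$, which is a polynomial in $\vx$, composed with a smooth cutoff, which in fact makes $\hat F$ genuinely $C^\infty$ (more than item~4 asks). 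The $\cG$-symmetry of $\hat F$ follows since $J$, $d$, and the cutoff are all $\cG$-invariant and $\norm{\sigma(\vv)}_2=\norm{\vv}_2$.

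The single genuinely delicate point --- and the reason the lemma is not a triviality --- is preserving the sign conditions (item~6 always, and item~5 when $f$ is monotone) across the transition shell $\{\delta\le d(\vx)\le 2\delta\}$. The two obvious constructions fail. A scalar cutoff interpolation $\hat F=(1-r(\vx))G+r(\vx)F$, with $r$ equal to $0$ near $V$ and $1$ away from it, produces in $\partial^2\hat F/\partial x_u\partial x_v$ the extra terms $(\partial_u\partial_v r)(F-G)+(\partial_v r)(\partial_u F-\partial_u G)+(\partial_u r)(\partial_v F-\partial_v G)$, whose sign is uncontrolled; and composing $F$ with a smooth retraction $\Psi$ that is the identity away from $V$ and equals $J$ near $V$ produces in $\partial^2(F\circ\Psi)/\partial x_u\partial x_v$ the term $\sum_p(\partial_p F)(\partial_u\partial_v\Psi_p)$, which one cannot control when $f$ is merely submodular (so that the signs of $\partial_p F$ vary), short of taking $\Psi$ affine --- impossible for a map that is the identity in one region and $J$ in another. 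Vondr\'{a}k's construction gets around this by exploiting that $F$ and $G$ agree to first order along $V$ --- the gradient of a $\cG$-invariant function at a $\cG$-fixed point is itself $\cG$-fixed, hence lies in $V$, so $\nabla F=\nabla G$ on $V$ and therefore $F-G$ vanishes quadratically near $V$ --- and by tailoring the interpolation so that the dangerous cross terms cancel or acquire the admissible sign; a final mollification against a non-negative $\cG$-symmetric kernel (which preserves all the sign conditions and the symmetry) then upgrades the regularity to absolutely continuous first derivatives. Carrying out this last part carefully is precisely the content of the proof of this lemma in~\cite{vondrak2013symmetry}.
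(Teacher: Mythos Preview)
The paper does not prove this lemma; it is quoted from~\cite{vondrak2013symmetry}. However, the paper does spell out Vondr\'{a}k's construction explicitly (in the course of proving the modified Lemma~\ref{lem:continuous_versions}), so one can compare. Your overall picture is right --- $F$, $G(\vx)=F(\bar\vx)$, a scalar cutoff $\phi$ applied to $D(\vx)=\|\vx-\bar\vx\|_2^2$, and the identification of the uncontrolled cross terms in $\partial^2/\partial x_u\partial x_v$ as the only real obstacle --- but your description of how Vondr\'{a}k resolves that obstacle is off.

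You dismiss the scalar cutoff $\tilde F=(1-\phi(D(\vx)))F+\phi(D(\vx))G$ as ``failing'', and then say the actual construction instead ``tailors the interpolation'' and applies a mollification. In fact Vondr\'{a}k uses \emph{exactly} that scalar cutoff $\tilde F$, and the fix is purely additive: one sets $\hat F=\tilde F+256M|\cN|\alpha\,J(\vx)$ with $J(\vx)=|\cN|^2+3|\cN|\sum_u x_u-\bigl(\sum_u x_u\bigr)^2$, a symmetric concave polynomial whose Hessian is the constant matrix $-2$. Using the quadratic vanishing of $F-G$ near the fixed subspace (your correct observation) together with the bounds $|t\phi'(t)|\le 4\alpha$ and $|t^2\phi''(t)|\le 10\alpha$ of Lemma~\ref{lem:phi_properties}, one shows the bad cross terms in $\partial^2\tilde F/\partial x_u\partial x_v$ are $O(\alpha M|\cN|)$ in absolute value; the $-2\cdot 256M|\cN|\alpha$ contributed by $J$ then swamps them. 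No mollification is used --- $\phi$ is already chosen with an absolutely continuous first derivative, and $D$ is a polynomial, so item~4 comes for free. A side effect is that $\hat F$ never equals $F$ exactly anywhere (contrary to your ``$\hat F=F$ for $d(\vx)\ge 2\delta$''); rather $\phi(t)<e^{-1/\alpha}$ for $t\ge\beta$, and the added $J$ term is itself only $O(\alpha M|\cN|^3)$, so item~2 holds approximately with the stated $\eps$ after choosing $\alpha,\beta$ small. Your Lipschitz argument for item~2 therefore also needs to account for this extra additive piece.
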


\begin{lemma}[Lemma~3.1 of~\cite{vondrak2013symmetry}] \label{lem:original_continuous_to_discrete}
Let $n$ be a positive integer, and let $F \colon [0, 1]^\cN \to \bR$ and $X = [n]$. If we define $f\colon 2^{\cN \times X} \to \nnR$ so
that $f(S) = F(\vx)$, where $x_u = \frac{1}{n}|S \cap (\{u\} \times X)|$. Then,
\begin{compactenum}
	\item if $\frac{\partial F}{\partial x_u} \geq 0$ everywhere for each element $u \in \cN$, then $f$ is monotone,
	\item and if the first partial derivatives of $F$ are absolutely continuous and $\frac{\partial^2 F}{\partial x_u \partial x_v} \leq 0$ almost everywhere for all elements $u, v \in \cN$, then $f$ is submodular.
\end{compactenum}
\end{lemma}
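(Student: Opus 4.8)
The plan is to verify the two conclusions directly from the definitions, exploiting the one structural fact about $f$: adding to a set $S \subseteq \cN \times X$ an element $(u,j) \notin S$ raises the coordinate $x_u$ of the associated vector (the vector $\vx$ with $x_u = \frac1n|S \cap (\{u\}\times X)|$) by exactly $\frac1n$ and leaves every other coordinate untouched. Hence, writing $\vx$ for the vector of $S$, the marginal $f((u,j) \mid S)$ equals $F(\vx + \tfrac1n\characteristic_{\{u\}}) - F(\vx)$. Recalling that monotonicity of a set function is equivalent to all singleton marginals being non-negative, and that submodularity is equivalent to the local condition $f(e \mid S) \ge f(e \mid S + e')$ for every set $S$ and every pair of distinct elements $e, e' \notin S$ (the general inequalities following by peeling off the elements of $T \setminus S$ one at a time), it suffices to analyze these local quantities.

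For part~1 I would fix $S$ and $e = (u,j) \notin S$, let $\vx$ be the vector of $S$, and consider $g(t) = F(\vx + t\characteristic_{\{u\}})$ on $[0,\tfrac1n]$ (this segment lies in $[0,1]^\cN$ since $x_u \le \tfrac{n-1}{n}$). By hypothesis $g'(t) = \frac{\partial F}{\partial x_u}\big|_{\vx + t\characteristic_{\{u\}}} \ge 0$ everywhere, so the mean value theorem gives $g(\tfrac1n) \ge g(0)$, i.e.\ $f(e\mid S)\ge 0$, proving $f$ monotone. (Non-negativity of $f$ is inherited from non-negativity of $F$ at the relevant rational points.)

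For part~2 I would fix $S$ and distinct $e=(u,j),\,e'=(v,j') \notin S$, write $\vx$ for the vector of $S$, and observe that the local submodularity inequality becomes
\[
	F\!\left(\vx + \tfrac1n\characteristic_{\{u\}}\right) - F(\vx)
	\ \ge\
	F\!\left(\vx + \tfrac1n\characteristic_{\{v\}} + \tfrac1n\characteristic_{\{u\}}\right) - F\!\left(\vx + \tfrac1n\characteristic_{\{v\}}\right).
\]
Since the first partials of $F$ are absolutely continuous, each side equals $\int_0^{1/n} \frac{\partial F}{\partial x_u}\big|_{\vz + t\characteristic_{\{u\}}}\,dt$ with $\vz = \vx$ on the left and $\vz = \vx + \tfrac1n\characteristic_{\{v\}}$ on the right. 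If $u \ne v$, subtracting and applying the fundamental theorem of calculus once more in the $v$-direction rewrites the difference of the two sides as $-\int_0^{1/n}\!\int_0^{1/n}\frac{\partial^2 F}{\partial x_v\partial x_u}\big|_{\vx + t\characteristic_{\{u\}} + s\characteristic_{\{v\}}}\,ds\,dt \ge 0$, using $\frac{\partial^2 F}{\partial x_v\partial x_u}\le 0$ a.e. If $u = v$ (so $j \ne j'$, and two copies of $u$ are absent from $S$), the inequality is the discrete concavity $g(\tfrac1n)-g(0) \ge g(\tfrac2n)-g(\tfrac1n)$ of $g(t) = F(\vx + t\characteristic_{\{u\}})$, which holds because $g'$ is absolutely continuous with $g'' = \frac{\partial^2 F}{\partial x_u^2}\le 0$ a.e., so $g$ is concave.

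The one genuinely delicate point is measure-theoretic rather than conceptual: the hypothesis "$\frac{\partial^2 F}{\partial x_u\partial x_v}\le 0$ almost everywhere in $[0,1]^\cN$" is a priori weaker than "$\le 0$ almost everywhere on the lower-dimensional affine slices" over which the integrals above run. I would resolve this by noting that in the only way this lemma gets used here — applied to the smoothed function $\hat F$ of Lemma~\ref{lem:original_continuous_versions}, whose relevant second partials are actually continuous — "almost everywhere" upgrades to "everywhere" and the slice integrals are unambiguously non-positive; more generally, the absolute continuity of the first partials guaranteed by that lemma is precisely what licenses the Fubini and fundamental-theorem manipulations above. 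Finally, I would remark that this lemma is quoted verbatim from Vondr\'ak~\cite{vondrak2013symmetry}, does not interact with the monotonicity ratio, and serves here only as an ingredient in the proof of Theorem~\ref{thm:symmetry_gap}.
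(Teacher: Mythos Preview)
The paper does not prove this lemma at all: it is quoted verbatim from Vondr\'ak~\cite{vondrak2013symmetry} as an imported tool, and the paper only proves its \emph{modified} version (Lemma~\ref{lem:continuous_to_discrete}), whose new content is the $m$-monotonicity clause. So there is no proof in the paper to compare your attempt against.

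That said, your argument is the standard and correct one, and you even anticipate the one genuine subtlety (that ``$\leq 0$ almost everywhere on $[0,1]^\cN$'' does not automatically restrict to the one- and two-dimensional affine slices you integrate over) and resolve it appropriately for the use made of the lemma here. Your closing remark already captures the right perspective: this lemma is background from~\cite{vondrak2013symmetry}, and the paper's own contribution lies in the companion Lemmata~\ref{lem:continuous_versions} and~\ref{lem:continuous_to_discrete}.
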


One can note that the above lemmata have the property that if the function $f$ plugged into Lemma~\ref{lem:original_continuous_versions} is monotone, then the discrete functions obtained by applying Lemma~\ref{lem:original_continuous_to_discrete} to the functions $\hat{F}$ and $\hat{G}$ are also monotone. This is the reason that the framework of~\cite{vondrak2013symmetry} applies to monotone functions (as well as general, not necessarily monotone, functions). Therefore, to get the proof of~\cite{vondrak2013symmetry} to yield Theorem~\ref{thm:symmetry_gap}, it suffices to prove the following two modified versions of Lemmata~\ref{lem:original_continuous_versions} and~\ref{lem:original_continuous_to_discrete}. These modified versions preserve $m$-monotonicity for any $m \in [0, 1]$, rather than just standard monotonicity.

\begin{lemma}[modified version of Lemma~\ref{lem:original_continuous_versions}] \label{lem:continuous_versions}
Consider a function $f\colon 2^\cN \to \nnR$ that is $m$-monotone and invariant under a group of permutations $\cG$ on the ground set $\cN$. Let $F(\vx)$ be the multilinear extension of $F$, define $\bar{x} = \bE_{\sigma \in \cG}[\characteristic_{\sigma(\vx)}]$ and fix any $\eps > 0$. Then, there is $\delta > 0$ and functions $\hat{F}, \hat{G} \colon [0, 1]^{\cN} \to \nnR$ (which are also symmetric with respect to $\cG$), satisfying the following:
\begin{compactenum}
	\item For all $\vx \in [0, 1]^\cN$, $\hat{G}(\vx) = \hat{F}(\bar{\vx})$.
	\item For all $\vx \in [0, 1]^\cN$, $|\hat{F}(\vx) - F(\vx)| \leq \eps$.
	\item Whenever $\|\vx - \bar{\vx}\|_2 \leq \delta$, $\hat{F}(\vx) = \hat{G}(\vx)$ and the value depends only on $\bar{\vx}$.
	\item For every two vectors $\vx, \vy \in [0, 1]^\cN$ obeying $\vx \leq \vy$, $m \cdot F(\vx) \leq F(\vy)$. \label{item:continuous_monotonicity_ratio}
	\item If $f$ is submodular then, for every two elements $u,v \in \cN$, $\frac{\partial^2\hat{F}}{\partial x_u \partial x_v} \leq 0$ and $\frac{\partial^2\hat{G}}{\partial x_u \partial x_v} \leq 0$ almost everywhere.
\end{compactenum}
\end{lemma}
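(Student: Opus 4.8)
The strategy is to keep Vondr\'ak's construction of $\hat{F}$ and $\hat{G}$ from the proof of Lemma~\ref{lem:original_continuous_versions} completely unchanged and to re-examine only which of its properties survive under the weaker hypothesis that $f$ is merely $m$-monotone rather than monotone. Recall that there $\hat{F}$ is obtained from the multilinear extension $F$ by (i) replacing $F$, on a $\delta$-neighborhood of the symmetric subspace $\Sigma = \{\vx \in [0,1]^\cN : \vx = \bar{\vx}\}$, with the symmetrized function $\vx \mapsto F(\bar{\vx})$, with a continuous interpolation in a slightly larger annular region, and then (ii) mollifying the result against a smooth, compactly supported probability kernel so that the first partial derivatives become absolutely continuous; finally $\hat{G}(\vx) \triangleq \hat{F}(\bar{\vx})$. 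Items~1--3 of the lemma, the absolute continuity of the first derivatives, and the submodularity item (item~5 here, item~6 in Lemma~\ref{lem:original_continuous_versions}) are established in~\cite{vondrak2013symmetry} using only non-negativity, submodularity and $\cG$-invariance of $f$, so they transfer verbatim, and $\delta$ together with the mollification radius are chosen exactly as there. Hence the only genuinely new content is item~\ref{item:continuous_monotonicity_ratio}: the continuous $m$-monotonicity $m \cdot F(\vx) \le F(\vy)$ for $\vx \le \vy$ (and, if one also wants it for the smoothed functions, $m \hat{F}(\vx) \le \hat{F}(\vy)$ and $m \hat{G}(\vx) \le \hat{G}(\vy)$).

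First I would prove the inequality for the multilinear extension itself. Given $\vx \le \vy$, draw one uniform variable $\theta_u \in [0,1]$ per element $u \in \cN$ and let $\RSet(\vx)$ (resp.\ $\RSet(\vy)$) contain $u$ iff $\theta_u \le x_u$ (resp.\ $\theta_u \le y_u$); this is a valid realization of both random sets and always satisfies $\RSet(\vx) \subseteq \RSet(\vy)$, so $m \cdot f(\RSet(\vx)) \le f(\RSet(\vy))$ holds pointwise by $m$-monotonicity of $f$, and taking expectations yields $m \cdot F(\vx) \le F(\vy)$. This inequality is preserved by the two ``benign'' operations used in the construction: the averaging map $\vx \mapsto \bar{\vx}$ is linear and order-preserving, so $\vx \mapsto F(\bar{\vx})$ inherits it; and mollification is an average of translates, $\vx \mapsto \bE_{\vt}[\Phi(\vx - \vt)]$, for which the inequality for $\Phi$ applied at each fixed shift $\vt$ (using $\vx - \vt \le \vy - \vt$) integrates to the same inequality for the mollified function, provided $\Phi$ is extended slightly past $[0,1]^\cN$ in the way used by~\cite{vondrak2013symmetry}.

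The main obstacle is the remaining, non-benign ingredient: the replacement of $F$ by $F(\bar{\cdot})$ near $\Sigma$. This is precisely the step at which Vondr\'ak's argument cannot simply be copied, since there ``monotone'' means the pointwise condition $\partial F/\partial x_u \ge 0$, which is preserved by any freezing, interpolation or convex combination, whereas the monotonicity ratio is a global relation between the values at two ordered points with no single-derivative description. The plan is to verify $m \cdot \Phi(\vx) \le \Phi(\vy)$ directly for the pre-mollification function $\Phi$ (equal to $F$ away from $\Sigma$, to $F(\bar{\cdot})$ close to $\Sigma$, and interpolating between them on the annular region) by splitting into cases according to which regions contain $\vx$ and $\vy$: the cases in which $\vx$ and $\vy$ lie in the same regime reduce to the $m$-monotonicity of $F$ or of $F(\bar{\cdot})$ already established, and the only delicate case is $\vx$ in the frozen/interpolation region and $\vy$ outside it, where the value at $\vx$ is governed by $F(\bar{\vx})$ while that at $\vy$ is governed by $F(\vy)$ and $\bar{\vx}$ need not be coordinatewise below $\vy$. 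I expect this case to be handled by choosing $\delta$ small enough that throughout the annular region $F$, $F(\bar{\cdot})$ and their interpolant all lie within the error budget $\eps$ of item~2 of one another, so that the inequality, which already holds between the $F$-values at the two endpoints, degrades only by a controlled amount — and closing the small remaining gap is exactly the technical work this section carries out. (Alternatively, one may prove item~\ref{item:continuous_monotonicity_ratio} only for $F$, as literally stated, and let the companion modified version of Lemma~\ref{lem:original_continuous_to_discrete} absorb the passage through the smoothing; either way, this gluing near $\Sigma$ is where essentially all the difficulty lies.) Finally, the statement for $\hat{G}$ follows from that for $\hat{F}$ together with $\hat{G}(\vx) = \hat{F}(\bar{\vx})$ and the order-preservation of $\vx \mapsto \bar{\vx}$.
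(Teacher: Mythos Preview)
Your high-level plan (reuse Vondr\'ak's construction, transfer items~1--3 and~5 verbatim, and prove only the new $m$-monotonicity property) and your coupling argument for $m\cdot F(\vx)\le F(\vy)$ are both correct and match what the paper does. However, your description of the construction is off in a way that matters: there is no mollification step. Vondr\'ak's $\hat{F}$ is
\[
\hat{F}(\vx)=\tilde{F}(\vx)+256M|\cN|\alpha\,J(\vx),\qquad
\tilde{F}(\vx)=(1-\phi(D(\vx)))F(\vx)+\phi(D(\vx))G(\vx),
\]
where $J(\vx)=|\cN|^2+3|\cN|\sum_u x_u-(\sum_u x_u)^2$ is an explicit function with $\partial J/\partial x_u\ge|\cN|$ everywhere. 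The additive $J$ term is not cosmetic; it is precisely the device that makes the $m$-monotonicity argument go through.

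Your case-analysis plan for the interpolation has a genuine gap. In the hard case ($\vx$ near $\Sigma$, $\vy$ far from it) the best you can extract from the $\eps$-closeness of $F$ and $G$ is an \emph{approximate} inequality $m\hat{F}(\vx)\le\hat{F}(\vy)+O(\eps)$, and no choice of $\delta$ removes that slack: the inequality must hold exactly for the downstream discretization (Lemma~\ref{lem:continuous_to_discrete}) to produce an $m$-monotone set function. Your ``alternative'' of proving item~\ref{item:continuous_monotonicity_ratio} only for $F$ is also not viable, since Lemma~\ref{lem:continuous_to_discrete} is applied to $\hat{F}$ and $\hat{G}$, not to $F$; the literal ``$F$'' in item~\ref{item:continuous_monotonicity_ratio} is a typo for $\hat{F},\hat{G}$.

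The paper's route avoids the case analysis entirely via the decomposition $\hat{F}=F+(\hat{F}-F)$. It shows (by bounding $\nabla[\phi(D)(G-F)]$ uniformly by $72\alpha M|\cN|^{3/2}$ and using $\partial J/\partial x_u\ge|\cN|$) that $\hat{F}-F$ has everywhere non-negative partial derivatives, i.e., is genuinely monotone. Then for $\vx\le\vy$ one has $\hat{F}(\vy)-F(\vy)\ge\hat{F}(\vx)-F(\vx)\ge 0$, and combining with $m F(\vx)\le F(\vy)$ gives
\[
m\hat{F}(\vx)\le mF(\vx)+m\bigl(\hat{F}(\vy)-F(\vy)\bigr)\le F(\vy)+\bigl(\hat{F}(\vy)-F(\vy)\bigr)=\hat{F}(\vy).
\]
The same decomposition (with $G$ in place of $\tilde{F}$) handles $\hat{G}$. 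So the missing idea is not ``close the gap with the $\eps$ budget'' but rather ``the additive $J$ term makes $\hat{F}-F$ exactly monotone, and monotone plus $m$-monotone is $m$-monotone.''
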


\begin{lemma}[modified version of Lemma~\ref{lem:original_continuous_to_discrete}] \label{lem:continuous_to_discrete}
Let $n$ be a positive integer, and let $F \colon [0, 1]^\cN \to \bR$ and $X = [n]$. If we define $f\colon 2^{\cN \times X} \to \nnR$ so
that $f(S) = F(\vx)$, where $x_u = \frac{1}{n}|S \cap (\{u\} \times X)|$. Then,
\begin{compactenum}
	\item if for some value $m \in [0, 1]$ the inequality $m \cdot F(\vx) \leq F(\vy)$ holds for any two vectors $\vx, \vy \in [0, 1]^\cN$ that obey $\vx \leq \vy$, then $f$ is $m$-monotone,
	\item and if the first partial derivatives of $F$ are absolutely continuous and $\frac{\partial^2 F}{\partial x_u \partial x_v} \leq 0$ almost everywhere for all elements $u, v \in \cN$, then $f$ is submodular.
\end{compactenum}
\end{lemma}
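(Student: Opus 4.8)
The plan is to notice that the second item of the lemma is exactly the submodularity half of Lemma~\ref{lem:original_continuous_to_discrete} (Lemma~3.1 of~\cite{vondrak2013symmetry}): that statement, and its proof, make no reference to monotonicity, so it can be invoked verbatim here. Thus essentially all of the work lies in establishing the first item, and that turns out to be short because the map sending a set $S \subseteq \cN \times X$ to its ``profile'' vector $\vx$ with $x_u = \tfrac1n|S \cap (\{u\}\times X)|$ is order preserving.

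Concretely, I would fix two arbitrary sets $S \subseteq T \subseteq \cN \times X$ and let $\vx, \vy \in [0,1]^\cN$ be the profile vectors of $S$ and $T$, respectively, so that $f(S) = F(\vx)$ and $f(T) = F(\vy)$ by the definition of $f$. Since $S \subseteq T$, for every $u \in \cN$ we have $S \cap (\{u\}\times X) \subseteq T \cap (\{u\}\times X)$, and hence $x_u \le y_u$; that is, $\vx \le \vy$ coordinate-wise. Applying the hypothesis of the first item to this pair yields $m \cdot F(\vx) \le F(\vy)$, i.e., $m \cdot f(S) \le f(T)$. As $S \subseteq T$ were arbitrary and $f$ is non-negative (its codomain is $\nnR$; in the applications this holds because $F$ equals $\hat F$ or $\hat G$, which are non-negative by Lemma~\ref{lem:continuous_versions}), this is precisely the statement that the monotonicity ratio of $f$ is at least $m$, i.e., that $f$ is $m$-monotone.

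I do not expect a genuine obstacle: this is the ``easy'' half of the symmetry-gap machinery, and the order-preserving nature of the profile map makes the monotonicity-ratio inequality transfer directly from $F$ to $f$ with no loss. The only point to handle with a little care is that the monotonicity ratio is defined only for non-negative functions, so one should keep track of the ambient non-negativity of $F$ (which is in force in every use of the lemma, where $F \in \{\hat F, \hat G\}$ maps into $\nnR$). The substantive adaptation of Vondr\'{a}k's framework is concentrated in Lemma~\ref{lem:continuous_versions}, not in this one.
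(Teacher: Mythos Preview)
Your proposal is correct and essentially identical to the paper's proof: the paper also invokes Lemma~\ref{lem:original_continuous_to_discrete} verbatim for the second item, and for the first item it associates to $S \subseteq T$ their profile vectors $\vx^{(S)} \leq \vx^{(T)}$ and applies the hypothesis $m \cdot F(\vx^{(S)}) \leq F(\vx^{(T)})$ directly. Your remark about non-negativity is an extra observation the paper does not make explicit, but the argument is otherwise the same.
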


The proof of Lemma~\ref{lem:continuous_versions} is quite long and appears below. However, before getting to this proof, let first give the much simpler proof of Lemma~\ref{lem:continuous_to_discrete}.

\begin{proof}[Proof of Lemma~\ref{lem:continuous_to_discrete}]
The second point in Lemma~\ref{lem:continuous_to_discrete} follows immediately from Lemma~\ref{lem:original_continuous_to_discrete}, so we concentrate on proving the first point. In other words, we assume that $m \cdot F(\vx) \leq F(\vy)$ for every two vectors $\vx, \vy \in [0, 1]^\cN$ obeying $\vx \leq \vy$, and we need to show that $m \cdot f(S) \leq f(T)$ for every two sets $S \subseteq T \subseteq \cN$.

Let us define two vectors $\vx^{(S)}, \vx^{(T)} \subseteq [0, 1]^\cN$ as follows. For every $u \in \cN$,
\[
	x^{(S)}_u = \frac{1}{n}|S \cap (\{u\} \times X)|
	\quad
	\text{and}
	\quad
	x^{(T)}_u = \frac{1}{n}|T \cap (\{u\} \times X)|
	\enspace.
\]
Since $S \subseteq T$, we get $\vx^{(S)} \leq \vx^{(T)}$, which implies $m \cdot F(\vx^{(S)}) \leq F(\vx^{(T)})$; and the last inequality proves the lemma since $f(S) = F(\vx^{(S)})$ and $f(T) = F(\vx^{(T)})$ by the definition of $f$.
\end{proof}

We now get to the proof of Lemma~\ref{lem:continuous_versions}. We use in this proof functions $\hat{F}$ and $\hat{G}$ that are similar to the ones constructed by Vondr\'{a}k~\cite{vondrak2013symmetry} in the proof of Lemma~\ref{lem:original_continuous_versions}. Specifically, like in the proof of~\cite{vondrak2013symmetry}, we define 
\[
	\hat{G}(\vx) = G(\vx) + 256M |\cN|\alpha J(\vx)
	\enspace,
\]
where $M$ is the maximum value that the function $f$ takes on any set, $G$ is a symmetrized version of the multilinear extension $F$ of $f$ defined as $G(\vx) = F(\bar{\vx})$, $J(\vx) \triangleq |\cN|^2 + 3|\cN| \cdot \sum_{u \in \cN} x_u - \left(\sum_{u \in \cN} x_u\right)^2$, and $\alpha$ is a positive value that is independent of $\vx$. Similarly, the function $\hat{F}$ was defined by Vondr\'{a}k~\cite{vondrak2013symmetry} as
\[
	\hat{F}(\vx) = \tilde{F}(\vx) + 256M |\cN|\alpha J(\vx)
	\enspace,
\]
where the function $\tilde{F}$ interpolates between the multilinear extension $F$ of $f$ and its symmetrized version $G$, and is given by
\[
	\tilde{F}(\vx) = (1 - \phi(D(\vx))) \cdot F(\vx) + \phi(D(\vx)) \cdot G(\vx)
	\enspace.
\]
Here, $D(\vx) \triangleq \|\vx - \bar{\vx}\|_2^2$, and $\phi \colon \nnR \to [0, 1]$ is a function which is defined using the following lemma.
\begin{lemma}[Lemma~3.7 of~\cite{vondrak2013symmetry}] \label{lem:phi_properties}
For any $\alpha, \beta > 0$, there is $\delta > (0, \beta)$ and a function $\phi\colon \nnR \to [0, 1]$ with an absolutely continuous first derivative such that
\begin{compactitem}
	\item For $t \in [0, \delta]$, $\phi(t) = 1$.
	\item For $t \geq \beta$, $\phi(t) < e^{-1/\alpha}$.
	\item For all $t \geq 0$, $|t\phi'(t)| \leq 4\alpha$.
	\item For almost all $t \geq 0$, $|t^2\phi''(t)| \leq 10\alpha$.
\end{compactitem}
\end{lemma}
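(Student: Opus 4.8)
The plan is to pass to logarithmic coordinates, where the two derivative constraints collapse to plain sup-norm bounds. Writing $s=\ln t$ and $\psi(s)\triangleq\phi(e^s)$, the chain rule gives $\psi'(s)=t\phi'(t)$ and $\psi''(s)=t^2\phi''(t)+t\phi'(t)$, hence $t\phi'(t)=\psi'(s)$ and $t^2\phi''(t)=\psi''(s)-\psi'(s)$. So it suffices to construct a function $\psi\colon\bR\to[0,1]$ that is $C^1$ with a Lipschitz (hence absolutely continuous) derivative, equals $1$ on $(-\infty,s_0]$, equals a constant $c\in(0,e^{-1/\alpha})$ on $[s_1,\infty)$ for some $s_1>s_0$, and satisfies $\|\psi'\|_\infty\le 4\alpha$ and $\|\psi''\|_\infty\le 6\alpha$; then $|t^2\phi''(t)|\le\|\psi''\|_\infty+\|\psi'\|_\infty\le 10\alpha$ for almost every $t$, as needed.

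Such a $\psi$ can be taken to be a rescaled copy of a fixed reference transition. First I would fix, once and for all, a non-increasing function $h\colon\bR\to[0,1]$ that is $C^1$ with a Lipschitz derivative and obeys $h\equiv 1$ on $(-\infty,0]$ and $h\equiv 0$ on $[1,\infty)$ (for instance the smoothstep $h(s)=1-3s^2+2s^3$ on $[0,1]$); then $\|h'\|_\infty$ and $\|h''\|_\infty$ are absolute constants. Now set $c\triangleq e^{-1/\alpha}/2$ and $\psi(s)\triangleq c+(1-c)\,h((s-s_0)/L)$, where $L>0$ is chosen large enough --- depending only on $\alpha$, through $c$ and the fixed constants $\|h'\|_\infty$ and $\|h''\|_\infty$ --- that $\|\psi'\|_\infty=(1-c)\|h'\|_\infty/L\le 4\alpha$ and $\|\psi''\|_\infty=(1-c)\|h''\|_\infty/L^2\le 6\alpha$. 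Then $\psi$ decreases monotonically from $1$ to $c$ over $[s_0,s_1]$ with $s_1\triangleq s_0+L$, so $\psi$ takes values in $[c,1]\subseteq[0,1]$ and its derivative is Lipschitz.

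It remains to translate back. I would set $\delta\triangleq\beta e^{-L}$ --- which lies in $(0,\beta)$ and is the $\delta$ promised by the lemma --- take $s_0=\ln\delta$ (so that $s_1=\ln\delta+L=\ln\beta$), and define $\phi(t)=\psi(\ln t)$ for $t>0$ and $\phi(0)=1$. Since $\psi\equiv 1$ on $(-\infty,\ln\delta]$, we get $\phi\equiv 1$ on $[0,\delta]$ (with the correct value and one-sided derivative at $0$); since $\psi\equiv c$ on $[\ln\beta,\infty)$, we get $\phi(t)=c<e^{-1/\alpha}$ for $t\ge\beta$; and $\phi$ takes values in $[c,1]\subseteq[0,1]$. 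Smoothness transfers because $\phi'(t)=\psi'(\ln t)/t$ vanishes on $[0,\delta]$ and on $[\beta,\infty)$, and on the compact interval $[\delta,\beta]$ it is a product and composition of bounded Lipschitz functions, hence Lipschitz there; so $\phi'$ is Lipschitz on all of $[0,\infty)$, in particular absolutely continuous. Finally, the identities of the first paragraph yield $|t\phi'(t)|=|\psi'(\ln t)|\le 4\alpha$ for every $t$ and $|t^2\phi''(t)|=|\psi''(\ln t)-\psi'(\ln t)|\le 6\alpha+4\alpha=10\alpha$ for almost every $t$, completing the verification of all four properties.

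The one step that is more than bookkeeping --- and the reason the statement grants the freedom to pick $\delta$ arbitrarily small --- is the quantitative feasibility of the middle step: a $C^1$ drop from $1$ to below $e^{-1/\alpha}$ with $|\psi'|\le 4\alpha$ must occupy an $s$-interval of length at least $(1-e^{-1/\alpha})/(4\alpha)$, so one is forced to have $\ln\beta-\ln\delta$ at least this large, which is exactly what the choice $\delta=\beta e^{-L}$ (with $L$ at least this length) encodes. Everything else --- continuity at $t=0$, the matching of one-sided derivatives at $\delta$ and at $\beta$, and the measurability of $\phi''$ --- is routine, and the argument uses nothing about submodularity, monotonicity, or the ground set: Lemma~\ref{lem:phi_properties} is a purely one-dimensional real-analysis fact.
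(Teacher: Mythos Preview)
The paper does not prove this lemma at all: it is quoted verbatim as Lemma~3.7 of~\cite{vondrak2013symmetry} and used as a black box in the construction of $\hat F$ and $\hat G$, so there is no ``paper's own proof'' to compare against. Your proposal is a correct self-contained proof. The logarithmic change of variables $s=\ln t$, $\psi(s)=\phi(e^s)$ is exactly the right move, since it turns the weighted derivative bounds $|t\phi'(t)|\le 4\alpha$ and $|t^2\phi''(t)|\le 10\alpha$ into the sup-norm bounds $\|\psi'\|_\infty\le 4\alpha$ and $\|\psi''-\psi'\|_\infty\le 10\alpha$; your smoothstep-based construction of $\psi$ then meets these with room to spare, and the transfer back to $\phi$ (including the absolute continuity of $\phi'$ via piecewise Lipschitzness) is handled carefully.
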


Vondr\'{a}k~\cite{vondrak2013symmetry} proved that the above functions $\hat{F}$ and $\hat{G}$ have all the properties guaranteed by Lemma~\ref{lem:original_continuous_versions} for the $\delta$ whose existence is guaranteed by Lemma~\ref{lem:phi_properties} when the values of $\alpha$ and $\beta$ are set to be $\alpha = \frac{\eps}{2000M|\cN|^3}$ and $\beta = \frac{\eps}{16M|\cN|}$.
Moreover, the proof of~\cite{vondrak2013symmetry} continues to work as long as $\alpha \leq \frac{\eps}{2000M|\cN|^3}$ and $\beta \leq \frac{\eps}{16M|\cN|}$. Therefore, we assume below that $\alpha = \min\{1, \frac{\eps}{2000M|\cN|^3}\}$ and $\beta = \min\{\alpha^2, \frac{\eps}{16M|\cN|}\}$, and we prove only the part of Lemma~\ref{lem:continuous_versions} that is not stated in the guarantees of Lemma~\ref{lem:original_continuous_versions}, which is Property~\ref{item:continuous_monotonicity_ratio} of the lemma. We begin by showing that the function $\hat{G}$ indeed has this property.
\begin{lemma} \label{lem:G_obeys_property}
For every two vectors $\vx, \vy \in [0, 1]^\cN$ obeying $\vx \leq \vy$, $m \cdot \hat{G}(\vx) \leq \hat{G}(\vy)$.
\end{lemma}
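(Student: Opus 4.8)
The plan is to exploit the fact that $\hat{G}$ is, up to a monotone correction term, just a symmetrized copy of the multilinear extension $F$, and that none of the three operations involved — symmetrization, taking the multilinear extension, and adding the correction $J$ — can destroy the monotonicity-ratio inequality. Recall that $\hat{G}(\vx) = G(\vx) + 256M|\cN|\alpha\, J(\vx)$, where $G(\vx) = F(\bar{\vx})$, $J(\vx) = |\cN|^2 + 3|\cN| \sum_{u \in \cN} x_u - \bigl(\sum_{u \in \cN} x_u\bigr)^2$, and the coefficient $256M|\cN|\alpha$ is non-negative. The first step is therefore to reduce the claim to two separate inequalities: (i) $m \cdot G(\vx) \le G(\vy)$, and (ii) $J(\vx) \le J(\vy)$. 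These suffice, because they give
\[
	m \cdot \hat{G}(\vx) = m \cdot G(\vx) + 256M|\cN|\alpha \cdot m \cdot J(\vx) \le G(\vy) + 256M|\cN|\alpha \cdot J(\vy) = \hat{G}(\vy)\enspace,
\]
where I also use $m \le 1$ and $256M|\cN|\alpha \ge 0$ to pass from $m \cdot J(\vx)$ to $J(\vx) \le J(\vy)$.

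For (ii), I would write $s(\vx) \triangleq \sum_{u \in \cN} x_u \in [0, |\cN|]$, observe that $J(\vx)$ depends on $\vx$ only through $s(\vx)$, and note that on $[0, |\cN|]$ the map $s \mapsto |\cN|^2 + 3|\cN|s - s^2$ has derivative $3|\cN| - 2s \ge |\cN| > 0$, hence is strictly increasing (and everywhere at least $|\cN|^2 > 0$, which I record since it is used implicitly). Since $\vx \le \vy$ gives $s(\vx) \le s(\vy)$, inequality (ii) is immediate.

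For (i), the key observation is that symmetrization preserves the coordinate-wise order: for any fixed $\sigma \in \cG$, the vector $\sigma(\vx)$ is simply $\vx$ with its coordinates permuted, so $\vx \le \vy$ forces $\sigma(\vx) \le \sigma(\vy)$ coordinate-wise, and averaging over $\sigma$ yields $\bar{\vx} \le \bar{\vy}$. It then remains to show that the multilinear extension $F$ of the $m$-monotone function $f$ obeys $m \cdot F(\vp) \le F(\vq)$ whenever $\vp \le \vq$; applying this with $\vp = \bar{\vx}$ and $\vq = \bar{\vy}$ gives $m \cdot G(\vx) = m \cdot F(\bar{\vx}) \le F(\bar{\vy}) = G(\vy)$, which is (i). The inequality $m \cdot F(\vp) \le F(\vq)$ is exactly the continuous monotonicity-ratio statement of Property~\ref{item:continuous_monotonicity_ratio}; alternatively, it follows directly by a coupling argument — sample $\RSet(\vp)$ and $\RSet(\vq)$ jointly so that $\RSet(\vp) \subseteq \RSet(\vq)$ always (for each element independently, first decide membership in $\RSet(\vq)$ with the correct probability, then, conditioned on that, membership in $\RSet(\vp)$ with the appropriate conditional probability), apply $m$-monotonicity of $f$ pointwise, and take expectations.

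I do not expect a genuine obstacle here: the argument is essentially a verification. The only points needing a moment of care are checking that the correction term $J$ is itself order-monotone (so that multiplying it by $m \le 1$ cannot spoil anything) and that $\vx \mapsto \bar{\vx}$ respects the coordinate-wise order; both are elementary. The only substantive ingredient is that the multilinear extension inherits the monotonicity-ratio inequality, and once Property~\ref{item:continuous_monotonicity_ratio} (or the coupling above) is available, the present lemma follows at once.
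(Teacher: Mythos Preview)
Your proposal is correct and follows essentially the same route as the paper: decompose $\hat{G}$ into $G + 256M|\cN|\alpha J$, handle $J$ by noting its derivative in the direction of each coordinate is $3|\cN| - 2\sum_v x_v \ge |\cN| > 0$, and handle $G$ via the coupling $\RSet(\bar{\vx}) \subseteq \RSet(\bar{\vy})$ (enabled by $\bar{\vx} \le \bar{\vy}$) together with the $m$-monotonicity of $f$. One small caution: your appeal to Property~\ref{item:continuous_monotonicity_ratio} is potentially circular, since that property is precisely what the present lemma (and its companion for $\hat{F}$) is meant to establish; fortunately your alternative coupling argument is exactly what the paper uses and makes the proof self-contained.
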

\begin{proof}
Consider the random sets $\RSet(\bar{\vx})$ and $\RSet(\bar{\vy})$. Since $\bar{\vx} \leq \bar{\vy}$, the set $\RSet(\bar{\vy})$ stochastically dominates $\RSet(\bar{\vx})$. In other words, one can correlate the randomness of these sets in a way that does not alter their distributions, but guarantees that the inclusion $\RSet(\bar{\vx}) \subseteq \RSet(\bar{\vy})$ holds deterministically. Assuming this done, we get
\begin{equation} \label{eq:G_inequality}
	m \cdot G(\vx)
	=
	m \cdot F(\bar{\vx})
	=
	m \cdot \bE[f(\RSet(\bar{\vx}))]
	\leq
	\bE[f(\RSet(\bar{\vy}))]
	=
	F(\bar{\vy})
	=
	G(\vy)
	\enspace,
\end{equation}
where the inequality follows from the linearity of the expectation and the $m$-monotonicity of $f$.

Observe now that for every element $u \in \cN$, the partial derivative of $J$ with respect to $z_u$ at any point $\vz \in [0, 1]^\cN$ is
\[
	\frac{\partial J(\vz)}{\partial z_u}
	=
	3|\cN| - 2 \sum_{v \in \cN} z_v
	\geq
	|\cN|
	\geq
	0
	\enspace.
\]
Hence, the inequality $\vx \leq \vy$ implies $m \cdot J(\vx) \leq J(\vx) \leq J(\vy)$. Together with Inequality~\eqref{eq:G_inequality}, this implies the lemma.
\end{proof}

One can observe that the arguments used to prove Inequality\eqref{eq:G_inequality} in the proof of the last lemma also show that $m \cdot F(\vx) \leq F(\vy)$, which is a fact that we use below. However, proving that $\hat{F}$ also has this property (and therefore, obeys Property~\ref{item:continuous_monotonicity_ratio} of Lemma~\ref{lem:continuous_versions}) is more involved. As a first step towards this goal, we bound the gradient of
\[
	\tilde{F}(\vx) - F(\vx)
	=
	\phi(D(\vx)) \cdot [G(\vx) - F(\vx)]
	\enspace.
\]
The following lemma does that in the regime in which $D(\vx)$ is small, and the next lemma handles the other regime.
\begin{lemma} \label{lem:derivative_small_D}
For every element $u \in \cN$ and vector $\vx \in [0, 1]^\cN$ obeying $D(\vx) \leq \beta$, the absolute value of the partial derivative $\frac{\partial \{\phi(D(\vx)) \cdot [G(\vx) - F(\vx)]\}}{\partial x_u}$ is at most $72\sqrt{\beta}M|\cN| \leq 72\alpha M|\cN|$.
\end{lemma}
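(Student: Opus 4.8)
The plan is to apply the product rule to $\phi(D(\vx)) \cdot [G(\vx) - F(\vx)]$ and bound the two resulting terms separately, using throughout that $F$ is multilinear and $[0,M]$-valued and that $G(\vx) = F(\bar{\vx})$. The first preliminary step is the identity $\frac{\partial D(\vx)}{\partial x_u} = 2(x_u - \bar{x}_u)$: the averaging map $\vx \mapsto \bar{\vx} = \bE_{\sigma \in \cG}[\sigma(\vx)]$ is linear and represented by a symmetric idempotent matrix $\Pi$ (the average of the permutation matrices of $\cG$), so $D(\vx) = \|\vx - \bar{\vx}\|_2^2 = \vx^\top(\mathrm{Id} - \Pi)\vx$, whose gradient is $2(\mathrm{Id} - \Pi)\vx = 2(\vx - \bar{\vx})$. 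Writing $D = D(\vx)$, this gives
\[
	\frac{\partial \{\phi(D) \cdot [G(\vx) - F(\vx)]\}}{\partial x_u}
	=
	2\phi'(D) \cdot (x_u - \bar{x}_u) \cdot [G(\vx) - F(\vx)]
	+ \phi(D) \cdot \tfrac{\partial}{\partial x_u}[G(\vx) - F(\vx)]
	\enspace.
\]

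For the first term I would use three estimates: $|x_u - \bar{x}_u| \le \|\vx - \bar{\vx}\|_2 = \sqrt{D}$; the bound $|G(\vx) - F(\vx)| = |F(\bar{\vx}) - F(\vx)| \le M\sqrt{|\cN|}\sqrt{D}$, which follows because each $|\partial F/\partial x_u| \le M$ (a difference of two $[0,M]$-values), so $\|\nabla F\|_2 \le M\sqrt{|\cN|}$ everywhere and $F$ is $M\sqrt{|\cN|}$-Lipschitz; and the bound $|D\phi'(D)| \le 4\alpha$ from Lemma~\ref{lem:phi_properties}. The two factors of $\sqrt{D}$ combine with $\phi'(D)$ into $|D\phi'(D)|$, so the first term is at most $2M\sqrt{|\cN|}\cdot|D\phi'(D)| \le 8\alpha M\sqrt{|\cN|}$.

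The second term is the heart of the proof, because the crude estimate $|\partial G/\partial x_u|,|\partial F/\partial x_u| \le M$ does not decay with $D$. The key observation is that $\bar{\vx}$ is a fixed point of $\cG$ (since $\sigma\Pi = \Pi$ for every $\sigma \in \cG$), and since $F$ inherits the $\cG$-invariance of $f$, its gradient transforms as $\nabla F(\sigma\vz) = \sigma\nabla F(\vz)$; evaluating at $\vz = \bar{\vx}$ yields $\Pi\nabla F(\bar{\vx}) = \nabla F(\bar{\vx})$. Consequently $\frac{\partial G(\vx)}{\partial x_u} = [\Pi\nabla F(\bar{\vx})]_u = \frac{\partial F}{\partial x_u}(\bar{\vx})$, so $\frac{\partial}{\partial x_u}[G(\vx) - F(\vx)] = \frac{\partial F}{\partial x_u}(\bar{\vx}) - \frac{\partial F}{\partial x_u}(\vx)$. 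Now $\partial F/\partial x_u$ is again multilinear, and all of its partial derivatives are second partials of $F$, each bounded by $2M$ in absolute value (a signed sum of four $[0,M]$-values), so $\partial F/\partial x_u$ is $2M\sqrt{|\cN|}$-Lipschitz; hence this difference is at most $2M\sqrt{|\cN|}\cdot\|\bar{\vx} - \vx\|_2 = 2M\sqrt{|\cN|}\sqrt{D}$, and multiplying by $|\phi(D)| \le 1$ keeps the second term below the same value.

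Adding the two bounds and using $D \le \beta$ gives at most $8\alpha M\sqrt{|\cN|} + 2M\sqrt{|\cN|}\sqrt{\beta}$. Finally I would substitute the chosen parameters $\alpha = \min\{1, \eps/(2000M|\cN|^3)\}$ and $\beta = \min\{\alpha^2, \eps/(16M|\cN|)\}$: since $\beta \le \alpha^2$ we have $\sqrt{\beta}\le\alpha$, and a routine check with the explicit values confirms $\alpha\sqrt{|\cN|} \le \sqrt{\beta}\,|\cN|$ in all cases (in the generic regime $\beta = \alpha^2$ this is merely $\sqrt{|\cN|}\le|\cN|$), so the total is at most $10\sqrt{\beta}M|\cN| \le 72\sqrt{\beta}M|\cN|$, with $\sqrt{\beta}\le\alpha$ giving the stated weaker form $72\alpha M|\cN|$. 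The main obstacle is the second term: one must extract the $\cG$-symmetry of $\nabla F(\bar{\vx})$ from the invariance of $F$ together with the fixed-point property of $\bar{\vx}$, and then recognize that this collapses $\partial(G-F)/\partial x_u$ into a difference of values of the Lipschitz map $\partial F/\partial x_u$ at the two nearby points $\bar{\vx}$ and $\vx$; without this structural step the term carries no decay in $D$ and the lemma is false.
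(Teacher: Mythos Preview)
Your proof is correct and follows the same high-level structure as the paper's: expand via the product rule and bound the two terms separately. The paper, however, does not re-derive the estimates; it simply invokes Lemma~3.5 of Vondr\'{a}k~\cite{vondrak2013symmetry}, which states $|G(\vx)-F(\vx)|\le 8M|\cN|\,D(\vx)$ and $\|\nabla G(\vx)-\nabla F(\vx)\|_2\le 8M|\cN|\sqrt{D(\vx)}$, together with Lemma~3.6 of~\cite{vondrak2013symmetry} giving $\|\nabla D(\vx)\|_2=2\sqrt{D(\vx)}$. With these black boxes, the paper's first term is bounded by $|D\phi'(D)|\cdot 16M|\cN|\sqrt{D}\le 64\alpha\sqrt{\beta}M|\cN|$ and the second by $8\sqrt{\beta}M|\cN|$, summing to $72\sqrt{\beta}M|\cN|$ using $\alpha\le 1$.

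Your route replaces these citations by direct computations: the projection identity $\nabla D(\vx)=2(\vx-\bar\vx)$, a crude Lipschitz bound $|G-F|\le M\sqrt{|\cN|}\sqrt{D}$, and---most notably for the second term---the observation that $\cG$-equivariance of $\nabla F$ at the fixed point $\bar\vx$ forces $\Pi\nabla F(\bar\vx)=\nabla F(\bar\vx)$, whence $\partial_u G(\vx)=\partial_u F(\bar\vx)$ and the difference is controlled by the Lipschitz constant of $\partial_u F$. This is a clean, self-contained substitute for Vondr\'{a}k's Lemma~3.5 and even yields sharper constants ($10$ in place of $72$). The trade-off is that the paper's argument is shorter because it leans on existing machinery, whereas yours exposes exactly where the symmetry is used.
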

\begin{proof}
Observe that
\[
	\frac{\partial \{\phi(D(\vx)) \cdot [G(\vx) - F(\vx)]\}}{\partial x_u}
	=
	\phi'(D(\vx)) \cdot \frac{\partial D(\vx)}{\partial x_u} \cdot [G(\vx) - F(\vx)] + \phi(D(\vx)) \cdot \left[\frac{\partial G(\vx)}{\partial x_u} - \frac{\partial F(\vx)}{\partial x_u} \right]
	\enspace.
\]
To use this equation to bound the absolute value of the left hand side, we need to make some observations. First, Lemma~3.6 of~\cite{vondrak2013symmetry} shows that $\|\nabla D(\vx)\|_2 = 2\sqrt{D(\vx)}$, which implies
\[
	\frac{\partial D(\vx)}{\partial x_u}
	\leq
	\|\nabla D(\vx)\|_2
	=
	2\sqrt{D(\vx)}
	\enspace.
\]
Additionally, Lemma~3.5 of~\cite{vondrak2013symmetry} shows that
$
	|G(\vx) - F(\vx)| \leq 8M|\cN| \cdot D(\vx)
$,
and therefore,
\begin{align*}
	\left|\phi'(D(\vx)) \cdot \frac{\partial D(\vx)}{\partial x_u} \cdot [G(\vx) - F(\vx)]\right|
	\leq{} &
	|\phi'(D(\vx))| \cdot \left|\frac{\partial D(\vx)}{\partial x_u}\right| \cdot |G(\vx) - F(\vx)|\\
	\leq{} &
	|\phi'(D(\vx))| \cdot 2\sqrt{D(\vx)} \cdot 8M|\cN| \cdot D(\vx)\\
	={} &
	|D(\vx) \cdot \phi'(D(\vx))| \cdot 16M|\cN| \cdot \sqrt{D(\vx)}
	\leq
	64 \alpha \sqrt{\beta} M |\cN|
	\enspace,
\end{align*}
where the second inequality follows from Lemma~\ref{lem:phi_properties} and our assumption that $D(\vx) \leq \beta$.

We now observe that
\begin{align*}
	\left|\phi(D(\vx)) \cdot \left[\frac{\partial G(\vx)}{\partial x_u} - \frac{\partial F(\vx)}{\partial x_u} \right]\right|
	={} &
	\phi(D(\vx)) \cdot \left|\frac{\partial G(\vx)}{\partial x_u} - \frac{\partial F(\vx)}{\partial x_u} \right|\\
	\leq
	\phi(D(\vx)) \cdot\|\nabla G(\vx) - \nabla F(\vx)\|_2{} &
	\leq
	\phi(D(\vx)) \cdot8M|\cN| \cdot \sqrt{D(\vx)}
	\leq
	8\sqrt{\beta} M|\cN|
	\enspace,
\end{align*}
where the second inequality holds since Lemma~3.5 of~\cite{vondrak2013symmetry} shows that $\|\nabla G(\vx) - F(\vx)\|_2 \leq 8M|\cN| \cdot \sqrt{D(\vx)}$; and the last inequality holds by our assumption that $D(\vx) \leq \beta$ and by recalling that the range of $\phi$ is $[0, 1]$.

Combining all the above yields
\[
	\left|\frac{\partial \{\phi(D(\vx)) \cdot [G(\vx) - F(\vx)]\}}{\partial x_u}\right|
	\leq
	64 \alpha\sqrt{\beta} M |\cN| + 8\sqrt{\beta} M|\cN|
	\leq
	72\sqrt{\beta} M |\cN|
	\enspace,
\]
where the second inequality holds since $\alpha \leq 1$.
\end{proof}

\begin{lemma}
For every element $u \in \cN$ and vector $\vx \in [0, 1]^\cN$ obeying $D(\vx) \geq \beta$, the absolute value of the partial derivative $\frac{\partial \{\phi(D(\vx)) \cdot [G(\vx) - F(\vx)]\}}{\partial x_u}$ is at most $72\alpha M|\cN|^{3/2}$.
\end{lemma}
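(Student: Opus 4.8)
The plan is to follow the structure of the proof of Lemma~\ref{lem:derivative_small_D}, splitting the derivative by the product and chain rules into
\[
	\frac{\partial \{\phi(D(\vx)) \cdot [G(\vx) - F(\vx)]\}}{\partial x_u} = \phi'(D(\vx)) \cdot \frac{\partial D(\vx)}{\partial x_u} \cdot [G(\vx) - F(\vx)] + \phi(D(\vx)) \cdot \left[\frac{\partial G(\vx)}{\partial x_u} - \frac{\partial F(\vx)}{\partial x_u}\right]
\]
and bounding the two summands separately. Throughout I would reuse three facts from~\cite{vondrak2013symmetry}: $\|\nabla D(\vx)\|_2 = 2\sqrt{D(\vx)}$ (Lemma~3.6), $|G(\vx) - F(\vx)| \leq 8M|\cN| \cdot D(\vx)$ together with $\|\nabla G(\vx) - \nabla F(\vx)\|_2 \leq 8M|\cN| \cdot \sqrt{D(\vx)}$ (Lemma~3.5), and the elementary observation that $D(\vx) = \|\vx - \bar\vx\|_2^2 \leq |\cN|$ because every coordinate of $\vx$ and of $\bar\vx$ lies in $[0, 1]$, so $\sqrt{D(\vx)} \leq \sqrt{|\cN|}$.

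For the first summand, I would not use the hypothesis $D(\vx) \geq \beta$ at all. Exactly as in the proof of Lemma~\ref{lem:derivative_small_D}, I would bound its absolute value by $|D(\vx)\phi'(D(\vx))| \cdot 16M|\cN| \cdot \sqrt{D(\vx)}$, apply the bound $|t\phi'(t)| \leq 4\alpha$ from Lemma~\ref{lem:phi_properties}, and then use $\sqrt{D(\vx)} \leq \sqrt{|\cN|}$; this gives at most $64\alpha M|\cN|^{3/2}$.

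For the second summand, the hypothesis $D(\vx) \geq \beta$ becomes essential: since $\beta$ is precisely the threshold with which Lemma~\ref{lem:phi_properties} was invoked, we get $\phi(D(\vx)) < e^{-1/\alpha}$. Combining this with $\|\nabla G(\vx) - \nabla F(\vx)\|_2 \leq 8M|\cN|\sqrt{D(\vx)} \leq 8M|\cN|^{3/2}$ bounds the second summand by $8e^{-1/\alpha}M|\cN|^{3/2}$. The final ingredient is the elementary inequality $e^{-1/\alpha} \leq \alpha$, valid for all $\alpha \in (0, 1]$ because $t \geq \ln t$ for every $t > 0$ with $t = 1/\alpha$; this upgrades the bound to $8\alpha M|\cN|^{3/2}$. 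Adding the two estimates yields the claimed bound of $72\alpha M|\cN|^{3/2}$.

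I do not anticipate a genuine obstacle here: the proof is routine given the toolbox already assembled. The only conceptual point is that the roles of the two relevant bullets of Lemma~\ref{lem:phi_properties} swap between the two regimes --- in the small-$D$ regime of Lemma~\ref{lem:derivative_small_D} the $\phi'$-term was controlled because $\sqrt{\beta}$ is tiny and the $\phi$-term trivially because $\phi \leq 1$, whereas here the $\phi'$-term is controlled uniformly via $|t\phi'(t)| \leq 4\alpha$ and the $\phi$-term instead needs the exponential decay $\phi(t) < e^{-1/\alpha}$ for $t \geq \beta$. The one thing I would double-check is that the ``$\beta$'' in the hypothesis is indeed the parameter fed into Lemma~\ref{lem:phi_properties} (it is, by the choice $\beta = \min\{\alpha^2, \tfrac{\eps}{16M|\cN|}\}$ fixed just before Lemma~\ref{lem:G_obeys_property}), so that $\phi(D(\vx)) < e^{-1/\alpha}$ is legitimately available whenever $D(\vx) \geq \beta$.
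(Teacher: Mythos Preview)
Your proposal is correct and follows essentially the same route as the paper: split the derivative into the same two summands, bound the first by $64\alpha M|\cN|^{3/2}$ via $|t\phi'(t)|\leq 4\alpha$ and $\sqrt{D(\vx)}\leq\sqrt{|\cN|}$, and bound the second by $8\alpha M|\cN|^{3/2}$ via $\phi(D(\vx))<e^{-1/\alpha}\leq\alpha$ (using $\alpha\leq 1$) together with the gradient bound from Lemma~3.5 of~\cite{vondrak2013symmetry}. Your write-up is in fact a bit more explicit than the paper's about why $e^{-1/\alpha}\leq\alpha$ holds and about which bullet of Lemma~\ref{lem:phi_properties} is being invoked where.
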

\begin{proof}
Repeating the proof of Lemma~\ref{lem:derivative_small_D}, except for the use of the inequality $D(\vx) \leq \beta$ (which does not hold in the current lemma) and the inequality $\phi(\vx) \leq 1$ (which too weak for our current purpose), we get
\[
	\left|\phi(D(\vx)) \cdot \left[\frac{\partial G(\vx)}{\partial x_u} - \frac{\partial F(\vx)}{\partial x_u} \right]\right|
	\leq
	64\alpha M|\cN| \cdot \sqrt{D(\vx)} + |\phi(D(\vx))| \cdot8M|\cN| \cdot \sqrt{D(\vx)}
	\enspace.
\]
The expression $\phi(D(\vx))$ can be upper bounded by $e^{-1/\alpha} \leq \alpha$ by Lemma~\ref{lem:phi_properties}. Also, $D(\vx) = \|\vx - \bar{\vx}\|_2^2 \leq |\cN|$. The lemma now follows by plugging these two upper bounds into the previous inequality.
\end{proof}

\begin{corollary} \label{cor:derivatives_bound}
For every element $u \in \cN$ and vector $\vx \in [0, 1]^\cN$, the absolute value of the partial derivative $\frac{\partial \{\phi(D(\vx)) \cdot [G(\vx) - F(\vx)]\}}{\partial x_u}$ is at most $72\alpha M|\cN|^{3/2}$.
\end{corollary}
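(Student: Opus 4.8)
The statement is an immediate consolidation of the two lemmas that directly precede it, obtained by a case analysis on the value of $D(\vx) = \|\vx - \bar{\vx}\|_2^2$. So the plan is simply to split into the regimes $D(\vx) \leq \beta$ and $D(\vx) \geq \beta$ (these two regimes cover all of $[0,1]^\cN$, overlapping at $D(\vx) = \beta$, so it does not matter which of them we assign the boundary to), and to invoke the appropriate earlier lemma in each regime.

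First, in the regime $D(\vx) \leq \beta$, Lemma~\ref{lem:derivative_small_D} already tells us that the absolute value of the partial derivative in question is at most $72\sqrt{\beta}\,M|\cN| \leq 72\alpha M|\cN|$; since $|\cN| \geq 1$, this is in turn at most $72\alpha M|\cN|^{3/2}$, which is the bound claimed by the corollary. Second, in the regime $D(\vx) \geq \beta$, the lemma stated immediately before the corollary gives exactly the bound $72\alpha M|\cN|^{3/2}$. Taking the larger of the two bounds over the two regimes therefore proves the corollary.

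There is no genuine obstacle here: both halves have been carried out in the two preceding lemmas, and the only thing left to observe is the trivial monotonicity inequality $|\cN| \leq |\cN|^{3/2}$ that lets the (slightly better) small-$D$ bound be subsumed by the uniform bound. I would present the argument in two or three lines, essentially: ``If $D(\vx) \leq \beta$ apply Lemma~\ref{lem:derivative_small_D} and use $|\cN| \geq 1$; otherwise apply the previous lemma.'' The purpose of the corollary is merely to package a clean, case-free bound for use in the subsequent proof that $\hat{F}$ satisfies Property~\ref{item:continuous_monotonicity_ratio} of Lemma~\ref{lem:continuous_versions}, where one wants a single estimate on $\nabla(\tilde{F} - F)$ valid at every point of the cube.
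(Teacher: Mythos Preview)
Your proposal is correct and matches the paper's approach exactly: the corollary is stated without proof there, being an immediate combination of the two preceding lemmas via the case split $D(\vx)\le\beta$ versus $D(\vx)\ge\beta$, together with the observation $|\cN|\le|\cN|^{3/2}$.
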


The last corollary implies that $\tilde{F}$ can be presented as the sum of $F$ and a component that changes slowly. Therefore, if we add to $\tilde{F}$ a function that increases quickly enough (as is done to define $\hat{F}$), then we should get a function that can be represented as $F$ plus a monotone component. This is the intuition formalized in the proof of the next lemma.
\begin{lemma} \label{lem:positive_partial}
The function $\hat{F}(\vx) - F(\vx)$ has non-negative partial derivatives for every $\vx \in [0, 1]^\cN$.
\end{lemma}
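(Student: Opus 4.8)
The plan is to write $\hat{F}(\vx) - F(\vx)$ explicitly and differentiate it term by term, using the bounds already established. Recalling the definitions, we have $\tilde{F}(\vx) = F(\vx) + \phi(D(\vx)) \cdot [G(\vx) - F(\vx)]$ and $\hat{F}(\vx) = \tilde{F}(\vx) + 256M|\cN|\alpha J(\vx)$, so that
\[
	\hat{F}(\vx) - F(\vx)
	=
	\phi(D(\vx)) \cdot [G(\vx) - F(\vx)] + 256M|\cN|\alpha J(\vx)
	\enspace.
\]
Hence, for any element $u \in \cN$,
\[
	\frac{\partial \{\hat{F}(\vx) - F(\vx)\}}{\partial x_u}
	=
	\frac{\partial \{\phi(D(\vx)) \cdot [G(\vx) - F(\vx)]\}}{\partial x_u} + 256M|\cN|\alpha \cdot \frac{\partial J(\vx)}{\partial x_u}
	\enspace.
\]

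First I would invoke Corollary~\ref{cor:derivatives_bound} to bound the first summand on the right hand side: its absolute value is at most $72\alpha M|\cN|^{3/2}$, so the summand itself is at least $-72\alpha M|\cN|^{3/2}$. Next, I would reuse the computation already carried out in the proof of Lemma~\ref{lem:G_obeys_property}, which shows that $\frac{\partial J(\vx)}{\partial x_u} = 3|\cN| - 2\sum_{v \in \cN} x_v \geq |\cN|$ for every $\vx \in [0,1]^\cN$; consequently the second summand is at least $256M|\cN|\alpha \cdot |\cN| = 256\alpha M|\cN|^2$.

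Combining these two bounds, the partial derivative is at least $256\alpha M|\cN|^2 - 72\alpha M|\cN|^{3/2}$, which is non-negative since $256|\cN|^2 \geq 72|\cN|^{3/2}$ holds for every $|\cN| \geq 1$ (indeed $256|\cN|^{1/2} \geq 256 \geq 72$). This yields the claim. There is essentially no real obstacle here: the lemma is a routine consequence of the quantitative estimates proved earlier, and the only point requiring a moment's care is bookkeeping the expansion of $\hat{F} - F$ and checking that the quadratically growing term $J$ was scaled by a constant ($256M|\cN|\alpha$) large enough to dominate the slowly-varying $\phi(D)\cdot[G-F]$ correction term bounded in Corollary~\ref{cor:derivatives_bound}.
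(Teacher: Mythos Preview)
Your proposal is correct and essentially identical to the paper's proof: both expand $\hat{F}-F$ as $\phi(D)\cdot[G-F]+256M|\cN|\alpha J$, bound the partial derivative of the first term via Corollary~\ref{cor:derivatives_bound} by $-72\alpha M|\cN|^{3/2}$, bound $\partial J/\partial x_u \geq |\cN|$ as in the proof of Lemma~\ref{lem:G_obeys_property}, and conclude that $256\alpha M|\cN|^2 - 72\alpha M|\cN|^{3/2} \geq 0$.
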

\begin{proof}
By the definition of $\hat{F}(\vx)$,
\[
	\hat{F}(\vx) - F(\vx)
	=
	\tilde{F}(\vx) - F(\vx) + 256M |\cN|\alpha J(\vx)
	\enspace.
\]
By Corollary~\ref{cor:derivatives_bound} and the observation that all the partial derivatives of $J(\vx)$ are at least $|\cN|$ (see the proof of Lemma~\ref{lem:G_obeys_property}), the last equality implies, for every element $u \in \cN$,
\[
	\frac{\partial[\hat{F}(\vx) - F(\vx)]}{\partial x_u}
	\geq
	-72\alpha M|\cN|^{3/2} + 256\alpha M|\cN|^2
	\geq
	0
	\enspace.
	\qedhere
\]
\end{proof}

We are now ready to show that $\hat{F}$ obeys Property~\ref{item:continuous_monotonicity_ratio} of Lemma~\ref{lem:continuous_versions}.
\begin{lemma}
For every two vectors $\vx, \vy \in [0, 1]^\cN$ obeying $\vx \leq \vy$, $m \cdot \hat{F}(\vx) \leq \hat{F}(\vy)$.
\end{lemma}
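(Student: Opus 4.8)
The plan is to derive the statement from the two facts already in hand: Lemma~\ref{lem:positive_partial}, which says that $\hat{F} - F$ has non-negative partial derivatives everywhere (hence is coordinate-wise non-decreasing), and the observation recorded right after the proof of Lemma~\ref{lem:G_obeys_property} that $m \cdot F(\vx) \le F(\vy)$ whenever $\vx \le \vy$. First I would combine the monotonicity of $\hat{F} - F$ with the hypothesis $\vx \le \vy$ to obtain $\hat{F}(\vy) - F(\vy) \ge \hat{F}(\vx) - F(\vx)$, that is,
\[
	\hat{F}(\vy) \ge \hat{F}(\vx) - F(\vx) + F(\vy)
	\enspace.
\]
Plugging in $F(\vy) \ge m \cdot F(\vx)$ then gives $\hat{F}(\vy) \ge \hat{F}(\vx) - (1 - m) \cdot F(\vx)$.

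It remains to replace $F(\vx)$ by $\hat{F}(\vx)$ on the right-hand side; since $1 - m \ge 0$, this is legitimate as soon as we know $F(\vx) \le \hat{F}(\vx)$, and once done it yields exactly $\hat{F}(\vy) \ge \hat{F}(\vx) - (1 - m)\hat{F}(\vx) = m \cdot \hat{F}(\vx)$, as desired. To establish $F(\vx) \le \hat{F}(\vx)$ for every $\vx \in [0, 1]^\cN$, I would again invoke Lemma~\ref{lem:positive_partial}: since $\hat{F} - F$ is coordinate-wise non-decreasing and $\vzero \le \vx$, it suffices to check that $\hat{F}(\vzero) - F(\vzero) \ge 0$. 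But $\vzero$ is invariant under $\cG$, so its symmetrization equals $\vzero$ and hence $D(\vzero) = \|\vzero - \vzero\|_2^2 = 0$; by Lemma~\ref{lem:phi_properties} this gives $\phi(D(\vzero)) = \phi(0) = 1$, so $\tilde{F}(\vzero) = G(\vzero) = F(\vzero)$, and therefore $\hat{F}(\vzero) - F(\vzero) = 256M|\cN|\alpha \cdot J(\vzero) \ge 0$, using $J(\vzero) = |\cN|^2 \ge 0$ and $M, \alpha \ge 0$.

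I do not expect any genuine obstacle here: the lemma is essentially a bookkeeping consequence of Lemma~\ref{lem:positive_partial} together with the elementary monotonicity-ratio bound on the multilinear extension $F$, and the only mildly delicate point, the inequality $F \le \hat{F}$, reduces to evaluating the construction at $\vzero$, where $\phi \equiv 1$ collapses $\tilde{F}$ onto $G$ and $G(\vzero)$ onto $F(\vzero)$. The case $m = 1$ requires no separate treatment: the argument above goes through verbatim (and in that case it merely re-proves that $\hat{F}$, being the sum of the monotone functions $F$ and $\hat{F} - F$, is itself monotone).
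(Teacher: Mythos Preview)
Your proposal is correct and takes essentially the same approach as the paper: both combine Lemma~\ref{lem:positive_partial} (monotonicity of $\hat{F}-F$), the observation $m\cdot F(\vx)\le F(\vy)$, and the fact that $\hat{F}-F$ is non-negative (established by evaluating at $\vzero=\characteristic_\varnothing$). The only differences are cosmetic rearrangements of the same three-inequality chain, and that the paper observes $\tilde{F}(\characteristic_\varnothing)=F(\characteristic_\varnothing)$ directly from $G(\characteristic_\varnothing)=F(\characteristic_\varnothing)$ rather than via $\phi(0)=1$.
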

\begin{proof}
Note that $\overline{\characteristic_\varnothing} = \characteristic_\varnothing$, which implies that $G(\characteristic_\varnothing) = F(\characteristic_\varnothing)$, and therefore,
\[
	\tilde{F}(\characteristic_\varnothing) - F(\characteristic_\varnothing)
	=
	\phi(D(\characteristic_\varnothing)) \cdot [G(\characteristic_\varnothing) - F(\characteristic_\varnothing)]
	=
	0
	\enspace.
\]
Plugging this observation into the definition of $\hat{F}$ now gives
\[
	\hat{F}(\characteristic_\varnothing) - F(\characteristic_\varnothing)
	=
	\tilde{F}(\characteristic_\varnothing) - F(\characteristic_\varnothing) + 256M|\cN|\alpha J(\characteristic_\varnothing)
	=
	256M|\cN|\alpha J(\characteristic_\varnothing)
	\enspace.
\]

Since all the first partial derivatives of $\hat{F}(\vz) - F(\vz)$ are non-negative by Lemma~\ref{lem:positive_partial}, the last inequality implies
\[
	\hat{F}(\vy) - F(\vy)
	\geq
	\hat{F}(\vx) - F(\vx)
	\geq
	256M|\cN|\alpha J(\vx)
	\geq
	0
	\enspace.
\]
Hence,
\[
	m \cdot \hat{F}(\vx)
	\leq
	m \cdot [F(\vx) + \hat{F}(\vy) - F(\vy)]
	\leq
	F(\vy) + m \cdot [\hat{F}(\vy) - F(\vy)]
	\leq
	F(\vy) + [\hat{F}(\vy) - F(\vy)]
	=
	\hat{F}(\vy)
	\mspace{-2mu}\enspace,
\]
where the second inequality holds by the discussion immediately after the proof of Lemma~\ref{lem:G_obeys_property}, and the last inequality holds since $m \leq 1$ and $\hat{F}(\vy) - F(\vy) \geq 0$.
\end{proof}
\section{Additional Plots for Section~\ref{sec:experiments}}
\label{sec:extra}
As discussed in Section~\ref{sec:experiments}, the various algorithms we use in the context of a cardinality constraint have very similar empirical performance. Figure~\ref{moviebench} presents the performance of all these algorithms in the movie recommendation setting with the number of movies in the summery varying. One can observe that the lines of the three non-scarecrow algorithms almost overlap. Figure~\ref{imagebench} presents the performance of the non-scarecrow algorithms in the image summarization setting. In this figure we had to ignore the scarecrow algorithm Random because otherwise the lines of the three non-scarecrows algorithms are indistinguishable. Furthermore, we had to zoom in on a very small range of $y$-axis values. Despite these steps, the lines of Sample Greedy and Threshold Greedy still completely overlap, but the large zoom allows us to see that Threshold Random Greedy is marginally worse.

\begin{figure}[tb]
  \begin{subfigure}[t]{0.45\textwidth}
    \includegraphics[width=\textwidth]{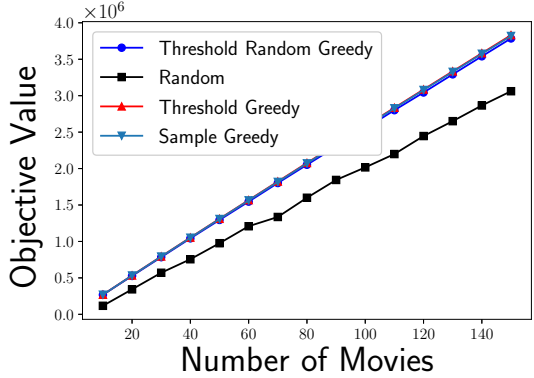}
    \caption{Performance of the various algorithms in the movie recommendation setting (for $\lambda = 0.75$).}
    \label{moviebench}
  \end{subfigure}
	\hfill
  \begin{subfigure}[t]{0.45\textwidth}
    \includegraphics[width=\textwidth]{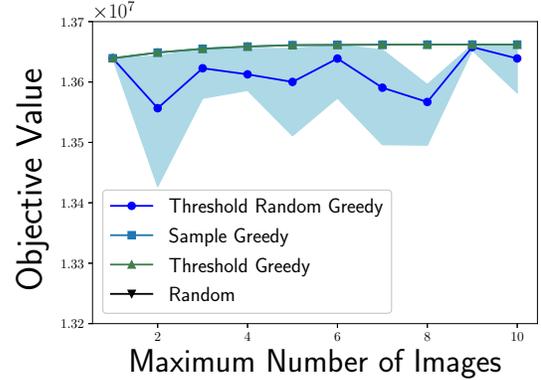}
    \caption{Performance of the non-scarecrow algorithms in the image summarization setting with a cardinality constraint. The shaded area represents the standard deviation of Threshold Random Greedy.}
    \label{imagebench}
  \end{subfigure}
	\caption{Comparing the performance of algorithms for a cardinality constraint in our experiments.}
 \end{figure}
\section{Maximizating DR-submodular Functions subject to a Polytope Constraint} \label{sec:DR-sub}

There are (at least) two natural ways in which the notion of submodularity can be extended from set functions to continuous functions. The more restrictive of these is known as DR-submodularity (first defined by~\cite{bian2017guarantees}). Given a domain $\cX = \prod_{i=1}^n\mathcal{X}_i$, where $\mathcal{X}_i$ is a closed range in $\bR$ for every $i \in [n]$, a function $F \colon \mathcal{X}\rightarrow\mathbb{R}$ is \emph{DR-submodular} if for every two vectors $\va,\vb\in \cX$, positive value $k$ and coordinate $i\in[n]$ the inequality
\[
	F(\va+k\ve_i)-F(\va)\geq F(\vb+k\ve_i)-F(\vb)
\]
holds whenever $\va\leq \vb$ and $\vb+k\ve_i\in \mathcal{X}$ (here and throughout the section $\ve_i$ denotes the standard $i$-th basis vector, and comparison between two vectors should be understood to hold coordinate-wise). If $F$ is continuously differentiable, then the above definition of DR-submodulrity is equivalent to $\nabla F$ being an antitone mapping from $\cX$ to $\mathbb{R}^n$ (i.e., $\nabla F(\va)\geq \nabla F(\vb)$ for every two vectors $\va,\vb\in\mathcal{X}$ that obey $\va\leq\vb$). Moreover, when $F$ is twice differentiable, it is DR-submodular if and only if its Hessian is non-positive at every vector $\vx\in\mathcal{X}$.

In this section we consider the problem of maximizing a non-negative DR-submodular function $F\colon 2^\cN \to \nnR$ subject to a solvable down-closed\footnote{In Section~\ref{ssc:continuous_greedy}, down-closeness of was defined for the special case of $P \subseteq [0, 1]^\cN$. More generally, a body $P \subseteq \cX$ is down-closed if $\vb \in P$ implies $\va \in P$ for every vector $\va \in \cX$ obeying $\va \leq \vb$.} convex body $P \subseteq \cX$ (usually polytope) constraint. As is standard when dealing with problems of this kind, we assume that $F$ is $L$-smooth, i.e., for every two vectors $\vx, \vy \in \cX$ it obeys
\[
    \|\nabla F(\vx)-\nabla F(\vy)\|_2 \leq L\|\vx-\vy\|_2
\]
for some non-negative parameter $L$. Additionally, for simplicity, we assume that $\mathcal{X} = [0,1]^n$. This assumption is without loss of generality because the natural mapping from $\mathcal{X}$ to $[0,1]^n$ preserves all our results.

We analyze a variant of the Frank-Wolfe algorithm for the above problem due to~\cite{bian2017nonmonotone} called Non-monotone Frank-Wolfe. This variant was motivated by the Measured Continuous Greedy algorithm studied in Section~\ref{ssc:continuous_greedy}, and its assumes access to the first order derivatives of $F$. The details of the algorithm we consider appear as Algorithm~\ref{alg:FW}. This algorithm gets a quality control parameter $\varepsilon \in (0, 1)$, and it is assumed that $\varepsilon^{-1}$ is an integer (if this is not the case, one can fix that by reducing $\eps$ by at most a factor $2$). Algorithm~\ref{alg:FW} and its analysis also employ the notation defined in Section~\ref{ssc:continuous_greedy}, namely, given two vectors $\vx, \vy$, their coordinate-wise multiplication is denoted by $\vx \odot \vy$. Additionally, we denote by $\vzero$ and $\vone$ the all zeros and all ones vectors, respectively.

\begin{algorithm}
\DontPrintSemicolon
Let  $\vy^{(0)}\leftarrow \vzero$ and $t=0$.\;
\While{$t \leq 1$}{
$\vs^{(t)}\leftarrow\argmax_{\vx\in P} \vx \cdot ((\vone-\vy^{(t)}) \odot \nabla F(\vy^{(t)}))$. \;
$\vy^{(t+\varepsilon)} \leftarrow \vy^{(t)}+\varepsilon\cdot(\bar{1}-\vy^{(t)})\odot\vs^{(t)}$. \;
$t\leftarrow t+\varepsilon.$ \;
}
\Return $\vy^{(1)}$.
\caption{Non-monotone Frank-Wolfe($\varepsilon$)\label{alg:FW}}
\end{algorithm}

To analyze Algorithm~\ref{alg:FW} we need to define two additional parameters. The first parameter is the diameter $D = \max_{x\in P}\|\vx\|_2$ of $P$, which is a standard parameter. The other parameter is the monotonicity ratio of $F$, which can be extended to the continuous setting we study in the following natural way.\footnote{In Section~\ref{ssc:quadratic_programming} we showed how the monotonicity ratio can be extended to the particular continuous setting studied in that section. The definition of Section~\ref{ssc:quadratic_programming} is obtained from the more general definition we give here by setting $\cX = \prod_{i = 1}^n [0, u_i]$.}
\[ m=\inf_{\substack{\vx,\vy\in \cX\\\vx\leq\vy}}\frac{F(\vy)}{F(\vx)}\enspace,\]
where the ratio $F(\vy) / F(\vx)$ should be understood to have a value of $1$ whenever $F(\vx) = 0$. Additionally, let us denote by $\vo$ an arbitrary optimal solution for the problem described above. Using these definitions, we are now ready to state the result that we prove for Algorithm~\ref{alg:FW}.

\begin{restatable}{theorem}{thmFW} \label{thm:FW}
When given a non-negative $m$-monotone DR-submodular function $F\colon \cX \to \nnR$ and a down-closed solvable convex body $P \subseteq \cX$, the Measured Greedy Frank-Wolfe algorithm (Algorithm~\ref{alg:FW}) outputs a solution $\vy \in P$ such that $F(\vy) \geq [m(1 - 1/e) + (1 - m) \cdot (1/e)] \cdot F(\vo) - \varepsilon LD^2$.
\end{restatable}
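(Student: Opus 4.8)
The plan is to mirror the analysis of Measured Continuous Greedy from Theorem~\ref{thm:measured_continuous_greedy} (Section~\ref{ssc:continuous_greedy}), but now in the discrete-time, derivative-access setting of Algorithm~\ref{alg:FW}, paying the usual $\varepsilon L D^2$ price per unit of error that comes from smoothness. First I would establish the analogue of Lemma~\ref{lem:y_upper_bound}: by induction on the iterations, each coordinate satisfies $y^{(t)}_u \leq 1 - (1-\varepsilon)^{t/\varepsilon} \leq 1 - e^{-t} + O(\varepsilon)$, since each step multiplies $1 - y^{(t)}_u$ by a factor of at least $1 - \varepsilon$ (because $s^{(t)}_u \le 1$). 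This gives $\|\vy^{(t)}\|_\infty \le 1 - e^{-t} + O(\varepsilon)$ at every grid time $t$.

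Next I would lower-bound the per-step gain $F(\vy^{(t+\varepsilon)}) - F(\vy^{(t)})$. Using $L$-smoothness, $F(\vy^{(t+\varepsilon)}) - F(\vy^{(t)}) \ge \langle \nabla F(\vy^{(t)}), \vy^{(t+\varepsilon)} - \vy^{(t)}\rangle - \tfrac{L}{2}\|\vy^{(t+\varepsilon)} - \vy^{(t)}\|_2^2 = \varepsilon\, \vs^{(t)} \cdot ((\vone - \vy^{(t)}) \odot \nabla F(\vy^{(t)})) - \tfrac{L}{2}\varepsilon^2 \|(\vone-\vy^{(t)})\odot\vs^{(t)}\|_2^2$, and the error term is at most $\tfrac{L}{2}\varepsilon^2 D^2$. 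Since $\vo$ is a candidate for the $\argmax$ defining $\vs^{(t)}$, the linear part is at least $\varepsilon\, \characteristic_\vo \cdot ((\vone - \vy^{(t)})\odot\nabla F(\vy^{(t)})) = \varepsilon \sum_{i : o_i = 1}(1 - y^{(t)}_i)\,\partial_i F(\vy^{(t)})$. By DR-submodularity (the continuous analogue of the monotone-marginals argument), $\sum_{i:o_i=1}(1-y^{(t)}_i)\partial_i F(\vy^{(t)}) \ge F(\vy^{(t)} \vee \characteristic_\vo) - F(\vy^{(t)})$, and then the extension of Corollary~\ref{cor:sampling} to the DR-submodular / multilinear setting (via the $m$-monotonicity definition for continuous functions given in this section) yields $F(\vy^{(t)}\vee\characteristic_\vo) \ge (1 - (1-m)\|\vy^{(t)}\|_\infty)\cdot F(\vo) \ge (m + (1-m)e^{-t} - O(\varepsilon)) \cdot F(\vo)$, using the $\ell_\infty$ bound. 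Here the subtlety is that $F$ is not multilinear, so I would instead invoke DR-submodularity directly: the function $t \mapsto F(\vy \vee t\characteristic_\vo)$ has antitone gradient, which gives the needed concavity-type inequality; the $m$-monotonicity is applied to the pair $\vy^{(t)} \le \vy^{(t)} \vee \characteristic_\vo$ together with a telescoping over the ``off'' coordinates of $\vo$, exactly as in the proof of Corollary~\ref{cor:sampling} but phrased through DR-submodularity rather than the Lov\'asz extension.

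Combining, at each grid point $F(\vy^{(t+\varepsilon)}) - F(\vy^{(t)}) \ge \varepsilon[(m + (1-m)e^{-t})F(\vo) - F(\vy^{(t)})] - \varepsilon^2(\tfrac{L}{2}D^2 + O(F(\vo)))$. This is a discrete Gr\"onwall-type recursion; solving it (or, cleanly, proving by induction the closed form $F(\vy^{(t)}) \ge [m(1-e^{-t}) + (1-m)te^{-t}]F(\vo) - c\varepsilon L D^2 \cdot (t\text{-dependent factor bounded by }1)$, in the same spirit as the induction in the proof of Theorem~\ref{thm:random_greedy}) and plugging $t = 1$ gives $F(\vy^{(1)}) \ge [m(1-1/e) + (1-m)/e]\cdot F(\vo) - O(\varepsilon L D^2)$; absorbing constants into $\varepsilon$ (or tracking them and noting $D^2$ dominates) yields the stated $\varepsilon L D^2$ bound. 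Finally, $\vy^{(1)} \in P$ because it is a convex combination $\vy^{(1)} = (1-\varepsilon)^{1/\varepsilon}\vzero + \sum \ldots$ of the vectors $(\vone - \vy^{(t)})\odot\vs^{(t)}$, each of which lies in $P$ by down-closedness (they are coordinate-wise below $\vs^{(t)} \in P$), together with $\vzero \in P$. The main obstacle I anticipate is handling the error terms carefully: unlike the exactly-continuous Theorem~\ref{thm:measured_continuous_greedy}, here one must simultaneously control the $O(\varepsilon)$ slack in the $\ell_\infty$ bound, the $O(\varepsilon^2 L D^2)$ smoothness loss per step summed over $1/\varepsilon$ steps, and make sure these aggregate to the single clean term $\varepsilon L D^2$ rather than something larger; getting the constants to collapse (rather than blow up multiplicatively through the $1/\varepsilon$ iterations) is the delicate bookkeeping, and it is exactly the point where one uses that the recursion contracts the error by a factor $(1-\varepsilon)$ each step so the geometric sum stays $O(1)$ rather than $O(1/\varepsilon)$.
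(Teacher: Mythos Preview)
Your overall plan matches the paper's proof closely (coordinate bound by induction, per-step gain via smoothness plus a comparison with $\vo$, then unroll the recursion). There is, however, one genuine gap.

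You repeatedly treat the optimal solution as an integral vector: you write $\characteristic_\vo$, $\sum_{i : o_i = 1}$, and $\vy^{(t)} \vee \characteristic_\vo$. In this continuous DR-submodular setting $\vo$ is an arbitrary point of $P \subseteq [0,1]^n$, so none of those expressions is meaningful, and the ``extension of Corollary~\ref{cor:sampling}'' you sketch via $t \mapsto F(\vy \vee t\characteristic_\vo)$ does not get off the ground. The fix is to compare with the point $\vo + \vy^{(t)} \odot (\vone - \vo) = \vy^{(t)} + (\vone - \vy^{(t)}) \odot \vo$ instead of $\vy^{(t)} \vee \vo$. Concretely: since $\vo$ is feasible for the linear subproblem, the first-order gain is at least $\varepsilon\, \nabla F(\vy^{(t)}) \cdot ((\vone - \vy^{(t)}) \odot \vo)$; DR-submodularity makes $F$ concave along the nonnegative direction $(\vone - \vy^{(t)}) \odot \vo$, so this is at least $\varepsilon\,[F(\vo + \vy^{(t)}(\vone - \vo)) - F(\vy^{(t)})]$. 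Now $\vo + \vy^{(t)}(\vone - \vo) \geq \vo$ coordinate-wise, and a second application of concavity along the same direction (scaled by $1/\|\vy^{(t)}\|_\infty$) together with $m$-monotonicity gives $F(\vo + \vy^{(t)}(\vone - \vo)) \geq (1 - (1-m)\|\vy^{(t)}\|_\infty)\,F(\vo)$. This is exactly the paper's Lemma~\ref{lemma:FW2}; no Lov\'asz-extension machinery is needed.

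Two smaller points. First, keep the coordinate bound in the exact discrete form $\|\vy^{(\varepsilon i)}\|_\infty \leq 1 - (1-\varepsilon)^i$ throughout and only use $(1-\varepsilon)^{1/\varepsilon} \leq e^{-1} \leq (1-\varepsilon)^{1/\varepsilon - 1}$ at the very end; this eliminates the extra $O(\varepsilon)\,F(\vo)$ error you were worried about. Second, the $\varepsilon^2 L D^2$ smoothness loss per step already sums to $\varepsilon L D^2$ over $1/\varepsilon$ steps without any help from the $(1-\varepsilon)$ contraction; the contraction is what makes the $F(\vo)$ terms telescope correctly, not what controls the error.
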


Our first objective towards proving Theorem~\ref{thm:FW} is to lower bound the expression $F(\vo  + \vy^{(t)} \cdot (\vone - \vo))$, which we do in the next two lemmata.
\begin{lemma}
\label{lemma:FW1}
For every integer $i\in [0,\varepsilon^{-1}]$, $\vy^{(\eps i)} \geq \vzero$ and $\|\vy^{(\eps i)}\|_\infty \leq 1 - (1-\varepsilon)^{-i}$.
\end{lemma}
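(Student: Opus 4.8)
The plan is to establish both assertions of Lemma~\ref{lemma:FW1} --- $\vy^{(\eps i)} \ge \vzero$ and $\|\vy^{(\eps i)}\|_\infty \le 1 - (1-\eps)^i$ --- simultaneously by induction on $i$. The crucial point is that the second inequality also delivers $\|\vy^{(\eps i)}\|_\infty \le 1$ (since $\eps \in (0,1)$ gives $0 \le (1-\eps)^i \le 1$), which keeps the factor $\vone - \vy^{(\eps i)}$ appearing in the update rule of Algorithm~\ref{alg:FW} coordinate-wise non-negative; this in turn is what forces each coordinate of the iterate to be non-decreasing along the run and to remain below the claimed threshold.

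For the base case $i = 0$, the initialization $\vy^{(0)} = \vzero$ gives both $\vy^{(0)} \ge \vzero$ and $\|\vy^{(0)}\|_\infty = 0 = 1 - (1-\eps)^0$. For the inductive step, fix $i$ with $i + 1 \le \eps^{-1}$ and assume the claim for $i$. The update rule gives $\vy^{(\eps(i+1))} = \vy^{(\eps i)} + \eps \cdot (\vone - \vy^{(\eps i)}) \odot \vs^{(\eps i)}$, while the induction hypothesis yields $\vzero \le \vy^{(\eps i)}$ and $\|\vy^{(\eps i)}\|_\infty \le 1$, hence $\vzero \le \vone - \vy^{(\eps i)}$. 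Since $\vs^{(\eps i)} \in P \subseteq \cX = [0,1]^n$ we also have $\vzero \le \vs^{(\eps i)} \le \vone$, so the increment is coordinate-wise non-negative and therefore $\vy^{(\eps(i+1))} \ge \vy^{(\eps i)} \ge \vzero$. For the upper bound, fix a coordinate $u \in \cN$; using $0 \le s^{(\eps i)}_u \le 1$ and $1 - y^{(\eps i)}_u \ge 0$,
\[
	y^{(\eps(i+1))}_u = y^{(\eps i)}_u + \eps (1 - y^{(\eps i)}_u) s^{(\eps i)}_u \le y^{(\eps i)}_u + \eps (1 - y^{(\eps i)}_u) = (1 - \eps)\, y^{(\eps i)}_u + \eps \enspace,
\]
and substituting $y^{(\eps i)}_u \le 1 - (1 - \eps)^i$ from the induction hypothesis yields $y^{(\eps(i+1))}_u \le (1 - \eps)(1 - (1 - \eps)^i) + \eps = 1 - (1 - \eps)^{i+1}$. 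Taking the maximum over $u$ completes the induction.

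I do not expect a genuine obstacle here: Lemma~\ref{lemma:FW1} is merely the discretized counterpart of Lemma~\ref{lem:y_upper_bound}, and its only role is to supply the bound $\|\vy^{(\eps i)}\|_\infty \le 1 - (1 - \eps)^i$ that is later fed into Corollary~\ref{cor:sampling} when lower bounding $F(\vo + \vy^{(\eps i)} \odot (\vone - \vo))$ in the proof of Theorem~\ref{thm:FW}. The only point deserving a moment's care is the bookkeeping of the auxiliary inequality $\|\vy^{(\eps i)}\|_\infty \le 1$ within the induction, since the non-negativity of $\vone - \vy^{(\eps i)}$ it provides is used both for the monotonicity of the iterates and for the contraction-towards-$1$ estimate displayed above, and is not literally part of the statement being inducted upon.
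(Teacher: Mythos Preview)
Your proof is correct and follows essentially the same approach as the paper's own proof: both argue by induction on $i$, using $\vs^{(\eps i)} \in [0,1]^n$ together with the induction hypothesis $\vzero \le \vy^{(\eps i)} \le \vone$ to show each coordinate is non-decreasing and satisfies $y_u^{(\eps(i+1))} \le (1-\eps)\,y_u^{(\eps i)} + \eps$. You also implicitly corrected the obvious typo in the lemma statement, writing $(1-\eps)^i$ rather than $(1-\eps)^{-i}$.
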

\begin{proof}
We prove the lemma by induction on $i$. For $i=0$, the lemma follows directly from the initialization $\vy^{(0)}=\vzero$ because $1 - (1-\varepsilon)^{-0} = 0$. Assume now that the lemma holds for $i-1$, and let us prove it for an integer $0 < i\leq 1$.
Observe that, for every $j \in [n]$,
\[
y_j^{\eps i}= y_j^{\eps (i-1)} + \varepsilon\cdot\left(1-y_j^{\eps (i-1)}\right)\cdot s_j^{\eps(i - 1)} \geq y_j^{\eps(i - 1)} \geq 0\enspace,
\]
where the first inequality holds since $y_j^{\eps(i-1)} \leq 1$ by the induction hypothesis and the value of $s_j^{(\eps(i - 1))}$ is non-negative by definition. Moreover,
\begin{align*}
    y_j^{\eps i}&=y_j^{\eps(i-1)}+\varepsilon\cdot\left(1-y_j^{\eps(i-1)}\right)\cdot s_j^{\eps(i - 1)}\leq y_j^{\eps(i-1)}+\varepsilon\cdot\left(1-y_j^{\eps(i-1)}\right)\\
    &=\varepsilon + (1-\varepsilon)\cdot y_j^{\eps(i-1)}\leq\varepsilon+(1-\varepsilon)\cdot\left[1-(1-\varepsilon)^{(i-1)}\right]=1-(1-\varepsilon)^i\enspace,
\end{align*}
where again the first inequality holds since $s^{(\eps (i - 1))} \in \cX$, which implies $s_j^i \leq 1$; and the second inequality holds by the induction hypothesis.
\end{proof}

\begin{lemma}\label{lemma:FW2}
For every integer $i\in [0,\varepsilon^{-1}]$, $F(\vo  + \vy^{(\eps i)} \cdot (\vone - \vo)) \geq\left[(1-(1-m)\left(1-(1-\varepsilon)^{i}\right)\right]\cdot F(\vo) = \left[m+(1-m)(1-\varepsilon)^{i}\right]\cdot F(\vo)$.
\begin{proof}
Observe that
\begin{align*}
    F(\vo  + \vy^{(\eps i)} \cdot (\vone - \vo))&\geq \left(1-\|\vy^{(\eps i)}\|_\infty\right)\cdot F(\vo)+ \|\vy^{(\eps i)}\|_\infty\cdot F\left(\vo +\frac{\vy^{(\eps i)} \cdot (\vone - \vo)}{\|\vy^{(\eps i)}\|_\infty}\right)\\
    &\geq \left(1-\|\vy^{(\eps i)}\|_\infty\right)\cdot F(\vo)+m\cdot \|\vy^{(\eps t)}\|_\infty\cdot F(\vo)\\
    &=\left(1-(1-m)\cdot\|\vy^{(\eps i)}\|_\infty\right)\cdot F(\vo)\enspace,
\end{align*}
where the first inequality holds since the DR-submodularity of $F$ implies that $F$ is concave along positive directions (such as the direction $\vy^{(\eps i)} \cdot (\vone - \vo) / \|\vy^{(\eps i)}\|_\infty$), and the second inequality holds since the monotonicity ratio of $F$ is at least $m$.
Plugging Lemma~\ref{lemma:FW1} into the previous inequality completes the proof of the lemma. 
\end{proof}
\end{lemma}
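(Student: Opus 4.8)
The plan is to prove Lemma~\ref{lemma:FW2} by decomposing the point $\vo + \vy^{(\eps i)} \odot (\vone - \vo)$ as a convex combination of the optimum $\vo$ and a point that dominates $\vo$ coordinate-wise, and then combining the concavity of $F$ along non-negative directions (a consequence of DR-submodularity) with the $m$-monotonicity of $F$.

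First I would dispose of a degenerate case. Write $\mu \triangleq \norm{\vy^{(\eps i)}}_\infty$. By Lemma~\ref{lemma:FW1} we have $\vy^{(\eps i)} \geq \vzero$, so if $\mu = 0$ then $\vy^{(\eps i)} = \vzero$ (which happens only for $i = 0$); in this case the left-hand side of the lemma equals $F(\vo)$ and the right-hand side equals $[m + (1-m)(1-\eps)^{0}]\cdot F(\vo) = F(\vo)$, so the inequality holds. Hence I assume $\mu > 0$ below, noting also that the second equality in the lemma's statement, $1 - (1-m)(1-(1-\eps)^{i}) = m + (1-m)(1-\eps)^{i}$, is immediate algebra.

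Next I would set up the decomposition. Let $\vp \triangleq \vo + \mu^{-1}\cdot\bigl(\vy^{(\eps i)} \odot (\vone - \vo)\bigr)$. Since $\vy^{(\eps i)} \geq \vzero$ and $\vone - \vo \geq \vzero$ (using $\vo \in \cX = [0,1]^n$), we get $\vp \geq \vo$; and since every coordinate of $\mu^{-1}\vy^{(\eps i)}$ lies in $[0,1]$ while $\vone - \vo \leq \vone$, we get $\vp \leq \vone$, so $\vp \in \cX$. A one-line computation gives $\vo + \vy^{(\eps i)} \odot (\vone - \vo) = (1 - \mu)\,\vo + \mu\,\vp$, and the three points $\vo$, this convex combination, and $\vp$ lie on a common ray out of $\vo$ in the non-negative direction $\vy^{(\eps i)} \odot (\vone - \vo)$. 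Now DR-submodularity implies that $t \mapsto F(\vx + t\vd)$ is concave for every non-negative direction $\vd$ (its derivative $\nabla F(\vx + t\vd)\cdot\vd$ is non-increasing in $t$, as $\nabla F$ is antitone and $\vd \geq \vzero$), so $F\bigl((1-\mu)\vo + \mu\vp\bigr) \geq (1-\mu)\,F(\vo) + \mu\,F(\vp)$. Since $\vp \geq \vo$ and the monotonicity ratio of $F$ is at least $m$, the definition of $m$ (together with non-negativity of $F$, which covers the case $F(\vo) = 0$) gives $F(\vp) \geq m\cdot F(\vo)$. Combining these, $F(\vo + \vy^{(\eps i)} \odot (\vone - \vo)) \geq (1 - (1-m)\mu)\cdot F(\vo)$; and since $1 - m \geq 0$, the quantity $1 - (1-m)\mu$ is non-increasing in $\mu$, so substituting the bound $\mu \leq 1 - (1-\eps)^{i}$ from Lemma~\ref{lemma:FW1} yields the claimed lower bound $[\,m + (1-m)(1-\eps)^{i}\,]\cdot F(\vo)$.

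I do not expect a genuine obstacle: the argument is short and uses only facts already available. The two spots needing a little care are the division by $\mu$ — handled by the $\mu = 0$ remark above — and checking that $\vy^{(\eps i)} \odot (\vone - \vo)$ is indeed a non-negative direction so that the ``concave along non-negative directions'' property of DR-submodular functions applies; both follow immediately from Lemma~\ref{lemma:FW1} and from $\vo \in [0,1]^n$.
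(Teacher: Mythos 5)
Your proposal is correct and follows essentially the same route as the paper: write $\vo + \vy^{(\eps i)} \odot (\vone - \vo)$ as the convex combination $(1-\mu)\vo + \mu\vp$ with $\vp = \vo + \mu^{-1}\cdot\bigl(\vy^{(\eps i)} \odot (\vone - \vo)\bigr)$, apply concavity of $F$ along non-negative directions, use $m$-monotonicity to bound $F(\vp) \geq m \cdot F(\vo)$, and finish with the bound on $\|\vy^{(\eps i)}\|_\infty$ from Lemma~\ref{lemma:FW1}. Your explicit treatment of the $\mu = 0$ case and the verification that $\vp \in \cX$ are small details the paper leaves implicit, but they do not change the argument.
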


Using the previous lemma, we can now provide a lower bound on the increase in the value of $\vy^{(t)}$ as a function of $t$.
\begin{lemma}
\label{lemma:FW3}
For every integer $0 \leq i < \varepsilon^{-1}$, $F(\vy^{(\eps(i+1))})-F(\vy^{(\eps i)}) \geq \varepsilon\cdot[(m + (1-m)\cdot(1-\varepsilon)^i)\cdot F(\vo)-F(\vy^{(\eps i)})]-\varepsilon^2LD^2$.
\end{lemma}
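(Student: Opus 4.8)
The plan is to combine the $L$-smoothness of $F$ with the greedy choice of the direction $\vs^{(\eps i)}$ made by Algorithm~\ref{alg:FW} and with Lemma~\ref{lemma:FW2}. Write $\vy \triangleq \vy^{(\eps i)}$ and $\vd \triangleq (\vone - \vy) \odot \vs^{(\eps i)}$, so that the update of the algorithm is exactly $\vy^{(\eps(i+1))} = \vy + \eps \vd$. The definition of $L$-smoothness gives the standard quadratic lower bound
\[
	F(\vy + \eps\vd) - F(\vy)
	\geq
	\eps \cdot \nabla F(\vy) \cdot \vd - \tfrac{L}{2}\eps^2 \|\vd\|_2^2
	\enspace,
\]
so it remains to (i) bound the error term $\tfrac{L}{2}\eps^2\|\vd\|_2^2$ by $\eps^2 L D^2$, and (ii) lower bound the first-order term $\nabla F(\vy)\cdot\vd$ by $[m + (1-m)(1-\eps)^i]\cdot F(\vo) - F(\vy^{(\eps i)})$.

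The error term is immediate: by Lemma~\ref{lemma:FW1} every coordinate of $\vone - \vy^{(\eps i)}$ lies in $[0,1]$, and since $\vs^{(\eps i)} \in P$ we get $\|\vd\|_2 \leq \|\vs^{(\eps i)}\|_2 \leq D$, hence $\tfrac{L}{2}\eps^2\|\vd\|_2^2 \leq \eps^2 L D^2$ (with a factor of two to spare). For the first-order term, I would rewrite $\nabla F(\vy)\cdot\vd = ((\vone - \vy)\odot\nabla F(\vy))\cdot\vs^{(\eps i)}$, and then use that $\vs^{(\eps i)}$ maximizes $\vx\cdot((\vone-\vy)\odot\nabla F(\vy))$ over $\vx\in P$ together with $\vo\in P$ to conclude that this is at least $((\vone-\vy)\odot\nabla F(\vy))\cdot\vo = \nabla F(\vy)\cdot((\vone-\vy)\odot\vo)$.

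Next, set $\vz \triangleq \vy + (\vone-\vy)\odot\vo$, and observe (a short coordinate-wise computation) that $\vz$ equals $\vo + \vy\odot(\vone-\vo)$ and lies in $\cX = [0,1]^n$. Since $(\vone-\vy)\odot\vo \geq \vzero$ and DR-submodular functions are concave along non-negative directions, the univariate restriction $\xi \mapsto F(\vy + \xi\cdot(\vone-\vy)\odot\vo)$ is concave on $[0,1]$; comparing the value at $\xi=1$ with the first-order upper bound at $\xi=0$ gives $F(\vz) - F(\vy) \leq \nabla F(\vy)\cdot((\vone-\vy)\odot\vo)$. Combining this with the previous paragraph and with Lemma~\ref{lemma:FW2} (applied to $F(\vz) = F(\vo + \vy^{(\eps i)}\odot(\vone-\vo))$) yields $\nabla F(\vy)\cdot\vd \geq F(\vz) - F(\vy) \geq [m + (1-m)(1-\eps)^i]\cdot F(\vo) - F(\vy^{(\eps i)})$; plugging this and the error-term bound into the smoothness inequality proves the lemma.

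The only genuinely delicate point is the concavity step: one must verify that the segment from $\vy^{(\eps i)}$ to $\vz$ stays in the domain (true because both endpoints lie in $[0,1]^n$ and the domain is a box) and that DR-submodularity indeed yields concavity along the non-negative direction $(\vone-\vy^{(\eps i)})\odot\vo$ — this is precisely the property already invoked in the proof of Lemma~\ref{lemma:FW2}. Everything else is routine bookkeeping, and in particular no sign assumption on $\nabla F$ is needed, since the comparison with $\vo$ uses only that $\vs^{(\eps i)}$ is the exact linear maximizer over $P$.
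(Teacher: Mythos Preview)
Your proof is correct and follows essentially the same route as the paper: the paper also uses $L$-smoothness to pass to the first-order term (via the integral form of the chain rule rather than the descent lemma, but this is cosmetic), then exploits the optimality of $\vs^{(\eps i)}$ to compare with $\vo$, invokes concavity of $F$ along the non-negative direction $(\vone-\vy^{(\eps i)})\odot\vo$, and finishes with Lemma~\ref{lemma:FW2}. Your explicit verification that $\vy + (\vone-\vy)\odot\vo = \vo + \vy\odot(\vone-\vo)$ is a nice touch that the paper leaves implicit.
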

\begin{proof}
By the chain rule,
\begin{align*}
    F(\vy^{\eps(i+1)})-F(\vy^{(\eps i)})&= F(\vy^{(\eps i)}+\varepsilon\cdot \vs^{(\eps i)}\odot(\vone-\vy^{(\eps i)}))-F(\vy^{(\eps i)})\\
    &=\int_{0}^{\varepsilon} \nabla F(\vy^{(\eps i)}+r\cdot \vs^{(\eps i)}\odot(\vone-\vy^{(\eps i)})) \cdot (\vs^{(\eps i)}\odot(\vone-\vy^{(\eps i)})) \,dr\\
    &\geq\int_{0}^{\varepsilon}\nabla F(\vy^{(\eps i)})\cdot (\vs^{(\varepsilon i)}\odot(\vone-\vy^{(\eps i)}))\,dr -\varepsilon^2LD^2\\
    &=\varepsilon\cdot\nabla F(\vy^{(\eps i)})\cdot (\vs^{(\eps i)} \odot(1-\vy^{(\eps i)}))-\varepsilon^2LD^2\enspace,
\end{align*}
where the first inequality holds by the $L$-smoothness of $F$. Furthermore,
\begin{align*}
	\nabla F(\vy^{(\eps i)})\cdot (\vs^{(\eps i)} \odot(\vone-\vy^{(\eps i)}))
	={} &
	((\vone-\vy^{(\eps i)}) \odot \nabla F(\vy^{(\eps i)})) \cdot \vs^{(\eps i)}
	\geq
  ((\vone-\vy^{(\eps i)}) \odot \nabla F(\vy^{(\eps i)})) \cdot \vo\\
	={} &
	\nabla F(\vy^{(\eps i)})) \cdot ((\vone-\vy^{(\eps i)}) \odot \vo)
	\geq
	F(\vo + \vy^{(\eps i)}(\vone - \vo))-F(\vy^{(\eps i)})\\
  \geq{} &
	\left[m+(1-m)\cdot(1-\varepsilon)^i\right]\cdot F(\vo)-F(\vy^{(\eps i)})
	\enspace,
\end{align*}
where the first inequality holds by the definition of $\vs^{(\eps i)}$ since $\vo$ is a candidate to be this vector, the second inequality follows from the concavity of $F$ along positive directions, and the last inequality holds by Lemma~\ref{lemma:FW2}. The lemma now follows by combining the two above inequalities.
\end{proof}

We are now ready to prove Theorem~\ref{thm:FW}.
\begin{proof}[Proof of Theorem~\ref{thm:FW}]
Rearranging the guarantee of Lemma~\ref{lemma:FW3}, we get
\[
    F(\vy^{\eps (i + 1)})
		\geq
		(1-\varepsilon)\cdot F(\vy^{(\eps i)})+\varepsilon[m + (1-m)\cdot (1-\varepsilon)^i] \cdot F(\vo) - \eps^2LD^2
		\enspace.
\]
Since this inequality applies for every integer $0 \leq i < \eps^{-1}$, we can use it repeatedly to obtain
\begin{align*}
    F(\vy^{(1)})
		&\geq \varepsilon \cdot \sum_{i=1}^{1 / \eps} (1-\varepsilon)^{1/\varepsilon-i} \cdot \left[(m + (1-m)\cdot(1-\varepsilon)^{i-1}) \cdot F(\vo) - \eps LD^2\right] + (1 - \eps)^{1/\eps} \cdot F(\vzero)\\
    &\geq
		m\varepsilon\cdot\sum_{i=1}^{\nicefrac{1}{\varepsilon}}(1-\varepsilon)^{\frac{1}{\varepsilon}-i}\cdot F(\vo)+\varepsilon(1-m)\cdot\sum_{i=1}^{\nicefrac{1}{\varepsilon}}[(1-\varepsilon)^{\nicefrac{1}{\varepsilon}-1}\cdot F(\vo) - \eps LD^2]\\
    &=m\varepsilon\cdot\frac{1-(1-\varepsilon)^{1 / \varepsilon}}{\varepsilon} \cdot F(\vo)+\varepsilon(1-m)\cdot\frac{(1-\varepsilon)^{1/\varepsilon-1} \cdot F(\vo) - \eps LD^2}{\varepsilon}\\
    &\geq \left[m(1-e^{-1})+(1-m)\cdot e^{-1}\right]\cdot F(\vo) - \eps LD^2
		\enspace,
\end{align*}
where the second inequality follows from the non-negativity of $F$, and the last inequality holds since $(1-\varepsilon)^{1/\eps} \leq e^{-1} \leq (1-\varepsilon)^{1/\eps - 1}$.\qedhere
\end{proof}

\bibliographystyle{plain}
\bibliography{MonotonicityRatio}

\end{document}